\def\input@path{{utils/}}
\newcommand{\myTitle}{Provable Non-Convex Optimization and Algorithm Validation via Submodularity}
\newcommand{\myName}{Yatao Bian}
\newcommand{\myUni}{\protect{ETH Zürich}}
\newcommand{\myTime}{2019}
\def\utilsdir{utils}
\preto\fullcite{\AtNextCite{\defcounter{maxnames}{99}}}
\crefname{section}{Section}{Sections}
\crefname{theorem}{Theorem}{Theorems}
\crefname{lemma}{Lemma}{Lemmas}
\crefname{equation}{Equation}{Equations}
\crefname{proposition}{Proposition}{Propositions}
\crefname{claim}{Claim}{Claims}
\crefname{appendix}{Appendix}{Appendices}
\crefname{algorithm}{Algorithm}{Algorithms}
\crefname{figure}{Figure}{Figures}
\crefname{table}{Table}{Tables}
\crefname{remark}{Remark}{Remarks}
\crefname{definition}{Definition}{Definitions}
\crefname{equatinon}{Equation}{Equations}
\crefname{corollary}{Corollary}{Corollaries}
\crefname{observation}{Observation}{Observations}
\theoremstyle{plain}
\newtheorem{theorem}{Theorem}[chapter]
\newtheorem{lemma}[theorem]{Lemma}
\newtheorem{remark}[theorem]{Remark}
\newtheorem{conjecture}[theorem]{Conjecture}
\newtheorem{proposition}[theorem]{Proposition}
\newtheorem{corollary}[theorem]{Corollary}
\newtheorem{claim}[theorem]{Claim}
\newtheorem{observation}[theorem]{Observation}
\theoremstyle{definition}
\newtheorem{definition}[theorem]{Definition}
\newcounter{dummy}
\definecolor{cite_color}{RGB}{0, 0, 255}
\definecolor{link_color}{RGB}{153, 0,0}  
\definecolor{url_color}{RGB}{153, 102,  0}
\definecolor{emp_color}{RGB}{0,0,255}
\let \oldtextcircled \textcircled
\renewcommand{\textcircled}[1]{\oldtextcircled{\footnotesize #1}}
\setlist[itemize]{leftmargin=9mm}
\newcommand{\todo}[1]{\textcolor{red}{@todo:}#1}
\newcommand{\submodulardg}{\algname{Submodular-DoubleGreedy}\xspace}
\newcommand{\drdg}{\algname{DR-DoubleGreedy}\xspace}
\newcommand{\stepsize}{step size\xspace}
\newcommand{\stepsizes}{step sizes\xspace}
\DeclarePairedDelimiter\floor{\lfloor}{\rfloor}
\newcommand{\pga}{\algname{PGA}\xspace}
\newcommand{\nonconvexfw}{\algname{Non-convex FW}\xspace}
\newcommand{\submodularfw}{\algname{Submodular FW}\xspace}
\newcommand{\shrunkenfw}{\algname{Shrunken FW}\xspace}
\newcommand{\twophasefw}{\algname{Two-Phase FW}\xspace}
\newcommand{\twophase}{\algname{Two-Phase}\xspace}
\newcommand{\mesuper}{{IR-supermodular}\xspace}
\newcommand{\me}{{IR}\xspace}
\newcommand{\irsuper}{IR-supermodular\xspace}
\newcommand{\ir}{IR\xspace}
\newcommand{\alg}{\ensuremath{\mathscr{A}}\xspace}
\def\C{{\cal C}}
\def\E{{\mathbb E}}
\def\X{{\cal X}}
\def\H{{\mathbb  H}}
\def\Z{{\mathbb Z}}
\def\cone{{\cal K}}
\def\A{{\mathscr A}}
\def\M{{\cal M}}
\def \I {{\cal{I}}}
\def\T {\mathbb{T}}
\def\opt{\ensuremath{\Omega^*}}
\def\optcont{\ensuremath{\x^*}}
\newcommand{\opti}[1]{x^*_{#1}} 
\def \c{\mathbf{c}}
\def \v{\mathbf{v}}
\def \r{\mathbf{r}}
\def \b{\mathbf{b}}
\def \d{\mathbf{d}}
\def \a{\mathbf{a}}
\def \p{\mathbf{p}}
\def \q{\mathbf{q}}
\def \x{\mathbf{x}}
\def \y{\mathbf{y}}
\def \s{\mathbf{s}}
\def \e{\mathbf{e}}
\def \u{\mathbf{u}}
\def \t{\mathbf{t}}
\def \z{\mathbf{z}}
\def \h{\mathbf{h}}
\def \bh{\mathbf{h}}
\def \BA{\mathbf{A}}
\def \BB{\mathbf{B}}
\def \BX{\mathbf{X}}
\def \BI{\mathbf{I}}
\def \BC{\mathbf{C}}
\def \BD{\mathbf{D}}
\def \BH{\mathbf{H}}
\def \BL{\mathbf{L}}
\def \BR{\mathbf{R}}
\def \BU{\mathbf{U}}
\def \BW{\mathbf{W}}
\def \Q{{\cal{Q}}}
\def \R{{\mathbb{R}}}
\def \trans{\top}
\newcommand{\tr}[1]{\text{tr}(#1)}
\newcommand{\pare}[1]{{#1}}  
\def \D{{\mathcal{D}}}
\newcommand{\graphrv}{\ensuremath{\mathbb{G}}\xspace}
\def \E{{\mathbb{E}}} 
\newcommand{\MAXCUT}{\textsc{MaxCut}\xspace}
\newcommand{\lovasz}{Lov{\'a}sz \xspace}
\newcommand{\maxcut}{\textsc{MaxCut}\xspace}
\newcommand{\argmin}{{\arg\min}}
\newcommand{\diag}{{\text{diag}}}
\newcommand{\sign}[1]{{\text{sign}(#1)}}
\newcommand{\argmax}{{\arg\max}}
\newcommand{\algname}[1]{{\texttt{#1}}}
\newcommand{\spt}[1]{{\texttt{supp}}(#1)}
\newcommand{\dtp}[2]{\langle #1, #2\rangle}
\newcommand{\de}[1]{\text{det}\left(#1\right)}
\newcommand{\fracpartial}[2]{\frac{\partial #1}{\partial #2}}
\newcommand{\fracppartial}[3]{\frac{\partial^2 #1}{\partial #2 \partial #3}}
\newcommand{\lleq}{\preceq}
\newcommand{\ggeq}{\succeq}
\newcommand{\bas}{\mathbf{e}} 
\newcommand{\groundset}{\ensuremath{\mathcal{V}}}
\newcommand{\gndset}{\ensuremath{\mathcal{V}}}
\newcommand{\ltwo}[1]{\|#1\|}
\newcommand{\lone}[1]{\|#1\|_1}
\def \nmf {\algname{CoordinateAscent}\xspace}
\def \ca {\algname{CoordinateAscent}\xspace}
\def \dgmf {\algname{DG-MeanField}\xspace}
\def \palong {{Posterior Agreement}\xspace} 
\newcommand{\NEWDR}{\texttt{weak DR}}
\newcommand{\set}[1]{\{#1\}} 
\newcommand{\sete}[3]{\mathbf #1|_{#2} (#3)}
\newcommand{\parti}{\text{Z}} 
\newcommand{\zero}{\mathbf{0}} 
\newcommand{\one}{\mathbf{1}} 
\newcommand{\ele}{v} 
\newcommand{\multi}{\ensuremath{f_{\text{mt}}}} 
\newcommand{\prob}{\ensuremath{\mathbb{P}}} 
\renewcommand{\P}{\mathcal{P}}
\newcommand{\bigo}[1]{\mathcal {O}\! \left(#1\right)}
\newcommand{\epe}[2][]{\underset{#1}{\mathbb E}\left[#2\right]}  
\newcommand{\pa}{\text{PA}\xspace}
\newcommand{\intermed}{\mathbf{o}}
\def \bu{\mathbf{u}}
\def \m{\mathbf{m}}
\def \flweights {R} 
\def \data {\text{D} }
\def \nmf {\text{CoordinateAscent}\xspace}
\def \dgmf {\text{DG-MeanField}\xspace}
\renewcommand{\mid}{|}
\newcommand{\kl}[2]{\mathbb{KL}(#1\|#2)}
\newcommand{\entropy}[1]{\mathbb{H}(#1)}
\def \chara{\mathbf{e}} 
\def \bmA{\mathbf{A}}
\def \bmH{\mathbf{H}}
\def \bmL{\mathbf{L}}
\def \bmX{\mathbf{X}}
\def \bmY{\mathbf{Y}}
\def \bmI{\mathbf{I}}
\def \bmW{\mathbf{W}}
\def \bmZ{\mathbf{Z}}
\def \cg{L}
\def\A{\mathscr{A}}
\def \bmtheta{\bm{\theta}}
\def \bmalpha{\bm\alpha}
\def\C{{\cal C}}
\def\E{{\mathbb E}}
\def\X{{\cal X}}
\def\H{{\cal H}}
\newcommand{\sett}[1]{\{#1\}}
\def\M{{\mathscr M}}
\def \I {{\cal{I}}} 
\def\T {\mathbb{T}}   
\newcommand{\CB}{T}   
\def\da{{G}}
\newcommand{\datilde}{{\tilde{\da}}}
\def \t{\tau} 
\def \bmtheta{\bm{\theta}}
\def \se {\mathfrak{S}}
\def \re {\mathfrak{R}}
\def \pg {\mathfrak{PG}}
\def \bmtheta{\bm{\theta}}
\def \bmalpha{\bm{\alpha}}
\def \trans{\top}
\def \tr{\text{tr}}
\def \h{c}
\newcommand{\kernel}{k}
\newcommand{\kert}{\tilde{\kernel}}
\newcommand{\ALG}{Algorithm }
\def\E{{\mathbb E}}
\def\X{{\cal X}}
\def\H{{\cal C}}
\def\Z{{\cal G}}
\newcommand{\cardH}{{\textstyle\frac{|\H|}{1+\rho}}}
\def \bmtheta{\bm{\theta}}
\def \bmalpha{\bm{\alpha}}
\def \bmA{\mathbf{A}}
\def \bmX{\mathbf{X}}
\def \bmI{\mathbf{I}}
\def \bmL{\mathbf{L}}
\newcommand{\geqco}{\ensuremath{\gtrsim}}
\newcommand{\leqco}{\ensuremath{\lesssim}}
\definecolor{safe}{HTML}{3a87ad}
	\titleformat{\chapter}[display]%
	{\relax}
	{\raggedleft{\color{halfgray}\chapterNumber\thechapter} \\ }
	{0pt}%
	{\color{safe}\LARGE\titlerule\vspace*{.9\baselineskip}\raggedright\spacedallcaps}
	[\normalcolor\normalsize\vspace*{.8\baselineskip}\titlerule]%
	\titleformat{\chapter}[display]%
	{\relax}
	{\mbox{}\oldmarginpar{\vspace*{-3\baselineskip}\color{halfgray}\chapterNumber\thechapter}}
	{0pt}%
	{\color{safe}\LARGE\raggedright\spacedallcaps}
	[\normalcolor\normalsize\vspace*{.8\baselineskip}\titlerule]%
\renewcommand{\@chapapp}{}
\newenvironment{chapquote}[2][2em]
{\setlength{\@tempdima}{#1}%
	\def\chapquote@author{#2}%
	\parshape 1 \@tempdima \dimexpr\textwidth-2\@tempdima\relax%
	\itshape}
{\par\normalfont\hfill--\ \chapquote@author\hspace*{\@tempdima}\par\bigskip}
\newcommand*{\blankpage}{%
	\par\vspace*{\fill}%
	{\centering\emph{This page was intentionally left blank.}\par}
	\vspace{\fill}%
}
\renewcommand*{\cleardoubleoddstandardpage}{%
	\clearpage
	\if@twoside
	\ifodd\c@page
	\else
	\blankpage
	\thispagestyle{empty}%
	\newpage
	\if@twocolumn\hbox{}\newpage\fi
	\fi
	\fi
}
\begin{document}

\selectlanguage{american}

\frontmatter

\begin{titlepage}
    \begin{center}
        \large
        \begingroup
            \spacedlowsmallcaps{Diss. ETH No. 26496}
        \endgroup

        \hfill

        \vfill

        \begingroup
        {\LARGE \bfseries  
         Provable Non-Convex Optimization and Algorithm Validation via Submodularity}
        \endgroup

        \vfill

        \begingroup
            A thesis submitted to attain the degree of\\[1em]
            \spacedallcaps{Doctor of Sciences }of\spacedallcaps{ ETH Zurich}\\
            (Dr. sc. ETH Zurich)
        \endgroup

        \vfill

        \begingroup
            presented by\\[1em]
            \normalsize{\spacedallcaps{Yatao (An) Bian}}\\[1em]
            Master of Science in Engineering\\
            Shanghai Jiao Tong University\\
            born on 17.01.1990\\
            citizen of China\\ 
        \endgroup

        \vfill

        \begingroup
            accepted on the recommendation of\\[1em]
            Prof.\ Dr.\ Joachim M. Buhmann, examiner\\
            Prof.\ Dr.\ Andreas Krause, co-examiner\\
            Prof.\ Dr.\ Yisong Yue, co-examiner
        \endgroup

        \vfill

        \myTime

        \vfill

    \end{center}
\end{titlepage}

\thispagestyle{empty}

\hfill

\vfill

\noindent\myName: \textit{\myTitle,} 
\textcopyright\ \myTime

%
%

%
%
%
%
%

\cleardoublepage

\cleardoublepage
\def\dir{frontbackmatter}
\pdfbookmark[1]{Abstract}{Abstract}
\begingroup
\let\clearpage\relax
\let\cleardoublepage\relax
\let\cleardoublepage\relax

\chapter*{Abstract}

Submodularity is one of the most well-studied properties of problem
classes in combinatorial optimization and many applications of machine
learning and data mining, with strong implications for guaranteed
optimization.  In this thesis, we investigate the role of
submodularity in provable non-convex optimization and validation of
algorithms.

A profound understanding which classes of functions can be tractably
optimized remains a central challenge for non-convex optimization. By
advancing the notion of submodularity to continuous domains (termed
``continuous submodularity''), we characterize a class of generally
non-convex and non-concave functions -- \emph{continuous submodular
  functions}, and derive algorithms for approximately maximizing them
with strong approximation guarantees.  Meanwhile, continuous
submodularity captures a wide spectrum of applications, ranging from
revenue maximization with general marketing strategies, MAP inference
for DPPs to mean field inference for probabilistic log-submodular
models, which renders it as a valuable domain knowledge in optimizing
this class of objectives.

Validation of algorithms is an information-theoretic framework to
investigate the robustness of algorithms to fluctuations in the input
/ observations and their generalization ability. We
investigate various algorithms for one of the paradigmatic
unconstrained submodular maximization problem: \MAXCUT.  Due to
submodularity of the \MAXCUT objective, we are able to present
efficient approaches to calculate the algorithmic information content
of \MAXCUT algorithms.  The results provide insights into the
robustness of different algorithmic techniques for \MAXCUT.

\vfill

\pagebreak

\selectlanguage{german}

\pdfbookmark[1]{Zusammenfassung}{Zusammenfassung}

\chapter*{Zusammenfassung}
Submodularit{\" a}t ist eine der am besten erforschten Eigenschaften von
Problemklassen in der
kombinatorischen Optimierung. 
Sie findet Anwendung in Bereichen des maschinellen Lernens und des
Data-Minings. Submodularit{\" a}t liefert ausserdem wesentliche Grundlagen für algorithmische Garantien in der Optimierung. In dieser
Arbeit untersuchen wir die Rolle von Submodularit{\" a}t in
nicht-konvexer Optimierung sowie in der Validierung von Algorithmen.

Eine zentrale Herausforderung im Bereich der nicht-konvexen
Optimierung liegt darin, das Verst{\" a}ndnis {\"u}ber
Funktionsklassen, welche nachweislich optimiert werden k{\"o}nnen, zu
erweitern. Indem wir den Begriff von Submodularit{\" a}t auf den
kontinuierlichen Bereich {\"u}bertragen (bezeichnet als
„kontinuierliche Submodularit{\" a}t”), k{\"o}nnen wir eine allgemeine
Klasse von nicht-konvexen und nicht-konkaven Funktionen
beschreiben. Wir entwickeln Algorithmen, die diese kontinuierlichen
submodularen Funktionen mit beweisbaren Garantien approximativ
optimieren k{\"o}nnen. Die kontinuierliche Submodularität eröffnet ein
breites Anwendungsspektrum, das von Umsatzmaximierung mit allgemeinen
Vermarktungsstrategien, MAP-Inferenz f{\"u}r DPPs bis hin zur
approximativen Inferenz mittels der „Mean-field'' Näherung f{\"u}r
probabilistische log-submodulare Modelle reicht.

Die Validierung von Algorithmen ist ein informationstheoretisches
Konzept, das die Robustheit gegen{\"u}ber Fluktuationen in den
Eingabe-Daten bzw. Beobachtungen {\"u}berpr{\"u}ft. Das Konzept
untersucht damit die Generalisierungsfähigkeit eines Algorithmus. Wir untersuchen verschiedene Algorithmen  f{\"u}r eines
der paradigmatischen submodularen
Maximierungsprobleme: \maxcut. Aufgrund der Submodularität der
\maxcut\ Kostenfunktion k{\"o}nnen wir effiziente Ansätze zur Berechnung des
algorithmischen Informationsgehaltes von \maxcut-Algorithmen
herleiten. Die Resultate liefern Einblicke in die Robustheit der
verschiedenen algorithmischen Verfahren f{\"u}r \maxcut.

\selectlanguage{american}

\endgroup

\vfill

\cleardoublepage
\def\dir{frontbackmatter}
\pdfbookmark[1]{Publications}{Publications}
\chapter*{Publications}

The following publications\footnote{My name was also written as (Andrew) An Bian due to a name change. My ORCID iD is \href{https://orcid.org/0000-0002-2368-4084}{orcid.org/0000-0002-2368-4084}.} are included in this thesis:
\begin{itemize}
	\item \fullcite{bian2019optimalmeanfield}

    \item \fullcite{bian2017guaranteed}
    
    \item \fullcite{biannips2017nonmonotone}
       
     \item \fullcite{bian2016information}
     
     \item \fullcite{ITW15_BianGB}
\end{itemize}


The following publications were part of my PhD research, are however not covered in this thesis. The topics of these publications are
outside of the scope of the material covered here:
\begin{itemize}
    \item \fullcite{bian2013parallel}
    
    \item \fullcite{he2018cola}\\
    {\small\textit{$^*$ Authors contributed equally.}}
    
    \item \fullcite{Celestine2018trust}
    
    \item \fullcite{bianicml2017guarantees}
    
    \item \fullcite{gorbach2017model}.\\
    {\small\textit{$^*$ Authors contributed equally.}}
\end{itemize}

\cleardoublepage
\def\dir{frontbackmatter}
\pdfbookmark[1]{Acknowledgments}{acknowledgments}

%
%

\chapter*{Acknowledgments}

I am deeply indebted to my supervisor, Prof. Joachim M. Buhmann, for his boundless generosity of encouragement, patience, advice and enthusiasm. I would like to thank him for providing the opportunity to work in his group and, allowing much freedom in exploring various topics. He always provides me support and guidance in both research and life, whenever I came to the door of his office. I am deeply grateful to Prof. Andreas Krause, for his generosity of time, insight, and friendship, who provides much more than a co-examiner and a collaborator could; To Prof. Yisong Yue, for taking time to read through the draft of my thesis, giving valuable comments and examine me.  To Prof. Martin Jaggi, for his always patience and kindness when interacting with me; To Rita Klute, who cares for us like her own children; To Rebekka Burkholz, for the warmth, encouragement and optimism she brings to us; To Yuxin Chen, who treats me like a brother, for his always patience and constant support whenever I had a difficulty; To Kaixiang Zhang, for being my best friend and brother; to Shuangying Jiang, for the encouragement and deep communications we had ever since the high school, for being a friend like my sister; To Alex Gronskiy, for giving me advice on my first research program; To Sebastian Tschiatschek, for sharing with me the joy of his son; To Hadi Daneshmand, for the warm chats with him and support from him; To Luis Haug, for the happy chats while we were drinking together; To Jie Song, for his generous help ever since I started my PhD program and being one of my best friends; To Kfir Levy, for letting me know the pure joy of doing research; To David Balduzzi, for his generous suggestions and recommendations; To Lie He, for his smart questions which drive me to think deeper; To Gabriel Krummenacher, for teaching me how to be a TA; To Gideon Dresdner, for his humor that blends American and Chinese cultures; To Max Paulus,  for always ``pushing’’ me to join the rowing team; To Hamed Hassani, for his advice when I encountered a difficult rebuttal; To Baharan Mirzasoleiman, for her patient discussions when I started to work on submodularity; To Dima Laptev, for training me to be a qualified IT coordinator; To Yannic Kilcher, for letting me know the charm of a ``super condi’’; To Mohammad Reza Karimi, for his positive attitude towards life and everyone around; To Alina Dubatovka, for her sense of responsibility and frankness when interacting with us; To Nico Gorbach, for letting me know how to live a balanced life; To Djordje Miladinovic, for introducing me cool bars and ``interesting’’ places; To Stefan Bauer, for sharing with me the pain and joy of a doctoral program during lunches and dinners; To Viktor Wegmayr,  for his encouraging words and optimism he inspires; To Aytunc Sahin, for his humor and support; To Zeke Wang, for the joint dinners, travels and sports; To Yuheng Zhang, the only philosopher I know, for leading me to think beyond techniques; To Jianrong Wen, for organizing various sport events in Zurich; To Han Wu, the best mathematician I know, for his generous help in solving a difficult geometric problem; To Philippe Wenk, for sharing with me encouraging stories when I had a bad mood; To Stefan Stark, for sharing with me the story of being a Stark (of GOT).

Many thanks to my other colleagues in the Institute for Machine Learning, who taught me a lot during the numerous occasions, Peter Schüffler, Judith Zimmermann, Josip Djolonga, Paolo Penna, Luca Corinzia, Fabian Laumer, Ivan Ovinnikov, Adish Singla,  Xinrui Lyu, Felix Berkenkamp, Zalán Borsos, Charlotte Bunne, Sebastian Curi, Johannes Kirschner, Anastasia Makarova, Mojmír Mutný, Matteo Turchetta, Aurelien Lucchi,  Celestine Dünner, Carsten Eickhoff, Octavian Ganea, Paulina Grnarova, Florian Schmidt,  Jonas Kohler, Stephanie Hyland, Matthias Hüser, Harun Mustafa, Vincent Fortuin, Natalia Marciniak, Mikhail Karasikov, for the great time we spent together.

Lots of thanks also to countless other friends (there are too many to list, so I will sample some randomly): Yanan Sui, Liwei Wang, Wen Li, Johann Gangji,  Xu Chen, Jinlong Tu, Mengmeng Deng, Ning Yang, Xiangyang Liu, Benjamin Fischer, Bin Huang, Xuanlong Guo, Xinlei Qiu, Bernd Deffner, Meng Li, Jing Yang, Guang Lu, Meijun Liu, Meng Liu, Lysie Champion, Yuhua Chen,  Wuyan Wang, Cen Nan, Jiajia Liu, Stanley Chan, Chen Chen, Feng Lue, Zhonghai Wang,  Peidong Liu, for their support and for the wonderful time we spent together and still spend together.

I also would like to thank Prof. Yuncai Liu, who guided me to the realm of research during my master program; To Jian Song, one of the best programmers I know, who led me into the area of parallel computing; To Xiong Li, for the early guidance of doing scientific research; To Junchi Yan, who gave me countless suggestions; To Prof. Ming-Hsuan Yang, for the instructions of writing a scientific paper.

I owe a lot to my family, for their unconditional support and love, without which nothing would be possible. I am grateful to my father, who provided me love, tolerance and guidance when I was young; To my sister for her caring, for always listening to my complaints and joys; Especially to my mother for her incalculable effort in taking care of the family by herself, for her faith in me and her dedication to my success – It is to her I dedicate this dissertation. Lastly, my utmost appreciation goes to my beloved girlfriend, for her caring, love and understanding during my good and bad times. Holding a PhD herself, she understands me more than anyone else could; She always cheers me up when I have a hard time; Without her nothing would be worthwhile.

\cleardoublepage
\def\dir{frontbackmatter}
\refstepcounter{dummy}
\pdfbookmark[1]{\contentsname}{tableofcontents}
\setcounter{tocdepth}{2} 
\setcounter{secnumdepth}{3} 
\manualmark
\markboth{\spacedlowsmallcaps{\contentsname}}{\spacedlowsmallcaps{\contentsname}}
\tableofcontents
\automark[section]{chapter}
\renewcommand{\chaptermark}[1]{\markboth{\spacedlowsmallcaps{#1}}{\spacedlowsmallcaps{#1}}}
\renewcommand{\sectionmark}[1]{\markright{\thesection\enspace\spacedlowsmallcaps{#1}}}
\clearpage

\begingroup
    \let\clearpage\relax
    \let\cleardoublepage\relax
    \let\cleardoublepage\relax
    \refstepcounter{dummy}
    \pdfbookmark[1]{\listfigurename}{lof}
    \listoffigures

	\vfill 
    \pagebreak 

    \refstepcounter{dummy}
    \pdfbookmark[1]{\listtablename}{lot}
    \listoftables
\endgroup

\cleardoublepage

\mainmatter

\def\dir{chapters/1-introduction}
\chapter{Introduction}
\label{chapter_intro}

\begin{chapquote}{Confucius}
I hear and I forget. I see and I remember. I do and I understand.
\end{chapquote}

\section{What is Submodularity over Binary Domains?}

Submodularity  is  a  structural   property  usually  associated  with
\emph{set  functions}, with  important  implications for  optimization
\citep{nemhauser1978analysis}.  The general  setup
requires a groundset $\groundset$ containing $n$ items, which could
be, for instance, all the features in supervised learning problems, or
all sensor locations in sensor placement. Usually we have an objective
function which maps a subset of $\groundset$ to a real value:
$F(X): 2^\groundset \rightarrow \R_+$, which often measures utility,
coverage, relevance etc.

Equivalently, one can express any subset $X$ as a binary vector
$\x\in \{0, 1\}^n$: component $i$ of $\x,\;x_i=1$ means that item $i$
is inside $X$, otherwise item $i$ is outside of $X$. This binary
representation associates the powerset of $\groundset$
with all vertices of an $n$-dimensional hypercube.  Because of this, we
also call submodularity of set functions ``submodularity over binary
domains''.

Over binary domains, there are two famous definitions of
submodularity: the submodularity definition and the diminishing
returns (DR) definition.

\begin{definition}[Submodularity definition]
  A set function $F(X): 2^\groundset \mapsto \R$ is submodular iff
  $\forall X, Y \subseteq \groundset$, it holds:
\begin{align}
	F(X) + F(Y) \geq F(X\cup Y) + F(X \cap Y).
\end{align}
\end{definition}

One can easily show that it is equivalent to the following DR definition:

\begin{definition}[DR definition]
  A set function $F(X): 2^\groundset \mapsto \R$ is submodular iff
  $\forall A \subseteq B \subseteq \groundset$ and
  $\forall v \in \groundset \setminus B$, it holds:
	\begin{align}
	F(A \cup \set{v}) - F(A) \geq F(B\cup \set{v}) - F(B).
	\end{align}
\end{definition}

Optimizing submodular set functions has found numerous applications in
machine learning, including variable selection
\citep{DBLP:conf/uai/KrauseG05}, dictionary learning
\citep{krause2010submodular,das2011submodular}, sparsity inducing
regularizers \citep{bach2010structured}, summarization
\citep{lin2011class,mirzasoleiman2013distributed} and variational
inference \citep{djolonga2014map}. Submodular set functions can be
efficiently minimized \citep{iwata2001combinatorial}, and there are
strong guarantees for approximate maximization
\citep{nemhauser1978analysis,krause2012submodular}.

\section{Why Do We Need  Continuous Submodularity?}

Continuous submodularity essentially captures the weak diminishing returns
phenomenon over continuous domains.  In summary, there are two
motivations for studying continuous submodularity: i) It is an
important modeling ingredient for many real-world applications; ii) It
captures a subclass of well-behaved non-convex optimization problems,
which admits guaranteed  approximate optimization with algorithms running in
polynomial time.

\subsection{Natural Prior Knowledge for Modeling}

In order to illustrate the first motivation, let us consider a virtual
scenario here. Suppose you got stuck in the desert one day, and became
extremely thirsty. After two days of exploration you found a bottle of
water, what is even better is that you also found a bottle of coke.

At this very moment, let us use a two-dimensional function
$f([x_1; x_2])$ to quantize the ``happiness'' gained by having $x_1$
quantity of water and $x_2$ quantity of coke. Let
$\delta = [50 \text{ml water} ; 50 \text{ml coke}]$. Now it is natural
to see that the following inequality shall hold:
$f([1ml; 1ml] + \delta) - f([1ml; 1ml]) \geq f([100ml;
100ml] + \delta) - f([100ml; 100ml])$.
Due to the diminishing returns property, the LHS of the inequality measures the marginal
gain of happiness by having $\delta$ more [water, coke] based on a
\emph{small} context ([1ml; 1ml]), while the RHS means the marginal
gain based on a \emph{large} context ([100ml; 100ml]).
The diminishing returns (DR) property models the context sensitive
expectation that adding one more unit of resource
contributes more in the small context than in a large context.

Now it is straightforward to see that DR is a natural component in
many real-world models. For example, user preference in recommender
systems, customer satisfaction, influence in social advertisements
etc.

\subsection{A Provable Non-Convex Structure}

Non-convex optimization delineates the new frontier in machine
learning, since it arises in numerous learning tasks from training
deep neural networks to latent variable models
\citep{anandkumar2014tensor}.  A fundamental problem in non-convex
optimization is to reach a stationary point assuming smoothness of the
objective for unconstrained optimization
\citep{sra2012scalable,li2015accelerated,reddi2016fast,allen2016variance}
or constrained optimization problems
\citep{ghadimi2016mini,lacoste2016convergence}.
However, without proper assumptions, a stationary point may not lead
to any global approximation guarantee.  It remains a challenging
problem to understand which classes of non-convex objectives can be
tractably optimized.

In pursuit of solving this challenging problem, we show that
continuous submodularity provides a natural structure for
provable non-convex optimization problems.
It shows up in various important non-convex objectives. Let us look at
a simple example by considering a classical quadratic program (QP):
$f(\x) = \frac{1}{2}\x^\trans \BH \x + \bh^\trans \x + c$. When $\BH$
is symmetric, we know that the Hessian matrix is $\nabla^2 f =
\BH$.
Let us consider a specific two dimensional example, where
$\BH = [-1, -2; -2, -1]$, one can verify that its eigenvalues are
$[1; -3]$. So it is an indefinite quadratic program, which is
neither convex, nor concave. However, it will soon be clear that it is a
DR-submodular function after you have read the definitions in
\cref{chapter_characterizations}, and we have proposed polynomial-time
solvers to optimize it with strong approximation guarantees.

This structure has been used in various non-convex objectives, which
might been known for decades. People may have developed different
algorithms to solve them. However, previously researchers did not
realize that they share this common structure. Examples include but
are not limited to the QPs  studied in \citet{kim2003exact}, the  \lovasz \citep{lovasz1983submodular} and
multilinear extensions \citep{calinescu2007maximizing} of submodular
set functions, or to the softmax extensions \citep{gillenwater2012near} for
DPP (determinantal point process) MAP inference.

\section[Algorithmic Information Content]{Analysis of \MAXCUT\
  Algorithms via Algorithmic Information Content}
\label{sec_intro_alg_information}

Algorithmic information content is originally motivated by the
approximation set coding (ASC) framework \citep{Buhmann10isit,
  JB:mcpr:2011,buhmann2013simbad}, and it measures the amount of
information that an algorithm can extract from noisy observations of data
instances. So it is a natural criterion for studying the robustness of
algorithms.

For algorithmic analysis in the general setting, we investigate the
generalization ability of an algorithm $\A$ under the
\emph{two-instance scenario}, which assumes a
generative process of data instances: i) Generate a
``master instance'' $G$, e.g., a complete graph with Gaussian distributed edge
weights; ii) Generate two data instances
$G^\prime,\;G^{\prime\prime}$ by independently applying a noise
process to the master instance $G$.
With an abuse of notation, we use $\graphrv, \graphrv'$ and $\graphrv''$ to denote the corresponding  random variables in this generative process, and use $G, G', G''$ to represent the realizations. The dependence relationship of these random variables can be described by the graphical model in \cref{fig_two_instance}.

\begin{figure}[htbp]
	\centering
	\includegraphics[width=0.5\textwidth]{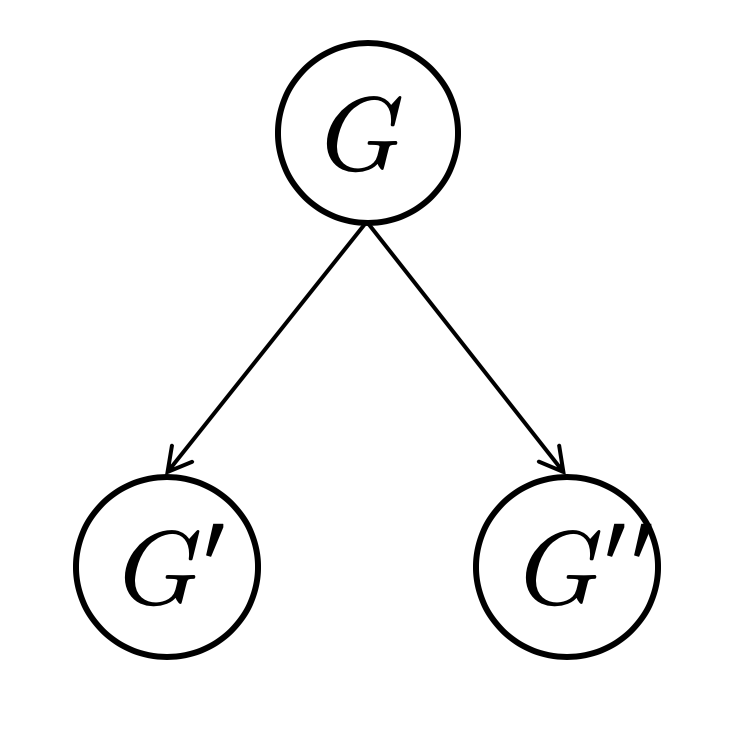}
	\caption{Graphical model induced by the two-instance scenario.}
	\label{fig_two_instance}
\end{figure}

The algorithm \alg  then calculates a sequence of
posteriors $\{\prob_t^{\A}(c|G')\}$, $\{\prob_t^{\A}(c|G'')\}$ as a
function of time $t$.  The variable $c$ denotes a solution in the hypothesis/solution
space $\H$.  The \emph{posterior agreement} (PA) criterion is
defined to measure the overlap between the two posteriors at time $t$,
\begin{equation}\label{eq:pa}
k_t^{\A}(G', G''):= \sum\nolimits_{c\in \H}\prob_t^{\A}(c|G')\prob_t^{\A}(c|G''). \quad  \texttt{(PA)}
\end{equation}

We define the \textit{information content} of an algorithm $\A$ as the
maximal \textit{temporal} information content $I_t^{\alg}(\graphrv'; \graphrv'')$
at time $t$:
\begin{align}\label{eq_ic}
I^{\alg}(\graphrv'; \graphrv'') &:=
\max_t  I_t^{\mathscr{A}}(\graphrv'; \graphrv'')  \\\notag
& = \max_t  \mathbb{E}_{G', G''}
\bigl[
\log \bigl( |\H| k_t^{\A}(G', G'') \bigr)
\bigr].
\end{align}

It generalizes the algorithmic information content of
\citet{informativemst}.
$ I^{\mathscr{A}}_t (\graphrv'; \graphrv'')$ measures how much
information is extracted by ${\mathscr{A}}$ at time $t$ from the
input data that is relevant to the output data, thus
reflecting the generalization ability.
Note that the definition can be easily generalized for continuous
algorithms by interpreting $t$ as the running time.

The algorithmic information content  naturally suggests the following algorithm regularization and
validation strategy:
\begin{itemize}
\item \textit{Regularize} an algorithm $\A$ by stopping it at the
  optimal time, which is defined as
  $t^* = \arg\max_t \mathbb{E}_{G', G''} \bigl[ \log \bigl( |\H|
  k_t^{\A}(G', G'') \bigr) \bigr]$.
  It corresponds to the well-known early-stopping strategy
  \citep{giles2001overfitting};
\item \textit{Validation}: Use $I^\A$ to measure the generalization ability of an algorithm \alg. According to this measure, we can, for example, search for generalizable algorithms
  under a specific
  data generation process.
\end{itemize}

\MAXCUT is one typical instance  of the unconstrained submodular maximization (USM) problem.
It is used in various scenarios, such as
semi-supervised learning (\cite{Wang:2013:SLU:2502581.2502605}),   opinion mining in
social networks  (\cite{agrawal2003mining}),  statistical physics and
 circuit layout design (\cite{barahona1988application}).
Beside
\MAXCUT, USM captures many practical problems such as
\textsc{MaxDiCut} \citep{halperin2001combinatorial}, variants of
\textsc{MaxSat} and the maximum facility location problem
\citep{cornuejols1977uncapacitated,ageev19990}.

Submodularity plays an important role in information-content based
analysis for \MAXCUT algorithms.  Due to the submodular nature of the
\MAXCUT objective, we can design efficient methods to calculate the
algorithmic information content of several \MAXCUT algorithms, so as
to conduct efficient analysis of these algorithms.

\section{Contributions and Thesis Structure}
\label{sec_contributions_structure}

\subsection{Contributions}

In this work we investigate the role of submodularity in guaranteed
non-convex optimization and algorithm validation, which results in the
following contributions:

\textbf{For non-convex optimization: }

\begin{enumerate}

\item By lifting the notion of submodularity to continuous domains, we
  identify a subclass of tractable non-convex optimization problems:
  continuous submodular optimization. We provide a thorough
  characterization of continuous submodularity, which results in
  $0^{\text{th}}$ order, $1^{\text{st}}$ order and $2^{\text{nd}}$
  order definitions.

\item We propose hardness results and provable algorithms for
  constrained submodular maximization in three settings: i) Maximizing
  monotone functions with down-closed convex constraints; ii)
  Maximizing non-monotone functions with box constraints; iii)
  Maximizing non-monotone functions with down-closed convex
  constraints.

\item We present representative applications with the studied
  continuous submodular objectives, and extensively evaluate the
  proposed algorithms on these applications.

\end{enumerate}

\textbf{For algorithm validation:}

\begin{enumerate}
\item Motivated by the ``coding by posterior'' framework, we formulate
  the posterior agreement (PA) objective as a criterion for algorithm
  validation.

\item We present efficient approaches to evaluate the PA objective for
  various algorithms of the \MAXCUT problem, which is one classical instance of the unconstrained submodular maximization
  problem. The studied \MAXCUT algorithms involve different
  algorithmic techniques, such as greedy heuristics and semidefinite
  programming relaxation.

\item We validate the \MAXCUT algorithms with extensive experiments on
  different synthetic graph instances.
\end{enumerate}

\subsection{Thesis Structure}

In \cref{sec-background} we present notations, background and related
work.  In \cref{chapter_characterizations} we firstly give a thorough
characterization of the class of continuous submodular and
DR-submodular\footnote{A DR-submodular function is a submodular
  function with the additional diminishing returns (DR) property, which
  will be formally defined in \cref{sec_defs_continuous_submodurity}.}
functions, then present some intriguing properties for the problem of
constrained DR-submodular maximization, such as the local-global
relation.  In \cref{chapter_applications} we illustrate representative
applications of continuous submodular optimization.

In the next three chapters we discuss hardness results and algorithmic
techniques for constrained DR-submodular maximization in different
settings: \cref{chapter_max_monotone} illustrates how to maximize
monotone continuous DR-submodular functions,
\cref{chapter_doublegreedy} studies box-constrained non-monotone  continuous
submodular maximization and \cref{chapter_constrained_nonmonotone}
provides techniques on maximizing non-monotone DR-submodular functions
with a down-closed convex constraint.

Chapters \labelcref{chapter_greedy_maxcut,chapter_Goemans_Williamson,chapter_mean_field}
contain details on algorithm and model validation with submodular
objectives: \cref{chapter_greedy_maxcut} shows efficient methods for
calculating the posterior agreement  of greedy \MAXCUT algorithms,
\cref{chapter_Goemans_Williamson} presents approximating techniques
for evaluating the posterior agreement for the classical
Geomans-Williamson's \MAXCUT algorithm, \cref{chapter_mean_field}
illustrates provable continuous submodular maximization algorithms to
approximately maximize the mean field lower bound of posterior
agreement.

Lastly, \cref{chapter_disc_future_work} discusses potential future
directions and concludes the thesis.

\def\dir{chapters/2-background}
\chapter{Background} \label{sec-background}

\begin{chapquote}{Lao Tzu}
	A journey of a thousand miles begins with a single step.
\end{chapquote}

We will introduce important notations, background and
related work in this chapter.

\section{Notation}

Throughout this work we assume
$ \groundset =\{\ele_1, \ele_2,..., \ele_n\}$ being the ground set of
$n$ elements, and $\chara_i\in\R^n$ is the characteristic vector for
element $\ele_i$ (also the standard $i^\text{th}$ basis vector).
We use boldface letters $\x\in \R^\groundset$ and $\x\in \R^n$
interchangebly to indicate an $n$-dimensional vector, where $x_i$ is
the $i^\text{th}$ entry of $\x$. We use a boldface captial letter
$\BA\in\R^{m\times n}$ to denote an $m$ by $n$ matrix and use $A_{ij}$
to denote its ${ij}^\text{th}$ entry.
By default, $f(\cdot)$ is used to denote a continuous function, and
$F(\cdot)$ to represent a set function.  
For a differentiable function $f(\cdot)$, $\nabla f(\cdot)$ denotes its gradient, and for a twice differentiable function $f(\cdot)$, $\nabla^2 f(\cdot)$ denotes its Hessian. 
$[n]:= \{1,...,n\}$ for an
integer $n \geq 1$.  $\|\cdot\|$ means the Euclidean norm by default.
Given two vectors $\x,\y$, $\x\leqco \y$ means
$x_i\leq y_i, \forall i$.  $\x\vee \y$ and $\x \wedge \y $ denote
coordinate-wise maximum and coordinate-wise minimum, respectively.
$\sete{x}{i}{k}$ is the operation of setting the
$i^\text{th}$ element of $\x$ to $k$, while keeping all other elements
unchanged, i.e., $\sete{x}{i}{k}=\x-x_i \bas_i + k\bas_i$.

For the two-instance scenario in algorithm validation, we use \alg to denote an algorithm. With an abuse of notation, we use \graphrv to denote the random variable of a graph, and use $G$ as its realization. $I^{\alg}$ represents the algorithmic information content of \alg, and $\I$ denotes the classical mutual information.

\section{Related Work on Validation of Models and Algorithms}
\label{sec_validation_related_work}

Both model and algorithm validations are based on the posterior
agreement objective. It is motivated by the ``coding by posterior''
framework, which will be formally verified in \cref{sec_coding}.
On a high level, it is motivated by an analogue to the noisy communication
channel in Shannon's information theory \citep{cover2012elements}.

\citet{Buhmann10isit,JB:mcpr:2011} propose the approximation set
coding (ASC) framework to conduct model selection for K-means
clustering. Then it is used as a criterion to determine the rank for a
truncated singular value decomposition \citep{frank2011selecting} and
do model selection for spectral clustering
\citep{chehreghani2012information}.
It is further developed as a principled way to evaluate generalization
of algorithms for sorting algorithms \citep{busse2012information},
minimum spanning tree algorithms
\citep{informativemst,gronskiy2018statistical} and greedy \MAXCUT
algorithms \citep{ITW15_BianGB}.

Posterior agreement  (PA) is a generalization of the ASC framework.  For model
validation, it determines an optimal trade-off between the
expressiveness of a model and robustness by measuring the overlap
between posteriors of the model parameter conditioned on the two data
instances.  It has been employed to conduct model selection for
Gaussian processes regression \citep{gorbach2017model} and algorithm
validation \citep{bian2016information}.
Recently, \citet{buhmannaposterior} prove rigorous asymptotics of PA
on two combinatorial problems: Sparse minimum bisection and
Lawler's quadratic assignment problem.

\section{Related Work on Submodular Optimization}

\subsection{Submodularity over Discrete Domains}

Submodularity is often viewed as a discrete analogue of convexity, and
provides computationally effective structure so that many discrete
problems with this property are efficiently solvable or approximable.
Of particular interest is a $(1-1/e)$-approximation for maximizing a
monotone submodular set function subject to a cardinality, a matroid,
or a knapsack constraint \citep{nemhauser1978analysis,
  DBLP:conf/stoc/Vondrak08, sviridenko2004note}.  For non-monotone
submodular functions, a 0.325-approximation under cardinality and
matroid constraints \citep{gharan2011submodular}, and a
0.2-approximation under knapsack constraint has been shown
\citep{lee2009non}.  Another result is unconstrained maximization of
non-monotone submodular set functions, for which
\citet{buchbinder2012tight} propose the deterministic double greedy
algorithm with a 1/3 approximation guarantee, and the randomized double
greedy algorithm which achieves the tight 1/2 approximation guarantee.

Although most commonly associated with set functions, in many
practical scenarios, it is natural to consider generalizations of
submodular set functions, including \textit{bisubmodular} functions,
\textit{$k$-submodular} functions, \textit{tree-submodular} functions,
\textit{adaptive submodular} functions, as well as submodular
functions defined over integer lattices.

\citet{golovin2011adaptive} introduce the notion of adaptive submodularity to
generalize submodular set functions to adaptive policies. 
\citet{kolmogorov2011submodularity} studies tree-submodular functions
and presents a polynomial-time algorithm for minimizing them. 
For distributive lattices, it is well-known that the combinatorial
polynomial-time algorithms for minimizing a submodular set function
can be adopted to minimize a submodular function over a bounded
integer lattice \citep{fujishige2005submodular}. 

Recently, maximizing a submodular function over integer lattices has
attracted considerable attention. In particular,
\citet{soma2014optimal} develop a $(1-1/e)$-approximation algorithm
for maximizing a monotone DR-submodular integer function under a
knapsack constraint. For non-monotone submodular functions over the
bounded integer lattice, \citet{gottschalk2015submodular} provide a
1/3-approximation algorithm. Approximation algorithms for maximizing
bisubmodular functions and $k$-submodular functions have also been
proposed by \citet{singh2012bisubmodular,ward2014maximizing}.
Recently, \citet{soma2018maximizing} present a continuous extension
for maximizing monotone integer submodular functions, which is
non-smooth.

\subsection{Submodularity over Continuous Domains}

Even though submodularity is most widely considered in the discrete
realm, the notion can be generalized to arbitrary lattices
\citep{fujishige2005submodular}.  
\citet{DBLP:journals/mor/Wolsey82} considers maximizing a special
class of continuous submodular functions subject to one knapsack
constraint, in the context of solving location problems. That class of
functions are additionally required to be monotone, piecewise linear
and \textit{concave}.
\citet{calinescu2007maximizing} and \citet{DBLP:conf/stoc/Vondrak08}
discuss a subclass of continuous submodular functions, which is termed
smooth submodular functions\footnote{A function
  $f: [0,1]^n \rightarrow \R$ is smooth submodular if it has second
  partial derivatives everywhere and all entries of its Hessian matrix
  are non-positive.}, to describe the multilinear extension of a
submodular set function.  They propose the continuous greedy
algorithm, which has a $(1-1/e)$ approximation guarantee on maximizing
a smooth submodular functions under a down-monotone polytope
constraint.  Recently, \citet{bach2015submodular} considers the
minimization of a continuous submodular function, and proves that
efficient techniques from convex optimization may be used for
minimization.

Recently, \citet{ene2016reduction} provide a reduction from an
integer DR-submodular function maximization problem to a submodular
set function maximization problem, which suggests a way to optimize
continuous submodular functions over \textit{simple} continuous
constriants: Discretize the continuous function and constraint to be
an integer instance, and then optimize it using the
reduction. However, for monotone DR-submodular functions maximization,
this method can not handle the general continuous constraints
discussed in this work, i.e., arbitrary down-closed convex sets. And
for general submodular function maximization, this method cannot be
applied, since the reduction needs the additional diminishing returns
property.  Therefore we focus on continuous methods in this work.

Very recently, \citet{niazadeh2018optimal} present optimal algorithms for
non-monotone submodular maximization with a box constraint.
Continuous submodular maximization is also well studied in the
stochastic setting \citep{hassani2017gradient,mokhtari2018stochastic},
online setting \citep{chen2018online}, bandit setting
\citep{durr2019non} and decentralized setting
\citep{mokhtari2018decentralized}.

\section{Classical Frank-Wolfe Style Algorithms} \label{sec_frank_wolfe}

Since the workhorse algorithms for continuous  DR-submodular maximization are
Frank-Wolfe style algorithms, we give a brief introduction of
classical Frank-Wolfe algorithms in this section.

The Frank-Wolfe algorithm \citep{frank1956algorithm} (also known as
Conditional Gradient algorithm or the Projection-Free algorithm) is
one of the 
classical algorithms for constrained convex
optimization. It has seen a revival in recent years due to its
projection free feature and its ability to exploit structured
constraints \citep{jaggi13}.

The Frank-Wolfe algorithm solves the following constrained
optimization problem:
\begin{align}
	\min_{\x\in \R^n, \; \x\in \D} f(\x),
\end{align}
where $f$ is differentiable with $L$-Lipschitz gradients and the
constraint $\D$ is convex and compact.

A sketch of the Frank-Wolfe algorithm is presented in
\cref{alg_classical_fw}.  It needs an initializer $\x^\pare{0}\in \D$.
Then it runs for $T$ iterations. In each iteration: in
Step \labelcref{alg_fw_lmo} it solves a linear minimization problem
whose objective is defined by the current gradient $\nabla f(\x^{t})$,
this step is often called the linear minimization/maximization oracle
(LMO); In Step
\labelcref{alg_fw_stepsize} a \stepsize $\gamma$ is chosen; Then it
updates the solution $\x$ to be a convex combination of the current
solution and the LMO output $\s$.

\begin{algorithm}[ t]
  \caption{Classical Frank-Wolfe algorithm for constrained convex
    optimization \citep{frank1956algorithm}}\label{alg_classical_fw}
	
	\KwIn{$\min_{\x\in \R^n, \x\in \D} f(\x)$; $\x^\pare{0}\in \D$ }
\For{$t=0\dots T$}{
  {Compute
    $\s^\pare{t} := \argmin_{\s\in \D} \left\langle \s, \nabla
      f(\x^{t}) \right\rangle$ \label{alg_fw_lmo} \tcp*{LMO}}

	{Choose \stepsize $\gamma \in (0, 1]$\;  \label{alg_fw_stepsize}} 
	
{	Update $\x^\pare{t+1}:= (1-\gamma)\x^{t}+\gamma\s^\pare{t}$\;}
}
	\KwOut{$\x^\pare{T}$\;}
\end{algorithm}

There are several popular rules to choose the \stepsize in Step 
\labelcref{alg_fw_stepsize}. For a short summary: i)
$\gamma_t := \frac2{t+2}$, which is often called the ``oblivious''
rule since it does not depend on any information of the optimization
problem; ii)
$\gamma_t = \min\{1, \frac{g_\pare{t}}{L \|\s^\pare{t} -
  \x^\pare{t}\|} \}$,
where
$g_\pare{t} := -\dtp{\nabla f(\x^{t})}{\s^\pare{t} - \x^\pare{t}} $ is
the so-called Frank-Wolfe gap, which is an upper bound of the
suboptimality if $f$ is convex; iii) Line search rule:
$\gamma_t := \argmin_{\gamma\in [0, 1]} f(\x^\pare{t} + \gamma
(\s^\pare{t} - \x^\pare{t}) )$.

\subsection{Frank-Wolfe Algorithm for Non-Convex Optimization}

Recently, Frank-Wolfe algorithms have been extended for smooth
non-convex optimization problems with
constraints. \citet{lacoste2016convergence} analyzed the Frank-Wolfe
method for general constrained non-convex optimization problems, where
he used the Frank-Wolfe gap as the non-stationarity measure.
\citet{reddi2016stochastic} studied Frank-Wolfe methods for non-convex
stochastic and finite-sum optimization problems. They also used the
Frank-Wolfe gap as the non-stationarity measure.

\section{Existing Structures for  Non-Convex Optimization}

\subsection{Quasi-Convexity}

A function $f: \D \mapsto \R$ defined on a convex subset $\D$ of a
real vector space is quasi-convex if for all $\x, \y\in \D$ and
$\lambda\in [0, 1]$ it holds,
\begin{align}
	f(\lambda \x + (1- \lambda)\y) \leq \max\{ f(\x), f(\y) \}.
\end{align}
Quasi-convex optimization problems appear in different areas, such as
industrial organization \citep{wolfstetter1999topics} and computer
vision \citep{ke2007quasiconvex}.
Quasi-convex optimization problems can be solved by a series of convex
feasibility problems \citep{boyd2004convex}.
\citet{hazan2015beyond} studied stochastic quasi-convex optimization,
where they proved that a stochastic version of the normalized gradient
descent can converge to a global minimium for quasi-convex functions
that are locally Lipschitz.

\subsection{Geodesic Convexity}

Geodesic convex functions are a class of generally
non-convex functions in Euclidean space. However, they
still enjoy the nice property that local optimum implies global optimum. 
\citet{sra2016geometric} provided a brief introduction to 
geodesic convex  optimization with machine learning applications.  
Recently, \citet{vishnoi2018geodesic} 
collected details on various aspects of
geodesic convex optimization.

\begin{definition}[Geodesically convex functions]
Let $(\M, g)$ be a Riemannian manifold and $K\subseteq \M$ be a totally convex set with respect to $g$. A function $f: K \rightarrow \R$ is a geodesically convex function with respect to $g$ if $\forall \p, \q \in K$, and for all geodesic $\gamma_{\p \q}: [0, 1]\rightarrow K$ that joins $\p$ to $\q$, it holds,
\begin{align}
\forall t\in [0, 1], f(\gamma_{\p \q}(t)) \leq (1- t) f(\p) + t f(\q).
\end{align} 
\end{definition}

Various applications with non-convex objectives in Euclidean space can be resolved with geodesic convex optimization methods, such as Gaussian mixture models \citep{hosseini2015matrix}, 
metric learning \citep{zadeh2016geometric} and matrix square root \citep{sra2015matrix}. 
By deriving explicit expressions for the smooth manifold structure, such as  inner products, gradients, vector transport and Hessian, various optimization methods have been developed. \citet{jeuris2012survey} presented conjugate gradient, BFGS and trust-region methods. \citet{qi2010riemannian} proposed the Riemannian BFGS (RBFGS) algorithm for general retraction and vector transport. \citet{ring2012optimization} proved its local superlinear rate of convergence. \citet{sra2015conic} presented a limited memory version of RBFGS.

\def\dir{chapters/3-characterizations}
\chapter{Characterizations and Properties of Continuous Submodular Functions}
\chaptermark{Characterizations \& Properties of Continuous Submodularity}
\label{chapter_characterizations}

\begin{chapquote}{Confucius}
	By three methods we may learn wisdom: First, by reflection, which is noblest; Second, by imitation, which is easiest; and third by experience, which is the bitterest.
\end{chapquote}

In order to systematically study continuous submodular optimization,
the first thing would be to investigate the characterizations of it. 
Similar as the definitions of convexity, continuous
submodularity can be described using $0^{\text{th}}$ order,
$1^{\text{st}}$ order and $2^{\text{nd}}$ order conditions, which will
be elaborated in \cref{sec_defs_continuous_submodurity}.
\cref{statement_continuous_submodular_max} states the problem of
constrained submodular maximization in continuous domains and
summarizes necessary assumptions of the analysis. In
\cref{sec_underlying_properties} we present several intriguing
properties of constrained DR-submodular maximization problems,
including concavity along non-negative/non-positive directions and the
local-global relation.  Finally, we investigate a generalized class of
submodular functions on ``conic'' lattices in \cref{sec_lattice}.
This focus allows us to model a larger class of non-trivial
applications that include logistic regression with a non-convex
separable regularizer, non-negative PCA, etc (for details see
\cref{subsec_moti_lattice}).
To optimize them, we provide a reduction that enables to invoke
algorithms for continuous submodular optimization problems.

\section{Characterizations of  Continuous Submodular  Functions}
\label{sec_defs_continuous_submodurity}

Continuous submodular functions are defined on subsets of $\R^n$:
$\X = \prod_{i=1}^n \X_i$, where each $\X_i$ is a compact subset of
$\mathbb{R}$ \citep{topkis1978minimizing, bach2015submodular}.  A
function $f: \X \rightarrow \R$ is submodular \textit{iff} for all
$(\x, \y)\in \X \times \X$,

\begin{equation}\label{eq1}
   f(\x) + f(\y) \geq f(\x \vee \y) + f(\x \wedge \y),  \quad
   (\emph{submodularity}) 
\end{equation}
 
where $\wedge$ and $\vee$ are the coordinate-wise minimum and maximum
operations, respectively.  Specifically, $\X_i$ could be a finite set,
such as $\{0, 1 \}$ (in which case $f(\cdot)$ is called a \textit{set}
function), or $\{0, ..., k_i-1 \}$ (called \textit{integer} function),
where the notion of continuity is vacuous; $\X_i$ can also be an
interval, which is referred to as a continuous domain. In this
section, we consider the interval by default, but it is worth noting
that the properties introduced in this section can be applied to
$\X_i$ being a general compact subset of $\R$.
 
When twice-differentiable, $f(\cdot)$ is submodular iff all
off-diagonal entries of its Hessian are non-positive\footnote{Notice
  that an equilavent definition of (\ref{eq1}) is that
  $\forall \x\in \X$, $\forall i \neq j$ and $a_i, a_j\geq 0$
  s.t. $x_i +a_i\in \X_i, x_j+a_j\in \X_j$, it holds
  $f(\x+a_i\bas_i) + f(\x+a_j\bas_j) \geq f(\x) + f(\x+a_i\bas_i +
  a_j\bas_j)$.
  With $a_i$ and $a_j$ approaching zero, one get (\ref{eq2}).}
\citep{bach2015submodular},
\begin{equation}\label{eq2}
  \forall \x\in \X, \;\; \frac{\partial^2 f(\x)}{\partial x_i \partial x_j}
  \leq 0, \;\; \forall i \neq j. 
\end{equation}

The class of continuous submodular functions contains a subset of both
convex and concave functions, and shares some useful properties with
them (illustrated in \cref{fig_sub}).  Examples include submodular and
convex functions of the form $\phi_{ij}(x_i - x_j)$ for $\phi_{ij}$
convex; %
submodular and concave functions of the form $\x
\mapsto g(\sum_{i=1}^{n} \lambda_i x_i)$ for
$g$
concave and $\lambda_i$
non-negative.  Lastly, indefinite quadratic functions of the form
$f(\x)
= \frac{1}{2} \x^\trans \BH \x + \bh^\trans \x +
c$ with all off-diagonal entries of
$\bmH$
non-positive are examples of submodular but non-convex/non-concave
functions.
Interestingly, characterizations of continuous submodular functions
are in correspondence to those of convex functions, which are
summarized in \cref{tab_comparison}.

 \begin{table}[t]
 	\begin{center}
\caption{Comparison of definitions of submodular and convex functions \citep{bian2017guaranteed}}
\label{tab_comparison} 		

\begin{tabularx}{\textwidth}{|l|X|X|}
  \hline
  Definitions &   Continuous submodular  function $f(\cdot)$ & Convex function $g(\cdot)$, $\forall \lambda\in [0,1]$ \\
  \hline \hline
  $0^{\text{th}}$ order  &  $f(\x) + f(\y) \geq f( \x \vee \y) + f(\x \wedge \y)$ &  $\lambda g(\x) + (1-\lambda)g(\y) \geq g(\lambda \x + (1-\lambda) \y)$   \\
  \hline
  $1^{\text{st}}$ order & {\texttt{weak DR} property (\cref{def_weakdr}), or $\nabla f(\cdot)$ {is a \textcolor{link_color}{weak antitone} mapping} (\cref{lemma_weak_antitone})} & $g(\y) \geq g(\x) +   \dtp{\nabla g(\x)}{\y-\x}$   \\
  \hline
  $2^{\text{nd}}$ order & $\frac{\partial^2 f(\x)}{\partial x_i \partial x_j} \leq 0$, \textcolor{link_color}{$\forall i \neq j$}   & $\nabla^2 g(\x)  \succeq 0$ (symmetric positive semidefinite)   \\
  \hline
\end{tabularx}
\end{center}
\end{table}

\subsection{The DR Property and DR-Submodular Functions}
 
The Diminishing Returns (DR) property  was introduced
when studying set and integer functions.
 We generalize the {DR} property to general functions defined
 over $\X$. It will soon be clear that the {DR} property
 defines a subclass of submodular functions.  All of the proofs can be
 found in \cref{app_proof}.

\begin{figure}
	\centering
	\includegraphics[width=0.6\textwidth]{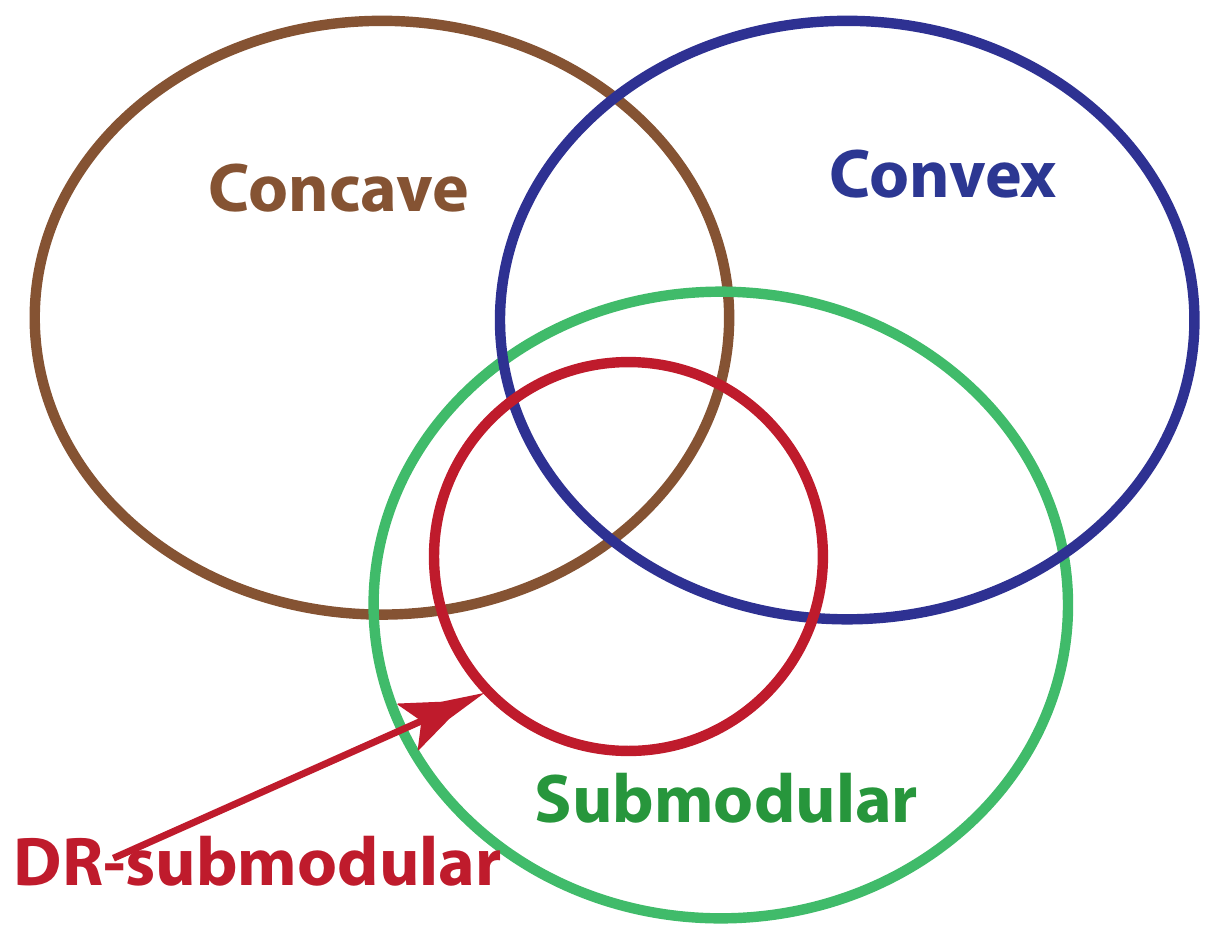}
	\caption{Venn diagram for concavity, convexity, submodularity and DR-submodularity.}
	\label{fig_sub}
\end{figure}

\begin{definition}[{DR} property and DR-submodular
  functions]\label{def_dr}
  A function $f(\cdot)$ defined over $\X$ satisfies the
  \emph{diminishing returns (DR)} property if
  $\forall \a \leqco {\b} \in \X$, $\forall i\in [n]$,
  $\forall k\in \R_+$ such that $(k\chara_i+ \a)$ and $(k\bas_i + \b)$
  are still in $\X$, it holds,
\begin{equation} 
f(k\chara_i+ \a) - f(\a) \geq f(k\chara_i + \b) - f(\b).   
\end{equation}	

This function $f(\cdot)$ is called a DR-submodular\footnote{Note that
  DR property implies submodularity and thus the name
  ``DR-submodular'' contains redundant information about submodularity
  of a function, but we keep this terminology to be consistent with
  previous literature on integer submodular functions.} function.
If $- f(\cdot)$ is DR-submodular, we call $f(\cdot)$ an
\textcolor{link_color}{\irsuper} function, where \ir stands for ``Increasing Returns''.
\end{definition}

One immediate observation is that for a differentiable DR-submodular
function $f(\cdot)$, we have that $\forall \a\leqco \b\in \X$,
$\nabla f(\a)\geqco \nabla f(\b)$, i.e., the gradient
$\nabla f(\cdot)$ is an \emph{antitone} mapping from $\R^n$ to
$\R^n$. This observation can be formalized below:

\begin{lemma}[Antitone mapping]\label{lemma_dr_antitone}
  If $f(\cdot)$ is continuously differentiable, then $f(\cdot)$ is
  DR-submodular iff $\nabla f(\cdot)$ is an \emph{antitone mapping} 
  from $\R^n$ to $\R^n$, i.e., $\forall \a\leqco \b\in \X$,
  $\nabla f(\a)\geqco \nabla f(\b)$.
\end{lemma}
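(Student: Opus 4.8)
The statement is an iff between DR-submodularity of a continuously differentiable $f$ and antitonicity of $\nabla f$. I would prove the two directions separately, each by reducing to one-dimensional differences along coordinate directions.

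\textbf{($\Rightarrow$) DR-submodular implies $\nabla f$ antitone.} Fix $\a \leqco \b \in \X$ and an index $i \in [n]$. I want to show $\partial_i f(\a) \geq \partial_i f(\b)$. The DR inequality $f(k\chara_i + \a) - f(\a) \geq f(k\chara_i + \b) - f(\b)$ holds for all admissible $k \in \R_+$. Divide both sides by $k > 0$ and let $k \downarrow 0$: the left side tends to the one-sided directional derivative $\partial_i f(\a)$ and the right side to $\partial_i f(\b)$ (using continuous differentiability so that these limits exist and equal the partials). This gives $\partial_i f(\a) \geq \partial_i f(\b)$. Since $i$ was arbitrary, $\nabla f(\a) \geqco \nabla f(\b)$. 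One subtlety: if $\a$ or $\b$ sits on the boundary of $\X$ in coordinate $i$, only the right-hand derivative is available, but that is exactly what the limit computes, so the argument still goes through; alternatively one notes the partials are continuous and approaches interior points.

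\textbf{($\Leftarrow$) $\nabla f$ antitone implies DR-submodular.} Fix $\a \leqco \b \in \X$, an index $i$, and admissible $k \in \R_+$. Define the one-variable auxiliary function $\phi(t) := f(t\chara_i + \a) - f(t\chara_i + \b)$ for $t \in [0, k]$. Then $\phi$ is differentiable with $\phi'(t) = \partial_i f(t\chara_i + \a) - \partial_i f(t\chara_i + \b)$. Since $t\chara_i + \a \leqco t\chara_i + \b$ (because $\a \leqco \b$), antitonicity of $\nabla f$ gives $\partial_i f(t\chara_i + \a) \geq \partial_i f(t\chara_i + \b)$, i.e. $\phi'(t) \geq 0$ on $[0,k]$. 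Hence $\phi(k) \geq \phi(0)$, which is precisely $f(k\chara_i + \a) - f(k\chara_i + \b) \geq f(\a) - f(\b)$, i.e. the DR inequality. (Here I use convexity of $\X = \prod_i \X_i$ in the $i$-th coordinate so the segment $\{t\chara_i + \a : t \in [0,k]\}$ stays in $\X$; this is guaranteed precisely by the admissibility hypothesis that $k\chara_i + \a, k\chara_i + \b \in \X$ together with each $\X_i$ being an interval.)

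\textbf{Main obstacle.} Neither direction is deep; the only care needed is boundary behavior of the partial derivatives and making sure the segments stay inside $\X$, both of which are handled by the admissibility conditions built into the definition of the DR property. If one wanted to avoid even the mild limit argument in the forward direction, an alternative is to invoke \cref{lemma_weak_antitone}/\cref{def_weakdr} (the weak DR characterization referenced in \cref{tab_comparison}) together with the observation that for differentiable functions the coordinatewise weak-DR inequalities and the full DR inequalities coincide in the limit; but the direct mean-value-type argument above is cleanest and self-contained.
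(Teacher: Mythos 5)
Your proposal is correct and follows essentially the same route as the paper: the forward direction is the identical difference-quotient limit, and the converse is the same derivative-sign argument, with the only cosmetic difference that the paper fixes the increment $\c = k\chara_i$ and shows $g(\x) = f(\c+\x)-f(\x)$ is non-increasing in the base point $\x$, whereas you fix the base points and integrate over the increment parameter $t\in[0,k]$. Your explicit handling of boundary points in the forward direction is a small bonus the paper glosses over.
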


Recently, the {DR} property is explored by \citet{NIPS2016_6073} to
achieve the worst-case competitive ratio for an online concave
maximization problem.  The {DR} property is also closely related to a
sufficient condition on a concave function $g(\cdot)$ \citep[Section
5.2]{bilmes2017deep}, to ensure submodularity of the corresponding set
function generated by giving $g(\cdot)$ boolean input vectors.

\subsection{The Weak DR Property and Its Equivalence to Submodularity}

It is well known that for set functions, the {DR} property is
equivalent to submodularity, while for integer functions,
submodularity does not in general imply the {DR} property
\citep{soma2014optimal,DBLP:conf/nips/SomaY15,soma2015maximizing}. However,
it was unclear whether there exists a diminishing-return-style
characterization that is equivalent to submodularity of
integer functions. In this work we give a positive answer to
this open problem by proposing the \textit{weak diminishing returns}
(\texttt{weak DR}) property for general functions defined over $\X$,
and prove that \texttt{weak DR} gives a sufficient and necessary
condition for a general function to be submodular.

\begin{definition}[{Weak DR} property]\label{def_weakdr}
  A function $f(\cdot)$ defined over $\X$ has the \textit{weak
    diminishing returns} property \emph{(weak DR)} if
  $\forall \a\leqco \b\in \X$,
  $\color{blue}\forall i\in \groundset \text{ such that } a_{i} =
  b_{i}$,
  $\forall k\in \R_+$ such that $(k\chara_i+\a)$ and $(k\chara_i+\b)$
  are still in $\X$, it holds,
  \begin{equation} \label{def_supp_dr2} 
    f(k\chara_i+\a) - f(\a) \geq
    f(k\chara_i + \b) - f(\b).
  \end{equation}
\end{definition}

The following proposition shows that for all set functions, as well as
integer and continuous functions, submodularity is equivalent
to the \NEWDR\ property.
\begin{proposition}[\texttt{submodularity}) $\Leftrightarrow$
  (\NEWDR]\label{lemma_support_dr} A function $f(\cdot)$ defined over
  $\X$ is submodular \emph{iff} it satisfies the \textit{weak DR}
  property.
 \end{proposition}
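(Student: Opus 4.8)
The plan is to prove the two implications separately: submodularity $\Rightarrow$ weak DR by a single substitution into the submodularity inequality, and weak DR $\Rightarrow$ submodularity by a telescoping chain of single-coordinate increments. For the first direction, given $\a \leqco \b \in \X$ with $a_i = b_i$ and $k \in \R_+$ such that $k\chara_i + \a$ and $k\chara_i + \b$ lie in $\X$, I would set $\x := k\chara_i + \a$ and $\y := \b$. A coordinate-wise computation --- using $\a \leqco \b$ on the coordinates $j \neq i$, and using $k \geq 0$ together with $a_i = b_i$ on coordinate $i$ --- shows $\x \vee \y = k\chara_i + \b$ and $\x \wedge \y = \a$; since $\X = \prod_i \X_i$ is closed under the coordinate-wise $\vee$ and $\wedge$, all four points lie in $\X$. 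Then submodularity gives $f(k\chara_i + \a) + f(\b) \geq f(k\chara_i + \b) + f(\a)$, and rearranging is exactly the weak DR inequality.

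For the converse, fix $\x, \y \in \X$ and let $S := \{\, i : y_i > x_i \,\} = \{ j_1, \dots, j_m \}$ in any fixed order. I would interpolate between $\x \wedge \y$ and $\y$, and simultaneously between $\x$ and $\x \vee \y$, via the chains
\[
  \a^{(t)} := (\x \wedge \y) + \sum_{l=1}^{t} (y_{j_l} - x_{j_l})\, \chara_{j_l},
  \qquad
  \b^{(t)} := \x + \sum_{l=1}^{t} (y_{j_l} - x_{j_l})\, \chara_{j_l},
  \qquad t = 0, \dots, m,
\]
so that $\a^{(0)} = \x \wedge \y$, $\a^{(m)} = \y$, $\b^{(0)} = \x$, $\b^{(m)} = \x \vee \y$, and in every coordinate both $\a^{(t)}$ and $\b^{(t)}$ equal either $x_j$ or $y_j$, hence stay in $\X$. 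The core step is the per-stage inequality $f(\a^{(t)}) - f(\a^{(t-1)}) \geq f(\b^{(t)}) - f(\b^{(t-1)})$, which I would obtain by invoking weak DR with index $i = j_t$ and increment $k = y_{j_t} - x_{j_t} > 0$: one checks $\a^{(t-1)} \leqco \b^{(t-1)}$ (on the coordinates outside $S$ this is $y_j \leq x_j$; on the coordinates in $S$ the two chains hold equal values at stage $t-1$, either both $x_j$ or both $y_j$), that $\a^{(t-1)}$ and $\b^{(t-1)}$ coincide in coordinate $j_t$ --- both equal $x_{j_t}$, since $j_t$ has not yet been incremented --- and that $\a^{(t)} = \a^{(t-1)} + k\chara_{j_t}$ and $\b^{(t)} = \b^{(t-1)} + k\chara_{j_t}$ are again in $\X$. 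Summing the $m$ per-stage inequalities telescopes to $f(\y) - f(\x \wedge \y) \geq f(\x \vee \y) - f(\x)$, which is the submodularity inequality.

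The main obstacle is purely the bookkeeping in the converse: one must order and build the chains so that the coordinate incremented at each stage has matching values in the two current endpoints --- this is exactly what permits the use of the \emph{weak} DR property, whose hypothesis requires $a_i = b_i$, rather than the stronger DR property --- and one must verify that every intermediate point remains in $\X$. A point worth stressing is that weak DR allows an arbitrary increment $k \in \R_+$ subject only to feasibility, so no discretization or limiting argument is needed; the finite chain above works verbatim whether each $\X_i$ is the two-point set $\{0,1\}$, a bounded integer range, or an interval, which is precisely the unified statement the proposition asserts.
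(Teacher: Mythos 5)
Your proof is correct and follows essentially the same strategy as the paper: a single substitution into the submodularity inequality for the forward direction, and a telescoping chain of single-coordinate increments (starting from $\x\wedge\y$ and $\x$ and raising the coordinates where $y_j>x_j$ one at a time) for the converse. The only difference is that your forward direction proves the weak DR inequality directly from Definition~\ref{def_supp_dr2} via the choice $\x=k\chara_i+\a$, $\y=\b$, whereas the paper first detours through an equivalent ``Formulation II'' of weak DR; your route is slightly more economical but substantively identical.
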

Given \cref{lemma_support_dr}, 
one can treat \NEWDR\ as the first order definition of submodularity: 
Notice that for a continuously differentiable  function $f(\cdot)$ with the 
\texttt{weak DR} property,   we have that $\forall  \a\leqco \b\in \X$, $\forall i\in \groundset \text{ s.t. } a_{i} = b_{i}$, it holds   $\nabla_i f(\a)\geq \nabla_i f(\b)$,  i.e., $\nabla f(\cdot)$ is a \textit{weak} antitone mapping. 
Formally,

\begin{lemma}[Weak antitone mapping]\label{lemma_weak_antitone}
  If $f(\cdot)$ is continuously differentiable, then $f(\cdot)$ is
  submodular iff $\nabla f(\cdot)$ is a weak \emph{antitone mapping}
   from $\R^n$ to $\R^n$, i.e., $\forall \a\leqco \b\in \X$,
  $\forall i\in \groundset \text{ s.t. } a_{i} = b_{i}$,
  $\nabla_i f(\a)\geq \nabla_i f(\b)$.
\end{lemma}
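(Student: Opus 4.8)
The plan is to not reargue submodularity from scratch but to lean on \cref{lemma_support_dr}, which already identifies submodularity with the \texttt{weak DR} property. Thus it suffices to prove the purely analytic equivalence: for $f\in C^1(\X)$, $f$ satisfies \texttt{weak DR} if and only if for all $\a\leqco\b\in\X$ and all $i\in\groundset$ with $a_i=b_i$ one has $\nabla_i f(\a)\ge\nabla_i f(\b)$. Chaining this with \cref{lemma_support_dr} then gives the claim.

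For the direction \texttt{weak DR} $\Rightarrow$ weak antitone, fix $\a\leqco\b\in\X$ and $i\in\groundset$ with $a_i=b_i$. Since $\X_i$ is an interval containing the common value $a_i=b_i$, for all sufficiently small $k\in\R_+$ the points $k\chara_i+\a$ and $k\chara_i+\b$ still lie in $\X$, so \texttt{weak DR} gives $f(k\chara_i+\a)-f(\a)\ge f(k\chara_i+\b)-f(\b)$. Dividing by $k>0$ and letting $k\downarrow 0$, the two difference quotients converge to the partial derivatives $\nabla_i f(\a)$ and $\nabla_i f(\b)$ (at boundary points of $\X_i$ these are read as one-sided derivatives, whose existence is guaranteed by continuous differentiability), yielding $\nabla_i f(\a)\ge\nabla_i f(\b)$.

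For the converse, fix $\a\leqco\b\in\X$, an index $i$ with $a_i=b_i$, and $k\in\R_+$ with $k\chara_i+\a,\,k\chara_i+\b\in\X$. Because $\X$ is a product of intervals and both endpoints of each segment lie in $\X$, the segments $\{\a+t\chara_i:t\in[0,k]\}$ and $\{\b+t\chara_i:t\in[0,k]\}$ stay inside $\X$, and the fundamental theorem of calculus gives
\begin{equation}
f(k\chara_i+\a)-f(\a)=\int_0^k \nabla_i f(\a+t\chara_i)\,\mathrm{d}t,\qquad f(k\chara_i+\b)-f(\b)=\int_0^k \nabla_i f(\b+t\chara_i)\,\mathrm{d}t.
\end{equation}
For each $t\in[0,k]$ we have $\a+t\chara_i\leqco\b+t\chara_i$ (since $\a\leqco\b$) and their $i$-th coordinates coincide ($a_i+t=b_i+t$), so the weak antitone hypothesis gives $\nabla_i f(\a+t\chara_i)\ge\nabla_i f(\b+t\chara_i)$; integrating this pointwise inequality over $t\in[0,k]$ produces exactly the \texttt{weak DR} inequality \eqref{def_supp_dr2}. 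Invoking \cref{lemma_support_dr} to pass between \texttt{weak DR} and submodularity completes the argument.

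I do not anticipate a genuine obstacle here: the content is the fundamental theorem of calculus plus monotonicity of the integral (and, in the other direction, a difference-quotient limit). The only point demanding care is the domain bookkeeping — checking that the relevant axis-parallel segments remain in $\X$, which is immediate from $\X=\prod_i\X_i$ with each $\X_i$ an interval, and interpreting the partial derivatives as one-sided derivatives at the boundary of $\X_i$.
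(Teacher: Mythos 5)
Your proposal is correct, and your forward direction (weak DR $\Rightarrow$ weak antitone via difference quotients $k\downarrow 0$) is exactly the paper's sufficiency argument. The converse is where you diverge: the paper fixes $k$ and studies the marginal-gain function $g(\x):=f(k\bas_i+\x)-f(\x)$, arguing that its partial derivatives are non-positive (by applying the weak-antitone hypothesis to the pairs $\x\leqco k\bas_i+\x$, which agree in every coordinate $j\neq i$) and hence that $g$ does not increase as one moves from $\a$ to $\b$; you instead write both marginal gains as $\int_0^k\nabla_i f(\cdot+t\chara_i)\,\mathrm{d}t$ and compare the integrands pointwise, applying the hypothesis to the pairs $\a+t\chara_i\leqco\b+t\chara_i$, which agree in coordinate $i$. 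Both are one application of the fundamental theorem of calculus plus monotonicity; yours integrates along the $i$-th axis, the paper's along the coordinates where $\a$ and $\b$ differ. A small point in your favor: you invoke the hypothesis only in the exact form stated (for the coordinate $i$ at points with equal $i$-th entries), whereas the paper's claim that $\nabla g\leqco\zero$ in \emph{all} coordinates is not literally justified by the weak-antitone property (the $i$-th component of $\nabla g$ is not controlled); the paper's conclusion survives because the path from $\a$ to $\b$ never moves along coordinate $i$, but your version sidesteps the issue entirely. Your explicit appeal to \cref{lemma_support_dr} to pass between weak DR and submodularity matches what the paper does implicitly.
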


Now we show that the \texttt{DR} property is stronger than the
\texttt{weak DR} property, and the class of DR-submodular functions is
a proper subset of that of submodular functions, as indicated by
\cref{fig_sub}.

\begin{proposition}[\texttt{submodular/weak DR}) +
  (\texttt{coordinate-wise concave}) $\Leftrightarrow$
  (\texttt{DR}]\label{lemma_dr} A function $f(\cdot)$ defined over
  $\X$ satisfies the \texttt{DR} property iff $f(\cdot)$ is submodular
  and coordinate-wise concave, where the \texttt{coordinate-wise
    concave} property is defined as: $\forall \x\in \X$,
  $\forall i\in \groundset$, $\forall k, l\in \R_+$ s.t.
  $(k\chara_i+\x), (l\chara_i + \x), ((k+l)\chara_i + \x)$ are still
  in $\X$, it holds,
  \begin{align}\label{def_coordinatewise_concave}
f(k\chara_i+\x) - f(\x) \geq f((k+l)\chara_i + \x) - f(l\chara_i + \x),
  \end{align}
	or equivalently (if twice differentiable)
        $\frac{\partial^2 f(\x)}{\partial x_i^2} \leq 0, \forall i\in
        \groundset$.
\end{proposition}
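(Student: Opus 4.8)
The plan is to prove the two implications separately, using \cref{lemma_support_dr} to pass freely between ``submodular'' and the \texttt{weak DR} property.

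\emph{From \texttt{DR} to submodular $+$ coordinate-wise concave.} The \texttt{weak DR} inequality is literally the \texttt{DR} inequality restricted to comparable pairs with $a_i=b_i$, so \texttt{DR} implies \texttt{weak DR}, hence submodularity by \cref{lemma_support_dr}. For coordinate-wise concavity, fix $\x\in\X$, $i\in\groundset$, and $k,l\in\R_+$ with $k\chara_i+\x$, $l\chara_i+\x$, $(k+l)\chara_i+\x$ all in $\X$. Apply the \texttt{DR} inequality to the comparable pair $\a:=\x\leqco\b:=l\chara_i+\x$, with index $i$ and step $k$; this is admissible since $k\chara_i+\a=k\chara_i+\x\in\X$ and $k\chara_i+\b=(k+l)\chara_i+\x\in\X$. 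The output is exactly \eqref{def_coordinatewise_concave}. The twice-differentiable reformulation $\partial^2 f(\x)/\partial x_i^2\le 0$ is then the one-dimensional characterization of concavity applied to the slice $t\mapsto f(\x+t\chara_i)$.

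\emph{From submodular $+$ coordinate-wise concave to \texttt{DR}.} Fix a comparable pair $\a\leqco\b\in\X$, an index $i\in\groundset$, and a step $k\in\R_+$ with $k\chara_i+\a,\ k\chara_i+\b\in\X$; put $l:=b_i-a_i\ge 0$. The idea is to interpolate through the ``$i$-aligned'' point $\b':=\b-l\chara_i$, which satisfies $b'_j=b_j\ge a_j$ for $j\ne i$ and $b'_i=a_i$, so that $\a\leqco\b'$ and $a_i=b'_i$. All points used below lie in $\X=\prod_i\X_i$: off coordinate $i$, $\b'$ and $k\chara_i+\b'$ agree with $\b$ and $k\chara_i+\b$; in coordinate $i$ they take the values $a_i$ and $a_i+k$, which are the $i$-th coordinates of $\a\in\X$ and $k\chara_i+\a\in\X$; and $l\chara_i+\b'=\b$, $(k+l)\chara_i+\b'=k\chara_i+\b$. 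Now chain two inequalities. First, submodularity in \texttt{weak DR} form (\cref{lemma_support_dr}), applied to $\a\leqco\b'$ with $a_i=b'_i$, index $i$, step $k$, gives
\[
f(k\chara_i+\a)-f(\a)\ \ge\ f(k\chara_i+\b')-f(\b').
\]
Second, coordinate-wise concavity \eqref{def_coordinatewise_concave} at the point $\b'$ with steps $k$ and $l$ gives
\[
f(k\chara_i+\b')-f(\b')\ \ge\ f((k+l)\chara_i+\b')-f(l\chara_i+\b')\ =\ f(k\chara_i+\b)-f(\b).
\]
Combining the two yields the \texttt{DR} inequality, completing the proof.

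\emph{Main obstacle.} There is no analytic difficulty; the crux is the decomposition ``first equalize the $i$-th coordinate, then slide along it'', and the only point requiring care is verifying that every intermediate vector stays inside the box $\X=\prod_i\X_i$ — which is precisely what the membership hypotheses $k\chara_i+\a\in\X$ and $k\chara_i+\b\in\X$ in the \texttt{DR} definition guarantee. Because no intermediate coordinate \emph{values} are ever needed, the same argument goes through verbatim when each $\X_i$ is a general compact subset of $\R$ rather than an interval.
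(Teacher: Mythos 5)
Your proof is correct and follows essentially the same route as the paper's: both directions chain the \texttt{DR} inequality through an intermediate point whose $i$-th coordinate is aligned, handling one half by submodularity (via its \texttt{weak DR} form, \cref{lemma_support_dr}) and the other by coordinate-wise concavity. The only difference is cosmetic — you align by lowering $\b$ to $\b'=\b-l\chara_i$ and apply concavity there, whereas the paper raises $\a$ to $\a+l\chara_i$, applies concavity at $\a$, and invokes the lattice form $f(\x)+f(\y)\geq f(\x\vee\y)+f(\x\wedge\y)$ directly; the two are mirror images of the same decomposition.
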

Proposition \ref{lemma_dr} shows that a twice differentiable function $f(\cdot)$
is DR-submodular iff $\forall \x\in \X,  \frac{\partial^2 f(\x)}{\partial x_i \partial x_j}
\leq 0, \forall i, j\in \groundset$, which does not necessarily imply the concavity of $f(\cdot)$. 
Given Proposition \ref{lemma_dr}, we also have the characterizations
of continuous DR-submodular functions, which are summarized in
\cref{tab_dr}.
\begin{table}[t]
  \begin{center}
\caption{Summarization of definitions of continuous DR-submodular
	functions \citep{bian2017guaranteed}}
\label{tab_dr}  	
    \begin{tabularx}{\textwidth}{|l|X|X|}
      \hline Definitions & Continuous DR-submodular function
      $f(\cdot)$, $\forall \x, \y\in \X$\\
      \hline \hline $0^{\text{th}}$ order &
      $f(\x) + f(\y) \geq f( \x \vee \y) +
      f(\x \wedge \y)$, and $f(\cdot)$ is coordinate-wise concave (see  \labelcref{def_coordinatewise_concave}) \\
      \hline $1^{\text{st}}$ order & {\texttt{DR} property
        (\cref{def_dr}), or $\nabla f(\cdot)$ {is an \textcolor{link_color}{antitone} mapping} (\cref{lemma_dr_antitone})}  \\
      \hline $2^{\text{nd}}$ order &
      $\frac{\partial^2 f(\x)}{\partial x_i \partial x_j} \leq 0$,
      \textcolor{link_color}{$\forall i , j$} (all entries of the
      Hessian matrix  being non-positive)  \\
      \hline
\end{tabularx}
\end{center}
\end{table}

\subsection{A Simple Visualization}

\cref{fig_2d_softmax} shows the contour of a 2-D continuous submodular
function
$[x_1; x_2]\mapsto 0.7 (x_1 - x_2)^2 + e^{-4(2x_1 - \frac{5}{3})^2} +
0.6e^{-4(2x_1 - \frac{1}{3})^2}+ e^{-4(2x_2 - \frac{5}{3})^2}+
e^{-4(2x_2 - \frac{1}{3})^2}$ and a 2-D DR-submodular function
\begin{align}
\x \mapsto \log\de{\diag(\x)(\bmL-\bmI) +\bmI }, \x\in [0,1]^2,
\end{align}
where $\bmL = [2.25, 3; 3, 4.25]$.
We can see that both of them are neither convex, nor concave. Notice
that along each of the coordinate, the continuous submodular function
may behave pretty arbitrarily. While for the DR-submdular function, it
is always concave along any single coordinate.

\begin{figure}[htbp]
	\centering
	\includegraphics[width=0.49\textwidth]{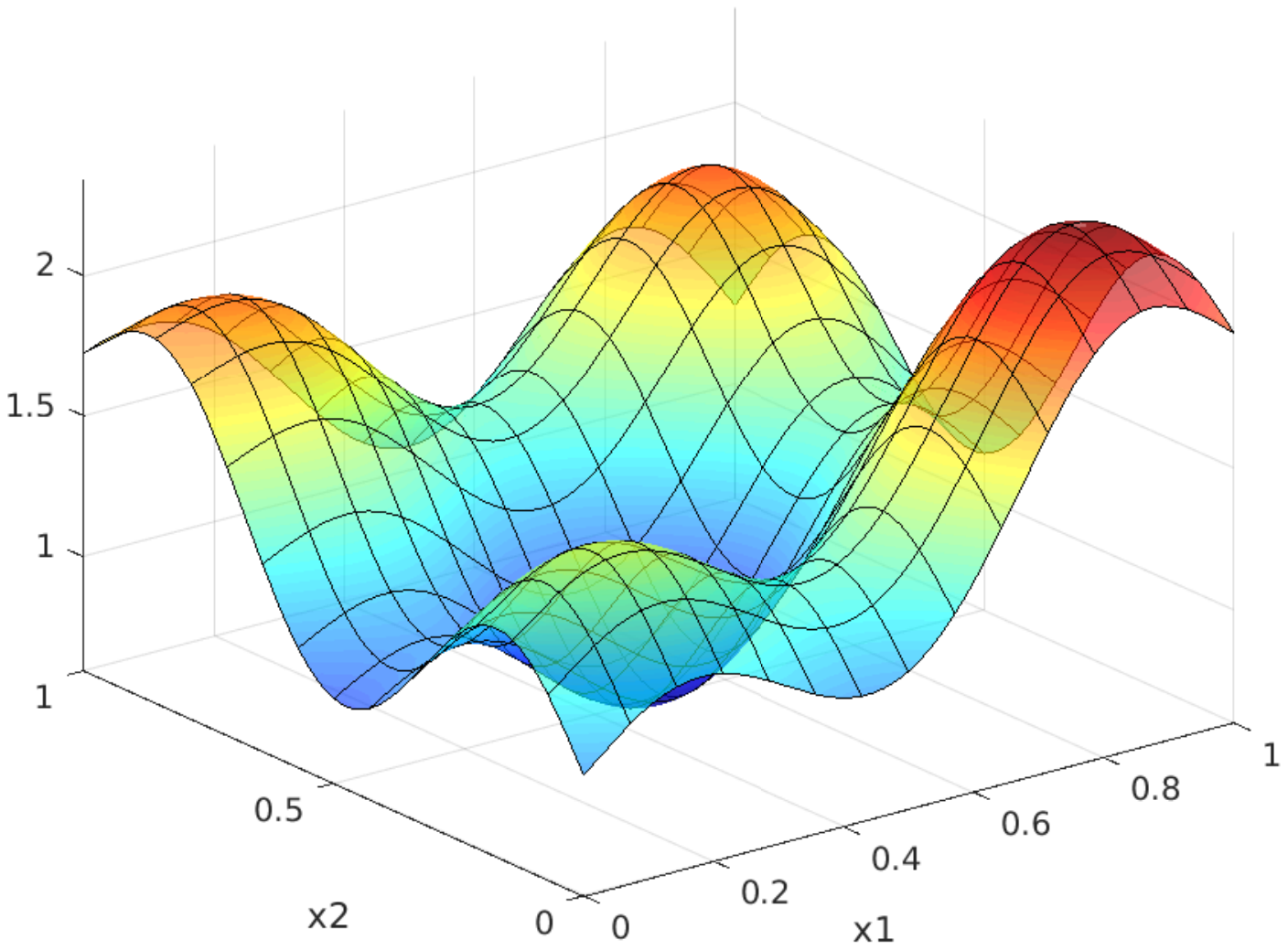}
	\includegraphics[width=0.49\textwidth]{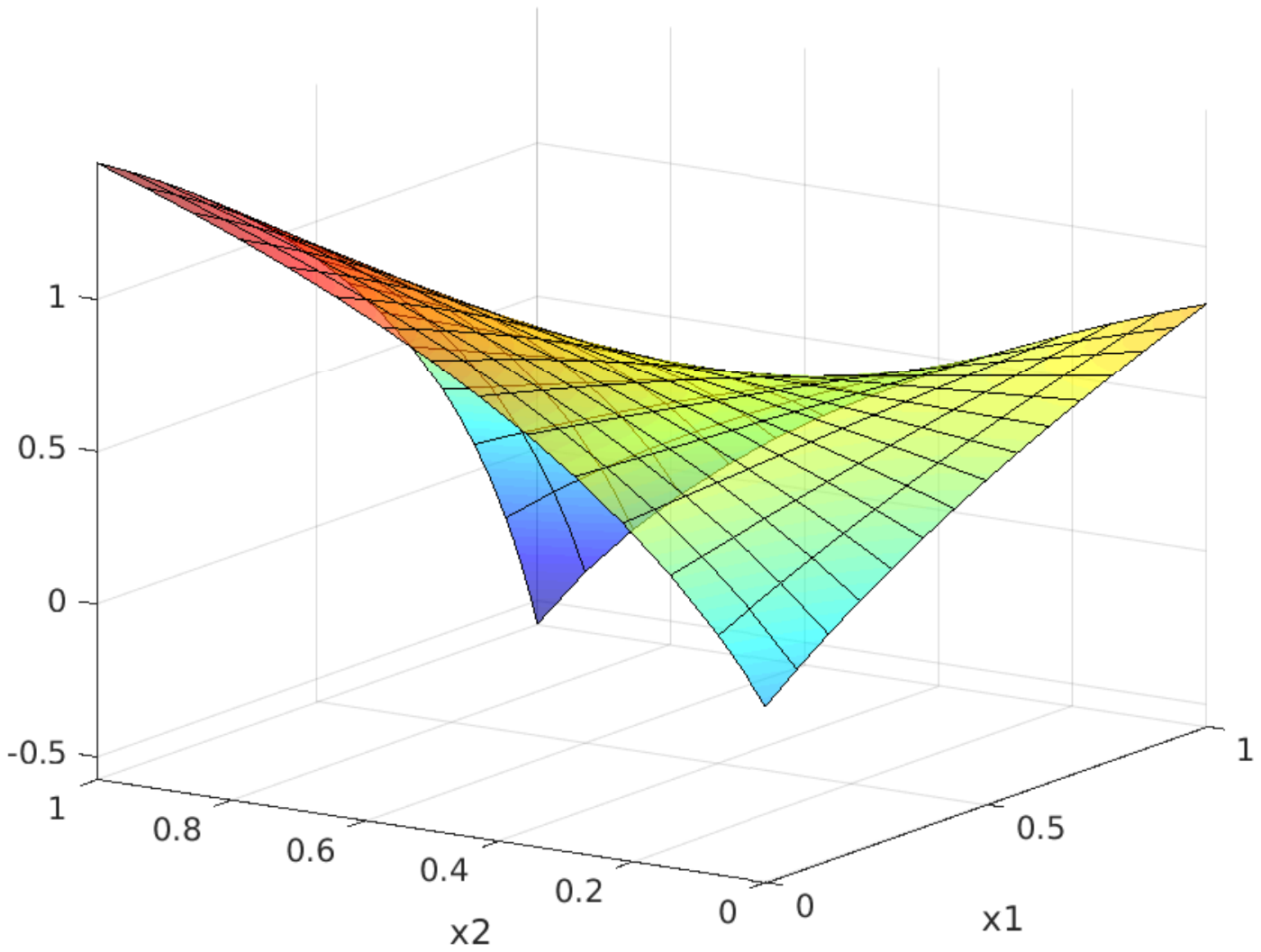}
	\caption{Left: A 2-D continuous submodular function:
          $[x_1; x_2]\mapsto 0.7 (x_1 - x_2)^2 + e^{-4(2x_1 -
            \frac{5}{3})^2} + 0.6e^{-4(2x_1 - \frac{1}{3})^2}+
          e^{-4(2x_2 - \frac{5}{3})^2}+ e^{-4(2x_2 - \frac{1}{3})^2}$.
          Right: A 2-D softmax extension, which is continuous
          DR-submodular.
          $\x \mapsto \log\de{\diag(\x)(\bmL-\bmI) +\bmI }, \x\in
          [0,1]^2$, where $\bmL = [2.25, 3; 3, 4.25]$.}
	\label{fig_2d_softmax}
\end{figure}

\section[Problem Statement of Continuous Submodular
Maximization]{Problem Statement of Continuous Submodular Function
  Maximization}
\label{statement_continuous_submodular_max}

The general setup of constrained 
continuous
submodular function  maximization is,
\begin{align}\label{setup}
\max_{\x \in \P\subseteq \X} f(\x),     \tag{P}
\end{align}
where $f: \X \rightarrow \R$ is continuous submodular or
DR-submodular, $\X = [\underline{\u}, \bar \u]$
\citep{bian2017guaranteed}.  One can assume $f$ is non-negative over
$\X$, since otherwise one just needs to find a lower bound for the
minimum function value of $f$ over $\X$ (because box-constrained
submodular minimization can be solved to arbitrary precision in
polynomial time \citep{bach2015submodular}). Let the lower bound be
$f_{\text{min}}$, then working on a new function
$f'(\x):= f(\x) - f_{\text{min}}$ will not change the solution
structure of the original problem \labelcref{setup}.

The constraint set $\P\subseteq \X$ is assumed to be a
\emph{down-closed} {convex} set, since without this property one
cannot reach any constant factor approximation guarantee of the
problem \labelcref{setup} \citep{vondrak2013symmetry}. Formally,
down-closedness of a convex set is defined bellow:
\begin{definition}[Down-closedness]
A down-closed convex set is a convex set $\P$ associated with a
lower bound $\underline{\bu}\in \P$, such that: 
\begin{enumerate}
\item  $\forall \y\in \P$, $\underline{\bu} \leqco \y$; 

\item $\forall \y\in\P$, $\x\in \R^n$,
  $\underline{\bu} \leqco \x\leqco \y$ implies that $\x\in \P$.
\end{enumerate}
\end{definition}

Without loss of generality, we assume $\P$ lies in the postitive
orthant and has the lower bound $\zero$, since otherwise we can always
define a new set
$\P' = \{\x \;|\; \x = \y - \underline{\bu}, \y\in \P \}$ in the
positive orthant, and a corresponding continuous submdular function
$f'(\x) := f(\x + \underline{\bu})$, and all properties of the
function are still preserved.

The diameter of $\P$ is $D:= \max_{\x,\y\in\P}\|\x-\y\|$, and it holds
that $D \leq \|\bar \u \|$.  We use $\x^*$ to denote the
global maximum of \labelcref{setup}.
In some applications we know that $f$ satisfies the monotonicity property:

\begin{definition}[Monotonicity]
A function $f(\cdot)$  is monotone nondecreasing if,
\begin{align}
\forall \a \leqco \b, f(\a) \leq f(\b).
\end{align}
In the sequel, by ``monotonicity'', we mean monotone nondecreasing by default.
\end{definition}

We also assume that $f$ has
Lipschitz gradients,

\begin{definition}[Lipschitz gradients]
  A differentiable function $f(\cdot)$ has $L$-Lipschitz gradients if
  for all $\x,\y \in \X$ it holds that,
  \begin{align}\label{eq_smooth}
    \| \nabla f(\x)- \nabla f(\y) \| \leq L \|\x - \y\|.
  \end{align}
  According to \citet[Lemma 1.2.3]{nesterov2013introductory}, if
  $f(\cdot)$ has $L$-Lipschitz gradients, then
  \begin{align}\label{eq_quad_lower_bound}
    |f(\x + \v) - f(\x)  - \dtp{\nabla f(\x)}{\v}| \leq \frac{L}{2} \|\v\|^2.
  \end{align}	
\end{definition}

For Frank-Wolfe style algorithms, the notion of curvature usually
gives a tighter bound than just using the Lipschitz gradients.
\begin{definition}[Curvature]
  The curvature of a differentiable function $f(\cdot)$ w.r.t. a
  constraint set $\P$ is,
  \begin{align}
    C_f(\P) : = \sup_{\x, \v\in \P, \gamma\in (0,  1],  \y = \x +
    \gamma (\v - \x )}\frac{2}{\gamma^2}\left[f(\y) - f(\x) - {(\y -
    \x)^\trans}{\nabla f(\x)}\right]. 
  \end{align}
\end{definition}

If a differentiable function $f(\cdot)$ has $L$-Lipschitz gradients,
one can easily show that $C_f(\P) \leq LD^2$, given \citet[Lemma
1.2.3]{nesterov2013introductory}.

\section[Properties of Constrained DR-Submodular
Maximization]{Underlying Properties of Constrained DR-Submodular
  Maximization }
\label{sec_underlying_properties}

\vspace{-0.1cm} In this section we present several properties arising
in DR-submodular function maximization. First we show properties
related to concavity of the objective along certain directions, then
we establish the relation between locally stationary points and the
global optimum (thus called ``local-global relation'').  These
properties will be used to derive guarantees for the algorithms in the
following chapters. All omitted proofs are in \cref{app_proof}.

\subsection{Properties Along Non-Negative/Non-Positive Directions}

Though in general a DR-submodular function $f$ is neither convex, nor
concave, it is \emph{concave} along some directions:

\begin{proposition}[\cite{bian2017guaranteed}]\label{prop_concave}
  A continuous {DR}-submodular function $f(\cdot)$ is concave along
  any non-negative direction $\v \geqco \zero$, and any non-positive
  direction $\v \leqco \zero$.
\end{proposition}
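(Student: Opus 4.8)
The plan is to reduce the claim to one-dimensional concavity. Fix $\x\in\X$ and a direction $\v$, and consider the univariate function $g(t):=f(\x+t\v)$ on the set $\{t\in\R : \x+t\v\in\X\}$; since $\X$ is a product of intervals this domain is an interval, and concavity of $f$ along $\v$ is by definition concavity of $g$. Moreover, composing a concave function with the affine map $t\mapsto -t$ again yields a concave function, so the non-positive case $\v\leqco\zero$ follows from the non-negative case applied to $-\v\geqco\zero$; hence it suffices to treat $\v\geqco\zero$.

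Under the standing differentiability assumption the cleanest argument uses the first-order characterization. By the chain rule $g'(t)=\dtp{\nabla f(\x+t\v)}{\v}$. For $t_1\le t_2$ we have $\x+t_1\v\leqco\x+t_2\v$ because $\v\geqco\zero$, so \cref{lemma_dr_antitone} gives $\nabla f(\x+t_1\v)\geqco\nabla f(\x+t_2\v)$; taking the inner product with $\v\geqco\zero$ preserves the inequality, hence $g'(t_1)\ge g'(t_2)$. Thus $g'$ is non-increasing and $g$ is concave. If $f$ is additionally twice differentiable one sees this even more directly: $g''(t)=\v^\trans\nabla^2 f(\x+t\v)\,\v=\sum_{i,j}v_iv_j\,\tfrac{\partial^2 f(\x+t\v)}{\partial x_i\partial x_j}\le 0$, since every Hessian entry is non-positive by \cref{lemma_dr} and $v_iv_j\ge 0$; note this computation also covers $\v\leqco\zero$ with no reparametrization.

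For a general continuous DR-submodular function (no differentiability assumed) I would argue straight from the DR property. It is enough to show that for all admissible $t$ and all $\epsilon,\delta\ge 0$,
\[
g(t+\epsilon)-g(t)\ \ge\ g(t+\epsilon+\delta)-g(t+\delta),
\]
i.e.\ every $\epsilon$-increment of $g$ is non-increasing in $t$, because a continuous function on an interval with this property is midpoint-concave (take $\epsilon=\delta$) and hence concave. Writing $\a:=\x+t\v$ and $\b:=\a+\delta\v$, we have $\a\leqco\b$ since $\v\geqco\zero$ and $\delta\ge 0$, and the inequality becomes $f(\a+\epsilon\v)-f(\a)\ge f(\b+\epsilon\v)-f(\b)$. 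Decomposing $\epsilon\v=\sum_{i=1}^n(\epsilon v_i)\chara_i$ into single-coordinate moves and telescoping, both differences split into sums of terms $f(\a'+(\epsilon v_i)\chara_i)-f(\a')$ and $f(\b'+(\epsilon v_i)\chara_i)-f(\b')$ with $\a'\leqco\b'$ at every step; the DR property (\cref{def_dr}) applied to each such pair and summed over $i$ gives the displayed inequality.

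The only things requiring care are bookkeeping: that $g$ lives on a genuine interval and that all intermediate points in the telescoping argument stay in $\X$ (true because $\X$ is a box and all the moves are coordinatewise monotone between $\a$ and $\b+\epsilon\v$), and that the DR property, stated for single-coordinate increments, is exactly what licenses each summand. I expect no real obstacle here; the main subtlety is simply carrying out the reduction to a one-dimensional statement cleanly and handling a multi-coordinate direction $\v$ by telescoping rather than a single application of DR.
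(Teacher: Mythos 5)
Your proposal is correct, and its differentiable core coincides with the paper's own proof: the paper establishes the claim only for twice-differentiable $f$, by computing $\tfrac{d^2}{d\xi^2}f(\x+\xi\v)=\v^\trans\nabla^2 f(\x+\xi\v)\,\v=\sum_{i\neq j}v_iv_j\nabla^2_{ij}f+\sum_i v_i^2\nabla^2_{ii}f\le 0$, where the off-diagonal terms are non-positive by submodularity, the diagonal terms by coordinate-wise concavity, and $v_iv_j\ge 0$ since $\v\in\pm\R_+^n$ — exactly your second computation, with the non-positive case dismissed as ``similar.'' What you do differently, and what it buys: your first-order argument via the antitone-gradient characterization (\cref{lemma_dr_antitone}) needs only continuous differentiability, and your telescoping argument needs no differentiability at all — splitting $\epsilon\v$ into single-coordinate increments and applying the DR property (\cref{def_dr}) to each comparable pair $\a'\leqco\b'$ gives $g(t+\epsilon)-g(t)\ge g(t+\delta+\epsilon)-g(t+\delta)$, hence midpoint concavity, hence (together with continuity, which is genuinely needed at this step and which you correctly invoke) concavity. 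This extends the proposition to non-smooth DR-submodular functions, which the paper's proof does not cover, and your reduction of $\v\leqco\zero$ to $-\v\geqco\zero$ via $t\mapsto -t$ is a cleaner disposal of the second half of the claim. The bookkeeping you flag (the domain of $g$ being an interval and all intermediate telescoping points staying in the box $\X$) indeed goes through because every move is coordinatewise monotone between $\a$ and $\b+\epsilon\v$.
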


Notice that DR-submodularity is a stronger condition than concavity
along directions $\v \in \pm \R_+^n$: for instance, a concave function
is concave along any direction, but it may not be a DR-submodular
function.

\paragraph{Strong DR-submodularity.}
DR-submodular objectives may be strongly concave along directions
$\v \in \pm \R_+^n$, e.g., for DR-submodular quadratic functions.  We
will show that such additional structure may be exploited to obtain
stronger guarantees for the local-global relation.

\begin{definition}[Strong DR-submodularity] \label{eq_strongly_dr} A
  function $f$ is $\mu$-strongly DR-submodular ($\mu\geq 0$) if for
  all $\x\in \X$ and $\v \in \pm \R_+^n$, it holds that,
  \begin{align}\label{eq_strong_dr} f(\x+\v) \leq f(\x) +
    \dtp{\nabla f(\x)}{\v} - \frac{\mu }{2}\|\v\|^2.
  \end{align}
\end{definition}

\subsection{Relation Between Approximately Stationary Points and
  Global Optimum: Local-Global Relation}
\label{subsec_local_global}

First of all, we present the following \namecref{lemma_3_1}, which
will motivate us to consider a  non-stationarity measure
for general constrained optimization problems. 

\begin{proposition}\label{lemma_3_1}
  If $f$ is $\mu$-strongly DR-submodular, then for any two points
  $\x$, $\y$ in $\X$, it holds:
\begin{align}\label{non_stationarity}
  (\y-\x)^{\trans}\nabla f(\x) \geq f(\x\vee\y) + f(\x\wedge \y) -
  2f(\x) + \frac{\mu}{2}\|\x -\y\|^2. 
\end{align}
\end{proposition}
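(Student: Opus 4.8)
The plan is to split the displacement $\y-\x$ into a non-negative and a non-positive part and invoke strong DR-submodularity along each part separately. Concretely, I would set $\v_{+}:=\x\vee\y-\x$ and $\v_{-}:=\x\wedge\y-\x$, and note three elementary facts: (i) $\v_{+}\geqco\zero$ and $\v_{-}\leqco\zero$; (ii) $\v_{+}+\v_{-}=\y-\x$, because coordinate-wise $\max(x_i,y_i)+\min(x_i,y_i)=x_i+y_i$; and (iii) since $\X$ is a box (hence a lattice), both $\x\vee\y$ and $\x\wedge\y$ lie in $\X$, so the displaced points $\x+\v_{+}$ and $\x+\v_{-}$ are admissible and \labelcref{eq_strong_dr} applies to them.

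Next I would instantiate the definition of $\mu$-strong DR-submodularity (\cref{eq_strongly_dr}) at the point $\x$ twice: with the non-negative direction $\v_{+}$ to obtain
$f(\x\vee\y)\le f(\x)+\dtp{\nabla f(\x)}{\x\vee\y-\x}-\tfrac{\mu}{2}\|\x\vee\y-\x\|^2$,
and with the non-positive direction $\v_{-}$ to obtain
$f(\x\wedge\y)\le f(\x)+\dtp{\nabla f(\x)}{\x\wedge\y-\x}-\tfrac{\mu}{2}\|\x\wedge\y-\x\|^2$.
Adding these two inequalities and using $\v_{+}+\v_{-}=\y-\x$ collapses the two gradient terms into the single term $\dtp{\nabla f(\x)}{\y-\x}$, leaving
$f(\x\vee\y)+f(\x\wedge\y)\le 2f(\x)+\dtp{\nabla f(\x)}{\y-\x}-\tfrac{\mu}{2}\bigl(\|\x\vee\y-\x\|^2+\|\x\wedge\y-\x\|^2\bigr)$.

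To close the argument I would prove the Pythagorean-type identity $\|\x\vee\y-\x\|^2+\|\x\wedge\y-\x\|^2=\|\x-\y\|^2$ by a coordinate-wise case split: if $x_i\le y_i$ then $(\x\vee\y)_i-x_i=y_i-x_i$ and $(\x\wedge\y)_i-x_i=0$, while if $x_i>y_i$ the roles swap; in either case the two squared contributions from coordinate $i$ sum to $(x_i-y_i)^2$, and summing over $i$ gives the identity. Substituting it into the last display and rearranging yields $\dtp{\nabla f(\x)}{\y-\x}\ge f(\x\vee\y)+f(\x\wedge\y)-2f(\x)+\tfrac{\mu}{2}\|\x-\y\|^2$, which is the claim. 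There is no genuinely hard step; the only points needing care are checking admissibility of $\x\vee\y,\x\wedge\y$ in $\X$ so that \labelcref{eq_strong_dr} can be used, and the coordinate-wise verification of the norm identity. Note that for $\mu=0$ this reduces to the first-order inequality obtained from submodularity together with concavity along $\pm\R_{+}^{n}$ directions (\cref{prop_concave}), so the proof structure already contains that special case.
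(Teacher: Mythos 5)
Your proposal is correct and follows essentially the same route as the paper's own proof: apply the strong DR-submodularity inequality at $\x$ along the non-negative direction $\x\vee\y-\x$ and the non-positive direction $\x\wedge\y-\x$, sum, and use the identities $\x\vee\y+\x\wedge\y=\x+\y$ and $\|\x\vee\y-\x\|^2+\|\x\wedge\y-\x\|^2=\|\x-\y\|^2$. Your write-up merely spells out the coordinate-wise verifications and the admissibility of the lattice operations more explicitly than the paper does.
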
  
\cref{lemma_3_1} implies that if $\x$ is stationary (i.e.,
$\nabla f(\x)= \zero$), then
$2f(\x) \geq f(\x\vee\y) + f(\x\wedge \y) + \frac{\mu}{2}\|\x
-\y\|^2$,
which gives an implicit relation between $\x$ and $\y$. While in
practice finding an exact stationary point is not easy, usually
non-convex solvers will arrive at an approximately stationary point,
thus requiring a proper measure of non-stationarity for the
constrained optimization problem.

\paragraph{Non-stationarity measure.}
Looking at the LHS of \labelcref{non_stationarity}, it naturally
suggests to use $\max_{\y\in\P}(\y-\x)^{\trans}\nabla f(\x)$ as the
non-stationarity measure, which happens to coincide with the measure
used by \citet{lacoste2016convergence,reddi2016stochastic}, and it can
be calculated for free for {Frank-Wolfe}-style algorithms (e.g.,
\cref{alg_classical_fw}).

In order to adapt it to the local-global relation, we give a slightly
more general definition here: For any constraint set $\Q\subseteq \X$,
the non-stationarity of a point $\x\in \Q$ is,
\begin{align}\label{non_stationary}
  g_{\Q}(\x) := \max_{\v\in\Q}\dtp{\v - \x}{\nabla f(\x)}. \qquad
  \text{(\emph{non-stationarity})}
\end{align}
It always holds that $g_{\Q}(\x)\geq 0$. If $g_{\Q}(\x)=0$, we call $\x$  a ``stationary''
point in $\Q$. \labelcref{non_stationary} is a
natural generalization of the non-stationarity measure
$\|\nabla f(\x)\|$ for unconstrained optimization problems.

As the following statements show, $g_{\Q}(\x)$ plays an important
role in characterizing the local-global relation.

\subsubsection{Local-Global Relation in Monotone Setting}

\begin{corollary}[Local-Global Relation: \emph{Monotone Setting}]\label{coro_1half}
  Let $\x$ be a point in $\P$ with non-stationarity $g_{\P}(\x)$.  If
  $f$ is monotone nondecreasing and $\mu$-strongly DR-submodular, then
  it holds that,
  \begin{align}
    f(\x) \geq \frac{1}{2}\left[f(\x^*) -g_{\P}(\x) \right ]  +   \frac{\mu}{4}\|\x -\optcont\|^2.
  \end{align}
\end{corollary}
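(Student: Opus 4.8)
The plan is to start from \cref{lemma_3_1} (the Proposition just stated) applied at the pair $\x, \y = \optcont$, which gives
\[
(\optcont - \x)^{\trans}\nabla f(\x) \geq f(\x \vee \optcont) + f(\x \wedge \optcont) - 2 f(\x) + \frac{\mu}{2}\|\x - \optcont\|^2.
\]
The left-hand side is bounded above by the non-stationarity: since $\optcont \in \P$, we have $(\optcont - \x)^{\trans}\nabla f(\x) \leq \max_{\v \in \P}\dtp{\v - \x}{\nabla f(\x)} = g_{\P}(\x)$. Combining these two facts and rearranging yields
\[
2 f(\x) \geq f(\x \vee \optcont) + f(\x \wedge \optcont) - g_{\P}(\x) + \frac{\mu}{2}\|\x - \optcont\|^2.
\]

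Next I would use monotonicity to lower-bound $f(\x \vee \optcont)$. Since $\x \vee \optcont \geqco \optcont$ coordinate-wise, monotone nondecreasingness gives $f(\x \vee \optcont) \geq f(\optcont) = f(\x^*)$. For the remaining term $f(\x \wedge \optcont)$, I would invoke non-negativity of $f$ over $\X$ (assumed in \cref{statement_continuous_submodular_max}, after the reduction subtracting $f_{\text{min}}$), together with the fact that $\x \wedge \optcont \in \X$ — indeed $\x \wedge \optcont \leqco \x \in \P$ and $\P$ is down-closed, so $\x \wedge \optcont \in \P \subseteq \X$ — hence $f(\x \wedge \optcont) \geq 0$. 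Dropping this term and substituting, we obtain
\[
2 f(\x) \geq f(\x^*) - g_{\P}(\x) + \frac{\mu}{2}\|\x - \optcont\|^2,
\]
and dividing by $2$ gives exactly the claimed inequality
\[
f(\x) \geq \frac{1}{2}\left[f(\x^*) - g_{\P}(\x)\right] + \frac{\mu}{4}\|\x - \optcont\|^2.
\]

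The argument is essentially a short chain of substitutions, so there is no deep obstacle; the one place that requires a little care is justifying that $f(\x \wedge \optcont) \geq 0$ and that the point lies in the admissible domain — this is where down-closedness of $\P$ (or at least membership in $\X = [\zero, \bar\u]$) and the normalization $f \geq 0$ are genuinely used. It is worth noting that monotonicity is only needed for the $\x \vee \optcont$ term; the $\x \wedge \optcont$ term is handled purely by non-negativity, which is why the non-monotone case would instead require keeping (or differently bounding) that term.
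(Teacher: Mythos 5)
Your proposal is correct and follows essentially the same route as the paper: apply \cref{lemma_3_1} with $\y = \optcont$, bound the left-hand side by $g_{\P}(\x)$ via $\optcont\in\P$, then use monotonicity for the $\x\vee\optcont$ term and non-negativity for the $\x\wedge\optcont$ term. Your extra remark justifying that $\x\wedge\optcont$ lies in the admissible domain is a small point the paper leaves implicit, but it does not change the argument.
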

\cref{coro_1half} indicates that any stationary point is a 1/2
approximation, which also shows up in \citet{hassani2017gradient} with
$\mu = 0$. Furthermore, if $f$ is $\mu$-strongly DR-submodular, the
quality of $\x$ will be boosted a lot: if $\x$ is close to $\optcont$,
it should be close to being optimal since $f$ is smooth; if $\x$ is
far away from $\optcont$, the term $\frac{\mu}{4}\|\x -\optcont\|^2$
will boost the bound significantly.
We provide here a very succinct proof based on \cref{lemma_3_1}.

\begin{proof}[Proof of \cref{coro_1half}]
  Let $\y = \optcont$ in \cref{lemma_3_1}, one can easily reach
  \begin{align}
    f(\x) \geq \frac{1}{2}\left[f(\x^* \vee \x) + f(\optcont \wedge
    \x) -g_{\P}(\x) \right ]  +   \frac{\mu}{4}\|\x -\x^*\|^2.
  \end{align}
	
  Because of monotonicity and $\x^* \vee \x \geqco \x^*$, we know that
  $f(\x^* \vee \x)\geq f(\x^*)$. From non-negativity,
  $f(\optcont \wedge \x) \geq 0$.  Then we reach the conclusion.
\end{proof}

\subsubsection{Local-Global Relation in Non-Monotone Setting}

\begin{proposition}[Local-Global Relation:
  \emph{Non-Monotone Setting}]\label{local_global}
  Let $\x$ be a point in $\P$ with non-stationarity $g_{\P}(\x)$, and
  ${\Q} :=\P \cap  \{\y | \y\leqco \bar \u - \x\}$.
  Let $\z$ be a point in $\Q$ with non-stationarity $g_{\Q}(\z)$.  It
  holds that,
  \begin{flalign}
    &\max\{f(\x), f(\z) \} \geq \\\notag 
    &\frac{1}{4}\left[f(\x^*)
      -g_{\P}(\x) -g_{\Q}(\z)\right ] + \frac{\mu}{8}\left(\|\x
      -\x^*\|^2 + \|\z -\z^*\|^2\right ),
  \end{flalign}
  where $\z^*:= \x\vee \x^* -\x$.
\end{proposition}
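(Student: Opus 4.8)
The plan is to transplant the ``two‑phase'' analysis of non‑monotone submodular maximization to the continuous domain, using \cref{lemma_3_1} as the main engine. A preliminary step is to check that $\z^* := \x \vee \x^* - \x$ is a legitimate competitor in the definition of $g_\Q(\z)$, i.e.\ that $\z^* \in \Q$: it is nonnegative; since $\z^* \leqco \x^*$ and $\P$ is down‑closed with lower bound $\zero$, down‑closedness gives $\z^* \in \P$; and $\x \vee \x^* \leqco \bar\u$ gives $\z^* \leqco \bar\u - \x$. Likewise $\x^* \in \P$, so $\x^*$ is a legitimate competitor for $g_\P(\x)$.

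Next I would apply \cref{lemma_3_1} twice. Applied to the pair $(\x, \x^*)$ it yields $(\x^* - \x)^\trans \nabla f(\x) \ge f(\x \vee \x^*) + f(\x \wedge \x^*) - 2 f(\x) + \frac{\mu}{2}\|\x - \x^*\|^2$; bounding the left‑hand side from above by $g_\P(\x)$ gives
\[
2 f(\x) \;\ge\; f(\x \vee \x^*) + f(\x \wedge \x^*) - g_\P(\x) + \frac{\mu}{2}\|\x - \x^*\|^2 .
\]
The same lemma applied to $(\z, \z^*)$, with the left‑hand side bounded by $g_\Q(\z)$, gives
\[
2 f(\z) \;\ge\; f(\z \vee \z^*) + f(\z \wedge \z^*) - g_\Q(\z) + \frac{\mu}{2}\|\z - \z^*\|^2 .
\]
Adding the two and using $4 \max\{f(\x), f(\z)\} \ge 2 f(\x) + 2 f(\z)$, the claimed bound follows as soon as one establishes the purely value‑based inequality
\begin{equation}\label{eq_plan_key}
f(\x \vee \x^*) + f(\x \wedge \x^*) + f(\z \vee \z^*) + f(\z \wedge \z^*) \;\ge\; f(\x^*) .
\end{equation}
Observe that the strong‑DR‑submodularity terms ride along for free, and that the factor $\frac14$ (equivalently, the passage from $\frac{\mu}{2}$ to $\frac{\mu}{8}$) is exactly the price of this two‑fold averaging.

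The crux — and the step I expect to be the main obstacle — is \cref{eq_plan_key}, which must be proved using only non‑negativity and the submodular/DR structure of $f$, with no recourse to monotonicity. I would mirror the classical two‑phase argument for non‑monotone submodular set functions: the constraint $\z \leqco \bar\u - \x$ is the continuous surrogate for the disjointness ``$X_1 \cap X_2 = \emptyset$'' of the two phases; the identity $\x^* = (\x \wedge \x^*) + \z^*$ combined with subadditivity of a non‑negative DR‑submodular function (a consequence of the DR property, namely $f(\a + \b) \le f(\a) + f(\b)$ whenever $\a, \b, \a+\b \in \X$) yields $f(\x^*) \le f(\x \wedge \x^*) + f(\z^*)$; and the step that, in the set case, recovers a term like $f(O \setminus X_1)$ from the two ``join'' terms via submodularity then reduces \cref{eq_plan_key} to lower‑bounding $f(\z^*)$ by $f(\x \vee \x^*) + f(\z \vee \z^*) + f(\z \wedge \z^*)$. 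The genuinely delicate point is that the clean lattice identities valid for indicator vectors — e.g.\ $(X_1 \cup O) \cap (X_2 \cup (O \setminus X_1)) = O \setminus X_1$ — do not survive verbatim for coordinate‑wise $\vee$ and $\wedge$, so this last reduction needs a continuous‑specific argument: one must keep careful track of the feasibility of the auxiliary points in $\X$ and, where the identities degrade into inequalities, verify them coordinate by coordinate, exploiting non‑negativity and coordinate‑wise concavity.
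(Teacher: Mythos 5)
Your skeleton coincides with the paper's own argument: the feasibility check $\z^*\in\Q$, the two applications of \cref{lemma_3_1} with the left-hand sides bounded by $g_{\P}(\x)$ and $g_{\Q}(\z)$, the averaging that produces the factors $1/4$ and $\mu/8$, and the reduction to the value-based inequality $f(\x\vee\x^*)+f(\x\wedge\x^*)+f(\z\vee\z^*)+f(\z\wedge\z^*)\geq f(\x^*)$ are exactly \cref{claim_key} and its surrounding computation. Your subadditivity step is also sound and is literally the paper's second sub-step: since $\x^*-\x\wedge\x^*=\z^*-\zero$ and $\zero\leqco\x\wedge\x^*$, the DR property gives $f(\z^*)-f(\zero)\geq f(\x^*)-f(\x\wedge\x^*)$, i.e.\ $f(\x^*)\leq f(\x\wedge\x^*)+f(\z^*)$ using $f(\zero)\geq 0$.

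The gap is the step you yourself flag as delicate and leave open: one still needs $f(\x\vee\x^*)+f(\z\vee\z^*)+f(\z\wedge\z^*)\geq f(\z^*)$ (note the direction --- you write ``lower-bounding $f(\z^*)$ by'' this sum, but what is required is that the sum dominates $f(\z^*)$). The missing idea is a single auxiliary point, $(\x+\z)\vee\x^*$, which is feasible precisely because $\z\in\Q$ forces $\x+\z\leqco\bar\u$. One verifies coordinate by coordinate the exact identity $\x\vee\x^*-\z^*=(\x+\z)\vee\x^*-\z\vee\z^*=\x$; since $\z^*\leqco\z\vee\z^*$ and the common displacement $\x$ is non-negative, the (telescoped) DR property yields $f(\x\vee\x^*)-f(\z^*)\geq f\bigl((\x+\z)\vee\x^*\bigr)-f(\z\vee\z^*)$, and non-negativity of $f$ at $(\x+\z)\vee\x^*$ and at $\z\wedge\z^*$ closes \cref{claim_key}. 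Contrary to your worry, no lattice identity ``degrades into an inequality'' here and no separate coordinate-wise concavity argument is needed --- but without this auxiliary-point construction (or an equivalent device) your plan does not go through.
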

\cref{fig_local_global_nonmotone} provides a two dimensional visualization of \cref{local_global}.
Notice that the smaller constraint $\Q$ is generated after the first stationary point $\x$ is calculated. 
\begin{figure}[htbp]
	\centering
	\includegraphics[width=0.6\textwidth]{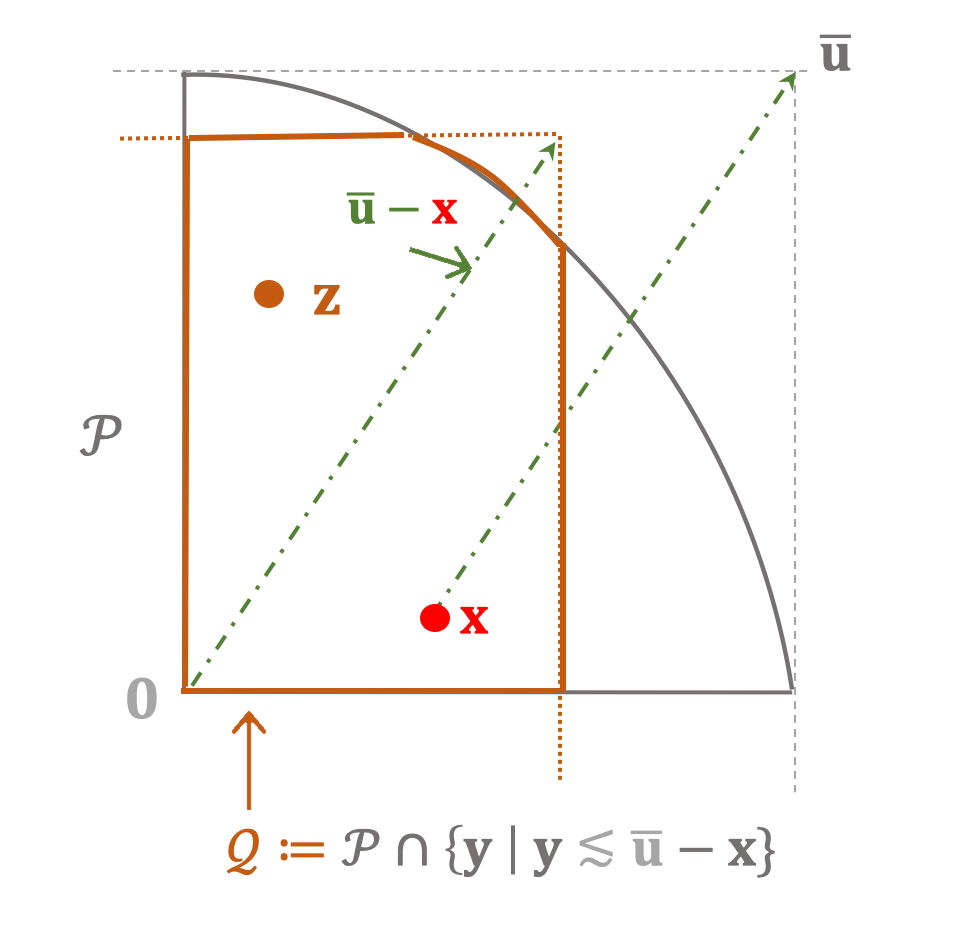}
	\caption{Visualization of the local-global relation in non-monotone setting.}
	\label{fig_local_global_nonmotone}
\end{figure}

\paragraph{Proof sketch of \cref{local_global}:}
The proof uses \cref{lemma_3_1}, the non-stationarity
in \labelcref{non_stationary} and a key observation in the following
\namecref{claim_key}.  The detailed proof is deferred to 
\cref{app_claim_proof}.
\begin{restatable}[]{claim}{keyclaim}
	\label{claim_key}
	Under the setting of \cref{local_global}, it holds that, 
	\begin{align}
		f(\x\vee \x^*) + f(\x \wedge \x^*) +  f(\z\vee \z^*) + f(\z \wedge \z^*) \geq f(\x^*).
	\end{align}
\end{restatable}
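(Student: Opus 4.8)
The plan is to reduce the claim to one \emph{subadditivity} inequality for non‑negative DR‑submodular functions, plus some coordinate‑wise bookkeeping whose sole purpose is to bring in the restriction $\z\in\Q$ (i.e.\ $\z\leqco\bar\u-\x$); this restriction is essential, since the statement is false without it (for instance for $f(x_1,x_2)=x_1(1-x_2/3)$ on $[0,3]^2$).

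First I would record the subadditivity fact: if $f$ is DR‑submodular and non‑negative on $\X$, then for any $\a,\b\geqco\zero$ with $\a+\b\in\X$ one has $f(\a)+f(\b)\geq f(\a+\b)$. This follows by telescoping the one‑coordinate DR inequality of \cref{def_dr} along the increment $\a$ applied to the ordered pair of base points $\zero\leqco\b$, which gives $f(\a)-f(\zero)\geq f(\a+\b)-f(\b)$, and then using $f(\zero)\geq 0$; all intermediate points are in $\X$ because they are sandwiched between $\zero$ and $\a+\b$.

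Next, set $\tilde\u:=\z\vee\z^*-\z^*\geqco\zero$ (and note $\z^*=\x\vee\x^*-\x\geqco\zero$). Splitting coordinates according to the sign of $x_i-x^*_i$ and using $z^*_i=(\x\vee\x^*)_i-x_i$, I would check the identity $(\x\wedge\x^*)+(\z\vee\z^*)=\x^*+\tilde\u$, and — the substantive point — that this vector lies in $\X$: on a coordinate with $x_i\leq x^*_i$ it equals $\max(x_i+z_i,x^*_i)\leq\bar u_i$ because $z_i\leq\bar u_i-x_i$, and on a coordinate with $x_i>x^*_i$ it equals $x^*_i+z_i<x_i+z_i\leq\bar u_i$; the same computation gives $\x\vee\x^*+\tilde\u\in\X$. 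Applying the subadditivity fact with $\a=\x\wedge\x^*$ and $\b=\z\vee\z^*$ then yields
\[
f(\x\wedge\x^*)+f(\z\vee\z^*)\ \geq\ f(\x^*+\tilde\u).
\]

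Finally, since $\x^*\leqco\x\vee\x^*$ and both $\x^*+\tilde\u$ and $\x\vee\x^*+\tilde\u$ lie in $\X$, telescoping the DR inequality of \cref{def_dr} once more — along the increment $\tilde\u$ applied to the pair $\x^*\leqco\x\vee\x^*$ — gives $f(\x^*+\tilde\u)-f(\x^*)\geq f(\x\vee\x^*+\tilde\u)-f(\x\vee\x^*)$, hence $f(\x^*)\leq f(\x^*+\tilde\u)+f(\x\vee\x^*)$ after dropping the non‑negative term $f(\x\vee\x^*+\tilde\u)$. Chaining this with the previous display gives $f(\x^*)\leq f(\x\wedge\x^*)+f(\z\vee\z^*)+f(\x\vee\x^*)$, and adding the non‑negative term $f(\z\wedge\z^*)$ produces the claim. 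Note that monotonicity of $f$ is nowhere used. The main obstacle is purely one of discovery — guessing the pairing $(\x\wedge\x^*,\z\vee\z^*)$ and the identity $(\x\wedge\x^*)+(\z\vee\z^*)=\x^*+\tilde\u$ with both members of the box $\X$; once these are found, the remainder is a mechanical application of subadditivity, the DR property, and non‑negativity.
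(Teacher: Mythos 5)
Your proof is correct, and while it shares the paper's overall strategy --- two applications of the (telescoped, multi-coordinate) DR inequality on constructed auxiliary points, followed by discarding non-negative terms --- it factors the argument differently. The paper pivots on $\z^*$: it first shows $f(\x\vee \x^*)+f(\z\vee\z^*)\geq f(\z^*)+f((\x+\z)\vee\x^*)$ via the identity $\x\vee\x^*-\z^*=(\x+\z)\vee\x^*-\z\vee\z^*=\x$, and then $f(\z^*)+f(\x\wedge\x^*)\geq f(\x^*)+f(\zero)$ via $\x^*-\x\wedge\x^*=\z^*-\zero$. You instead pivot on $\x^*+\tilde{\u}$, packaging the first DR application as a standalone subadditivity lemma applied to the pair $(\x\wedge\x^*,\z\vee\z^*)$, whose sum you correctly identify as $\x^*+\tilde{\u}$; your identity checks out in both coordinate cases, as does the membership of $\x^*+\tilde{\u}$ and $\x\vee\x^*+\tilde{\u}$ in $\X$. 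In fact $\x\vee\x^*+\tilde{\u}$ coincides coordinate-wise with the paper's auxiliary point $(\x+\z)\vee\x^*$, and summing your two inequalities reproduces exactly the same master inequality $f(\x\vee\x^*)+f(\x\wedge\x^*)+f(\z\vee\z^*)\geq f(\x^*)+f(\zero)+f((\x+\z)\vee\x^*)$ as the paper's, with the same three non-negative terms dropped at the end. What your version buys is modularity (the subadditivity fact is reusable), an explicit account of where the restriction $\z\leqco\bar\u-\x$ enters (it is needed for feasibility of the auxiliary points, a check the paper leaves implicit even though its own construction requires it), and a counterexample showing the restriction cannot be dropped; the paper's version is marginally shorter because its two DR steps are stated directly as difference inequalities without isolating a lemma. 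The observation that monotonicity is nowhere used is also correct --- the claim lives in the non-monotone setting.
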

Note that \citet{chekuri2014submodular,gillenwater2012near} propose a
similar relation for the special cases of multilinear/softmax
extensions by mainly proving the same conclusion as in
\cref{claim_key}. Their relation does not incorporate the properties
of non-stationarity or strong DR-submodularity.  They both use the
proof idea of constructing a complicated auxiliary set function
tailored to specific DR-submodular functions.  We present a different
proof method by directly utilizing the DR property on carefully
constructed auxiliary points (e.g., $(\x+\z)\vee \x^*$ in the proof of
\cref{claim_key}), this is arguably more succint and straightforward
than that of \citet{chekuri2014submodular,gillenwater2012near}.

\section[Generalized Submodularity and The Reduction]{Generalized
  Submodularity on Conic Lattices and the Reduction to Continuous
  Submodularity}
\label{sec_lattice}

Continuous submodular functions can already model many scenarios. Yet,
there are several interesting cases which are in general not
(DR-)submodular, but can still be captured by a generalized
notion. 
This generalized notion of submodularity is defined over lattices induced by conic inequalities. 
It  enables us to develop polynomial-time 
algorithms with guarantees by using ideas from continuous submodular
optimization. We present representative applications in
\cref{subsec_moti_lattice}.

In the rest of this section, we firstly define the class of general
continuous submodular functions over lattices induced by conic
inequalities. Furthermore we provide a reduction to the original
(DR-)submodular optimization problem.

\subsection{Poset and Conic Lattice}

\paragraph{Proper Cone and Conic Inequality.}

Let us consider at the proper cone that will be used to define a conic
inequality.
A cone $\cone\subseteq \R^n$ is a \emph{proper cone} if it is convex,
closed, solid (having nonempty interior) and pointed (contains no
line, i.e., $\x\in \cone, -\x\in \cone$ implies $\x=\zero$).
A proper cone $\cone$ can be used to define a conic inequality (a.k.a.
generalized inequality \citep[Chapter 2.4]{boyd2004convex}):
$\a\preceq_{\cone} \b$ iff $\b-\a \in \cone$, which also defines a
partial ordering since the binary relation $\lleq_{\cone}$ is
reflexive, antisymmetric and transitive.  Then it is easy to see that
$(\X, \lleq_\cone)$ is a partially ordered set (poset).

\paragraph{Lattice and Lattice Cone.}
If two elements $\a, \b \in \X$ have a least upper bound (greatest
lower bound), it is denoted as the ``join'': $\a\vee \b$ (the
``meet'': $\a\wedge \b$).  A lattice is a poset that contains the join
and meet of each pair of its elements \citep{garg2015introduction}.
A ``lattice cone'' \citep{fuchssteiner2011convex} is the proper cone
that can be used to define a lattice.
Note that not all conic inequalities can be used to define a
lattice. {For example, the positive semidefine cone
  $\cone_{\text{PSD}} = \{\bmA\in \R^{n\times n} | \bmA \text{ is
    symmetric, } \bmA \succeq 0\}$
  is a proper cone, but its induced ordering
  can not be used to define a lattice.  We provide a simple counter
  example to verify this argument in \cref{app_sec_counter_psd}.}

Specifically, we name the lattice that can be defined through 
a conic inequality as ``conic lattice'', 
since it is of particular interest for modeling  the real-world 
applications in this thesis.

\begin{definition}[Conic Lattice \citep{biannips2017nonmonotone}] \label{def_conic_lattice}
	Given a poset $(\X, \lleq_{\cone})$ induced by 
	the conic inequality $\lleq_{\cone}$, if there exist  join and meet operations  
	for every pair of elements $(\a,\b)$ in $\X\times \X$, 
	s.t. $\a \vee \b$ and $\a \wedge \b$ are still in $\X$, then 
	 $(\X, \lleq_{\cone})$ is a conic lattice. 
\end{definition}
In one word, a conic lattice $(\X, \lleq_{\cone})$ is a lattice induced  by a conic inequality $\preceq_{\cone}$.
\subsection{A Specific Conic Lattice and Submodularity on It}

In the following we introduce a class of conic lattices to  model 
the applications in this work. We further provide  a general 
characterization about submodularity on this conic lattice. 

\paragraph{Orthant conic lattice.} 
Given a sign vector $\bmalpha\in \{\pm1\}^n$, the orthant cone is
defined as
$\cone_{\bmalpha}:= \{\x\in \R^n \;|\; x_i\alpha_i \geq 0, \forall
i\in [n]\}$. One can verify that $\cone_{\bmalpha}$ is a proper cone.
For any two points $\a, \b\in \X $, one can further define the join
and meet operations:
$(\a\vee \b)_i := \alpha_i \max\{\alpha_ia_i, \alpha_ib_i \}$,
$(\a\wedge \b)_i :=\alpha_i \min\{\alpha_ia_i, \alpha_ib_i \}$,
$\forall i\in [n]$. Then it is easy to show that the poset
$(\X, \lleq_{\cone_{\bmalpha}})$ is a valid conic lattice.

A function $f:\X\mapsto \R$ is submodular on a lattice
\citep{topkis1978minimizing,fujishige2005submodular} if for all
$(\x, \y)\in \X \times \X$, it holds that,
\begin{align}\label{eq_sub_lattice}
f(\x) + f(\y) \geq f(\x \vee \y)  + f(\x \wedge \y).
\end{align}

One can establish the characterizations of submodularity on the
orthant conic lattice $(\X, \preceq_{\cone_{\bmalpha}})$ similarly as
that in \citet{bian2017guaranteed}:
\begin{proposition}[Characterizations of Submodularity on Orthant
  Conic Lattice
  $(\X, \preceq_{\cone_{\bmalpha}})$]\label{prop_submod_orthant}
  If a function $f$ is submodular on the lattice  $(\X,
  \preceq_{\cone_{\bmalpha}})$   (called
  \emph{$\cone_{\bmalpha}$-submodular}), then we have  the following
  two equivalent characterizations:\\ 
  a) $\forall \a,\b \in \X$ s.t. $\a \preceq_{\cone_{\bmalpha}} \b$,
  $\forall i$ s.t. $a_i =b_i$, $\forall k\in \R_+$ s.t. $(k\e_i+\a)$
  and $(k\e_i+\b)$ are still in $\X$, it holds that,
  \begin{align}\label{eq_general_weakdr}
    \alpha_i [	f(k\e_i+\a) - f(\a)] \geq \alpha_i [f(k\e_i+\b) - f(\b)].  \quad \texttt{ \emph{(weak DR)} }
  \end{align}

  b) If $f$ is twice differentiable, then $\forall \x\in \X$ it holds,
  \begin{flalign}\label{eq_general_sub}
    \alpha_i\alpha_j\nabla_{ij}^2 f(\x) \leq 0,\; \forall i,j \in [n],
    \textcolor{blue}{i\neq j}.
  \end{flalign}
\end{proposition}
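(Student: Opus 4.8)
The plan is to prove the full chain of equivalences --- $\cone_{\bmalpha}$-submodularity $\Leftrightarrow$ (a) $\Leftrightarrow$ (b) (the last under twice differentiability) --- by reducing $\cone_{\bmalpha}$-submodularity on the orthant conic lattice to ordinary continuous submodularity via a coordinate sign-flip, and then transporting the characterizations already established for the standard case (\cref{lemma_support_dr} and the second-order condition \eqref{eq2}) back through this change of variables.

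First I would introduce the linear involution $\Phi_{\bmalpha}:\R^n\to\R^n$, $\Phi_{\bmalpha}(\x)_i := \alpha_i x_i$ (it is an involution since $\alpha_i^2=1$). Because $\X=\prod_i\X_i$ with each $\X_i$ a compact subset of $\R$, the image $\Phi_{\bmalpha}(\X)=\prod_i\alpha_i\X_i$ is again a product of compact subsets of $\R$. Two observations make $\Phi_{\bmalpha}$ an isomorphism of the relevant structures: (i) $\a\preceq_{\cone_{\bmalpha}}\b\Leftrightarrow \alpha_ia_i\le\alpha_ib_i\ \forall i\Leftrightarrow\Phi_{\bmalpha}(\a)\leqco\Phi_{\bmalpha}(\b)$, so $\Phi_{\bmalpha}$ is an order isomorphism from $(\X,\preceq_{\cone_{\bmalpha}})$ onto $(\Phi_{\bmalpha}(\X),\leqco)$; and (ii) the orthant join/meet are carried to the ordinary coordinate-wise max/min, since $\Phi_{\bmalpha}(\a\vee\b)_i=\alpha_i\cdot\alpha_i\max\{\alpha_ia_i,\alpha_ib_i\}=\max\{\Phi_{\bmalpha}(\a)_i,\Phi_{\bmalpha}(\b)_i\}$, and likewise for $\wedge$. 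Setting $g:=f\circ\Phi_{\bmalpha}$ on $\Phi_{\bmalpha}(\X)$ and substituting $\Phi_{\bmalpha}(\x),\Phi_{\bmalpha}(\y)$ into \eqref{eq_sub_lattice}, observations (i)--(ii) immediately yield: $f$ is $\cone_{\bmalpha}$-submodular iff $g$ is submodular in the ordinary sense \eqref{eq1} on $\Phi_{\bmalpha}(\X)$.

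For part (a) I would apply \cref{lemma_support_dr} to $g$: $g$ is submodular iff it has the weak DR property. Writing the two ordered points as $\Phi_{\bmalpha}(\a),\Phi_{\bmalpha}(\b)$ with $\a\preceq_{\cone_{\bmalpha}}\b$ and $a_i=b_i$, one has $g(\Phi_{\bmalpha}(\a))=f(\a)$, $g(\Phi_{\bmalpha}(\b))=f(\b)$, and, since $\Phi_{\bmalpha}(m\e_i+\Phi_{\bmalpha}(\a))=\a+\alpha_im\e_i$ for $m\in\R_+$, also $g(m\e_i+\Phi_{\bmalpha}(\a))=f(\a+\alpha_im\e_i)$. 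Hence the weak DR inequality for $g$ reads $f(\a+\alpha_im\e_i)-f(\a)\ge f(\b+\alpha_im\e_i)-f(\b)$. When $\alpha_i=1$ this is exactly \eqref{eq_general_weakdr}; when $\alpha_i=-1$, re-anchoring the base points --- replacing $\a,\b$ by $\a-m\e_i,\b-m\e_i$, which are still $\preceq_{\cone_{\bmalpha}}$-ordered with equal $i$-th coordinate --- turns it into $f(\b+k\e_i)-f(\b)\ge f(\a+k\e_i)-f(\a)$ with $k=m\ge0$, i.e.\ $\alpha_i[f(k\e_i+\a)-f(\a)]\ge\alpha_i[f(k\e_i+\b)-f(\b)]$; the domain conditions on $\X$ translate consistently in both cases. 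This establishes $\cone_{\bmalpha}$-submodularity $\Leftrightarrow$ (a).

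Finally, for part (b): if $f$ is twice differentiable, so is $g$, and the ordinary second-order characterization \eqref{eq2} applied to $g$ says $g$ is submodular iff $\partial^2g/\partial y_i\partial y_j\le0$ for all $i\ne j$ throughout $\Phi_{\bmalpha}(\X)$; by the chain rule $\partial^2g/\partial y_i\partial y_j=\alpha_i\alpha_j\,(\partial^2f/\partial x_i\partial x_j)\circ\Phi_{\bmalpha}$, and since $\Phi_{\bmalpha}$ is a bijection of the domains this is equivalent to \eqref{eq_general_sub}. Combined with the previous step this gives all the claimed equivalences. I expect the only delicate point to be the bookkeeping in part (a): a positive step $+k\e_i$ in $\R^n$ corresponds under $\Phi_{\bmalpha}$ to the step $+\alpha_ik\e_i$, so in the $\alpha_i=-1$ case one must shift the base points to recover exactly the one-sided form in the statement, and one should also verify that $\Phi_{\bmalpha}(\X)$ retains the product structure required to invoke \cref{lemma_support_dr}. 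Everything else is a routine transport of the known characterizations through $\Phi_{\bmalpha}$.
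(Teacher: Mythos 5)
Your proof is correct. The paper does not actually spell out a proof of \cref{prop_submod_orthant}: it only remarks that the statement follows ``by directly generalizing the proof of \cref{lemma_support_dr}'', i.e., by re-running the telescoping weak-DR argument with the sign-twisted join and meet. You instead package the sign twist into the involution $\Phi_{\bmalpha}(\x)_i=\alpha_i x_i$, check that it is an isomorphism carrying $(\X,\preceq_{\cone_{\bmalpha}})$ with its join and meet onto the ordinary coordinate-wise structure on $\prod_i\alpha_i\X_i$, and transport the already-established characterizations of $g=f\circ\Phi_{\bmalpha}$ back to $f$. This is essentially the reduction the paper itself introduces later in \cref{sec_lattice} (with $\bmA=\diag(\bmalpha)$) for algorithmic purposes, so your argument is in the paper's spirit, but applied at the level of the characterization rather than of the optimization problem. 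What it buys: parts (a) and (b) follow from \cref{lemma_support_dr} and \labelcref{eq2} with no repeated case analysis, and the same two lines would also yield \cref{prop_relation} for free; what the paper's suggested route buys is a self-contained derivation that never has to verify that the domain and all quantifiers transport correctly under $\Phi_{\bmalpha}$. The only delicate step is the one you flag yourself: for $\alpha_i=-1$ a forward step of $g$ corresponds to a backward step of $f$, and the re-anchoring $(\a,\b,m)\mapsto(\a-m\e_i,\b-m\e_i,m)$ is needed to recover the one-sided form of \labelcref{eq_general_weakdr}; your bookkeeping there is right, since the conic order, the equality of the $i$-th coordinates, and the membership conditions in $\X$ are all preserved under the common shift.
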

\cref{prop_submod_orthant} can be proved by directly generalizing the
proof of \cref{lemma_support_dr}, so the detailed proof is omitted
here due to the high similarity.

Next, we generalize the definition of DR-submodularity to the conic
lattice $(\X, \preceq_{\cone_{\bmalpha}})$:
\begin{definition}[$\cone_{\bmalpha}$-DR-submodular]
  \label{def_general_dr}
  A function $f:\X\mapsto \R$ is $\cone_{\bmalpha}$-DR-submodular if
  $\forall \a, \b\in \X$ s.t. $\a \preceq_{\cone_{\bmalpha}}\b$,
  $\forall i \in [n], \forall k\in \R_+$ s.t. $(k\e_i+\a)$ and
  $(k\e_i+\b)$ are still in $\X$, it holds that,
  \begin{align}\label{eq_general_dr}
    \alpha_i [	f(k\e_i+\a) - f(\a)] \geq \alpha_i [f(k\e_i+\b) - f(\b)].
  \end{align}
\end{definition}

In correspondence to the relation between DR-submodularity and
submodularity over continuous domains (\cref{lemma_dr}), one can
easily get the similar relation (with highly similar proof) bellow:
\begin{proposition}[$\cone_{\bmalpha}$-submodular + coordinate-wise
  concave $\Leftrightarrow$\\
  $\cone_{\bmalpha}$-DR-submodular]\label{prop_relation}
  A function $f$ is $\cone_{\bmalpha}$-DR-submodular iff it is
  $\cone_{\bmalpha}$-submodular and coordinate-wise concave.
\end{proposition}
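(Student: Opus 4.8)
The plan is to deduce this from its Euclidean analogue \cref{lemma_dr} by means of the coordinate sign-flip $D_{\bmalpha} := \diag(\bmalpha)$, which turns the orthant conic lattice $(\X, \preceq_{\cone_{\bmalpha}})$ into an ordinary box carrying the coordinate-wise order. Concretely, $D_{\bmalpha}$ is an involutive linear automorphism of $\R^n$: it sends the box $\X = \prod_i \X_i$ to another box $\tilde{\X} := D_{\bmalpha}\X = \prod_i \alpha_i\X_i$ (again a product of compact subsets of $\R$), sends the orthant cone $\cone_{\bmalpha}$ to the nonnegative orthant, hence sends $\preceq_{\cone_{\bmalpha}}$ to $\leqco$, and --- reading off the coordinate formulas $(\a \vee \b)_i = \alpha_i\max\{\alpha_i a_i, \alpha_i b_i\}$ and $(\a \wedge \b)_i = \alpha_i\min\{\alpha_i a_i, \alpha_i b_i\}$ --- intertwines the conic join and meet on $\X$ with the standard coordinate-wise maximum and minimum on $\tilde{\X}$.

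I would then put $g := f \circ D_{\bmalpha} : \tilde{\X} \to \R$ and verify three equivalences. First, $f$ is $\cone_{\bmalpha}$-submodular iff $g$ is submodular in the sense of \cref{eq1}: immediate from the previous paragraph applied to \cref{eq_sub_lattice}. Second, $f$ is coordinate-wise concave iff $g$ is: immediate because each one-dimensional restriction of $g$ in a coordinate $i$ is the corresponding restriction of $f$ precomposed with the affine map $t \mapsto \alpha_i t$, and one-dimensional concavity is invariant under affine reparametrization of the argument. Third, $f$ is $\cone_{\bmalpha}$-DR-submodular (\cref{def_general_dr}) iff $g$ is DR-submodular (\cref{def_dr}). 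Granting these, applying \cref{lemma_dr} to $g$ on $\tilde{\X}$ and translating back yields exactly the claimed equivalence for $f$.

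The only point requiring genuine care --- and the main obstacle --- is the third equivalence. Unwinding \cref{def_dr} for $g$ and writing the two comparison points of $\tilde{\X}$ as $D_{\bmalpha}\a$ and $D_{\bmalpha}\b$ with $\a \preceq_{\cone_{\bmalpha}}\b$, one finds that $g$-DR-submodularity is the statement: for all such $\a,\b$, all $i$, and all $k$ with $\alpha_i k \ge 0$ and both endpoints in $\X$, $f(\a + k\e_i) - f(\a) \ge f(\b + k\e_i) - f(\b)$ --- i.e. the unit increment is taken in the \emph{cone} direction $\alpha_i\e_i$. This is not literally \cref{def_general_dr}, which takes the increment in the \emph{standard} direction $+\e_i$ but weights the two marginal gains by $\alpha_i$. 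The two are nevertheless equivalent: given a comparison pair $(\a,\b)$ and a step $k\e_i$ as in one formulation, the shifted pair $(\a + k\e_i,\, \b + k\e_i)$ is again $\preceq_{\cone_{\bmalpha}}$-comparable (its coordinates other than $i$ are unchanged, and its $i$-th coordinates still differ by the same amount), and feeding this shifted pair together with the step $-k\e_i$ into the other formulation reproduces the desired inequality; swapping the roles of the two formulations gives the converse.

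This is precisely the bookkeeping already needed to justify \cref{prop_submod_orthant}, so an equivalent alternative is to bypass the reduction and mirror the proof of \cref{lemma_dr} directly: replace $\leqco$ by $\preceq_{\cone_{\bmalpha}}$, insert the sign weights $\alpha_i$, invoke \cref{prop_submod_orthant} in place of \cref{lemma_support_dr}, and split into the cases $\alpha_i = +1$ and $\alpha_i = -1$ when running the two short arguments --- ``$\cone_{\bmalpha}$-DR $\Rightarrow$ coordinate-wise concave'' (test pair $(\x,\, l\e_i+\x)$ or its reverse depending on the sign of $\alpha_i$) and ``$\cone_{\bmalpha}$-submodular + coordinate-wise concave $\Rightarrow$ $\cone_{\bmalpha}$-DR'' (intermediate point obtained from $\b$ by resetting its $i$-th coordinate to $a_i$). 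The remaining implication $\cone_{\bmalpha}$-DR $\Rightarrow$ $\cone_{\bmalpha}$-submodular is immediate, since restricting \cref{def_general_dr} to pairs with $a_i = b_i$ is exactly the weak-DR condition of \cref{prop_submod_orthant}.
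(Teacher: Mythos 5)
Your proposal is correct, and your primary route differs from the paper's. The paper gives no explicit proof of \cref{prop_relation}: it asserts the argument is ``highly similar'' to that of \cref{lemma_dr}, i.e.\ it intends the direct mirroring you describe only as your fallback alternative (re-run the two implications of \cref{lemma_dr} with the sign weights $\alpha_i$ inserted and \cref{prop_submod_orthant} in place of \cref{lemma_support_dr}). Your main argument instead reduces the whole statement to \cref{lemma_dr} via the involution $D_{\bmalpha}=\diag(\bmalpha)$, which is essentially the same change of variables the paper only introduces later, in the reduction subsection of \cref{sec_lattice}, for the \emph{optimization} problem rather than for this characterization. The payoff of your route is that it isolates and resolves the one genuine subtlety the paper glosses over: \cref{def_general_dr} takes the increment $k\e_i$ with $k\in\R_+$ in the standard direction and compensates with the weights $\alpha_i$ on the marginal gains, whereas the transported DR property of $g=f\circ D_{\bmalpha}$ takes the increment in the cone direction $\alpha_i\e_i$ with unweighted gains; your shift of the comparison pair to $(\a+k\e_i,\b+k\e_i)$ followed by the step $-k\e_i$ correctly establishes that these two formulations coincide (the shifted pair remains $\preceq_{\cone_{\bmalpha}}$-comparable since the difference $\b-\a$ is unchanged, and all four evaluation points stay in $\X$). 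The cost is that the reduction only pays off cleanly because $D_{\bmalpha}$ maps boxes to boxes and intertwines the conic join/meet with the coordinate-wise ones, which you do verify; the paper's intended direct route avoids the change of variables but forces a case split on $\alpha_i=\pm1$ inside each implication. Either way the conclusion stands.
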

Combining \labelcref{eq_general_sub} and \cref{prop_relation}, one can
show that if $f$ is twice differentiable and
$\cone_{\bmalpha}$-DR-submodular, then $\forall \x\in \X$ it holds
that,
\begin{flalign}\label{eq_general_conedr}
  \alpha_i\alpha_j\nabla_{ij}^2 f(\x) \leq 0,\; \forall i, j\in [n].
\end{flalign}
Similarly, a function $f$ is $\cone_{\bmalpha}$-\irsuper iff
$-f$ is $\cone_{\bmalpha}$-DR-submodular.

\begin{remark}
We only consider the orthant conic lattice $(\X, \preceq_{\cone_{\bmalpha}})$ here, since it can
already model the applications in this work. However, it
is noteworthy that the framework can be generalized 
to arbitrary conic lattices, which may be of interest 
to model more complex applications. 
\end{remark}

\if 0

\paragraph{A class of polyhedral cones.}  
$\cone_{\bmA} = \{\x\in \R^n | \bmA \x\geq 0, \bmA \in \R^{n\times n},
\bmA \text{ is nonsingular} \}$.

The join operation is
$(\a\vee \b):= \bmA^{-1} \max\{\bmA\a, \bmA\b \}$, the meet operation
is $(\a\wedge \b):= \bmA^{-1}\min\{\bmA\a, \bmA\b \}$, where the
$\max$ and $\min$ are applied coordinate-wise. One can verify that the
definitions of join and meet operations are legal, so
$(\X, \preceq_{\cone_{\bmA}})$ is a lattice.

\begin{proposition}\label{prop_concave}
  A DR-submodular function $f$ over the lattice
  $(\R^n, \preceq_{\cone})$ is concave along any direction
  $\v\in \pm \cone$.
\end{proposition} 
\fi 

\subsection{A Reduction to Optimizing Submodular Functions over
  Continuous Domains}

To be succint, in this section we only discuss the reduction for the
$\cone_{\bmalpha}$-DR-submodular maximization problems. However, it is
easy to see that the reduction works for all kinds of
$\cone_{\bmalpha}$-submodular optimization problems, e.g.,
$\cone_{\bmalpha}$-submodular minimization problem.

Suppose $g$ is a $\cone_{\bmalpha}$-DR-submodular function, and the
$\cone_{\bmalpha}$-DR-submodular maximization problem is
$\max_{\y\in\P'} g(\y)$, where
$\P' = \{ \y\in \R^n|h_i(\y)\leq b_i, \forall i\in [m], \y
\ggeq_{\cone_{\bmalpha}} \mathbf{0} \}$
is down-closed w.r.t. the conic inequality $\lleq_{\cone_{\bmalpha}}$.
The down-closedness here means if $\a\in \P'$ and
$\mathbf{0}\lleq_{\cone_{\bmalpha}} \b \lleq_{\cone_{\bmalpha}} \a$,
then $\b\in \P'$ as well.

Let $\bmA:=\diag(\bmalpha)$, and a function $f(\x):= g(\bmA \x)$. One
can see that if $g$ is $\cone_{\bmalpha}$-DR-submodular, then $f$ is
DR-submodular: assume wlog.\footnote{If twice differentiability is not satisfied, one can still
  use other equivalent characterizations, for instance, the
  characterization in \labelcref{eq_sub_lattice} or
  in \labelcref{eq_general_weakdr} to formulate this.} that $g$ is
twice differentiable, then
$\nabla^2f(\x) = \bmA^\trans \nabla^2 g \bmA$, and
$\nabla^2_{ij}f(\x) = \alpha_i\alpha_j \nabla^2_{ij} g \leq 0$, so $f$
is DR-submodular.

By the affine transformation $\y:=\bmA\x$, one can transform the
$\cone_{\bmalpha}$-DR-submodular maximization problem to be a
DR-submodular maximization problem $\max_{\x\in\P} g(\bmA\x)$, where 
$\P = \{ \x\in \R^n|h_i(\bmA\x)\leq b_i, \forall i\in [m], \bmA\x
\ggeq_{\cone_{\bmalpha}} \mathbf{0} \}$
is down-closed w.r.t. the ordinary component-wise inequality $\leqco$.
To verify the down-closedness of $\P$ w.r.t. to the ordinary
inequality $\leqco$ here, let $\y_1 = \bmA \x_1 \in \P'$ (so
$\x_1\in \P$). Suppose there is a point $\y_2 = \bmA \x_2$ s.t.
$\mathbf{0}\lleq_{\cone_{\bmalpha}} \y_2 \lleq_{\cone_{\bmalpha}}
\y_1$.
From the down-closedness of $\P'$, we know that $\y_2\in \P'$, thus
$\x_2\in \P$.  Looking at
$\mathbf{0}\lleq_{\cone_{\bmalpha}} \y_2 \lleq_{\cone_{\bmalpha}}
\y_1$,
it is equivalent to $0\leqco \x_2\leqco \x_1$. Thus we establish the
down-closedness of $\P$.

Given the reduction, we can reuse the algorithms for the original
DR-submodular maximization problem \labelcref{setup}.

\section{Conclusions}

In this chapter we presented detailed characterizations of continuous
submodular functions. By introducing the weak DR property, we make it
possible to describe submodularity for general functions (set, integer
and continuous functions) using a DR-style characterization. 
After a formal statement of the class of
continuous submodular maximization problems, we illustrated intriguing
properties of this class of problems. It includes concavity along
certain directions and the local-global relation.  These
characterizations and properties will be heavily used in proofs of the
subsequent chapters.

\section{Additional Proofs}
\label{app_proof}

Since $\X_i$ is a compact subset of $\R$, we denote its lower bound
and upper bound to be $\underline{u}_i$ and $\bar u_i$, respectively.

\subsection{Proofs of \cref{lemma_dr_antitone} and
  \cref{lemma_weak_antitone}}

\begin{proof}[Proof of \cref{lemma_dr_antitone}]
  \emph{Sufficiency}: For any dimension $i$,
  \begin{align}
    \nabla_i f(\a) = \lim_{k\rightarrow 0} \frac{f(k\bas_i + \a)
    - f(\a)}{k}  \geq \lim_{k\rightarrow 0} \frac{f(k\bas_i +
    \b) - f(\b)}{k} = \nabla_i f(\a).
  \end{align}

  \emph{Necessity}:
	
  Firstly, we show that for any $\c \geqco \zero$, the function
  $g(\x) := f(\c + \x) - f(\x)$ is monotonically non-increasing.
  \begin{align}
    \nabla g(\x) =  \nabla f(\c + \x) - \nabla f(\x) \leqco \zero.
  \end{align}
	
  Taking $\c = k\bas_i$, since $g(\a) \leq g(\b)$, we reach the
  DR-submodularity definition.
\end{proof}

\begin{proof}[Proof of \cref{lemma_weak_antitone}]
  Similar as the proof of \cref{lemma_dr_antitone}, we have the
  following:
	
  \emph{Sufficiency}: For any dimension $i$
  $\text{ s.t. } a_{i} = b_{i}$,
  \begin{align}
    \nabla_i f(\a) = \lim_{k\rightarrow 0} \frac{f(k\bas_i + \a) -
    f(\a)}{k}  \geq \lim_{k\rightarrow 0} \frac{f(k\bas_i + \b) -
    f(\b)}{k} = \nabla_i f(\a).
  \end{align}

  \emph{Necessity}:
	
  We show that for any $k \geq 0$, the function
  $g(\x) := f(k\bas_i + \x) - f(\x)$ is monotonically non-increasing.
  \begin{align}
    \nabla g(\x) =  \nabla f(k\bas_i + \x) - \nabla f(\x) \leqco \zero.
  \end{align}
	
  Since $g(\a) \leq g(\b)$, we reach the weak DR definition.
\end{proof}

\subsection{Alternative Formulation of the \NEWDR\ Property}

First of all, we will prove that \NEWDR\ has the following alternative
formulation, which will be used to prove Proposition \ref{lemma_support_dr}.
\begin{lemma}[Alternative formulation of \NEWDR]
  The \NEWDR\ property (\cref{def_supp_dr2}, denoted as
  \emph{Formulation I}) has the following equilvalent formulation
  (\cref{def_supp_dr}, denoted as \emph{Formulation II}):
  $\forall \a\leqco \b\in \X$,
  $\forall i\in \{i'|a_{i'} = b_{i'}=\underline{u}_{i'} \}, \forall
  k'\geq l'\geq 0$
  s.t. $(k'\chara_i + \a)$, $(l'\chara_i + \a)$, $(k'\chara_i + \b)$
  and $(l'\chara_i + \b)$ are still in $\X$, the following inequality
  is satisfied,
  \begin{equation} \label{def_supp_dr} f(k'\chara_i + \a) -
    f(l'\chara_i + \a) \geq f(k'\chara_i+ \b) - f(l'\chara_i+ \b).
    \quad \emph{(Formulation II)}
  \end{equation}
\end{lemma}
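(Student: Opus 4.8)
The plan is to prove the two implications separately, each by a simple reparametrization that shifts the $i$-th coordinate of the base points while leaving all other coordinates untouched. Both directions use only the fact that $\X = \prod_{i=1}^n \X_i$ is a box, so that altering a single coordinate of a point in $\X$ keeps it in $\X$ as long as that coordinate remains in $\X_i = [\underline{u}_i, \bar u_i]$.

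For \emph{Formulation II $\Rightarrow$ Formulation I}: given $\a\leqco\b$ in $\X$ with $a_i = b_i =: c$ and $k\in\R_+$ with $k\chara_i+\a, k\chara_i+\b \in \X$, I would set $\a' := \a - (c-\underline{u}_i)\chara_i$ and $\b' := \b - (c-\underline{u}_i)\chara_i$. Since $c \geq \underline{u}_i$, both $\a',\b'$ lie in $\X$, still satisfy $\a'\leqco\b'$, and now have $a'_i = b'_i = \underline{u}_i$. Put $l' := c - \underline{u}_i \geq 0$ and $k' := k + l' \geq l'$; then the four points $l'\chara_i+\a' = \a$, $k'\chara_i+\a' = k\chara_i+\a$, $l'\chara_i+\b' = \b$, $k'\chara_i+\b' = k\chara_i+\b$ are all in $\X$ by assumption, so Formulation II applied to $\a',\b',i,k',l'$ reads $f(k\chara_i+\a) - f(\a) \geq f(k\chara_i+\b) - f(\b)$, i.e.\ Formulation I.

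For \emph{Formulation I $\Rightarrow$ Formulation II}: given $\a\leqco\b$ in $\X$ with $a_i = b_i = \underline{u}_i$ and $k'\geq l'\geq 0$ with the four translates in $\X$, I would set $\tilde\a := l'\chara_i+\a$, $\tilde\b := l'\chara_i+\b$, and $k := k'-l'\geq 0$. Then $\tilde\a,\tilde\b\in\X$, $\tilde\a\leqco\tilde\b$, $\tilde a_i = \tilde b_i = \underline{u}_i + l'$, and $k\chara_i+\tilde\a = k'\chara_i+\a\in\X$, $k\chara_i+\tilde\b = k'\chara_i+\b\in\X$. Applying Formulation I to $\tilde\a,\tilde\b,i,k$ gives $f(k'\chara_i+\a) - f(l'\chara_i+\a) \geq f(k'\chara_i+\b) - f(l'\chara_i+\b)$, i.e.\ Formulation II.

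The only delicate point — the closest thing to an obstacle — is the bookkeeping of feasibility: at each step one must check that the newly introduced base points and their $i$-th-coordinate translates stay in $\X$. This is exactly where the extra hypothesis $a_i = b_i = \underline{u}_i$ in Formulation II is needed (it permits the downward shift by $c-\underline{u}_i$ in the first implication), and where the condition $k'\geq l'$ is needed (it guarantees $k = k'-l'\geq 0$ in the second). Beyond this accounting, no further machinery is required.
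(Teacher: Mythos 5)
Your proof is correct and follows essentially the same reparametrization argument as the paper: shift the $i$-th coordinate of the base points and translate $k,l'$ accordingly. The only cosmetic difference is that you handle the forward implication uniformly with a single downward shift by $c-\underline{u}_i$, whereas the paper splits into the three cases $a_i=b_i=\underline{u}_i$, $\underline{u}_i<a_i=b_i<\bar u_i$, and $a_i=b_i=\bar u_i$; your version subsumes all three and your feasibility bookkeeping is sound.
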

\begin{proof}

  Let $D_1 = \{i| a_i = b_i = \underline{u}_i \}$,
  $D_2 = \{i|\underline{u}_i < a_i = b_i < \bar u_i \}$, and
  $D_3 = \{i| a_i = b_i = \bar u_i \}$.

  1) \texttt{Formulation II} $\Rightarrow$ \texttt{Formulation I}

  When $i\in D_1$, set $l' = 0$ in \texttt{Formulation II} one can get
  $f(k'\chara_i+ \a) - f(\a) \geq f(k'\chara_i+ \b) - f(\b)$.

  When $i\in D_2$, $\forall k\geq 0$, let
  $l' = a_i - \underline{u}_i = b_i- \underline{u}_i >0,$
  $k' = k + l' = k +(a_i - \underline{u}_i)$, and let
  $\bar \a = (\sete{a}{i}{\underline{u}_i}), \bar \b =
  (\sete{b}{i}{\underline{u}_i})$.
  It is easy to see that $\bar \a \leqco \bar \b$, and
  $\bar a_i = \bar{b}_i = \underline{u}_i$. Then from
  \texttt{Formulation II},
  \begin{flalign}
    & f(k'\chara_i + \bar \a) - f(l'\chara_i + \bar \a) = f(k\chara_i
    + \a) - f(\a) \\\notag 
    & \geq f(k'\chara_i + \bar \b) -
    f(l'\chara_i + \bar \b) = f(k\chara_i + \b) - f(\b).
  \end{flalign}
  When $i\in D_3$, \cref{def_supp_dr2} holds trivially.

  The above three situations proves the \texttt{Formulation I}.

  2) \texttt{Formulation II} $\Leftarrow$ \texttt{Formulation I}

  $\forall \a\leqco \b$, $\forall i\in D_1$, one has
  $a_i = b_i = \underline{u}_i$.
  $\forall k'\geq l' \geq 0$, let
  $\hat \a = l'\chara_i + \a, \hat \b = l'\chara_i + \b$, let
  $k = k'-l' \geq 0$, it can be verified that $\hat \a\leqco \hat \b$
  and $\hat a_i = \hat b_i$, from \texttt{Formulation I},
  \begin{flalign}
    &f(k\chara_i + \hat \a) - f(\hat \a) = f(k'\chara_i + \a) -
    f(l'\chara_i + \a)\\\notag \geq 
    & f(k\chara_i + \hat \b) - f(\hat
    \b) = f(k'\chara_i + \b) - f(l'\chara_i + \b).
  \end{flalign}
  which proves \texttt{Formulation II}.
\end{proof}

\subsection{Proof of Proposition \ref{lemma_support_dr}}

\begin{proof}

  1) \texttt{submodularity} $\Rightarrow$ \texttt{weak DR}:

  Let us prove the \texttt{Formulation II} (\cref{def_supp_dr}) of
  \texttt{weak DR}, which is,

  $\forall \a\leqco \b\in \X$,
  $\forall i\in \{i'|a_{i'} = b_{i'}=\underline{u}_{i'} \}, \forall
  k'\geq l'\geq 0$, the following inequality holds,
  \begin{equation}
    f(k'\chara_i+ \a) - f(l'\chara_i + \a) \geq f(k'\chara_i+ \b) - f(l' \chara_i + \b).
  \end{equation}

  And $f$ is a submodular function iff $\forall \x, \y\in \X$,
  $f(\x)+f(\y) \geq f( \x \vee \y) + f(\x \wedge \y)$, so
  $f(\y) - f(\x\wedge \y) \geq f(\x\vee \y) - f(\x)$.
	
  Now $\forall \a \leqco \b \in\X$, one can set $\x = l'\chara_i + \b$
  and $\y = k'\chara_i + \a$. It can be easily verified that
  $\x\wedge \y =l'\chara_i + \a$ and $\x\vee \y = k'\chara_i + \b$.
	Substituting all the above equalities into
	 $f(\y) - f(\x\wedge \y) \geq f(\x\vee \y) - f(\x)$ one can get 	
	 $f(k'\chara_i+ \a) - f(l'\chara_i + \a) \geq f(k'\chara_i+ \b) - f(l' \chara_i + \b)$.

     2) \texttt{submodularity}
     $\Leftarrow$  \texttt{weak DR}:

     Let us use \texttt{Formulation I} (\cref{def_supp_dr2}) of
     \texttt{weak DR} to prove the \texttt{submodularity} property.

     $\forall \x, \y\in \X$, let $D := \{e_1, \cdots, e_d\}$  be the
     set of elements for which $y_e > x_e$, let
     $k_{e_i}: = y_{e_i} - x_{e_i}$.  Now set
     $\a^0 := \x\wedge \y, \b^0 := \x$ and
     $\a^i = (\sete{a^{i-1}}{e_i}{y_{e_i}}) = k_{e_i}\chara_i +
     \a^{i-1}, \b^i = (\sete{b^{i-1}}{e_i}{y_{e_i}}) = k_{e_i}\chara_i
     + \b^{i-1}$,
     for $i = 1, \cdots, d$.  
     
     One can verify that
     $\a^i\leqco \b^i, a^i_{e_{i'}} = b^i_{e_{i'}}$ for all
     $i'\in D, i=0, \cdots, d$, and that
     $\a^d = \y, \b^d = \x\vee \y$.

     Applying \cref{def_supp_dr2} of the \texttt{weak DR} property for
     $i = 1,\cdots, d$ one can get
     \begin{flalign}
       &f(k_{e_1}\chara_{e_1} + \a^0) - f(\a^0) \geq
       f(k_{e_1}\chara_{e_1} + \b^0) - f(\b^0) \\
       &f(k_{e_2}\chara_{e_2} + \a^1) - f(\a^1) \geq
       f(k_{e_2}\chara_{e_2} + \b^1) - f(\b^1) \\\notag
       &\cdots\\ 
       &f(k_{e_d}\chara_{e_d} + \a^{d-1}) -
       f(\a^{d-1}) \geq f(k_{e_d}\chara_{e_d} + \b^{d-1}) -
       f(\b^{d-1}).
     \end{flalign}
     Taking a sum over all the above $d$ inequalities, one can get
     \begin{flalign}
       & f(k_{e_d}\chara_{e_d} + \a^{d-1}) - f(\a^{0}) \geq
       f(k_{e_d}\chara_{e_d} + \b^{d-1}) - f(\b^{0})\\\notag 
      &  \Leftrightarrow\\ 
       & f(\y) - f(\x\wedge \y) \geq f(\x\vee
       \y) - f(\x)\\\notag
       &  \Leftrightarrow\\ 
       & f(\x) + f(\y) \geq
       f(\x\vee \y) + f(\x\wedge \y),
     \end{flalign}
     which proves the submodularity property.
\end{proof}

\subsection{Proof of Proposition \ref{lemma_dr}}

\begin{proof}
	
	1) \texttt{submodular} + \texttt{coordinate-wise concave}
		$\Rightarrow$ \texttt{DR}:
	
	From coordinate-wise concavity we have $f(\a + k\chara_i) - f(\a) \geq f(\a+(b_i - a_i + k)\chara_i) - f(\a+(b_i - a_i)\chara_i)$. Therefore, to prove \text{DR} it suffices to show that 
	\begin{flalign}\label{eq_12}
		f(\a+(b_i - a_i + k)\chara_i) - f(\a+(b_i - a_i)\chara_i) \geq f(\b + k\chara_i) - f(\b).
	\end{flalign}
	Let $\x:=\b, \y:=(\a+(b_i - a_i + k)\chara_i)$, so $\x\wedge\y  = (\a+(b_i - a_i)\chara_i), \x\vee \y = (\b + k\chara_i)$.
	From submodularity, one can see that inequality \labelcref{eq_12} holds.
	
	2) \texttt{DR} $\Rightarrow$ \texttt{submodular} + \texttt{coordinate-wise concave}:

	  From \texttt{DR} property, the  \texttt{weak DR} (\cref{def_supp_dr2}) property is implied, which 
	equivalently proves the \textit{submodularity} property.
	
	To prove \textit{coordinate-wise concavity}, one just need to set $\b:=\a+l\chara_i$, then we have  $f(\a + k\chara_i) - f(\a) \geq f(\a + (k+l)\chara_i) - f(\a + l\chara_i)$. 
\end{proof}

\subsection{Proof of  \cref{prop_concave}}\label{supp_prop_concave}

\begin{proof}[Proof of  \cref{prop_concave}]
	Consider a univariate  function
	\begin{align}
	 g(\xi):= f(\x+\xi \v^*), \xi\geq 0, \v^* \geqco \zero. 
	\end{align}
	
	We know that 
	\begin{align}
			\frac{d g(\xi)}{d \xi} = \dtp{\v^*}{\nabla f(\x+\xi \v^*)}.
	\end{align}

	It can be verified that: 
	
	$g(\xi)$ is concave $\Leftrightarrow$ 
	\begin{flalign} 
	\frac{d^2 g(\xi)}{d \xi^2} = (\v^*)^\trans \nabla^2 f(\x+\xi \v^*) \v^* = \sum_{i\neq j} v^*_i v^*_j \nabla^2_{ij} f + \sum_i (v_i^*)^2\nabla_{ii}^2f \leq 0.
	\end{flalign}
	The non-positiveness of $\nabla^2_{ij} f $ is ensured by submodularity of $f(\cdot)$, and the non-positiveness of $\nabla^2_{ii} f $ results from the coordinate-wise concavity of $f(\cdot)$.
	
	The proof of concavity along any non-positive direction is similar, which is omitted here. 
\end{proof}

\subsection{Proof of \cref{lemma_3_1}}

\begin{proof}[Proof of \cref{lemma_3_1}]
   Since $f$ is DR-submodular, so it is concave along any direction $\v\in \pm \R^n_+$. We know that $\x\vee \y - \x \geqco \zero$
    and $\x\wedge \y - \x\leqco \zero$, so from the strong DR-submodularity in \labelcref{eq_strong_dr}, 
    \begin{align} 
   & f(\x\vee\y)  - f(\x) \leq \dtp{\nabla f(\x)}{\x\vee \y - \x}  -\frac{\mu}{2}  \|\x\vee \y - \x\|^2,\\ 
    &  f(\x\wedge \y) - f(\x) \leq \dtp{\nabla f(\x)}{\x\wedge \y - \x}  -\frac{\mu}{2}  \|\x\wedge  \y - \x\|^2.
    \end{align}
    Summing the above two inequalities and notice that $\x\vee\y + \x\wedge \y = \x+\y$, we arrive,
    \begin{flalign} 
     & (\y-\x)^{\trans}\nabla f(\x)\\\notag
      & \geq f(\x\vee\y) + f(\x\wedge \y) - 2f(\x) + \frac{\mu}{2}  (\|\x\vee \y - \x\|^2 + \|\x\wedge  \y - \x\|^2)\\ 
     & = f(\x\vee\y) + f(\x\wedge \y) - 2f(\x) + \frac{\mu}{2}  \| \y - \x\|^2,
    \end{flalign}
   the last equality holds since $\|\x\vee \y - \x\|^2 + \|\x\wedge  \y - \x\|^2 =  \| \y - \x\|^2$.
\end{proof}

\subsection{Proof of  \cref{local_global}}\label{app_claim_proof}

\begin{proof}[Proof of \cref{local_global}]
Consider the point $\z^*:= \x\vee \x^* -\x = (\x^* - \x)\vee \zero$. One can see that: 1) $\zero\leqco \z^* \leqco \x^*$; 2) $\z^* \in \P$ (down-closedness); 3) $\z^*\in \Q$ (because of   $\z^*\leqco \bar \u - \x$). 
From \cref{lemma_3_1},
\begin{align}\label{eq_1718}
& \dtp{\x^*-\x}{\nabla f(\x)} +  2f(\x) \geq f(\x\vee \x^*) + f(\x \wedge \x^*) +  \frac{\mu}{2}\|\x -\x^*\|^2, \\\label{eq12}
& \dtp{\z^*-\z}{\nabla f(\z)} +  2f(\z) \geq f(\z\vee \z^*) + f(\z \wedge \z^*) +  \frac{\mu}{2}\|\z -\z^*\|^2.
\end{align}
Let us first of all prove the following  key \namecref{claim_key}.

\keyclaim* 

\begin{proof}[Proof of \cref{claim_key}]
Firstly, we are going to prove that 
\begin{align}\label{proof_part1}
f(\x \vee \x^*) + f(\z\vee \z^*) \geq f(\z^*) + f((\x+\z)\vee \x^*), 
\end{align}
which is equivalent to
$f(\x \vee \x^*) - f(\z^*) \geq f((\x+\z)\vee \x^*) - f(\z\vee \z^*)$.
It can be shown that  $\x \vee \x^*  - \z^* = (\x+\z)\vee \x^* - \z\vee \z^* $. Combining this with 
the fact that $\z^* \leqco \z\vee \z^*$, and using the DR property (see \cref{def_dr}) implies 
\labelcref{proof_part1}.
Then we establish,
\begin{align}\label{eq_EqaulityPoints}
 \x \vee \x^*  - \z^* = (\x+\z)\vee \x^* - \z\vee \z^* ~.
\end{align}
We will show that both the RHS and LHS of the above equation are equal to $\x$:  for the LHS of \labelcref{eq_EqaulityPoints} we can write 
 $\x \vee \x^*  - \z^* =  \x \vee \x^*  - \left(  \x \vee \x^* - \x\right) = \x$.
For the RHS of \labelcref{eq_EqaulityPoints} let us consider any coordinate $i\in [n]$,
\begin{align}\notag 
&(x_i+z_i)\vee x_i^* - z_i\vee z_i^* =\\
 & (x_i+z_i)\vee x_i^* - \left((x_i+z_i)-x_i\right)\vee  \left((x_i \vee x_i^*) - x_i\right) =x_i,
\end{align}
where the last equality holds easily for the two situations: $(x_i+z_i) \geq  x_i^*$ and $(x_i+z_i) < x_i^*$.

Next, we are going to prove that,
\begin{align}\label{proof_part2}
 f(\z^*) + f(\x\wedge \x^*)\geq f(\x^*) + f(\zero).
\end{align}
It is equivalent to 
$f(\z^*)   - f(\zero) \geq  f(\x^*) - f(\x\wedge \x^*)$,
which can be done similarly by the DR property: Notice that
\begin{align} 
\x^* - \x\wedge \x^* = \x\vee \x^* - \x = \z^* - \zero \text{ and } 
 \zero \leqco  \x\wedge \x^*.
\end{align}
Thus \labelcref{proof_part2} holds from the DR property. 
Combining \labelcref{proof_part1,proof_part2} one can get,
\begin{align}\notag 
 & f(\x \vee \x^*) + f(\z\vee \z^*) + f(\x\wedge \x^*) + f(\z\wedge \z^*) \\
  & \geq  f(\x^*) + f(\zero) +  f((\x+\z)\vee \x^*)+ f(\z\wedge \z^*)\\\notag 
& \geq f(\x^*).    \quad \text{(non-negativity of $f$) }
\end{align}
\end{proof} 
 
Combining \labelcref{eq_1718,eq12} and \cref{claim_key} it reads,
\begin{align}\label{eq16}
 & \dtp{\x^* -\x}{\nabla f(\x)} +  \dtp{\z^*-\z}{\nabla f(\z)}  +   2(f(\x) + f(\z) ) \\ & \geq f(\x^*) + 
 \frac{\mu}{2}(\|\x -\x^*\|^2 + \|\z -\z^*\|^2).
\end{align}

From the definition of non-stationarity in \labelcref{non_stationary} one can get, 
\begin{align}\label{eq17}
&  g_{\P}(\x) := \max_{\v\in\P}\dtp{\v - \x}{\nabla f(\x)} \overset{\x^*\in \P}{\geq}  \dtp{\x^*-\x}{\nabla f(\x)},\\\label{eq18}
& g_{\Q}(\z) := \max_{\v\in\Q}\dtp{\v - \z}{\nabla f(\z)}  \overset{\z^*\in \Q}{\geq} \dtp{\z^*-\z}{\nabla f(\z)}.
\end{align}
Putting together \cref{eq16,eq17,eq18} we can get, 
\begin{align} 
2(f(\x) + f(\z) ) \geq f(\x^*) -g_{\P}(\x) -g_{\Q}(\z) +   \frac{\mu}{2}(\|\x -\x^*\|^2 + \|\z -\z^*\|^2).
\end{align}
So it arrives 
\begin{flalign} 
& \max\{f(\x), f(\z) \} \geq \\
&\frac{1}{4}[f(\x^*) -g_{\P}(\x) -g_{\Q}(\z)]  +   \frac{\mu}{8}(\|\x -\x^*\|^2 + \|\z -\z^*\|^2).
\end{flalign}
\end{proof}

\subsection{A Counter Example to Show That PSD Cone is not a Lattice}
\label{app_sec_counter_psd}

The positive semidefine 
cone $\cone_{\text{PSD}} = \{\bmA\in \R^{n\times n} | \bmA \text{ is symmetric, } \bmA  \succeq  0\}$ is a proper cone, but not 
a lattice cone. That is, it can not be used to 
define a lattice over the space of symmetric matrices. 

Let us consider the two dimensional symmetric  matrix space $S^2$.
Specifically, the following 
two symmetric matrices,
\begin{align}\notag 
\bmX = \begin{bmatrix}
1 &0 \\
0 & 0
\end{bmatrix}, 
\bmY = \begin{bmatrix}
0 &0 \\
0 & 1
\end{bmatrix}.
\end{align}
For the conic inequality $\lleq_{\cone_{\text{PSD}}}$, 
assume that there exists a least upper bound, i.e.,
the join of $\bmX, \bmY$: $\bmZ: = \bmX \vee \bmY$. From
the definition of least upper bound, $\forall\; \bmW \in S^2$ it should hold that,
\begin{align}\label{eq33}
\bmW   \ggeq_{\cone_{\text{PSD}}} \bmX \text{ and } \bmW   \ggeq_{\cone_{\text{PSD}}} \bmY  \text{ iff }  \bmW   \ggeq_{\cone_{\text{PSD}}} \bmZ.
\end{align}
Suppose $\bmZ = \begin{bmatrix}
b &a \\
a & c
\end{bmatrix}$. Firstly, consider $\bmW$ to be diagonal 
matrices, one can verify that $\bmZ$ must be in the form of 
$ \begin{bmatrix}
1 &a \\
a & 1
\end{bmatrix}$, then considering $\bmW =\bmI$ forcing $\bmZ$
to be $\bmI$. 

Now let  $\bmW =\frac{2}{3} \begin{bmatrix}
2 &1 \\
1 & 2
\end{bmatrix}$, which is $\ggeq_{\cone_{\text{PSD}}} \bmX$
and $\ggeq_{\cone_{\text{PSD}}} \bmY$. However,  $\bmW - \bmI =\frac{1}{3} \begin{bmatrix}
1 &2 \\
2 & 1
\end{bmatrix} \notin \cone_{\text{PSD}}$, thus
contradicting \cref{eq33}.

\def\dir{chapters/applications}
\chapter{Applications of Continuous Submodular Optimization}
\label{chapter_applications}

\begin{chapquote}{Winston Churchill}
	You will never get to the end of the journey if you stop to shy a stone at every dog that barks.
\end{chapquote}

Continuous submodularity naturally finds applications in various
scenarios, ranging from
influence and revenue maximization, to DPP MAP inference and mean
field inference of probabilistic graphical models.  In this part, we
will discuss several concrete problem instances.

\section{Submodular Quadratic Programming (SQP)} 

Non-convex/non-concave QP problem of the form
$f(\x) = \frac{1}{2} \x^\trans \bmH \x + \bh^\trans \x + c$ under
convex constraints naturally arises in many applications, including
scheduling \citep{DBLP:journals/jacm/Skutella01}, inventory theory,
and free boundary problems.  A special class of QP is the submodular
QP (the minimization of which was studied in \citet{kim2003exact}), in
which all off-diagonal entries of $\bmH$ are required to be
non-positive.  {Price optimization} with continuous prices is a
DR-submodular quadratic program \citep{ito2016large}.

Another representative  class of  DR-submodular quadratic objectives
arise when computing the {stability number}  $s(G)$ of a graph $G= (V, E)$, 
${s(G)}^{-1} = \min_{\x\in \Delta}\x^\trans (\bmA + \bmI)\x$,
where $\bmA$ is the adjacency matrix of the graph $G$, $\Delta$ is the
standard simplex \citep{motzkin1965maxima}.  This instance is a
convex-constrained monotone DR-submodular maximization problem.

\section{Continuous Extensions of Submodular Set Functions}
\label{sec_app_multilinear}
  
The Lov{\'a}sz extension \citep{lovasz1983submodular} used for
submodular set function minimization is both submodular and convex
(see Appendix A of \citet{bach2015submodular}).

The multilinear extension \citep{calinescu2007maximizing} is
extensively used for submodular set function maximization.  It is the
expected value of $F(S)$ under the surrogate distribution
$q(S|{\x}):= \prod_{i\in S}x_i \prod_{j\notin S}(1-x_j),
\x\in[0,1]^n$:
\begin{align}\label{eq_multilinear_ext}
\multi(\x) := \E_{q(S\mid \x)} [F(S)] =\sum_{S\subseteq
	\groundset}F(S)\prod_{i\in S}x_i \prod_{j\notin S}(1-x_j).
\end{align}
$\multi(\x)$ is DR-submodular and coordinate-wise linear
\citep{bach2015submodular}.
The partial derivative of $\multi(\x)$ can be expressed as,
\begin{align} 
\nabla_i \multi(\x)  & 
= \E_{q(S\mid \x, x_i = 1)} [F(S)]  - \E_{q(S\mid \x, x_i=0)} [F(S)]\\\notag 
& =\multi(\sete{x}{i}{1}) - \multi(\sete{x}{i}{0})
\\\notag  
&  =  \sum_{S\subseteq \groundset, S\ni i } F(S)
\prod_{j \in S\backslash\{i\}}x_j
\prod_{j'\notin S}(1-x_{j'})\\\notag 
& \quad  - \sum_{S\subseteq
	\groundset\backslash \{i\} }\ F(S) \prod_{j\in
	S} x_j \prod_{j'\notin S, j'\neq i}
(1-x_{j'}). 
\end{align}

At the first glance, evaluating the multilinear extension
in \cref{eq_multilinear_ext} costs
an exponential number of
operations. However, when used in practice, one can often use sampling
techniques to estimate its value and gradient. Furthermore, it is
worth noting that for several classes of practical submodular set
functions, their multilinear extensions $\multi()$ admit closed form
expressions.  We present details in the following.

\subsection{Gibbs Random Fields}\label{gibbs_multilinear}

Let us use $\v \in \{0,1\}^\groundset  $ to equivalently 
denote the $n$ binary random variables in Gibbs random fields. 
$F(\v)$ corresponds to the negative energy function 
in Gibbs random fields. If the energy function is
parameterized with a finite order of interactions, i.e., 
$F(\v) = \sum_{s\in \groundset} \theta_s v_s + \sum_{(s,t)\in
	\groundset \times \groundset} \theta_{s, t}v_s v_t + ... +
\sum_{(s_1, s_2, ..., s_d)} \theta_{s_1, s_2, ..., s_d}v_{s_1} \cdots
v_{s_d},  \; d< \infty$, then one can verify that its
multilinear extension has the following closed form,
\begin{align}
\multi(\x)
= \sum_{s\in \groundset} \theta_s x_s + \sum_{(s,t)\in \groundset
	\times \groundset} \theta_{s, t}x_s x_t + ...  \\\notag 
+  \sum_{(s_1, s_2,
	..., s_d)} \theta_{s_1, s_2, ..., s_d}x_{s_1} \cdots  x_{s_d}\,.
\end{align}

The gradient of this expression can also be easily derived.  Given
this observation, one can quickly  derive the multilinear extensions
of a large category of energy functions of Gibbs random fields, e.g.,
graph cut, hypergraph cut, Ising models, etc.  
Specifically,

\paragraph{Undirected  \maxcut.}
For undirected \maxcut, its objective is
$F(\v) = \frac{1}{2}\sum_{(i,j)\in E} w_{ij} (v_i + v_j -2v_i v_j), \v
\in \{0,1\}^\groundset $.
One can verify that its multilinear extension is
$\multi(\x) = \frac{1}{2}\sum_{(i,j)\in E} w_{ij} (x_i + x_j -2x_i
x_j), \x \in [0,1]^\groundset $.

\paragraph{Directed \maxcut.} For directed \maxcut, its objective is
$F(\v) = \sum_{(i,j)\in E} w_{ij} v_i (1- v_j), \v \in
\{0,1\}^\groundset $.
Its multilinear extension is
$\multi(\x) = \sum_{(i,j)\in E} w_{ij} x_i (1- x_j), \x \in
[0,1]^\groundset $.

\paragraph{Ising models.}
For Ising models \citep{ising1925contribution} with non-positive pairwise interactions (antiferromagnetic interactions),
$F(\v) = \sum_{s\in \groundset} \theta_s v_s + \sum_{(s,t)\in E}
\theta_{st}v_s v_t$,
$\v\in \{0, 1 \}^\groundset$, this objective can be easily verified to
be submodular.
Its multilinear extension is:
\begin{align}
  \multi(\x)= \sum_{s\in \groundset} \theta_s x_s + \sum_{(s,t)\in E}
  \theta_{st}x_s x_t, \x \in [0,1]^\groundset. 
\end{align}

\subsection{Facility Location and FLID (Facility Location Diversity)}

FLID is a diversity model \citep{Tschiatschek16diversity} that has
been designed as a computationally efficient alternative to DPPs
\citep{kulesza2012determinantal}.  It is in a more general form than
the facility location objective.  Let
$\BW\in \R_+^{|\groundset|\times D }$ be the weights, each row
correponds to the latent representation of an item, with $D$ as the
dimensionality. Then
\begin{align}\notag
  F(S) := & \sum\nolimits_ {i\in S} u_i + \sum\nolimits_{d=1}^{D} (
            \max_{i\in S} W_{i,d} - \sum\nolimits_{i\in S} W_{i,d} ) \\ 
  =& \sum\nolimits_ {i\in S} u'_i + \sum\nolimits_{d=1}^{D}
     \max_{i\in S}W_{i,d}, 
\end{align}
which models both coverage and diversity, and
$u'_i = u_i - \sum_{d=1}^{D} W_{i,d}$. If $u'_i=0$, one recovers the
facility location objective.
The computational complexity of evaluating its partition function is
$\bigo{|\groundset|^{D+1}}$ \citep{Tschiatschek16diversity}, which is
exponential in terms of $D$.

We now show the technique such that $\multi(\x)$ and
$\nabla_i \multi(\x) $ can be evaluated in $\bigo{Dn^2}$ time.
Firstly, for one $d\in [D]$, let us sort $W_{i,d}$ such that
$W_{i_d(1), d} \leq W_{i_d(2), d} \leq \cdots \leq W_{i_d(n), d} $.
After this sorting, there are $D$ permutations to record:
$i_d(l), l=1,...,n, \forall d\in [D]$.  Now, one can verify that,
\begin{align} 
& \multi(\x) \\\notag 
& =  \sum_ {i \in [n]} u'_i x_i +  \sum_d
\sum_{S\subseteq \groundset }  \max_{i \in S} W_{i, d}
\prod_{m\in S}x_m \prod_{m'\notin S}(1-x_{m'}) \\\notag 
& = \sum_ {i\in [n]} u'_i x_i + \sum_{d} \sum_{l=1}^n
W_{i_d(l), d} x_{i_d(l)} \prod_{m=l+1}^n [1-
x_{i_d(m)}].  
\end{align}
Sorting costs $\bigo{Dn\log n}$, and from the above expression, one
can see that the cost of evaluating $\multi(\x)$ is $\bigo{Dn^2}$. By the
relation that
$\nabla_i \multi(\x) = \multi(\sete{x}{i}{1}) -
\multi(\sete{x}{i}{0})$,
the cost is also $\bigo{Dn^2}$.

\subsection{Set Cover Functions}
\label{supp_setcover}
Suppose there are $|C| = \{c_1, ...,c_{|C|}\}$ concepts, and $n$ items
in $\groundset$. Give a set $S\subseteq \groundset$, $\Gamma (S)$
denotes the set of concepts covered by $S$. Given a modular function
$\m: 2^C \mapsto \R_+ $, the set cover function is defined as
$F(S) = \m (\Gamma(S))$.
This function models coverage in maximization,
and also the notion of complexity in minimization problems \citep{lin2011optimal}. 
Let us define an inverse map $\Gamma^{-1}$, such that for 
each concept $c$, $\Gamma^{-1}(c)$ denotes the set 
of items $v$ such that $\Gamma^{-1}(c) \ni v$. So the 
multilinear extension is,
\begin{align}\notag 
\multi(\x)  & =  \sum\nolimits_{i \in \groundset}  \m (\Gamma(S))  \prod\nolimits_{m\in S}x_m \prod\nolimits_{m'\notin S }(1-x_{m'}) \\
& =  \sum\nolimits_ {c\in C}  m_c \left[  1- \prod\nolimits_{ i\in \Gamma^{-1}(c) }  (1- x_i) \right].
\end{align}
The last equality is achieved by considering the situations
where a concept $c$ is covered.
One can observe that both $\multi(\x)$ and $\nabla_i \multi(\x) $ can
be evaluated in $\bigo{n|C|}$ time.

\subsection{General Case: Approximation by Sampling}

In the most general case, one may only have access to the function values of $F(S)$. 
In this scenario, one can use a polynomial number of sample steps to estimate
$\multi(\x)$ and its gradients. 

Specifically: 1) Sample $k$ times
$S \sim q(S|\x)$ and evaluate function values for them, resulting in
$F(S_1), ...,F(S_k)$.  2) Return the average
$\frac{1}{k}\sum_{i=1}^{k} F(S_i)$. According to the Hoeffding bound
\citep{hoeffding1963probability}, one can easily derive that
$\frac{1}{k}\sum_{i=1}^{k} F(S_i)$ is arbitrarily close to
$\multi(\x)$ with increasingly more samples: With probability at least
$1- \exp(-k\epsilon^2/2)$, it holds that
$|\frac{1}{k}\sum_{i=1}^{k} F(S_i) - \multi(\x)| \leq \epsilon \max_S
|F(S)| $, for all $\epsilon > 0$.

\section{Influence Maximization with Marketing Strategies}
\label{app_influence_max_marketing_strategies}

\citet{kempe2003maximizing} proposed the general marketing strategy
for influence maximization, which is a very realistic setting. They
assume that there exists a number $m$ of different marketing actions
$M_i$, each of which may affect some subset of nodes by increasing
their probabilities of being activated.  A natural property should be
that the more we spend on any one action, the stronger should be its
effect.

Formally, one chooses $x_i$ investments to marketing action $M_i$, so
one marketing strategy is an $m$-dimensional vector $\x\in \R^m$.  Then
the probability that node $i$ will become activated is described by
the activation function: $a_i(\x): \R^m \rightarrow [0, 1]$. This
function should satisfy the DR property by assuming that any
marketing strategy is more effective when the targeted individual is
less ``marketing-saturated'' at that point.

Now we search for the expected size of the final active set, which
is the expected influence. We know that given a marketing strategy
$\x$, a node $i$ becomes active with probability $a_i(\x)$, so the
expected influence is:
\begin{align}\label{influence_general_marketing}
f(\x) = \sum_{S\subseteq V} F(S) \prod_{i\in S} a_i(\x)
\prod_{j\notin S} (1 - a_j(\x)). 
\end{align}
$F(S)$ is the influence with the seeding set as $S$. It is submodular
for many influence models, such the Linear Threshold model and
Independent Cascade model of \citet{kempe2003maximizing}.  One can
easily see that \cref{influence_general_marketing} is DR-submodular by
viewing it as a composition of the multilinear extension of $F(S)$ and
the activation function $a(\x)$.

\subsection{Realizations of the Activation Function}
\label{app_activations_influence_max}

For the activation function $a_i(\x)$, we consider two realizations:

\begin{enumerate}
	\item Independent marketing actions.
	
	Here we provide one action for each user, and different
	actions are independent. So we have $m = |V|$ actions, and for user
	$i$, there exists an activation function $a_i(x_i)$, which is a one
	dimensional nondecreasing DR-submodular function. A specific
	instance is that $a_i(x_i) = 1 - (1 - p_i)^{x_i}$, $p_i \in [0, 1]$
	is the probability of user $i$ become activated with one unit of
	investment.
	
	\item Bipartite marketing actions.
	
	Suppose there are $m$ marketing actions and $|V|$ users.  The
	influence relationship among actions and users are modeled as a
	bipartite graph $(M, V; W)$, where $M$ and $V$ are collections of
	marketing actions and users, respectively, and $W$ is the collection
	of weights.  The edge weight, $p_{st}\in W$, represents the
	influence probability of action $s$ to users $t$ by providing one
	unit of investment to action $s$.  So with a marketing strategy as
	$\x$, the probability of a user $t$ being activated is
	$a_t(\x) = 1- \prod_{(s, t)\in W} \left(1-p_{st} \right)^{x_s}$.
	This is a nondecreasing DR-submodular function.
	
\end{enumerate}

One may notice that the independent marketing actions is a special
case of bipartite marketing actions.

\section{Optimal Budget Allocation with Continuous Assignments}

Optimal budget allocation is a special case of the influence
maximization problem.  It can be modeled as a bipartite graph
$(S,T; W)$, where $S$ and $T$ are collections of advertising channels
and customers, respectively.
The edge weight, $p_{st}\in W$, represents the influence probability
of channel $s$ to customer $t$.
The goal is to distribute the budget (e.g., time for a TV
advertisement, or space of an inline ad) among the source nodes, and
to maximize the expected influence on the potential customers
\citep{soma2014optimal,DBLP:conf/aaai/HatanoFMK15}.  

The total influence of customer $t$ from all channels can be modeled
by a proper monotone DR-submodular function $I_t(\x)$, e.g.,
$I_t(\x) = 1- \prod_{(s, t)\in W} \left(1-p_{st} \right)^{x_s}$ where
$\x\in \R^S_+$ is the budget assignment among the advertising
channels.  For a set of $k$ advertisers, let $\x^i\in \R^S_+$ be
the budget assignment for advertiser $i$, and
$\x:= [\x^1,\cdots, \x^k]$ denote the assignments for all the
advertisers.  The overall objective is,
\begin{flalign}
  & g(\x)= \sum\nolimits_{i=1}^k \alpha_i f(\x^i) ~\text{ with }~\\
  & f(\x^i) :=\sum\nolimits_{t\in T} I_t(\x^i), \; \zero \leqco
  \x^i\leqco \bar \bu^i , \forall i = 1,..., k,
\end{flalign}
which is monotone DR-submodular.

A concrete application is defined by advertiser bidding for search
marketing, i.e., where vendors bid for the right to appear alongside
the results of different search keywords.
Here, $x^i_s$ is the volume of advertisement space allocated to the
advertiser $i$ to show his ad alongside query keyword $s$.  The search
engine company needs to distribute the budget (advertising space) to
all vendors to maximize their influence on the
customers, %
while respecting various constraints. For example, each vendor has a
specified budget limit for advertising, and the ad space associated
with each search keyword can not be too large.
All such constraints can be formulated as a down-closed polytope $\P$,
hence the \submodularfw algorithm (\cref{alg_sfmax_GradientAscend} in \cref{chapter_max_monotone})  can be used to find an
approximate solution for the problem $\max_{\x\in \P} g(\x)$.

Note that one can flexibly add regularizers in designing
$I_t(\x^i)$ as long as it remains monotone DR-submodular. 
For example, adding separable regularizers of the form
$\sum_s \phi(x^i_s)$ does not change off-diagonal entries of the
Hessian, and hence maintains submodularity. Alternatively, bounding
the second-order derivative of $\phi(x^i_s)$ ensures DR-submodularity.

\section{Softmax Extension for DPPs}  

Determinantal point processes (DPPs) are probabilistic models of
repulsion, that have been used to model diversity in machine learning
\citep{kulesza2012determinantal}. The constrained MAP (maximum a
posteriori) inference problem of a DPP is an NP-hard combinatorial
problem in general. Currently, the methods with the best approximation
guarantees are based on either maximizing the multilinear extension
\citep{calinescu2007maximizing} or the softmax extension
\citep{gillenwater2012near}, both of which are continuous DR-submodular
functions.

The multilinear extension is given as an expectation over the original
set function values, thus evaluating the objective of this extension
requires expensive sampling in general.  In contrast, the softmax extension has a
closed form expression, which is much more appealing from a
computational perspective.
Let $\bmL$ be the positive semidefinite kernel matrix of a DPP, its
softmax extension is:
\begin{flalign}\label{eq_softmax}
f(\x) = \log\de{\diag(\x)(\bmL-\bmI) +\bmI }, \x\in [0,1]^n,
\end{flalign}
where $\bmI$ is the identity matrix, $\diag(\x)$ is the diagonal
matrix with diagonal elements set as $\x$. 
Its DR-submodularity can be established by directly applying 
Lemma 3 in \citet{gillenwater2012near},  which immediately implies 
that all  entries of  $\nabla^2  f$ are non-positive, so $f(\x)$
is continuous DR-submodular.

The problem of MAP
inference in DPPs corresponds to the problem $\max_{\x\in \P} f(\x)$,
where $\P$ is a down-closed convex constraint, e.g., a matroid
polytope or a matching polytope.

\section{Mean Field Inference for Probabilistic Log-Submodular Models}

Probabilistic log-submodular models \citep{djolonga14variational} are a class of
probabilistic models over subsets of a ground set $\groundset = [n]$,
where the log-densities are submodular set functions $F(S)$:
$p(S) = \frac{1}{\parti}\exp(F(S))$. The partition function
$ \parti = \sum_{S\subseteq \groundset}\exp(F(S))$ is typically hard to
evaluate.  One can use mean field inference to approximate $p(S)$ by
some factorized distribution
$q(S|\x):= \prod_{i\in S}x_i \prod_{j\notin S}(1-x_j), \x\in
[0,1]^n$,
by minimizing the distance measured w.r.t. the Kullback-Leibler
divergence between $q$ and $p$, i.e.,
$ \sum_{S\subseteq \groundset} q(S|\x)
\log\frac{q(S|\x)}{p(S)}$. It is,

\begin{align} 
\text{KL}(\x) & = 
-\sum_{S\subseteq \groundset}F(S)  \prod_{i\in S}x_i \prod_{j\notin S}(1-x_j) +\\\notag
&  \sum\nolimits_{i=1}^{n} [x_i\log x_i + (1-x_i)\log(1-x_i)] + \log \parti.
\end{align}

$ \text{KL}(\x)$ is \mesuper w.r.t. $\x$. To see this: The first term
is the negative of a multilinear extension, so it is \mesuper. The
second term is separable, and coordinate-wise convex, so it will not
affect the off-diagonal entries of $\nabla^2 \text{KL}(\x)$, it will
only contribute to the diagonal entries.  Now, one can see that all
entries of $\nabla^2 \text{KL}(\x)$
are non-negative, so $\text{KL}(\x)$ is \mesuper w.r.t. $\x$.
Minimizing the Kullback-Leibler divergence $\text{KL}(\x)$ amounts to
maximizing a DR-submodular function.

\section{Revenue Maximization with Continuous Assignments}
\label{sec_revenue_max}

The viral marketing suggests to choose a small subset of buyers to
give them some product for free, to trigger a cascade of further
adoptions through ``word-of-mouth'' effects, in order to maximize the
total revenue \citep{hartline2008optimal}.  For some products (e.g.,
software), the seller usually gives away the product in the form of a
trial, to be used for free for a limited time period.  In this task,
except for deciding whether to choose a user or not, the sellers also
need to decide how much the free assignment should be, in which the
assignments should be modeled as continuous variables.
We call this problem \emph{revenue maximization with continuous
  assignments}.

We use a directed graph $G = (\groundset, E; \BW)$ to represent the
social connection graph. $\groundset$ contains all the $n$ users, $E$
is the edge set, and $\BW$ is the adjacency matrix. We treat the
undirected social connection graph as a special case of the directed
graph, by taking one undirected edge as two directed edge with the
same weight.

\subsection{A Variant of the Influence-and-Exploit (IE) Strategy} 

This model has been used in \citet{soma2017non} and \citet{durr2019non}.  It can be treated as a
simplified variant of the Influence-and-Exploit (IE) strategy of
\citet{hartline2008optimal}.

Specifically:

\begin{itemize}
\item \emph{Influence} stage: For each of the user $i$, we give him
  $x_i$ units of products for free, the user becomes an advocate of
  the product with probability $1 - q^{x_i}$ (independently from other
  users), where $q\in (0, 1)$ is a parameter. This is consistent with
  the intuition that with more free assignment, the user is more
  likely to advocate the product.

\item \emph{Exploit} stage: suppose that a set $S$ of users advocate
  the product while the complement set $\groundset \setminus S$ of
  users do not. Now the revenue comes from the users in
  $\groundset \setminus S$, since they will be influenced by the
  advocates with probability proportional to the edge weights. We use
  a simplified concave graph model \citep{hartline2008optimal} for the
  value function, i.e.,
  $v_j(S) = \sum_{i\in S} W_{ij}, j\in \groundset \setminus S$.
  Assume for simplicity that the users of $\groundset \setminus S$ are
  visited independently with each other.  Then the revenue is:
  \begin{align}
    R(S) =\sum_{j\in\groundset \setminus S} v_j(S) = \sum_{j\in
    \groundset\setminus S} \sum_{i\in S} W_{ij}.  
  \end{align}
  Notice that $S$ is a random set drawn according to the distribution
  specified by the continuous assignment $\x$.
\end{itemize}

With this Influence-and-Exploit (IE) strategy, the expected revenue is
a function $f: \R_+^\groundset \rightarrow \R_+$, as shown below:
\begin{align}\notag 
  f(\x) & = \epe[S]{R(S)}\\ 
	&= \epe[S]{\sum_{i\in S} \sum_{j\in \groundset\setminus
          S}W_{ij} } \\
        & = \sum_{i\in \groundset} \sum_{j\in \groundset\setminus
          \{i\}} W_{ij} (1- q^{x_i})q^{x_j}.
\end{align}

\subsection{An Alternative Model}
\label{app_revenue_max_alternative}

In addition to the Influence-and-Exploit (IE) model, we have also studied  an alternative model.  Assume there are $q$ products
and $n$ buyers/users, let $\x^i \in \R_+^n$  be the assignments of
product $i$ to the $n$ users, let $\x:= [\x^1,\cdots, \x^q]$ denote
the assignments for the $q$ products.  The revenue can be modeled as
$g(\x) = \sum_{i=1}^q f(\x^i)$ with
\begin{flalign}\label{eq_re}
  f(\x^i) := \alpha_i \sum\nolimits_{s: x^i_s =0} R_s(\x^i) + \beta_i
  \sum\nolimits_{t: x^i_t \neq 0} \phi (x^i_t) +\gamma_i
  \sum\nolimits_{t: x^i_t \neq 0} \bar R_t(\x^i),\\\notag \zero \leqco
  \x^i \leqco \bar \bu^i,
\end{flalign}
where $x^i_t$ is the assignment of product $i$ to user $t$ for free,
e.g., the amount of free trial time or the amount of the product
itself.  $R_s(\x^i)$ models revenue gain from user $s$ who did not
receive the free assignment. It can be some non-negative,
non-decreasing submodular function. $\phi (x^i_t)$ models revenue gain
from user $t$ who received the free assignment, since the more one
user tries the product, the more likely he/she will buy it after the
trial period.  $\bar R_t(\x^i)$ models the revenue loss from user $t$
(in the free trial time period the seller cannot get profits), which
can be some non-positive, non-increasing submodular function.
For products with continuous assignments, usually the cost of the
product does not increase with its amount, e.g., the product as a
software, so we only have the box constraint on each assignment. The
objective in \cref{eq_re} is generally
\emph{non-concave/non-convex}, and non-monotone submodular (see
\cref{supp_revenue} for more details).
\begin{lemma}\label{revenue}
  If $R_s(\x^i)$ is non-decreasing submodular and $\bar R_t(\x^i)$ is
  non-increasing submodular, then $f(\x^i)$ in \cref{eq_re} is
  submodular.
\end{lemma}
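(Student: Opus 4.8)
The plan is to exploit the fact that $0^{\text{th}}$ order submodularity \labelcref{eq1} is preserved under non-negative linear combinations (the inequality simply scales and adds), so — assuming, as in the application, that $\alpha_i,\beta_i,\gamma_i\geq 0$ — it suffices to show that each of the three summands in \labelcref{eq_re} is submodular. Write $\x$ for $\x^i$ throughout; the box constraint $\zero\leqco\x\leqco\bar\bu^i$ forces $\x\geqco\zero$, so ``$x_t\neq 0$'' coincides with ``$x_t>0$'', and $\x\vee\y$, $\x\wedge\y$ stay in the box.

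The middle term $\x\mapsto\sum_{t:\,x_t\neq 0}\phi(x_t)$ is separable, hence trivially submodular: a separable function satisfies \labelcref{eq1} with equality, coordinate by coordinate, regardless of $\phi$. For the first term $g(\x):=\sum_{s:\,x_s=0}R_s(\x)$ I would verify \labelcref{eq1} directly, partitioning the ground set according to the pair $(x_s,y_s)$ into the cases (i) $x_s=y_s=0$, (ii) $x_s=0<y_s$, (iii) $y_s=0<x_s$, (iv) $x_s,y_s>0$, and tracking which of $\x,\y,\x\vee\y,\x\wedge\y$ has a vanishing $s$-th coordinate. One finds that case (iv) contributes to none of the four terms; case (i) demands $R_s(\x)+R_s(\y)\geq R_s(\x\vee\y)+R_s(\x\wedge\y)$, which is exactly submodularity of $R_s$; and cases (ii),(iii) demand $R_s(\x)\geq R_s(\x\wedge\y)$ and $R_s(\y)\geq R_s(\x\wedge\y)$, which follow from $\x\wedge\y\leqco\x$, $\x\wedge\y\leqco\y$ together with the non-decreasing hypothesis on $R_s$. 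Summing over $s$ gives submodularity of $g$.

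The third term $h(\x):=\sum_{t:\,x_t\neq 0}\bar R_t(\x)$ is handled by the mirror-image argument: a user now contributes iff its coordinate is positive, so under the same partition case (i) contributes nothing, case (iv) needs submodularity of $\bar R_t$, and cases (ii),(iii) need $\bar R_t(\y)\geq\bar R_t(\x\vee\y)$ and $\bar R_t(\x)\geq\bar R_t(\x\vee\y)$, which hold since $\y\leqco\x\vee\y$, $\x\leqco\x\vee\y$ and $\bar R_t$ is non-increasing. Combining the three summands with the non-negative coefficients $\alpha_i,\beta_i,\gamma_i$ then finishes the proof.

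The one genuine subtlety — and the reason a direct second-order (Hessian) argument does not apply verbatim — is that the index sets $\{s:x_s=0\}$ and $\{t:x_t\neq 0\}$ change as $\x$ crosses the faces of the box, so $f$ is only piecewise smooth. The case analysis above is precisely what absorbs these jumps, and the step that has to be done carefully is checking that in cases (ii),(iii) the surviving ``boundary'' terms point in the favorable direction; this is exactly where the monotonicity hypotheses on $R_s$ and $\bar R_t$ are consumed. Everything else is bookkeeping.
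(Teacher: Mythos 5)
Your proposal is correct and follows essentially the same route as the paper's proof: it decomposes $f$ into the three summands, treats the $\phi$-term as modular/separable, and establishes submodularity of $\sum_{s:x_s=0}R_s$ and $\sum_{t:x_t\neq 0}\bar R_t$ by splitting coordinates according to the supports of $\x$ and $\y$, invoking submodularity of $R_s,\bar R_t$ on the coordinates where the index sets match and monotonicity on the cross cases. Your per-coordinate case analysis is just a rephrasing of the paper's sums over $A\setminus B$, $B\setminus A$, $A\cap B$ and their complements, and your explicit flagging of the non-negativity of $\alpha_i,\beta_i,\gamma_i$ is a welcome (implicitly assumed) detail.
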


\section{Applications Generalized from the Discrete Setting}

Many discrete submodular problems can be naturally generalized to the
continuous setting with continuous submodular objectives.  The maximum
coverage problem and the problem of text summarization with submodular
objectives are among the examples \citep{lin2010multi}. We put details
in the sequel.

\subsection{Text Summarization}  

Submodularity-based objective functions for text summarization perform
well in practice \citep{lin2010multi}.  Let $C$ be the set of all
concepts, and $\groundset$ be the set of all sentences.  As a
typical example, the concept-based summarization aims to find a subset
$S$ of the sentences to maximize the total credit of concepts covered
by $S$. \citet{soma2014optimal} considered extending the submodular
text summarization model to the one that incorporates ``confidence''
of a sentence, which has discrete value, and modeling the objective to
be an integer submodular function.  It is also natural to model the
confidence level of sentence $i$ to be a continuous value
$x_i\in [0, 1]$. Let us use $p_i(x_i)$ to denote the set of covered
concepts when selecting sentence $i$ with confidence level $x_i$, it
can be a monotone covering function
$p_i: \R_+ \rightarrow 2^C, \forall i\in \groundset$.  Then the
objective function of the extended model is
$f(\x) = \sum_{j\in \cup_i p_i(x_i) } c_j$, where $c_j\in \R_+$ is the
credit of concept $j$. It can be verified that this objective is a
monotone continuous submodular function.

\subsection{Sensor Energy Management}
For cost-sensitive outbreak detection in sensor networks
\citep{leskovec2007cost}, one needs to place sensors in a subset of
locations selected from all the possible locations $\groundset$, to
quickly detect a set of contamination events $E$, while respecting the
cost constraints of the sensors.  For each location $v\in \groundset$
and each event $e\in E$, a value $t (v, e)$ is provided as the time it
takes for the placed sensor in $v$ to detect event
$e$. \citet{DBLP:conf/nips/SomaY15} considered the sensors with
discrete energy levels. It is natural to model the energy levels of
sensors to be a \emph{continuous} variable $\x\in \R_+^\groundset$.
For a sensor with energy level $x_v$, the success probability it
detects the event is $1-(1-p)^{x_v}$, which models that by spending
one unit of energy one has an extra chance of detecting the event with
probability $p$.  In this model, beyond deciding whether to place a
sensor or not, one also needs to decide the optimal energy levels. Let
$t_{\infty} = \max_{e\in E, v\in \groundset}t(v,e)$, let $v_e$ be the
first sensor that detects event $e$ ($v_e$ is a random variable).  One
can define the objective as the expected detection time that could be 
\textit{saved},
\begin{flalign}
  f(\x) := \mathbb{E}_{e\in E} \mathbb{E}_{v_e} [t_{\infty} - t(v_e,
  e)],
\end{flalign}
which is a monotone DR-submodular function.  Maximizing $f(\x)$ w.r.t.
the cost constraints pursues the goal of finding the optimal energy
levels of the sensors, to maximize the expected detection time that
could be saved.

\subsection{Multi-Resolution Summarization}
Suppose we have a collection of items, e.g., images
$\groundset = \{\ele_1, ..., \ele_n\}$. 
We follow the strategy to extract a representative summary, where
representativeness is defined w.r.t.~a submodular set function
$F:2^\groundset\to \mathbb{R}$. However, instead of returning a single
set, our goal is to obtain summaries at multiple levels of detail or
resolution. One way to achieve this goal is to assign each item
$\ele_i$ a nonnegative score $x_i$. Given a user-tunable threshold
$\tau$, the resulting summary $S_\tau=\{\ele_i | x_i \geq \tau\}$ is
the set of items with scores exceeding $\tau$. Thus, instead of
solving the discrete problem of selecting a fixed set $S$, we pursue
the goal to optimize over the scores, e.g., to use the following
continuous submodular function,
\begin{flalign} 
f(\x) =  \sum\nolimits_{i \in \groundset} \sum\nolimits_{j\in \groundset}  \phi(x_j) s_{i,j}
- \sum\nolimits_{i \in \groundset} \sum\nolimits_{j \in \groundset} x_i x_j s_{i,j},
\end{flalign} 
where $s_{i,j}\geq 0$ is the similarity between items $i,j$, and
$\phi(\cdot)$ is a non-decreasing concave function.

\subsection{Facility Location with Scales}

The classical discrete facility location problem can be 
generalized to the continuous case where the scale of a facility is
determined by a continuous value in interval $[\zero, \bar \bu]$.  For a
set of facilities $\groundset$, let $\x\in \R_+^\groundset$ be the
scale of all facilities.  The goal is to decide how large each
facility should be in order to optimally serve a set $T$ of
customers. For a facility $s$ of scale $x_s$, let $p_{st}(x_s)$ be the
value of service it can provide to customer $t\in T$, where
$p_{st}(x_s)$ is a normalized monotone function ($p_{st}(0) =
0$). %
Assuming each customer chooses the facility with highest value, the
total service provided to all customers is
$f(\x) = \sum_{t\in T} \max_{s\in \groundset} p_{st}(x_s)$.  It can be
shown that $f$ is monotone submodular.

\section[Exemplar Applications of Generalized Submodularity]{Exemplar
  Applications Captured by Generalized Submodularity on Conic
  Lattices}
\label{subsec_moti_lattice}

In \cref{sec_lattice} we show the technical details on a class of
continuous submodular functions over conic lattices. Here we list two
prototypical 
applications that are not continuous submodular, but continuous
submodular over a conic lattice.

\subsection{Logistic Regression with a Separable Regularizer}

Consider the logistic regression model with a \emph{non-convex}
separable regularizer.  This flexibility may result in better
statistical performance (e.g., in recovering discontinuities
\citep{antoniadis2011penalized}) compared to classical models with
convex regularizers.
Let $\z^1,..., \z^m$ in $\R^n$ be $m$ training samples with
corresponding binary labels $\y\in \{\pm 1\}^m$. Assume that the
following mild assumption is satisfied:
For any fixed dimension $i$, all the data points have the same sign,
i.e., $\sign{z^j_i}$ is the same for all $j \in [m]$ (which can be
achieved by easily scaling if not).

The task is to solve the following non-convex optimization problem,
\begin{flalign}\label{lr}
  \min_{\x\in \R^n} f(\x) := \frac{1}{m}\sum\nolimits_{j=1}^{m}f_j(\x)
  +\lambda r(\x),
\end{flalign}
where $f_j(\x) = \log(1 +\exp(-y_j \x^\trans \z^j))$ is the logistic
loss; $\lambda>0$ is the regularization parameter, and $r(\x) $ is
some non-convex separable regularizer.  Such separable regularizers
are popular in statistics, and two notable choices are
$r(\x)= \sum_{i= 1}^n \frac{\gamma x_i^2}{1+\gamma x_i^2}$, and
$r(\x) = \sum_{i= 1}^n \min \{\gamma x_i^2, 1 \}$ (see
\citet{antoniadis2011penalized} for more choices).
Let us define a vector $\bmalpha\in \{\pm 1 \}^n$ as
$\alpha_i = \text{sign}(z_i^j), i\in [n]$ and
$l(\x) :=\frac{1}{m}\sum\nolimits_{j=1}^{m}f_j(\x)$.

One can show that $l(\x)$ is not DR-submodular or \mesuper.  Yet, we
can show that $l(\x)$ is $\cone_{\bmalpha}$-\mesuper, where the latter
generalizes \me-supermodularity.

\begin{lemma}\label{claim_logistic}
Consider the logistic loss: 
\begin{flalign} \label{app_lr}
  l(\x) =\frac{1}{m}\sum\nolimits_{j=1}^{m}f_j(\x) =
  \frac{1}{m}\sum\nolimits_{j=1}^{m} \log(1 +\exp(-y_j \x^\trans
  \z^j)).
\end{flalign}	
$l(\x)$ above is $\cone_{\bmalpha}$-\mesuper.
\end{lemma}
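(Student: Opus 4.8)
The plan is to exploit that the logistic loss $l(\cdot)$ is $C^\infty$ and reduce the statement to a single Hessian sign condition. Recall that $l$ being $\cone_{\bmalpha}$-\mesuper means $-l$ is $\cone_{\bmalpha}$-DR-submodular; by \cref{prop_relation} this is equivalent to $-l$ being both $\cone_{\bmalpha}$-submodular and coordinate-wise concave, which by the second-order characterization \labelcref{eq_general_sub} (for the off-diagonal entries) together with coordinate-wise concavity $\nabla^2_{pp}(-l)\le 0$ (for the diagonal), and using $\alpha_p^2=1$, amounts to
\begin{align}
  \alpha_p\alpha_q\,\nabla^2_{pq}l(\x)\ \ge\ 0 \qquad \text{for all } p,q\in[n]\ \text{and all}\ \x\in\R^n.
\end{align}
So everything reduces to verifying this sign condition.

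Next I would compute the Hessian termwise. Writing $\sigma(t):=1/(1+e^{-t})$ for the logistic sigmoid and $u_j:=-y_j\x^\trans\z^j$, a direct calculation gives $\nabla_p f_j(\x)=-y_jz^j_p\,\sigma(u_j)$, and therefore, for any coordinates $p,q$,
\begin{align}
  \nabla^2_{pq}f_j(\x)=y_j^2\,z^j_pz^j_q\,\sigma(u_j)\bigl(1-\sigma(u_j)\bigr)=z^j_pz^j_q\,\sigma(u_j)\bigl(1-\sigma(u_j)\bigr),
\end{align}
where the last equality uses $y_j\in\{\pm1\}$. Averaging over $j$ yields $\nabla^2_{pq}l(\x)=\tfrac1m\sum_{j=1}^m z^j_pz^j_q\,\sigma(u_j)(1-\sigma(u_j))$.

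Finally, I would multiply by $\alpha_p\alpha_q$ and invoke the standing assumption. Since $\sigma(u_j)(1-\sigma(u_j))\ge 0$, each summand has the sign of $\alpha_p\alpha_q z^j_pz^j_q=(\alpha_pz^j_p)(\alpha_qz^j_q)$, and because $\sign{z^j_p}=\alpha_p$ for every $j$ we have $\alpha_pz^j_p=|z^j_p|\ge 0$ and likewise $\alpha_qz^j_q=|z^j_q|\ge 0$; hence $\alpha_p\alpha_q\nabla^2_{pq}l(\x)=\tfrac1m\sum_{j=1}^m|z^j_p|\,|z^j_q|\,\sigma(u_j)(1-\sigma(u_j))\ge 0$, for all $p,q$ and all $\x$, which is exactly the condition above. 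There is no genuine obstacle here; the only points to watch are the notational clash between the data index $j$ and the coordinate indices (which I rename $p,q$) and invoking the same-sign hypothesis at precisely the step that turns $\alpha_pz^j_p$ into $|z^j_p|$. If one preferred not to appeal to differentiability, I would instead verify the $\cone_{\bmalpha}$-weak-DR inequality of \cref{def_general_dr} directly by showing that $x_p\mapsto\alpha_p\nabla_p l(\x)$ is nonincreasing along the $\cone_{\bmalpha}$-order, but this detour is unnecessary since $l$ is smooth.
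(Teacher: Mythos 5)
Your proposal is correct and follows essentially the same route as the paper: both reduce the claim to the second-order sign condition $\alpha_p\alpha_q\nabla^2_{pq}l(\x)\geq 0$ (the paper cites \labelcref{eq_general_conedr} directly, which is exactly the combination of \labelcref{eq_general_sub} and \cref{prop_relation} that you spell out), compute the same Hessian $\nabla^2_{pq}l(\x)=\frac{1}{m}\sum_j z^j_pz^j_q\,\sigma(u_j)(1-\sigma(u_j))$, and conclude via the same-sign assumption $\alpha_p=\sign{z^j_p}$. Your write-up is slightly more explicit about the reduction step and the use of $y_j^2=1$, but there is no substantive difference.
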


Usually, one can assume the optimal solution $\x^*$ lies in some box
$[\underline \u, \bar \u]$. Then the problem is an instance of constrained non-monotone $\cone_{\bmalpha}$-DR-submodular maximization
problem.

\subsection{Non-Negative PCA (NN-PCA)}
\label{sec_nnpca}

NN-PCA \citep{zass2007nonnegative,sigg2008expectation,montanari2016non} is widely used as
alternative models of PCA for dimension reduction, since its
projection involves only non-negative weights -- a required property
in fields like economics, bioinformatics and computer vision.

For a given set of $m$ data points $\z^j\in \R^n, j\in [m]$, NN-PCA
aims to solve the following non-convex optimization problem:
\begin{flalign}\label{nn_pca_append}
  \min_{\|\x\|_2 \leq 1, \x\geqco \zero} f(\x) : = -\frac{1}{2}
  \x^\trans \left(\sum\nolimits_{j=1}^{m} \z^j {\z^j}^\trans\right )
  \x.
\end{flalign}
Let $\bmA= \sum\nolimits_{j=1}^{m} \z^j {\z^j}^\trans$,  
one can see that,
\begin{flalign} 
  & A_{pp} = \sum\nolimits_{j=1}^{m}(z_p^j)^2 \geq 0, \; A_{pq} =
  \sum\nolimits_{j=1}^{m} z^j_p z^j_q = A_{qp}.
\end{flalign}

Let us make the following weak assumption: 
For one dimension/feature $i$, all the data points have the same sign,
i.e., $\sign{z^j_i}$ is the same for all $j \in [m]$ (which can be
achieved by easily scaling if not).
Now, by choosing the sign vector $\bmalpha\in \{\pm 1\}^n$ to be
$\alpha_p = \sign{z^j_p}, \forall p\in [n]$, one can easily verify
that $A_{pq}\alpha_p\alpha_q \geq 0, \forall p,q\in [n]$. Notice that
$\nabla^2 f$ in \cref{nn_pca_append} is $-\bmA$, so it holds that
$\alpha_p\alpha_q\nabla^2_{pq} f \leq 0, \forall p,q\in [n]$, thus
$f(\x)$ is $\cone_{\bmalpha}$-DR-submodular according
to \labelcref{eq_general_conedr}.  Thus we can
treat \labelcref{nn_pca_append} as an instance of the constrained
$\cone_{\bmalpha}$-DR-submodular minimization problem.

\section{Conclusions}

In this chapter we have discussed various classes of applications
whose objectives fall into the class of continuous submodular
functions. They motivate us to study polynomial-time algorithms with
strong approximation guarantees, which will be presented in the subsequent chapters.

\section{Additional Details}

\subsection{Details of Revenue Maximization with Continuous
  Assignments}
\label{supp_revenue}

\subsubsection{More Details About the Model}
As discussed in the main text, $R_s(\x^i)$ should be some
non-negative, non-decreasing, submodular function; therefore, we set 
$R_s(\x^i) := \allowbreak \sqrt{\sum_{t: x^i_t \neq 0}x^i_t w_{st}}$,
where $w_{st}$ is the weight of edge connecting users $s$ and $t$.
The first part in R.H.S. of \cref{eq_re} models the revenue from users
who have not received free assignments, while the second and third
parts model the revenue from users who have gotten the free
assignments.
We use $w_{tt}$ to denote the ``self-activation rate" of user $t$:
Given certain amount of free trail to user $t$, how probable is it
that he/she will buy after the trial.  The intuition of modeling the
second part in R.H.S. of \cref{eq_re} is: Given the users more free
assignments, they are more likely to buy the product after using it.
Therefore, we model the expected revenue in this part by
$\phi(x^i_t) = w_{tt}x^i_t$; The intuition of modeling the third part
in R.H.S. of \cref{eq_re} is: Giving the users more free assignments,
the revenue could decrease, since the users use the product for free
for a longer period.  As a simple example, the decrease in the revenue
can be modeled as $\gamma \sum_{t:x^i_t\neq 0} -x^i_t$.

\subsubsection{Proof of Lemma \ref{revenue}}

\begin{proof}
	
	First of all, we prove that $g(\x) : = \sum_{s: x_s =0} R_s(\x)$
	is a non-negative submodular function.
	
	It is easy to see that $g(\x)$ is non-negative. 
	To prove that $g(\x)$ is submodular, one just need,
	\begin{flalign}\label{eq_f}
	g(\a) + g(\b) \geq g(\a\vee \b) + g(\a\wedge \b), \quad  \forall \a, \b \in [\zero, \bar \bu].
	\end{flalign}
	Let $A:= \spt{\a}, B := \spt{\b}$, where $\spt{\x}:=\{i|x_i\neq 0 \}$ is  the  support of the vector $\x$.
	First of all, because $R_s(\x)$ is non-decreasing,  and $\b\geqco \a\wedge \b$, $\a\geqco \a\wedge \b$, 
	
	\begin{flalign}\label{eq_1}
	\sum_{s\in A\backslash B} R_s(\b) + \sum_{s\in B\backslash A} R_s(\a) \geq \sum_{s\in A\backslash B} R_s(\a\wedge \b)  + \sum_{s\in B\backslash A} R_s(\a\wedge \b). 
	\end{flalign}
	By submodularity of $R_s(\x)$, and  summing over $s\in \groundset \backslash(A\cup B)$,
	\begin{flalign}\label{eq_2}
	\sum_{s\in \groundset \backslash(A\cup B)}R_s(\a) + \sum_{s\in \groundset \backslash(A\cup B)}R_s(\b) \geq \sum_{s\in \groundset \backslash(A\cup B)}R_s(\a\vee \b) + \sum_{s\in \groundset \backslash(A\cup B)}R_s(\a\wedge \b).
	\end{flalign}
	Summing Equations  \ref{eq_1} and \ref{eq_2} one can get
	\begin{flalign}\notag 
	\sum_{s\in \groundset \backslash A}R_s(\a) + \sum_{s\in \groundset \backslash B}R_s(\b) \geq \sum_{s\in \groundset \backslash(A\cup B)}R_s(\a\vee \b) + \sum_{s\in \groundset \backslash(A\cap  B)}R_s(\a\wedge \b)
	\end{flalign}
	which is equivalent to  \cref{eq_f}.
	
	Then we prove that $h(\x):=\sum_{t: x_t \neq 0} \bar R_t(\x)$ is submodular. 
	Because $\bar R_t(\x)$ is non-increasing, and $\a\leqco \a\vee \b$, 
	$\b \leqco \a\vee \b$, 
	\begin{flalign}\label{37}
	\sum_{t\in A\backslash B} \bar R_t(\a) + \sum_{t\in B\backslash A} \bar R_t(\b) \geq \sum_{t\in A\backslash B} \bar R_t(\a\vee \b) + \sum_{t\in B\backslash A} \bar R_t(\a\vee \b).
	\end{flalign}
	By submodularity of $\bar R_t(\x)$, and summing over $t\in A\cap  B$,
	\begin{flalign}\label{38}
	\sum_{t\in A\cap  B} \bar R_t(\a) + \sum_{t\in A\cap  B} \bar R_t(\b) \geq \sum_{t\in A\cap  B} \bar R_t(\a\vee \b) + \sum_{t\in A\cap  B} \bar R_t(\a\wedge \b).
	\end{flalign}
	Summing Equations \ref{37}, \ref{38} we get,
	\begin{flalign}
	\sum_{t\in A} \bar R_t(\a) + \sum_{t\in  B} \bar R_t(\b) \geq \sum_{t\in A\cup  B} \bar R_t(\a\vee \b) + \sum_{t\in A\cap  B} \bar R_t(\a\wedge \b)
	\end{flalign}
	which is equivalent to $h(\a)+h(\b)\geq h(\a\vee \b)+h(\a\wedge \b)$, $\forall \a, \b \in [\zero, \bar \bu]$, thus proving the submodularity of 
	$h(\x)$.
	
	Finally, because $f(\x)$ is the sum of two submodular functions and one 
	modular function, so it is submodular.
\end{proof}

\subsection{Proof for the  Logistic Loss in \cref{subsec_moti_lattice}}
\label{appe_dr_lr}

\begin{proof}[Proof of \cref{claim_logistic}]
	To show that $l(\x)$ is  $\cone_{\bmalpha}$-\mesuper, we can
	check the second-order condition in \labelcref{eq_general_conedr}, that is, whether it holds that 
	$\alpha_p\alpha_q\nabla_{pq}^2 l(\x) \geq  0,\; \forall p, q \in [n]$. One can verify  that, 
	\begin{align} 
	& \fracpartial{l(\x)}{x_p} = \frac{1}{m}\sum\nolimits_{j=1}^{m} \frac{-y_j z_{p}^j}{\exp{(y_j \x^\trans \z^j)}+1},\\ 
	& \fracppartial{l(\x)}{x_p}{x_q} =  \frac{1}{m}\sum\nolimits_{j=1}^{m} \frac{\exp{(y_j \x^\trans \z^j)}}{[\exp{(y_j \x^\trans \z^j)}+1]^2}z^j_p z^j_q .
	\end{align}
	Since $\alpha_p = \text{sign}(z^j_p)$, so $\alpha_p\alpha_q\nabla_{pq}^2 l(\x) \geq  0,\; \forall p, q \in [n]$. Thus 
	$l(\x)$ in \cref{app_lr} is $\cone_{\bmalpha}$-\mesuper
	according to \labelcref{eq_general_conedr}.
\end{proof}

\def\dir{chapters/mono-sub-max}
\chapter{Maximizing Monotone Continuous DR-Submodular  Functions}
\label{chapter_max_monotone}

\begin{chapquote}{Master Oogway}
	Your mind is like this water, my friend. When it is agitated, it becomes difficult to see. But if you allow it to settle, the answer becomes clear.
\end{chapquote}

In this chapter, we study the problem of maximizing a monotone
continuous DR-submodular function subject to a down-closed convex
constraint, i.e.,
\begin{align}\label{problem_monotone_max}
\max_{\x \in \P\subseteq \X} f(\x),    
\end{align}
where $f: \X \rightarrow \R$ is DR-submodular and monotone
nondecreasing.  A function $f$ is monotone nondecreasing if
$\forall \x \leqco \y\in \X$, it holds $f(\x) \leq f(\y)$.

Maximizing a monotone DR-submodular function over a down-closed convex
constraint has many real-world applications, e.g., influence
maximization with general marketing strategies and sensor energy
management. One can refer to \cref{chapter_applications} for details
on applications.

\section{Hardness and Inapproximability Results}

Though with the monotonicity assumption, solving
problem \labelcref{problem_monotone_max} is still a very challenging
task.  Actually, we prove the following hardness result:

\begin{proposition}[Hardness and Inapproximability]\label{prop_np}
  The problem of maximizing a monotone nondecreasing continuous
  DR-submodular function subject to a general down-closed
  \emph{polytope} constraint is NP-hard.  For any $\epsilon >0$, it
  cannot be approximated in polynomial time within a ratio of
  $(1-1/e+\epsilon)$ (up to low-order terms), unless RP = NP.
\end{proposition}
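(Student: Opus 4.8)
The plan is to build an approximation-preserving reduction from the \textsc{Max-$k$-Cover} problem (equivalently, maximizing a monotone submodular set function subject to a cardinality constraint), whose $(1-1/e)$-inapproximability is classical: Feige's threshold theorem gives the ratio (unless $\mathrm{P}=\mathrm{NP}$), and \citet{nemhauser1978analysis} give \textsc{NP}-hardness. The bridge is the \emph{multilinear extension}: by \cref{sec_app_multilinear}, for any monotone submodular $F$ the function $\multi(\x)$ is monotone, continuous DR-submodular, and coordinate-wise linear, and it agrees with $F$ on the vertices of $[0,1]^n$, namely $\multi(\mathbf{1}_S)=F(S)$. Moreover the cardinality polytope $\P_k := \{\x\in[0,1]^n : \sum_i x_i \le k\}$ is a down-closed polytope, so $\max_{\x\in\P_k}\multi(\x)$ is a bona fide instance of \cref{problem_monotone_max}.

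First I would set up the forward direction. Suppose some polynomial-time algorithm returns $\hat\x\in\P_k$ with $\multi(\hat\x)\ge(1-1/e+\epsilon)\max_{\x\in\P_k}\multi(\x)$. Since every $k$-subset $S$ gives $\mathbf{1}_S\in\P_k$ with $\multi(\mathbf{1}_S)=F(S)$, the fractional optimum dominates the combinatorial one, so $\multi(\hat\x)\ge(1-1/e+\epsilon)\,\mathrm{OPT}$ with $\mathrm{OPT}:=\max_{|S|\le k}F(S)$. When $F$ is a coverage function, the closed form of $\multi$ in \cref{supp_setcover} makes this reduction run in polynomial time verbatim; for a general monotone submodular $F$ accessed by a value oracle, one instead estimates $\multi$ (and, where needed, its coordinates) by the sampling scheme of \cref{sec_app_multilinear} together with a Hoeffding bound, yielding a \emph{randomized} polynomial-time reduction that succeeds with high probability -- this is precisely why the conclusion is phrased modulo $\mathrm{RP}=\mathrm{NP}$ rather than $\mathrm{P}=\mathrm{NP}$.

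Next comes the rounding step, which closes the loop. By monotonicity we may assume $\sum_i\hat x_i=k$. Because $\multi$ is multilinear, along any direction $\bas_i-\bas_j$ (both coordinates fractional) the univariate restriction $t\mapsto\multi(\hat\x+t(\bas_i-\bas_j))$ is a quadratic whose leading coefficient equals $-\nabla^2_{ij}\multi\ge 0$ by submodularity (\cref{eq2}); hence it is convex and its maximum over the feasible interval -- the one keeping the point in $[0,1]^n$, the face $\sum_i x_i=k$ being preserved automatically -- is attained at an endpoint, at which one more coordinate is pinned to $0$ or $1$. Iterating this pipage-rounding move (\citet{calinescu2007maximizing}; originally Ageev and Sviridenko) at most $n$ times produces a vertex $\mathbf{1}_{S^\star}$ of $\P_k$ with $F(S^\star)=\multi(\mathbf{1}_{S^\star})\ge\multi(\hat\x)\ge(1-1/e+\epsilon)\,\mathrm{OPT}$, i.e.\ a polynomial-time $(1-1/e+\epsilon)$-approximation for \textsc{Max-$k$-Cover}. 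By Feige's theorem this forces $\mathrm{P}=\mathrm{NP}$ in the coverage case and $\mathrm{RP}=\mathrm{NP}$ in the value-oracle case; the same construction with an exact maximizer, combined with \textsc{NP}-hardness of \textsc{Max-$k$-Cover}, gives \textsc{NP}-hardness of the continuous problem.

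The part I expect to require the most care is making the rounding genuinely loss-free while staying inside the polytope -- one must check that a pipage move can always be chosen to both remain in $\P_k$ and fix a new integral coordinate -- and, in the general value-oracle setting, controlling the sampling error accumulated through the reduction and the rounding so that it is $o(\mathrm{OPT})$; this is where the ``up to low-order terms'' qualifier in the statement does its work, and it is the only genuinely delicate estimate.
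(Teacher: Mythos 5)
Your proposal is correct and follows essentially the same route as the paper's proof: a reduction from monotone submodular maximization under a cardinality constraint (with \textsc{Max-$k$-Cover} as the hard special case) via the multilinear extension over the cardinality polytope, pipage rounding to recover an integral solution without loss, and Feige's threshold theorem, with the $\mathrm{RP}=\mathrm{NP}$ qualifier arising from the sampling-based evaluation of the multilinear extension. Your write-up is in fact more explicit than the paper's sketch about why each pipage move is loss-free, but the underlying argument is the same.
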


\begin{remark}
  Due to the NP-hardness of converging to the global optimum for
  problem \labelcref{problem_monotone_max}, in the following by
  ``convergence'' we mean converging to a solution point which has a
  constant factor approximation guarantee with respect to the global
  optimum.
\end{remark}

\section[Algorithms Based on the Local-Global Relation]{Algorithms
  Based on the Local-Global Relation: Non-Convex FW and PGA}

The first class of algorithms directly utilize the local-global
relation in \cref{coro_1half}: we know that any stationary point is a
1/2 approximate solution, thus plugging in a solver that can reach a
stationary point would result in an algorithm with a 1/2 approximation
guarantee.

\citet{hassani2017gradient} showed that the {Projected Gradient
  Ascent} algorithm (\pga) with constant \stepsize ($1/L$) can
converge to a stationary point, so it has a 1/2 approximation
guarantee.
We can also show that the \nonconvexfw of
\citet{lacoste2016convergence} has a 1/2 approximation guarantee
according to the local-global relation:

\begin{corollary}\label{coro_nonconvex_fw}
  The {non-convex Frank-Wolfe} algorithm (\nonconvexfw) of
  \citet{lacoste2016convergence} has a 1/2 approximation guarantee,
  and $1/\sqrt{k}$ rate of convergence for solving Problem \labelcref{problem_monotone_max}.
\end{corollary}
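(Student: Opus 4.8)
The plan is to combine two ingredients already available in the excerpt: the local-global relation of \cref{coro_1half} (instantiated with $\mu = 0$, since here we impose no strong DR-submodularity), and the convergence guarantee of \citet{lacoste2016convergence} for the non-convex Frank-Wolfe method, whose non-stationarity measure is exactly the Frank-Wolfe gap $g_{\P}(\cdot)$ of \labelcref{non_stationary}.

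First I would observe that \cref{alg_classical_fw}, applied to the maximization problem \labelcref{problem_monotone_max} (with the LMO replaced by the linear \emph{maximization} oracle $\s^\pare{t} := \arg\max_{\s\in\P}\dtp{\s}{\nabla f(\x^{t})}$ and an ascent update), is precisely the \nonconvexfw of \citet{lacoste2016convergence}; its per-iteration Frank-Wolfe gap equals $g_{\P}(\x^{t}) = \dtp{\s^\pare{t} - \x^{t}}{\nabla f(\x^{t})} \geq 0$. Since $f$ has $L$-Lipschitz gradients, its curvature over $\P$ is finite, $C_f(\P) \leq L D^2 < \infty$, so the hypotheses of \citet{lacoste2016convergence} hold. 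Their analysis (with, e.g., the step size $\gamma_t = 1/\sqrt{k+1}$ or line search) then gives, for the best iterate $\hat\x := \arg\min_{0\leq t\leq k} g_{\P}(\x^{t})$,
\begin{align}\label{eq_nonconvex_fw_gap}
  g_{\P}(\hat\x) \;\leq\; \frac{\max\{2 h_0,\; C_f(\P)\}}{\sqrt{k+1}},
\end{align}
where $h_0 := f(\x^*) - f(\x^\pare{0})$ is the initial suboptimality; in particular $g_{\P}(\hat\x) = O(1/\sqrt{k})$.

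It then remains only to feed \labelcref{eq_nonconvex_fw_gap} into \cref{coro_1half}. Since $\hat\x\in\P$ and $f$ is monotone nondecreasing and DR-submodular, setting $\mu=0$ in \cref{coro_1half} yields
\begin{align}
  f(\hat\x) \;\geq\; \tfrac{1}{2}\bigl[\,f(\x^*) - g_{\P}(\hat\x)\,\bigr] \;\geq\; \tfrac{1}{2} f(\x^*) \;-\; \frac{\max\{2 h_0,\; C_f(\P)\}}{2\sqrt{k+1}}.
\end{align}
Thus after $k$ iterations the output attains value at least $\tfrac12 f(\x^*)$ up to an additive $O(1/\sqrt{k})$ term, which is exactly the asserted $1/2$ approximation guarantee at a $1/\sqrt{k}$ rate.

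The only delicate point — and the step I would double-check most carefully — is the bookkeeping when translating \citet{lacoste2016convergence} from its native \emph{minimization} formulation to the present \emph{maximization} setting: one applies their theorem to $-f$ (equivalently, runs ascent steps with an LMO-max), and must verify that their Frank-Wolfe gap then coincides, sign and all, with our $g_{\P}(\x^{t})\geq 0$ from \labelcref{non_stationary}, so that the ``minimal-gap iterate'' in their bound is the one we plug into the local-global relation. Everything else — finiteness of the curvature from $L$-smoothness, and the monotone DR-submodular inequality of \cref{coro_1half} — is already in hand.
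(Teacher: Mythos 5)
Your proposal is correct and follows exactly the paper's own route: invoke Theorem 1 of \citet{lacoste2016convergence} to get the $O(1/\sqrt{k})$ decay of the Frank--Wolfe gap for the minimum-gap iterate, then apply the local-global relation of \cref{coro_1half} (with $\mu=0$) to conclude that this near-stationary point is a $1/2$-approximation. The extra bookkeeping you flag about the minimization-to-maximization translation is handled in the paper by the modified output of \cref{nonconvex_fw}, which already returns the iterate with minimal non-stationarity $g_{\P}$.
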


\subsection{The \nonconvexfw Algorithm}
\label{sec_nonconvex_fw}

To write the thesis in a self-contained style, we summarized the \nonconvexfw
algorithm in \cref{nonconvex_fw}.

\begin{algorithm}[htbp]
  \caption{\nonconvexfw
    $(f, \P, K, \epsilon, \x^\pare{0})$\citep{lacoste2016convergence}
    for maximizing a smooth objective}\label{nonconvex_fw}
  \KwIn{$\max_{\x \in \P} f(\x)$, $f$: a smooth function, $\P$:
    {convex} set, $K$: number of iterations, $\epsilon$: stopping
    tolerance}
  \For{$k = 0, ... , K$}{ {find
      $\v^\pare{k} \text{ s.t. } \dtp{\v^\pare{k}}{\nabla
        f(\x^\pare{k})} \geq \max_{\v\in\P} \dtp{\v}{ \nabla
        f(\x^\pare{k})}$\tcp*{\emph{LMO}}}
    {$\d^\pare{k} \leftarrow \v^\pare{k} - \x^\pare{k}$,
      $g_k := \dtp{\d_k}{\nabla f(\x^\pare{k})}$ \tcp*{$g_k$:
        non-stationarity measure}} {\bfseries{if $g_k \leq \epsilon$
        then return $\x^\pare{k}$}\;} {Option I:
      $\gamma_k \in \argmin_{\gamma\in [0, 1]}f(\x^\pare{k} + \gamma
      \d^\pare{k})$,
			
      Option II: $\gamma_k \leftarrow \min \{\frac{g_k}{C}, 1 \}$ for
      $C\geq C_f(\P)$ \;}
    {$\x^\pare{k+1}\leftarrow \x^\pare{k} + \gamma_k \d^\pare{k}$ \;}
  } \KwOut{$\x^\pare{k'}$ and $g_{k'} = \min_{0\leq k\leq K} g_k$
    \tcp*{modified output solution compared to that of
      \citet{lacoste2016convergence}}}
\end{algorithm}

\cref{nonconvex_fw} is modified from \citet{lacoste2016convergence},
the only difference lies in the output: we output the solution
$\x^\pare{k'}$ with the minimum non-stationarity, which is needed to
apply the local-global relation. While \citet{lacoste2016convergence}
outputs the solution in the last iteration.

Since $C_f(\P)$ is generally hard to evaluate, we tested with the
classical oblivious \stepsize rule ($\frac{2}{k+2}$) and the Lipschitz
\stepsize rule
($\gamma_k = \min\{1, \frac{g_\pare{k}}{L \|\d^\pare{k}\|} \}$, where
$g_\pare{k}$ is the so-called Frank-Wolfe gap) in the experiments.

\subsection{The \pga Algorithm}

\begin{algorithm}[htbp]
  \caption{\pga for maximizing a monotone DR-submodular objective
    \citep{hassani2017gradient}}\label{alg_pga}
	
  \KwIn{$\max_{\x \in \P} f(\x)$, $f$: a smooth DR-Submodular
    function, $\P$: {convex} set, $K$: number of iterations,
    $\x^\pare{0}\in \P$}
  \For{$k = 0, ... , K-1$}{ {Set \stepsize $\gamma_k$ \tcp*{i):
        ``Lipschitz'' rule $\frac{1}{L}$; ii): adaptive rule:
        $C/\sqrt{k}$}}
    {$\y^\pare{k+1} \leftarrow \x^\pare{k} + \gamma_k \nabla
      f(\x^\pare{k})$\;}
    {$\x^\pare{k+1}\leftarrow \argmin_{\x\in \P} \|\x -
      \y^\pare{k+1}\| $
      \tcp*{Projection}} } \KwOut{$\x^\pare{k'}$ with
    ${k'} = \argmax_{0\leq k\leq K} f(\x^\pare{k})$ \tcp*{modified
      output compared to that of \citet{hassani2017gradient}}}
\end{algorithm}

\cref{alg_pga} is taken from \citet{hassani2017gradient}.
It accepts a smooth DR-submodular function $f$, and a convex
constraint $\P$. Then it runs for $K$ iterations. In each iteration,
we firstly choose a \stepsize $\gamma_k$, then we update the current
solution using the current gradient to get a point $\y^\pare{k+1}$.
Lastly, it projects $\y^\pare{k+1}$ onto the convex set $\P$, which
amounts to solving a constrained quadratic program. After $K$
iterations, we output the solution with the maximal function value,
which is slightly different from that of \citet{hassani2017gradient}.

It has a 1/2 approximation guarantee and sublinear rate of
convergence:

\begin{theorem}[\cite{hassani2017gradient}]
  For \cref{alg_pga}, if one choose $\gamma_k = 1/L$, then after $K$
  iterations,
  \begin{align}
    f(\x^\pare{K}) \geq \frac{f(\optcont)}{2} - \frac{D^2 L}{2K}.
  \end{align}
\end{theorem}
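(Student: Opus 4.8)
The plan is to run the classical ``gradient-mapping'' potential argument for projected gradient ascent, but to substitute the diminishing-returns estimate of \cref{lemma_3_1} for the convexity inequality that is unavailable here. Throughout write $\x^k:=\x^\pare{k}$ and fix $\gamma_k=1/L$, so the update reads $\x^{k+1}=\argmin_{\x\in\P}\|\x-(\x^k+\tfrac1L\nabla f(\x^k))\|$; the first-order optimality condition of this Euclidean projection is the variational inequality
\[
 \dtp{\x^k+\tfrac1L\nabla f(\x^k)-\x^{k+1}}{\x-\x^{k+1}}\le 0\qquad\text{for all }\x\in\P .
\]
First I would record the ascent lemma $f(\x^{k+1})\ge f(\x^k)+\tfrac{L}{2}\|\x^{k+1}-\x^k\|^2$: putting $\x=\x^k$ in the variational inequality gives $\dtp{\nabla f(\x^k)}{\x^{k+1}-\x^k}\ge L\|\x^{k+1}-\x^k\|^2$, and combining this with the smoothness lower bound \labelcref{eq_quad_lower_bound} yields the claim; in particular $\{f(\x^k)\}_k$ is non-decreasing.

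The core step is the one-iteration inequality
\[
 f(\optcont)\le f(\x^k)+f(\x^{k+1})+\tfrac{L}{2}\bigl(\|\optcont-\x^k\|^2-\|\optcont-\x^{k+1}\|^2\bigr).
\]
To derive it I would (i) apply \cref{lemma_3_1} with $\mu=0$, $\x=\x^k$, $\y=\optcont$ to get $\dtp{\optcont-\x^k}{\nabla f(\x^k)}\ge f(\x^k\vee\optcont)+f(\x^k\wedge\optcont)-2f(\x^k)$, and then use monotonicity ($f(\x^k\vee\optcont)\ge f(\optcont)$, since $\x^k\vee\optcont\geqco\optcont$) and non-negativity ($f(\x^k\wedge\optcont)\ge0$) to get $\dtp{\optcont-\x^k}{\nabla f(\x^k)}\ge f(\optcont)-2f(\x^k)$; (ii) split $\dtp{\nabla f(\x^k)}{\optcont-\x^k}=\dtp{\nabla f(\x^k)}{\optcont-\x^{k+1}}+\dtp{\nabla f(\x^k)}{\x^{k+1}-\x^k}$; (iii) bound the first summand by the variational inequality with $\x=\optcont$ together with the polarization identity $\dtp{\x^{k+1}-\x^k}{\optcont-\x^{k+1}}=\tfrac12(\|\optcont-\x^k\|^2-\|\optcont-\x^{k+1}\|^2-\|\x^{k+1}-\x^k\|^2)$; and (iv) bound the second summand by the smoothness upper bound $\dtp{\nabla f(\x^k)}{\x^{k+1}-\x^k}\le f(\x^{k+1})-f(\x^k)+\tfrac{L}{2}\|\x^{k+1}-\x^k\|^2$ from \labelcref{eq_quad_lower_bound}. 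The two occurrences of $\|\x^{k+1}-\x^k\|^2$ cancel, leaving exactly the displayed inequality.

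Finally I would sum the one-iteration inequality over $k=0,\dots,K-1$: the quadratic terms telescope to $\tfrac{L}{2}(\|\optcont-\x^0\|^2-\|\optcont-\x^K\|^2)\le \tfrac{L}{2}D^2$ because $\x^0,\optcont\in\P$ and $D$ is the diameter of $\P$, while monotonicity of $\{f(\x^k)\}$ gives $\sum_{k=0}^{K-1}\bigl(f(\x^k)+f(\x^{k+1})\bigr)\le 2Kf(\x^K)$. Hence $Kf(\optcont)\le 2Kf(\x^K)+\tfrac{L}{2}D^2$, i.e. $f(\x^K)\ge\tfrac12 f(\optcont)-\tfrac{D^2L}{4K}\ge\tfrac12 f(\optcont)-\tfrac{D^2L}{2K}$, which is the claimed bound.

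The main obstacle is the core one-iteration inequality. With convexity replaced by mere smoothness, the projection argument on its own only delivers a Frank--Wolfe-gap (stationarity) statement; the decisive submodular input is \cref{lemma_3_1} (\cref{coro_1half} packages exactly this into a one-point $1/2$-approximation bound), which, via monotonicity and non-negativity, turns the inner product $\dtp{\optcont-\x^k}{\nabla f(\x^k)}$ into the function-value gap $f(\optcont)-2f(\x^k)$ — the extra factor $2$ here is precisely what forces the $1/2$ in the final guarantee. The remaining work is the bookkeeping that makes the $\|\x^{k+1}-\x^k\|^2$ terms cancel so that the quadratic remainder telescopes and one may exploit that $\{f(\x^k)\}$ is non-decreasing; once that is set up, the conclusion is immediate.
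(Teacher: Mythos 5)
Your proof is correct, and every step checks out: the projection variational inequality, the ascent lemma, the one-iteration inequality obtained by combining \cref{lemma_3_1} (with $\mu=0$) with the polarization identity, and the final telescoping all go through; you even end up with the slightly stronger constant $\tfrac{D^2L}{4K}$. Note, however, that the thesis itself gives no proof of this theorem — it is imported verbatim from \citet{hassani2017gradient} — so the only comparison available is with that reference, and your argument is essentially the same one used there: the decisive submodular ingredient is exactly the inequality $\dtp{\y-\x}{\nabla f(\x)}\ge f(\x\vee\y)+f(\x\wedge\y)-2f(\x)$ (the paper's \cref{lemma_3_1}, packaged as the local-global relation in \cref{coro_1half}), grafted onto the classical projected-gradient potential/telescoping scheme in place of convexity.
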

It is worth noting that, in general the smoothness parameter $L$ is
difficult to estimate, so the ``Lipschitz'' \stepsize rule
$\gamma_k = 1/L$ poses a challenge for implementation. In experiments,
\citet{hassani2017gradient} also suggests the adaptive \stepsize rule
$\gamma_k = C/\sqrt{k}$, where $C$ is a constant.

\section{\texttt{Submodular FW}\xspace: Follow Concave Directions}

For DR-submodular maximization, one key intuition is that
{DR}-submodular functions are non-convex/non-concave in general,
however, it is concave along any non-negative directions, as shown by
\cref{prop_concave}.  Thus, if we design the algorithm such that it
follows a non-negative direction in each update step, we ensure that
the algorithm achieves progress in a concave direction, so the
function value is guaranteed to grow by a certain increment.  Based on
this intuition, we present the \submodularfw algorithm, which is a
generalization of the continuous greedy algorithm of
\citet{DBLP:conf/stoc/Vondrak08}, and the classical Frank-Wolfe
algorithm \citep{frank1956algorithm,DBLP:conf/icml/Jaggi13}.

\begin{algorithm}[ t]
  \caption{\submodularfw for monotone {DR}-submodular
    maximization \citep{bian2017guaranteed}}\label{alg_sfmax_GradientAscend}
	
  \KwIn{$\max_{\x\in \P} f(\x)$, $\P$ is a down-closed {convex} set in
    the positive orthant with lower bound $\zero$; prespecified
    step size $\gamma \in (0, 1]$; Error tolerances $\alpha$ and
    $\delta$. \# of iterations $K$.}
	
  {$\x^0 \leftarrow \zero$, $t\leftarrow 0$, $k\leftarrow 0$\tcp*{$k:$
      iteration index, $t$: cumulative \stepsize}} \While{$t < 1$}{
		
    {find \stepsize $\gamma_k\in (0, 1]$, e.g.,
      $\gamma_k \leftarrow \gamma $;
      set $\gamma_k \leftarrow \min\{\gamma_k, 1-t\}$\;}
		
    {find
      $\v^k \text{ s.t. }  \dtp{\v^k}{\nabla f(\x^k)} \geq
      \alpha\max_{\v\in\P} \dtp{\v}{ \nabla f(\x^k)} -
      \frac{1}{2}\delta \gamma_k L D^2$
      \tcp*{$\alpha\in(0, 1]$ is the mulplicative error level,
        $\delta\in [0, \bar \delta]$ is the additive error
        level}\label{fw_sub}}
		
    {$\x^{k+1}\leftarrow \x^k + \gamma_k \v^k$,
      $t \leftarrow t + \gamma_k$,
      $k\leftarrow k+1$\;\label{step_update}} }
	
  \KwOut{$\x^K$\;
  }
\end{algorithm}

\cref{alg_sfmax_GradientAscend} summarizes the details. Since it is a
variant of the convex Frank-Wolfe algorithm for DR-submodular
maximization, we call it \submodularfw.
In iteration $k$, the algorithm uses the linearization of $f(\cdot)$
as a surrogate, and moves in the direction of the maximizer of this
surrogate function, i.e.,
$\v^k=\arg\max_{\v \in \P} \dtp{\v}{ \nabla f(\x^k)}$.
Intuitively, it searches for the direction in which one can maximize the
improvement in the function value and still remain feasible.  Finding
such a direction requires maximizing a linear objective at each
iteration.
Meanwhile, it eliminates the need for projecting back to the feasible
set in each iteration, which is an essential step for methods such as
projected gradient ascent (\pga).
The \submodularfw algorithm updates the solution in each iteration by using \stepsize
$\gamma_k$, which can simply be set to a prespecified constant
$\gamma$.

Note that \submodularfw can tolerate both multiplicative error
$\alpha$ and additive error $\delta$ when solving the LMO subproblem
(Step \ref{fw_sub} of \cref{alg_sfmax_GradientAscend}). Setting
$\alpha = 1$ and $\delta = 0$ would recover the error-free case.

\begin{remark}
  The main difference of  \submodularfw in
  \cref{alg_sfmax_GradientAscend} and  the classical 
  Frank-Wolfe algorithm in \cref{alg_classical_fw} lies in the  update direction being used:
  For \cref{alg_sfmax_GradientAscend}, the update direction (in Step
  \ref{step_update}) is $\v^k$, while for classical Frank-Wolfe it is
  $\v^k - \x^k$, i.e.,
  $\x^{k+1}\leftarrow \x^k + \gamma_k(\v^k - \x^k)$.
\end{remark}

To prove the approximation guarantee, we first derive the following
lemma.
\begin{lemma}\label{lemma_31}
  The output solution $\x^K$ lies in $\P$. Assuming $\x^*$ to be the
  optimal solution, one has,
  \begin{flalign}\label{eq26}
    \dtp{\v^k}{\nabla f(\x^k)}\geq \alpha [f(\x^*) -f(\x^k)] -
    \frac{1}{2}\delta \gamma_k L D^2 , \ \ \forall k = 0,..., K-1.
  \end{flalign}
\end{lemma}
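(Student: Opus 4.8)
The plan is to establish the two claims of the \namecref{lemma_31} in turn, using \cref{prop_concave} (concavity of a continuous DR-submodular function along non-negative directions) together with the standing assumptions that $\P$ is down-closed, convex, and contained in the positive orthant with lower bound $\zero$.

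\textbf{Feasibility of the iterates.} Since $\x^0=\zero\in\P$ and every LMO output $\v^k$ in Step~\ref{fw_sub} lies in $\P$, while the truncation $\gamma_k\leftarrow\min\{\gamma_k,1-t\}$ keeps the cumulative \stepsize $t_k:=\sum_{j<k}\gamma_j$ at most $1$, unrolling the update $\x^{k+1}=\x^k+\gamma_k\v^k$ yields $\x^k=(1-t_k)\,\zero+\sum_{j<k}\gamma_j\v^j$, which is a convex combination of points of $\P$; hence $\x^k\in\P$ for all $k$ --- in particular $\x^K\in\P$ --- and moreover $\x^k\geqco\zero$ and $\x^k\in\X$.

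\textbf{The gradient inequality.} Fix $k$ and set $\v^*:=(\x^*\vee\x^k)-\x^k=(\x^*-\x^k)\vee\zero$. Since $\x^k\geqco\zero$ and $\x^*\geqco\zero$ (because $\x^*\in\P$), one checks coordinate-wise that $\zero\leqco\v^*\leqco\x^*$, so down-closedness of $\P$ yields $\v^*\in\P$. As $\x^k+\v^*=\x^*\vee\x^k\geqco\x^*$, monotonicity of $f$ gives $f(\x^k+\v^*)\geq f(\x^*)$. Since $\v^*\geqco\zero$ and the segment $[\x^k,\x^*\vee\x^k]$ lies in the box $\X$, \cref{prop_concave} implies that $\xi\mapsto f(\x^k+\xi\v^*)$ is concave on $[0,1]$, and its first-order inequality at $\xi=0$ gives
\begin{align*}
  \dtp{\v^*}{\nabla f(\x^k)}\ \geq\ f(\x^k+\v^*)-f(\x^k)\ \geq\ f(\x^*)-f(\x^k).
\end{align*}
Because $\v^*\in\P$, it follows that $\max_{\v\in\P}\dtp{\v}{\nabla f(\x^k)}\geq f(\x^*)-f(\x^k)$. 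Multiplying this by $\alpha\in(0,1]$ and substituting into the approximate-LMO guarantee of Step~\ref{fw_sub},
\begin{align*}
  \dtp{\v^k}{\nabla f(\x^k)}\ \geq\ \alpha\max_{\v\in\P}\dtp{\v}{\nabla f(\x^k)}-\frac{1}{2}\delta\gamma_k L D^2\ \geq\ \alpha\bigl[f(\x^*)-f(\x^k)\bigr]-\frac{1}{2}\delta\gamma_k L D^2,
\end{align*}
which is precisely \labelcref{eq26}.

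\textbf{Where the real work is.} There is no deep step here: the argument is a short chain of monotonicity, \cref{prop_concave}, and the approximate-LMO inequality. The only points requiring care are structural bookkeeping --- verifying $\v^*\in\P$, which is exactly where down-closedness and the positive-orthant normalization of $\P$ enter, and observing that the concavity statement of \cref{prop_concave} may be applied on the whole segment $[\x^k,\x^*\vee\x^k]\subseteq\X$, which holds because $\X$ is a box containing both $\x^k$ and $\x^*$.
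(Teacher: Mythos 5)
Your proposal is correct and follows essentially the same route as the paper's own proof: both arguments define $\v^*:=(\x^*\vee\x^k)-\x^k$, use down-closedness to place $\v^*$ in $\P$, invoke monotonicity and the concavity of $f$ along non-negative directions (\cref{prop_concave}) to bound $\dtp{\v^*}{\nabla f(\x^k)}$ from below by $f(\x^*)-f(\x^k)$, and finish with the approximate-LMO guarantee of Step~\ref{fw_sub}. The only differences are cosmetic — you spell out the convex-combination argument for feasibility and route the final chain through $\max_{\v\in\P}\dtp{\v}{\nabla f(\x^k)}$ rather than through $\dtp{\v^*}{\nabla f(\x^k)}$ directly, which is equivalent.
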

\begin{theorem}[Approximation guarantee]\label{thm_fw}
  For error levels $\alpha \in (0, 1], \delta\in [0, \bar \delta]$,
  with $K$ iterations, \cref{alg_sfmax_GradientAscend} outputs
  $\x^K \in \P$ such that,
  \begin{equation}\label{eq8}
    f(\x^K)   \geq  (1-e^{-\alpha})f(\x^*)
    -\frac{LD^2 (1+\delta)}{2} \sum_{k=0}^{K-1}\gamma_k^2 + e^{-\alpha}f(\zero).	
  \end{equation}
\end{theorem}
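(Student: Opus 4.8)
The plan is to combine the per-iteration progress from \cref{lemma_31} with the quadratic bound implied by $L$-Lipschitz gradients, \labelcref{eq_quad_lower_bound}, and then unroll the resulting recursion, mimicking the classical analysis of the continuous greedy algorithm. Note that $\x^K\in\P$ is already part of \cref{lemma_31}, so only the inequality \labelcref{eq8} remains to be shown.

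First I would bound the one-step increase. Since $\x^{k+1}=\x^k+\gamma_k\v^k$, applying \labelcref{eq_quad_lower_bound} with increment $\gamma_k\v^k$ gives
\begin{align}
f(\x^{k+1})-f(\x^k)\ \geq\ \gamma_k\dtp{\v^k}{\nabla f(\x^k)}-\frac{L}{2}\gamma_k^2\|\v^k\|^2 .
\end{align}
Because $\zero\in\P$ (we assume WLOG that $\P$ lies in the positive orthant with lower bound $\zero$) and $\v^k\in\P$ is the LMO output, we have $\|\v^k\|=\|\v^k-\zero\|\leq D$. Plugging \cref{lemma_31} into the inner-product term yields
\begin{align}
f(\x^{k+1})-f(\x^k)\ \geq\ \alpha\gamma_k\bigl[f(\x^*)-f(\x^k)\bigr]-\frac{LD^2(1+\delta)}{2}\gamma_k^2 ,
\end{align}
where the additive error $\tfrac12\delta\gamma_k LD^2$ from \cref{lemma_31} and the smoothness term $\tfrac{L}{2}\gamma_k^2 D^2$ combine into the single coefficient $\tfrac{LD^2(1+\delta)}{2}$.

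Writing $h_k:=f(\x^*)-f(\x^k)$, this is the recursion $h_{k+1}\leq(1-\alpha\gamma_k)h_k+\frac{LD^2(1+\delta)}{2}\gamma_k^2$. Since $\alpha\in(0,1]$ and $\gamma_k\in(0,1]$, each factor $1-\alpha\gamma_k$ is nonnegative, so unrolling from $k=0$ to $K-1$ and bounding every tail product $\prod_{j=k+1}^{K-1}(1-\alpha\gamma_j)$ by $1$ gives
\begin{align}
h_K\ \leq\ h_0\prod_{k=0}^{K-1}(1-\alpha\gamma_k)+\frac{LD^2(1+\delta)}{2}\sum_{k=0}^{K-1}\gamma_k^2 .
\end{align}
The \texttt{while} loop together with the truncation $\gamma_k\leftarrow\min\{\gamma_k,1-t\}$ forces $\sum_{k=0}^{K-1}\gamma_k=1$; combined with $1-x\leq e^{-x}$ this yields $\prod_{k=0}^{K-1}(1-\alpha\gamma_k)\leq e^{-\alpha\sum_k\gamma_k}=e^{-\alpha}$. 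Using $\x^0=\zero$ we obtain $f(\x^*)-f(\x^K)\leq e^{-\alpha}\bigl(f(\x^*)-f(\zero)\bigr)+\frac{LD^2(1+\delta)}{2}\sum_{k=0}^{K-1}\gamma_k^2$, and rearranging is precisely \labelcref{eq8}.

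The whole argument is essentially bookkeeping. The two points that need care are (i) the bound $\|\v^k\|\leq D$, which relies on $\zero\in\P$, and (ii) the identity $\sum_{k}\gamma_k=1$, which is exactly what converts the telescoped product into the clean factor $e^{-\alpha}$ that appears in the statement; propagating the additive error term $\frac{LD^2(1+\delta)}{2}\gamma_k^2$ correctly through the recursion is the only mildly delicate step, and I do not anticipate a genuine obstacle beyond it.
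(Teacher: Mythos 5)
Your proposal is correct and follows essentially the same route as the paper's proof: the smoothness bound with $\|\v^k\|\leq D$, substitution of \cref{lemma_31}, the recursion in $h_k=f(\x^*)-f(\x^k)$, and the final step via $\sum_k\gamma_k=1$ and $1-y\leq e^{-y}$ all match. The only cosmetic difference is that you make explicit the bounding of the tail products by $1$ when propagating the error terms, which the paper does implicitly.
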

\cref{thm_fw} gives the approximation guarantee for 
any step size $\gamma_k$.  By observing that
$\sum_{k=0}^{K-1}\gamma_k =1$ and
$\sum_{k=0}^{K-1}\gamma_k^2 \geq K^{-1}$ (see the proof in
\cref{app_proof_c9}), with constant step size, we obtain the following
``tightest'' approximation bound,

\begin{corollary}\label{cor_9}
  For a fixed number of iterations $K$, and constant step size
  $\gamma_k =\gamma = K^{-1}$, \cref{alg_sfmax_GradientAscend}
  provides the following approximation guarantee:
  \begin{equation}
    f(\x^K) \geq (1-e^{-\alpha})f(\x^*) -\frac{LD^2 (1+\delta)}{2K}+
    e^{-\alpha}f(\zero).
  \end{equation}
\end{corollary}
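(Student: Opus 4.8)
The plan is to obtain \cref{cor_9} as an immediate specialization of \cref{thm_fw} to the constant step size $\gamma_k = \gamma = K^{-1}$. First I would verify that in this case \cref{alg_sfmax_GradientAscend} performs exactly $K$ iterations: the cumulative step size $t$ is incremented by $\gamma_k = K^{-1}$ in each round (Step \ref{step_update}), and since $K^{-1}\le 1$ the truncation $\gamma_k \leftarrow \min\{\gamma_k, 1-t\}$ is never active until the last round, so after $K$ rounds $t = 1$ and the \texttt{while} loop halts. In particular $\sum_{k=0}^{K-1}\gamma_k = 1$, so the hypotheses of \cref{thm_fw} are met and its guarantee \labelcref{eq8} holds verbatim with this choice of step sizes.

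Next I would evaluate the only sequence-dependent term in \labelcref{eq8}, namely $\sum_{k=0}^{K-1}\gamma_k^2 = \sum_{k=0}^{K-1}K^{-2} = K\cdot K^{-2} = K^{-1}$, and substitute it into \labelcref{eq8} to obtain
\begin{equation}
  f(\x^K) \geq (1-e^{-\alpha})f(\x^*) - \frac{LD^2(1+\delta)}{2K} + e^{-\alpha}f(\zero),
\end{equation}
which is exactly the claimed bound.

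Finally, to justify the phrasing ``tightest'', I would note that among all admissible step-size schedules with $\gamma_k\in(0,1]$ and $\sum_{k=0}^{K-1}\gamma_k = 1$ (the latter being exactly the condition under which \cref{alg_sfmax_GradientAscend} terminates after $K$ rounds with $t=1$), the constant schedule minimizes $\sum_{k=0}^{K-1}\gamma_k^2$: by Cauchy--Schwarz (equivalently the power-mean inequality), $\sum_{k=0}^{K-1}\gamma_k^2 \ge \tfrac{1}{K}\bigl(\sum_{k=0}^{K-1}\gamma_k\bigr)^2 = K^{-1}$, with equality iff all $\gamma_k$ are equal, i.e. $\gamma_k = K^{-1}$. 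I do not expect any genuine obstacle here; all the substance is already in \cref{thm_fw}, and the only points requiring care are the bookkeeping on the iteration count and checking that $\gamma_k = K^{-1}\in(0,1]$ (which holds for $K\ge 1$).
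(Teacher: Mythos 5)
Your proof is correct and follows essentially the same route as the paper: specialize \cref{thm_fw} to $\gamma_k = K^{-1}$, compute $\sum_{k=0}^{K-1}\gamma_k^2 = K^{-1}$, and substitute into \labelcref{eq8}. The only cosmetic difference is that you justify optimality of the constant schedule via Cauchy--Schwarz, whereas the paper uses a Lagrange-multiplier argument for the same constrained minimization of $\sum_{k=0}^{K-1}\gamma_k^2$; your bookkeeping on the iteration count is a welcome (if routine) addition the paper leaves implicit.
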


Corollary \ref{cor_9} implies that with a constant step size $\gamma$,
1) when $\gamma \rightarrow 0$ ($K\rightarrow \infty$),
\cref{alg_sfmax_GradientAscend} will output the solution with the
worst-case guarantee $(1-1/e)f(\x^*)$ in the error-free case if
$f(\zero) = 0$; and 2) The \submodularfw has a sub-linear convergence
rate for monotone {DR}-submodular maximization over a down-closed
convex constraint.

\paragraph{Remarks on computational complexity.}  It can be seen that
when using a constant step size, \cref{alg_sfmax_GradientAscend} needs
$O(\frac{1}{\epsilon})$ iterations to get $\epsilon$-close to the
best-possible function value $(1-e^{-1})f(\x^*)$ in the error-free
case.  When $\P$ is a polytope in the positive orthant, one iteration
of \cref{alg_sfmax_GradientAscend} costs approximately the same as
solving a positive LP, for which a nearly-linear time solver exists
\citep{allen2015nearly}.

\section{Experiments}

\setkeys{Gin}{width=0.7\textwidth}
\begin{figure}[htbp]
  \center \subfloat[\submodularfw utility,
  $K=50$ \label{fig_nqp_iter}]{%
    \hspace{-.3cm}
    \includegraphics[]{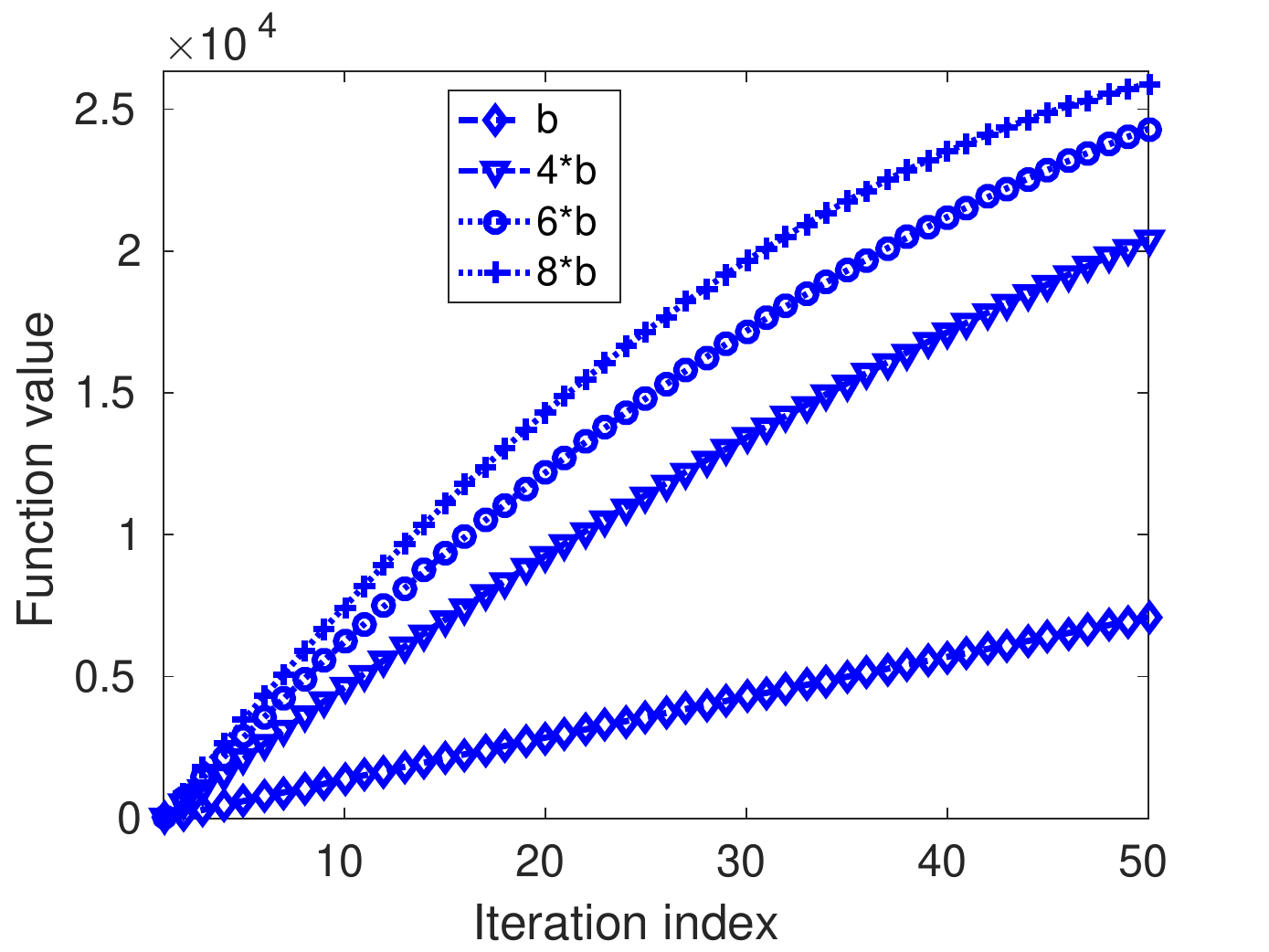}
  }

  \subfloat[Returned function value for monotone submodular
  QP instances \label{fig_nqp}]{%
    \hspace{-.3cm}
    \includegraphics[]{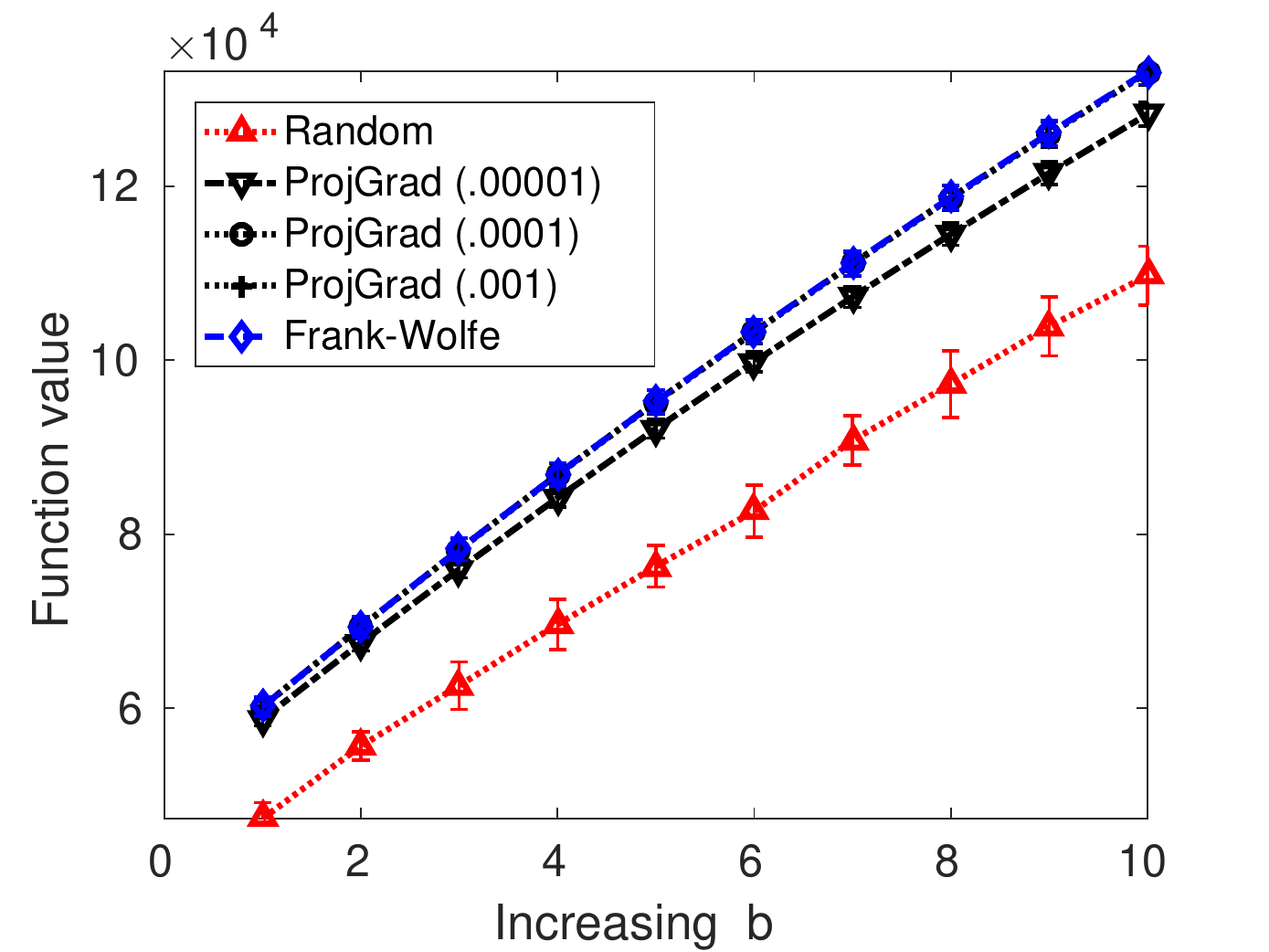}
  }
  \caption{Monotone SQPs (both \submodularfw and \pga
    (\algname{ProjGrad}) were ran for 50 iterations). \algname{Random}
    algorithm: return a randomly sampled point in the constraint. a)
    \submodularfw function value for four instances with different
    $\b$; b) QP function value returned w.r.t. different $\b$. }
  \label{fig_mono_sqp}
\end{figure}

\subsection{Monotone DR-Submodular QP}  

We have randomly generated monotone DR-submodular QP functions of the
form $f(\x) = \frac{1}{2} \x^{\trans} \bmH \x + \bh^{\trans} \x $,
where $\bmH\in \R^{n \times n}$ is a random matrix with
\textit{uniformly} distributed non-positive entries in $[-100,0]$,
$n=100$. We further have generated a set of $m = 50$ linear
constraints 
to construct the positive polytope
$\P = \{\x\in \R^n | \bmA\x\leqco \b, \zero\leqco \x\leqco \bar \bu\}$,
where $\bmA$ has uniformly distributed entries in $[0,1]$,
$\b= \mathbf{1}, \bar \bu = \mathbf{1}$.  To make the gradient
non-negative, we set $\bh = -\bmH^{\trans} \bar \bu$.
We have empirically tuned the constant  \stepsize  for \pga and ran all
algorithms for $50$ iterations.

\cref{fig_nqp_iter} shows the utility
obtained by \submodularfw v.s. the 
iteration index for four function instances with different values of $\b$.
\cref{fig_nqp} depicts the average utility obtained by different
algorithms with increasing values of $\b$. The result is the average
of 20 repeated experiments. For \pga, we plot the curves for three
different values of \stepsizes. %
One can observe that the performance of \pga fluctuates with different
\stepsizes. With the best-tuned \stepsize, \pga performs close to
\submodularfw.

\subsection{Influence Maximization with Marketing Strategies}

Follow the application in \cref{app_influence_max_marketing_strategies}, 
we consider the following simplified influence  model for experiments.

\paragraph{Simplified Influence Model for Experiments.}

For the general influence models, it is hard to evaluate
\cref{influence_general_marketing}.  To ease the experiments, we
consider $F(S)$ to be a facility location objective, for which the
expected influence has a closed-form expression, as shown by \citet[Section
4.2]{bian2019optimalmeanfield}.
Here each user may represent an ``opinion leader'' in social networks,
and there is a bipartite graph describing the influence strength of
each opinion leader to the population.

\subsubsection{Experimental Results}

We used the UC Irvine forum
dataset\footnote{\url{http://konect.uni-koblenz.de/networks/opsahl-ucforum}}.
It is a bipartite network containing user posts to forums. The users
are students at the University of California, Irvine. An edge
represents a forum message on a specific forum.  It has in total 899
users, 522 forums and 33,720 edges (posts).

For a specific (user, forum) pair, we determine the edge weight as the
number of posts from that user on the forum. This weighting indicates
that the more one user has posted on a forum, the more he has
influenced that particular forum. With this processing, we have 
7,089 unique edges between users and forums.

We experimented with the independent marketing actions in \cref{app_activations_influence_max} for
simplicity. For a user $i$, we set the parameter $p_i \in [0, 1]$
based on the following heuristic: Firstly, we calculate the
``degree'' of user $i$ as the number of forums he has posted on:
$d_i = \|W_{i:}\|_0$. Then we set $p_i = \sigma(- d_i)$, $\sigma(\cdot)$ is
the logistic sigmoid function.  Remember that $p_i$ is the probability
of user $i$ become activated with one unit of investment, so this
heuristic means that the more influence power a user has, the more
difficult it is to activate him, because he might charge more than other
users with less influence power.
Since it is too time consuming to experiment on the whole bipartite
graph, we experimented on different subgraphs of the original
bipartite graph.

\setkeys{Gin}{width=0.8\textwidth}
\begin{figure}[htbp]
  \center \subfloat [50 users, 10 forums] {
    \includegraphics[]{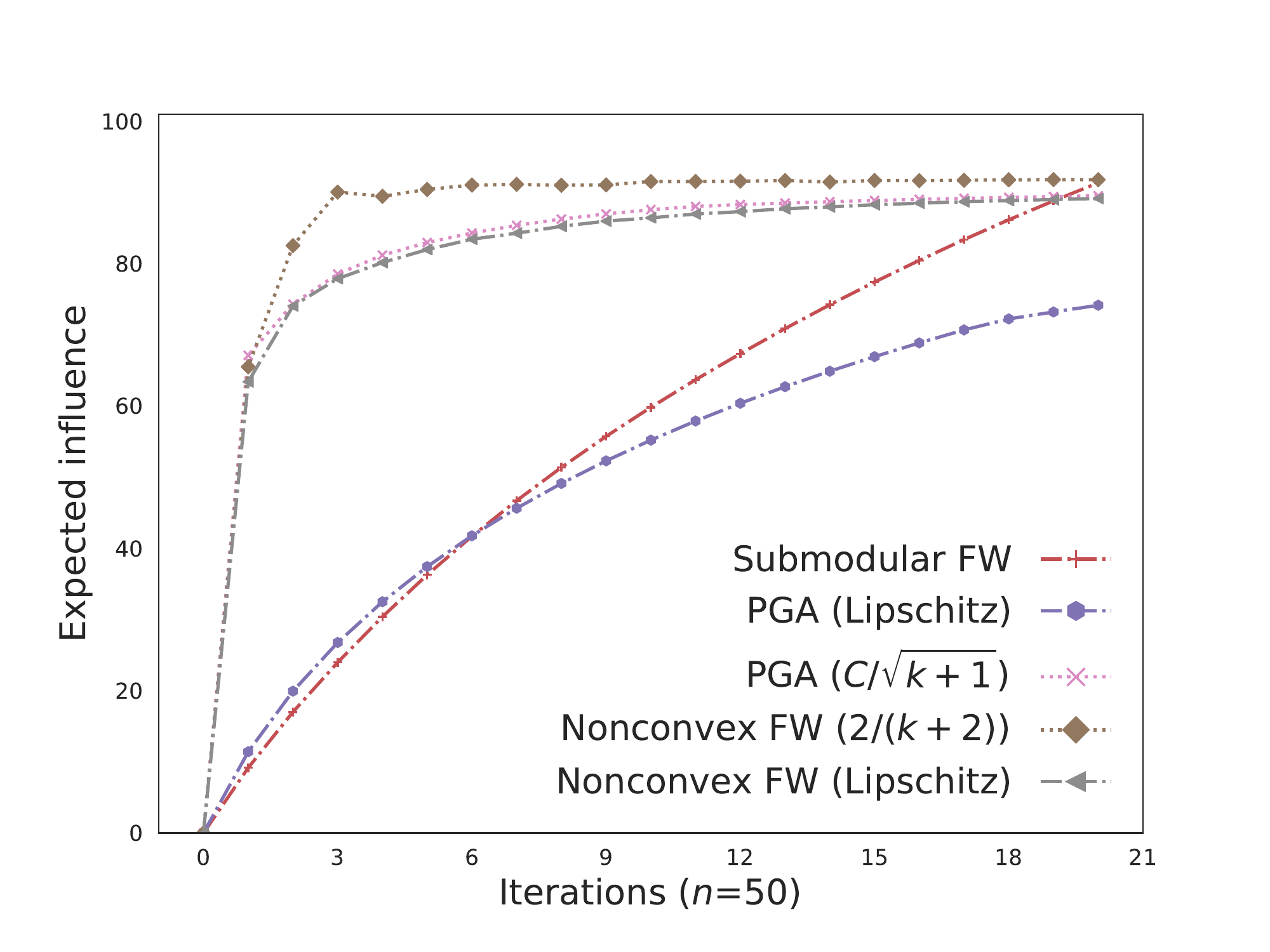}}
  \vspace{-0.8cm} \subfloat [100 users, 10 forums] {
    \includegraphics[]{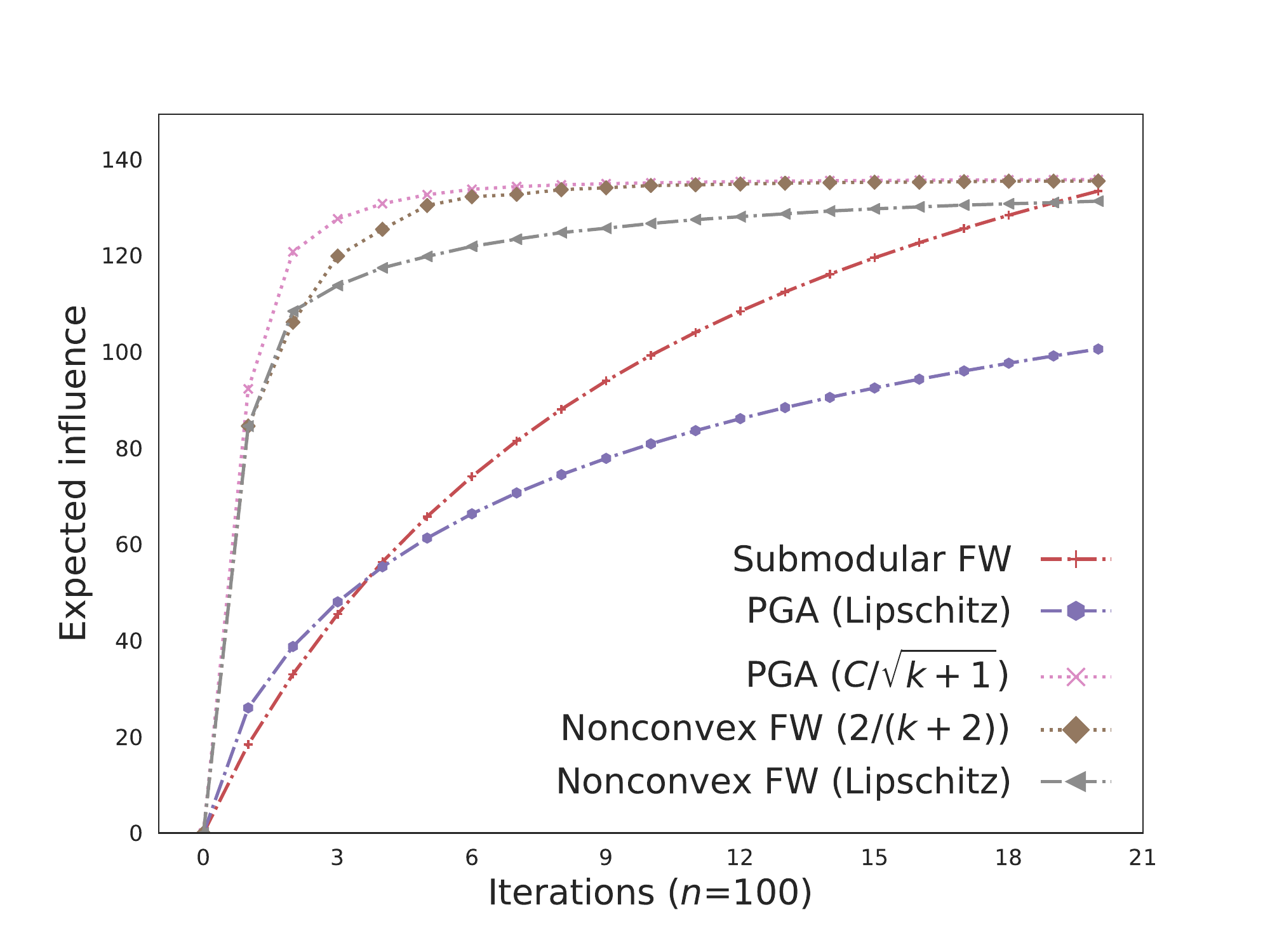}}
  \caption{Expected influence w.r.t. iterations of different
    algorithms on real-world graphs with 50 and 100 users.}
  \label{fig_traj_influence_50_100}
\end{figure}  

\setkeys{Gin}{width=0.8\textwidth}
\begin{figure}[htbp]
  \center \subfloat [150 users, 20 forums] {
    \includegraphics[]{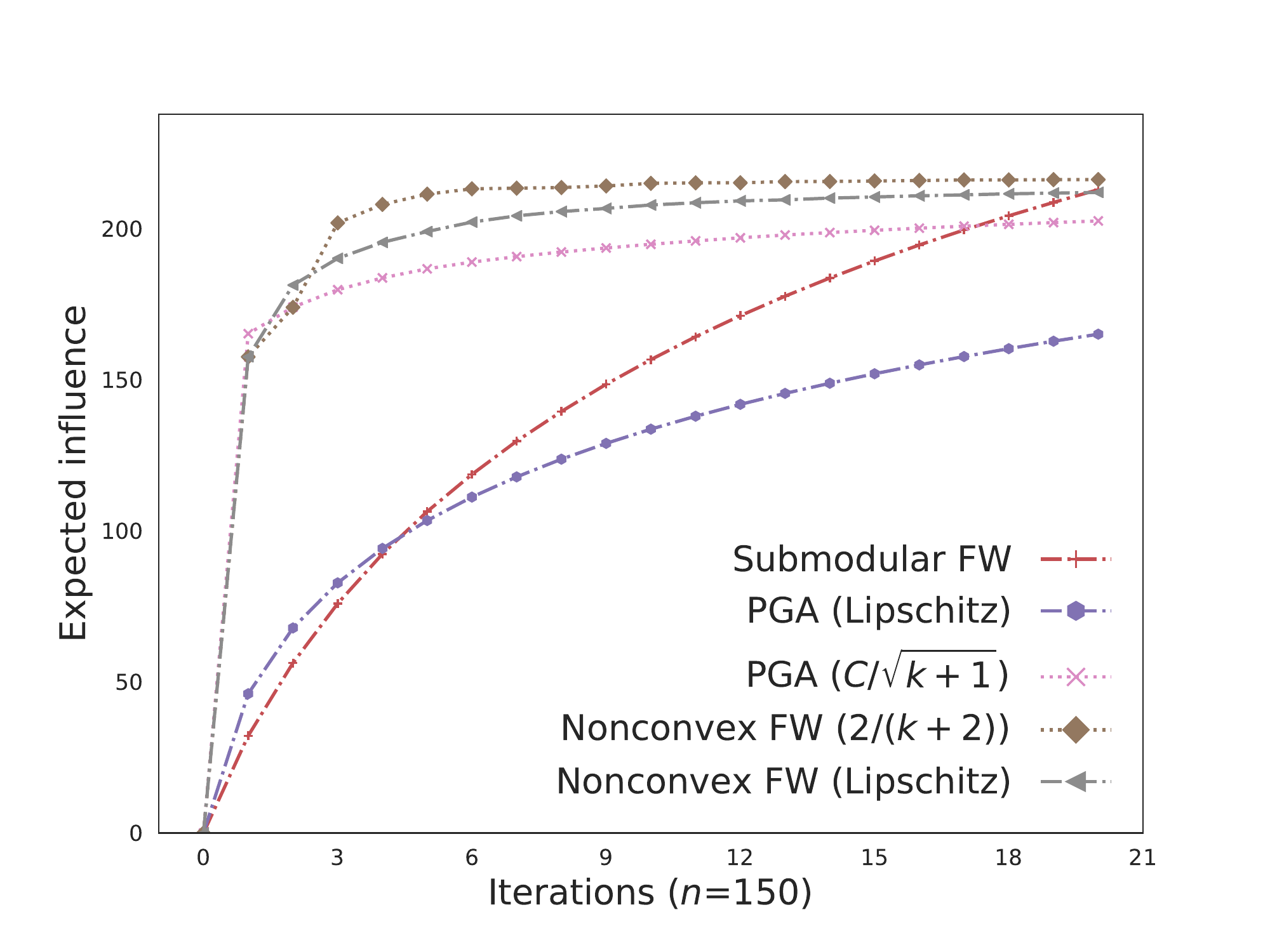}}
  \vspace{-0.8cm} \subfloat [200 users, 20 forums] {
    \includegraphics[]{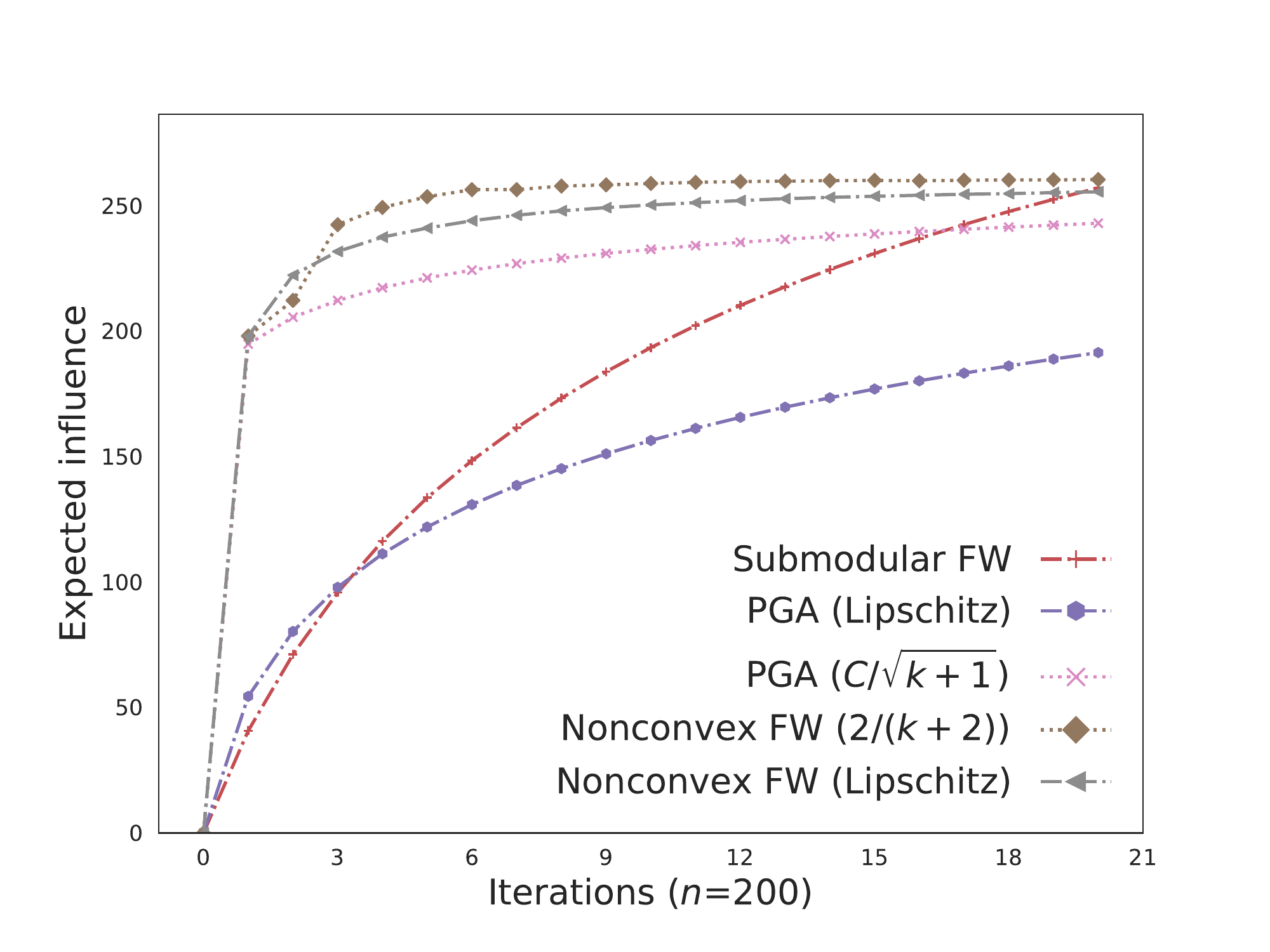}}
  \caption{Expected influence w.r.t. iterations of different
    algorithms on real-world graphs with 150 and 200 users.}
  \label{fig_traj_influence_150_200}
\end{figure}  

\cref{fig_traj_influence_50_100,fig_traj_influence_150_200} document
the trajectories of expected influence of different algorithms.  We
can see that \submodularfw has a very stable performance: It can
always reach a fairly good solution, no matter what kind of setting
you have. And it does not need to tune the \stepsizes or any
hyperparameters. One drawback is that it converges relatively slowly in the
beginning.

For \pga algorithms, we tested with two \stepsize rules: the Lipschitz
rule ($1/L$) which has the 1/2 approximation guarantee; the diminishing
\stepsize rule ($C/\sqrt{k+1}$), which does not have a formal
theoretical guarantee.
One general observation is that both \stepsize rules need a careful
tunning of hyperparameters, and the performance crucially depends on
the quality of hyperparameters. For example, for \pga, if the
\stepsize is too small, it may converge too slowly; if the \stepsizes
are too large, it tends to fluctuate.

For \nonconvexfw algorithms, we also tested two \stepsize rules: the
``oblivious'' rule ($2/(k+2)$)) and the Lipschitz rule. Apparently the
Lipschitz \stepsize rule needs a careful tunning of the Lipschitz
parameter $L$, while the oblivious rule does not.  With a careful
tuning of $L$, both \nonconvexfw variants converge very fast and
converge to the highest function value.

\if 0

\subsection{Facility location objective  for movie recommendation}

We tested on the facility location objective used by
\cite{hassani2017gradient} for a movie recommendation task.  Its
multilinear extension has closed form expression, as shown in
\citet{bian2018meanfield} .

Let $\BR \in \R^{m\times n}$ be the rating matrix from $m$ users for
$n$ movies.  The MovieLens 1M
dataset\footnote{\url{https://grouplens.org/datasets/movielens/}} has
$m = 6,041$ users and $n = 4,000$ movies, in total 1 million ratings
(from 1 to 5).  According to \cite{mokhtari2017conditional}, we set
the missing entries to be 0. The utility of selecting $S$ is given by
the following facility location objective,
\begin{align}
  F(S) = \frac{1}{m} \sum_{i=1}^{m} F_i(S), 
  F_i(S) = \max_{j \in S} R_{i,j}. 
\end{align}

The multilinear extension of $F(S)$ is,
\begin{align}
  \multi(\x) = \frac{1}{m} \sum_{i=1}^{m} f_i(\x), \x\in \R^n. 
\end{align}

which has a closed-form expression \citep{bian2018meanfield}: For one
customer $i$, let us sort $\flweights_{ij}$ such that
$\flweights_{ij_i(1)} \leq \flweights_{ij_i(2)}\leq \cdots \leq
\flweights_{ij_i(n)}$.
Its multilinear extension can be formulated as:
\begin{align}
  \multi(\x)  & = \frac{1}{m} \sum_{i\in [m]} \sum_{	S\subseteq
                \groundset }  \max_{j \in S} \flweights_{ij}
                \prod_{k\in S}x_k \prod_{k'\notin S }(1-x_{k'}) \\ 
              & = \frac{1}{m} \sum_{i\in [m]} \sum_{l=1}^n \flweights_{ij_i (l)} x_{j_i(l)} \prod_{k=l+1}^n (1- x_{j_i(k)}). 
\end{align}

So $\multi(\x)$ can be evaluated in $\bigo{m |\groundset|^2}$ time,
and one can see that $\nabla_k \multi(\x) $ can be calculated in
$\bigo{m |\groundset|^2}$ time as well.

\cref{fig_facloc} shows the result on movie recommendation.  One can
observe that \nonconvexfw always performs the best, though it has a
1/2 approximation guarantee.  \pga with a constant \stepsize does not
have a satisfactory performance, while with adaptive \stepsize it has
similar performance as \nonconvexfw.

\fi 

\section{Conclusions}

In this chapter we studied the problem of maximizing monotone
continuous DR-submodular functions. We started with the
inapproximability results of this problem. Then we presented two
classes of algorithms with constant factor approximation
guarantees. The first class of algorithms are based on the
Local-Global relation and have a 1/2 approximation ratio. The second
class of algorithms contain a Frank-Wolfe variant, termed
\submodularfw. It works by following the concave direction in each
iteration.  Finally, we demonstrated the effectiveness of the
algorithms through experiments on both synthetic and real-world data.

\section{Additional Proofs}
\label{sec_proofs_monotonemax}

\subsection{Proof of \cref{prop_np}}

\begin{proof}[Proof of \cref{prop_np}]

  On a high level, the proof idea follows from the reduction from the
  problem of maximizing a monotone submodular set function subject to
  cardinality constraints.
	
  Let us denote $\Pi_1$ as the problem of maximizing a monotone
  submodular set function subject to cardinality constraints, and
  $\Pi_2$ as the problem of maximizing a monotone continuous
  DR-submodular function under general down-closed polytope
  constraints.
  Following \citet{DBLP:journals/siamcomp/CalinescuCPV11}, there exist
  an algorithm $\A$ for $\Pi_1$ that consists of a polynomial time
  computation in addition to polynomial number of subroutine calls to
  an algorithm for $\Pi_2$. For details on $\A$ see the following.

  First of all, the multilinear extension
  \citep{calinescu2007maximizing} of a monotone submodular set
  function is a monotone continuous submodular function, and it is
  coordinate-wise linear, thus falls into a special case of monotone
  continuous DR-submodular functions. Evaluating the multilinear extension and its gradients can be done using sampling methods, thus resulted in a randomized algorithm. 
	
  So the algorithm $\A$ shall be: 1) Maximize the multilinear
  extension of the submodular set function over the matroid polytope
  associated with the cardinality constraint, which can be achieved by
  solving an instance of $\Pi_2$.  We call the solution obtained the
  fractional solution; 2) Round the fractional solution to a feasible
  integeral solution using polynomial time rounding technique in
  \citet{ageev2004pipage,calinescu2007maximizing} (called the pipage
  rounding). Thus we prove the reduction from $\Pi_1$ to $\Pi_2$.
	
  Our reduction algorithm $\A$ implies the NP-hardness and
  inapproximability of problem $\Pi_2$.
	
  For the NP-hardness, because $\Pi_1$ is well-known to be NP-hard
  \citep{calinescu2007maximizing,feige1998threshold}, so $\Pi_2$ is
  NP-hard as well.
	
  For the inapproximability: Assume there exists a polynomial
  algorithm ${\mathscr B}$ that can solve $\Pi_2$ better than $1-1/e$,
  then we can use ${\mathscr B}$ as the subroutine algorithm in the
  reduction, which implies that one can solve $\Pi_1$ better than
  $1 - 1/e$. Now we slightly adapt the proof of inapproximability on
  max-k-cover of \citet{feige1998threshold}, since max-k-cover is a
  special case of $\Pi_1$.  According to the proof of  Theorem 5.3 in
  \citet{feige1998threshold} and our reduction $\A$, we have a
  reduction from approximating 3SAT--5 to problem $\Pi_2$. Using the
  rest proof of Theorem 5.3 in \citet{feige1998threshold}, we reach
  the result that one cannot solve $\Pi_2$ better than $1 - 1/e$,
  unless RP = NP.
\end{proof}

\subsection{Proof of \cref{coro_nonconvex_fw}}

\begin{proof}[Proof of \cref{coro_nonconvex_fw}]
	Firstly, according to Theorem  1 of \citet{lacoste2016convergence}, 
	\nonconvexfw is known to converge to a  stationary point with a rate of 
	$1/\sqrt{k}$. 
	
	Then according to \cref{coro_1half}, any stationary point is a
	1/2 approximate solution.
\end{proof}

\subsection{Proof of \cref{lemma_31}}
\begin{proof}
  It is easy to see that $\x^K$ is a convex combination of points in
  $\P$, so $\x^K\in\P$.
	
  Consider the point
  $\v^*:=(\x^*\vee \x) - \x = (\x^* - \x)\vee \zero\geqco \zero$.  Because
  $\v^*\leqco \x^*$ and $\P$ is down-closed, we get $\v^*\in \P$.
	
  By monotonicity, $f(\x+\v^*) = f(\x^*\vee \x) \geq f(\x^*)$.
	
  Consider the function $g(\xi):= f(\x+\xi \v^*), \xi\geq 0$.
  $\frac{d g(\xi)}{d \xi} = \dtp{\v^*}{\nabla f(\x+\xi \v^*)}$.  From
  Proposition \ref{prop_concave}, $g(\xi)$ is concave, hence
  \begin{flalign}
    g(1) - g(0) = f(\x+\v^*) - f(\x) \leq \frac{d g(\xi)}{d \xi}
    \Bigr|_{\xi = 0} \times 1 = \dtp{\v^*}{ \nabla f(\x)}.
  \end{flalign}
  Then one can get
  \begin{flalign}
    &\dtp{\v}{\nabla f(\x)} \overset{(a)}{\geq} \alpha \dtp{\v^*}{
      \nabla f(\x)} -\frac{1}{2}\delta \gamma L D^2 \geq \\
    &\alpha (f(\x+\v^*) - f(\x)) -\frac{1}{2}\delta \gamma L D^2 \geq
    \alpha (f(\x^*) -f(\x)) -\frac{1}{2}\delta \gamma L D^2,
  \end{flalign}
  where $(a)$ is resulted from the LMO step of \cref{alg_sfmax_GradientAscend}.
\end{proof}

\subsection{Proof of \cref{thm_fw}}
\begin{proof}[Proof of \cref{thm_fw}]
  From the Lipschitz assumption of $f$ (\cref{eq_smooth}):
  \begin{flalign}
    f(\x^{k+1}) - f(\x^k) & = f(\x^k + \gamma_k \v^k) - f(\x^k)
    \\\notag
    &\geq \gamma_k \dtp{\v^k}{\nabla f(\x^k)} -
    \frac{\cg}{2}\gamma_k^2 \|\v^k\|^2 \quad (\text{Lipschitz
      smoothness}) \\\notag &\geq \gamma_k \alpha [f(\x^*) - f(\x^k)]
    - \frac{1}{2}\gamma_k^2 \delta LD^2 - \frac{\cg}{2}\gamma_k^2 D^2.
    \quad (\text{Lemma \ref{lemma_31}})
  \end{flalign}
  After rearrangement,
  \begin{flalign}
    f(\x^{k+1}) - f(\x^*) \geq (1-\alpha\gamma_k) [f(\x^k) - f(\x^*)]-
    \frac{LD^2\gamma_k^2 (1+\delta)}{2}.
  \end{flalign}
  Therefore,
  \begin{flalign}
    f(\x^K) - f(\x^*) \geq \prod_{k=0}^{K-1}
    (1-\alpha\gamma_k)[f(\zero) - f(\x^*)] - \frac{LD^2 (1+\delta)}{2}
    \sum_{k=0}^{K-1}\gamma_k^2 .
  \end{flalign}
  One can observe that $\sum_{k=0}^{K-1}\gamma_k = 1$, and since
  $1-y \leq e^{-y}$ when $y\geq 0$,
  \begin{flalign}
    f(\x^*) - f(\x^K) &\leq [f(\x^*) - f(\zero)]e^{-\alpha
      \sum_{k=0}^{K-1}\gamma_k} + \frac{LD^2 (1+\delta)}{2}
    \sum_{k=0}^{K-1}\gamma_k^2 \\ 
    & = [f(\x^*) -
    f(\zero)]e^{-\alpha} +\frac{LD^2 (1+\delta)}{2}
    \sum_{k=0}^{K-1}\gamma_k^2.
  \end{flalign}
  After rearrangement, we get,
  \begin{align}
    f(\x^K) \geq (1-1/e^{\alpha})f(\x^*) -\frac{LD^2 (1+\delta)}{2}
  \sum_{k=0}^{K-1}\gamma_k^2 + e^{-\alpha}f(\zero).
  \end{align}
\end{proof}

\subsection{Proof of \cref{cor_9}}\label{app_proof_c9}

\begin{proof}[Proof of \cref{cor_9}]
  Fixing $K$, to reach the tightest bound in \cref{eq8} amounts to
  solving the following problem:
  \begin{flalign}
    &\min \sum_{k=0}^{K-1}\gamma_k^2\\\notag &\text{ s.t. }
    \sum_{k=0}^{K-1}\gamma_k = 1, \gamma_k \geq 0.
  \end{flalign}
  Using Lagrangian method, let $\lambda$ be the Lagrangian multiplier,
  then
  \begin{align}
    L(\gamma_0, \cdots, \gamma_{K-1}, \lambda) =
  \sum_{k=0}^{K-1}\gamma_k^2 + \lambda \left[\sum_{k=0}^{K-1}\gamma_k
  - 1\right].
  \end{align}
  It can be easily verified that when
  $\gamma_0 = \cdots =\gamma_{K-1} = K^{-1}$,
  $\sum_{k=0}^{K-1}\gamma_k^2$ reaches the minimum (which is
  $K^{-1}$). Therefore we obtain the tightest worst-case bound in
  Corollary \ref{cor_9}.
\end{proof}


\def\dir{chapters/box-non-mono-sub-max}
\chapter{Maximizing Non-Monotone Continuous Submodular Functions with
  a Box Constraint}
\chaptermark{Box Constrained Continuous Submodular Maximization}
\label{chapter_doublegreedy}

\begin{chapquote}{Master Oogway}
	Yesterday is history, tomorrow is a mystery, and today is a gift. That’s why they call it the present.
\end{chapquote}

In this chapter we focus on maximizing continuous submodular
functions, with only one hypercube constraint (also called ``box
constraint''),
\begin{align}\label{problem_box}
  \underset{{ \x \in [\a, \;\b]}}{{\text {maximize}}} \;\; f(\x),   
\end{align}
where $f: \X \rightarrow \R$ is continuous submodular or
DR-submodular.

The problem of maximizing a general non-monotone continuous submodular
function under box constraints\footnote{It is also called
  ``unconstrained'' maximization in the combinatorial optimization
  community, since the domain $\X$ itself is also a box. Note that the
  box can be in the negative orthant here.} has various real-world
applications, including revenue maximization with continuous
assignments, multi-resolution summarization, mean-field inference for
probabilistic log-submodular models and its PA (see
\cref{chapter_mean_field} for details), etc.

\section{Hardness and Inapproximability Results}

Though only with a box constraint, problem \labelcref{problem_box} is
still extremely hard to solve to optimal.  The following proposition
shows the NP-hardness of the problem.

\begin{proposition}[Hardness and Inapproximability]\label{prop_np2}
  The problem of maximizing a generally non-monotone continuous
  submodular function subject to a box constraint is NP-hard.
  Furthermore, there does not exist a polynomial-time $(1/2 +\epsilon)$-approximation
  $\forall \epsilon>0$, unless RP = NP.
\end{proposition}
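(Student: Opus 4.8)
The plan is to realize unconstrained (box-constrained) submodular \emph{set} maximization as a special case of Problem~\labelcref{problem_box} and then transfer the hardness of the discrete problem. Given an arbitrary non-negative submodular set function $F:2^{[n]}\to\R_+$, I would pass to its multilinear extension $\multi:[0,1]^n\to\R_+$ from \labelcref{eq_multilinear_ext}; as recalled in \cref{sec_app_multilinear}, $\multi$ is a smooth (hence continuous) submodular function and is coordinate-wise \emph{linear}, and $[0,1]^n$ is a box, so $\max_{\x\in[0,1]^n}\multi(\x)$ is an instance of \labelcref{problem_box}. Because $\multi$ is affine in each coordinate with the others held fixed, its maximum over $x_i\in[0,1]$ is always attained at an endpoint; sweeping over $i=1,\dots,n$ shows both that $\max_{\x\in[0,1]^n}\multi(\x)=\max_{S\subseteq[n]}F(S)$ and that any fractional $\x$ can be rounded, one coordinate at a time (using $2n$ evaluations of $\multi$), to a $0/1$ point whose value is no smaller. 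Hence the reduction is \emph{ratio-preserving}: an $\alpha$-approximate maximizer of $\multi$ over $[0,1]^n$ rounds to an $\alpha$-approximate maximizer of $F$.

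Next I would invoke the known hardness of unconstrained submodular maximization (USM). It is NP-hard already on its special cases: the cut function and the directed-cut function are submodular, so USM contains \MAXCUT and \textsc{MaxDiCut}, which immediately yields the NP-hardness claim for \labelcref{problem_box}. For inapproximability, suppose toward a contradiction that some polynomial-time (possibly randomized) algorithm achieved a ratio $1/2+\epsilon$ on \labelcref{problem_box} for some $\epsilon>0$. Applying it to $\multi$ and then rounding would give a polynomial-time $(1/2+\epsilon)$-approximation for USM, contradicting the tight $(1/2+\epsilon)$-inapproximability of USM established by Feige, Mirrokni and Vondr\'ak (a matching $1/2$-approximation is given by \citet{buchbinder2012tight}; see also \citet{vondrak2013symmetry}) unless $\mathrm{RP}=\mathrm{NP}$.

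The single delicate point is that $\multi$ and $\nabla\multi$ cannot in general be evaluated exactly in polynomial time. I would handle this in the usual way: estimate $\multi(\x)$ and $\nabla\multi(\x)$ by sampling $S\sim q(\cdot\mid\x)$, so that, by a Hoeffding bound (as in \cref{sec_app_multilinear}), $\mathrm{poly}(n,1/\epsilon)$ samples suffice to guarantee additive error at most $\epsilon'\max_S|F(S)|=\epsilon'\,\mathrm{OPT}$ (using $F\ge 0$), which folds into the multiplicative ratio and only makes the reduction randomized -- exactly matching the ``unless $\mathrm{RP}=\mathrm{NP}$'' conclusion. The main obstacle is therefore to make this last step airtight: either to argue that a continuous algorithm designed for exact oracles retains its guarantee, up to an arbitrarily small loss, when run on the sampled stochastic oracle, or, more cleanly, to quote the form of the $1/2$-hardness that already applies to succinctly represented submodular functions so that no sampling is needed. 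The exactness of the multilinear reduction and the value-non-decreasing rounding are otherwise routine and parallel the proof of \cref{prop_np}.
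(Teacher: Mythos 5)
Your proposal is correct and follows essentially the same route as the paper's own proof: reduce unconstrained non-monotone submodular \emph{set} maximization to the box-constrained continuous problem via the multilinear extension over $[0,1]^n$, then import the NP-hardness and the tight $(1/2+\epsilon)$ inapproximability of USM from \citet{feige2011maximizing} and \citet{dobzinski2012query}. The only cosmetic difference is the rounding step --- you round deterministically coordinate-by-coordinate using coordinate-wise linearity, whereas the paper simply returns the random set $R(\hat\x)$ whose expected value equals $\multi(\hat\x)$ --- and your explicit handling of the oracle/sampling subtlety is, if anything, more careful than the paper's.
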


\def\sectitle{Submodular-DoubleGreedy: A 1/3 Approximation Algorithm
  for Submodular Maximization}
\section[Submodular-DoubleGreedy: A 1/3 Approximation]{\sectitle}

We now describe our algorithm for maximizing a non-monotone continuous
submodular function subject to box constraints.  It provides a
1/3-approximation, is inspired by the double greedy algorithm of
\citet{buchbinder2012tight}, and
can be viewed as a procedure performing coordinate-ascent on
\textit{two} solutions.
Since it only uses the submodularity property (instead of the DR
property), we call it \submodulardg.

What it given to the algorithm is the problem $\max_{\x\in [\underline{\bu}, \bar \bu]} f(\x)$, $f$ is
continuous submodular, and the requirement that  $f(\underline{\bu}) + f(\bar \bu)\geq 0$.
We view the process as two particles starting from
$\x^0=\underline{\bu}$ and $\y^0 = \bar{\bu}$, and following a certain
``flow'' toward each other.  The pseudo-code is given in
\cref{alg_uscfmax_DoubleGreedy}.
\begin{algorithm}[htbp]
  \caption{\submodulardg algorithm for maximizing non-monotone
    continuous submodular functions \citep{bian2017guaranteed}}\label{alg_uscfmax_DoubleGreedy}
  \KwIn{$\max_{\x\in [\underline{\bu}, \bar \bu]} f(\x)$, $f$ is
    continuous submodular, $f(\underline{\bu}) + f(\bar \bu)\geq 0$}
  {$\x^0 \leftarrow \underline{\bu}$, $\y^0 \leftarrow \bar \bu$\;}
  \For{$k = 1 \rightarrow n$}{ {let $e_k$ be the coordinate being
      operated\;}
		
    {find $\hat u_a$ \text{ s.t. }
      $f(\sete{x^{k-1}}{e_k}{\hat u_a}) \geq
      \max_{u_a\in[\underline{u}_{e_k}, \bar u_{e_k}]}
      f(\sete{x^{k-1}}{e_k}{ u_a}) - \delta$,
      $\delta_a \leftarrow f (\sete{x^{k-1}}{e_k}{\hat u_a}) -
      f(\x^{k-1}$)\tcp*{$\delta\in
        [0, \bar \delta]$
        is the additive error level }\label{step_1d_1}} {find
      $\hat u_b$ \text{ s.t. }
      $f(\sete{y^{k-1}}{e_k}{\hat u_b})\geq \max_{u_b\in
        [\underline{u}_{e_k}, \bar u_{e_k}]}
      f(\sete{y^{k-1}}{e_k}{u_b}) - \delta$,
      $\delta_b \leftarrow f (\sete{y^{k-1}}{e_k}{\hat u_b}) -
      f(\y^{k-1})$\;\label{step_1d_2}}
    {\textbf{If} $\delta_a\geq \delta_b$:
      $\x^{k}\leftarrow (\sete{x^{k-1}}{e_k}{\hat u_a})$,
      $\y^{k}\leftarrow (\sete{y^{k-1}}{e_k}{\hat u_a})$ \;}
    {\textbf{Else}: \quad\quad
      $\y^{k}\leftarrow (\sete{y^{k-1}}{e_k}{\hat u_b})$,
      $\x^{k}\leftarrow(\sete{x^{k-1}}{e_k}{\hat u_b})$\;} } {Return
    $\x^n$ (or $\y^n$)\tcp*{note that $\x^n = \y^n$}}
\end{algorithm}
We proceed in $n$ rounds that correspond to some arbitrary order of
the coordinates.
At iteration $k$, we consider solving a one-dimensional (1-D) subproblem
over coordinate $e_k$ for each particle, and moving the particles
based on the calculated local gains toward each other.  Formally, for
a given coordinate $e_k$, we solve a 1-D subproblem to
find the value of the first solution $\x$ along
coordinate $e_k$ that maximizes $f$, i.e.,
$\hat u_a = \arg\max_{u_a} f(\sete{x^{k-1}}{e_k}{u_a}) - f(\x^{k-1})$,
and calculate its marginal gain $\delta_a$.  We then solve
another 1-D subproblem to find the value of the second 
solution  $\y$ along coordinate $e_k$ that maximizes $f$, i.e.,
$\hat u_b = \arg\max_{u_b} f(\sete{\y^{k-1}}{e_k}{u_b}) -
f(\y^{k-1})$,
and calculate the second marginal gain $\delta_b$.  
Then we update the solutions by comparing the two marginal gains.
After comparing the two marginal gains, we select the superior
solution.  If changing $x_{e_k}$ to be $\hat u_a$ has a larger
benefit, we change \textit{both} $x_{e_k}$ and $y_{e_k}$ to be
$\hat u_a$. Otherwise, we change \textit{both} of them to be
$\hat u_b$.  After $n$ iterations the particles should meet at point
$\x^n = \y^n$, which is the final solution. Note that
\cref{alg_uscfmax_DoubleGreedy} can tolerate additive error $\delta$
in solving each 1-D subproblem (Steps \ref{step_1d_1},
\ref{step_1d_2}).

It is worth mentioning, that the assumptions required by the algorithm
\submodulardg
are submodularity of $f$, $f(\underline{\bu}) + f(\bar \bu)\geq 0$ and
the (approximate) solvability of the 1-D subproblem.  For proving the
approximation guarantee, the idea is to bound the loss in the
objective value from the assumed optimal objective value between every
two consecutive steps,
which is then used to bound the
maximum  loss after $n$ iterations.

We can show that \submodulardg has a 1/3 approximation guarantee:
\begin{theorem}\label{thm_double}
  Assuming the optimal solution to be $\optcont$, the output of
  \cref{alg_uscfmax_DoubleGreedy} has function value no less than
  $ \frac{1}{3} f(\optcont) - \frac{4n}{3}\delta$, where
  $\delta\in [0, \bar \delta]$ is the additive error level for solving
  each 1-D subproblem.
\end{theorem}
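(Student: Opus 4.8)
The plan is to adapt the analysis of the deterministic double-greedy algorithm of \citet{buchbinder2012tight} to continuous submodular functions, the bridge being the weak-DR characterization of submodularity. First I would introduce a sequence of reference points interpolating between the optimum and the algorithm's output: for $k=0,1,\dots,n$ set $\mathbf{r}^{k} := (\optcont \vee \x^{k}) \wedge \y^{k}$. Using the invariant $\x^{k} \leqco \y^{k}$ (preserved because each round sets coordinate $e_k$ of both particles to the same value) and the fact that $\x^{k},\y^{k}$ agree on the already-processed coordinates $e_1,\dots,e_k$, a short coordinatewise computation shows that $\mathbf{r}^{k}$ carries the committed value on $e_1,\dots,e_k$ and equals $\optcont$ on the remaining coordinates; hence $\mathbf{r}^{0}=\optcont$, $\mathbf{r}^{n}=\x^{n}=\y^{n}$, and $\mathbf{r}^{k-1},\mathbf{r}^{k}$ differ only in coordinate $e_k$, where the value moves from $\optcont_{e_k}$ to the committed $\hat u$. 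If one establishes the per-coordinate estimate
\[
f(\mathbf{r}^{k-1}) - f(\mathbf{r}^{k}) \;\le\; \bigl[f(\x^{k}) - f(\x^{k-1})\bigr] + \bigl[f(\y^{k}) - f(\y^{k-1})\bigr] + O(\delta),
\]
then summing over $k=1,\dots,n$ telescopes to $f(\optcont) - f(\x^{n}) \le 2f(\x^{n}) - f(\underline{\bu}) - f(\bar\bu) + O(n\delta)$, and the standing assumption $f(\underline{\bu}) + f(\bar\bu) \ge 0$ rearranges this into $f(\x^{n}) \ge \tfrac13 f(\optcont) - O(n\delta)$; tracking the constants in the per-coordinate slack gives exactly the claimed $\tfrac{4n}{3}\delta$.

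The crux is therefore the per-coordinate estimate. Fix $k$ and consider the three one-dimensional curves obtained by sweeping coordinate $e_k$: $h^{X}(u):=f(\sete{\x^{k-1}}{e_k}{u})$, $h^{O}(u):=f(\sete{\mathbf{r}^{k-1}}{e_k}{u})$, $h^{Y}(u):=f(\sete{\y^{k-1}}{e_k}{u})$. The three base points satisfy $\x^{k-1} \leqco \mathbf{r}^{k-1} \leqco \y^{k-1}$ and carry on $e_k$ the ordered values $\underline u_{e_k} \le \optcont_{e_k} \le \bar u_{e_k}$; after overwriting coordinate $e_k$ by a common value, weak DR (\cref{lemma_support_dr}; in the differentiable case, equivalently the weak-antitone gradient condition of \cref{lemma_weak_antitone}) shows that $u \mapsto h^{X}(u)-h^{O}(u)$ and $u \mapsto h^{O}(u)-h^{Y}(u)$ are nondecreasing. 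This ``sandwich'' is the continuous analogue of the set inclusions $X_{i-1}\subseteq \mathrm{OPT}_{i-1}\subseteq Y_{i-1}$ in \citet{buchbinder2012tight}. I would then split on the sign of $\hat u - \optcont_{e_k}$: when $\hat u \ge \optcont_{e_k}$, monotonicity of $h^{O}-h^{Y}$ yields $f(\mathbf{r}^{k-1}) - f(\mathbf{r}^{k}) \le h^{Y}(\optcont_{e_k}) - h^{Y}(\hat u)$, and $\delta$-approximate optimality of the $\y$-side one-dimensional solve bounds $h^{Y}(\optcont_{e_k}) \le f(\y^{k-1}) + \delta_b + \delta$; the case $\hat u < \optcont_{e_k}$ is symmetric via $h^{X}-h^{O}$ and the $\x$-side solve. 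In both situations, combining the decision rule ($\delta_a \ge \delta_b$ or $\delta_b > \delta_a$), monotonicity of $h^{X}-h^{Y}$, and approximate optimality shows that the ``non-driving'' increment ($f(\y^{k})-f(\y^{k-1})$ or $f(\x^{k})-f(\x^{k-1})$) is at least $-\delta$, which is exactly what is needed to absorb the residual terms into the $O(\delta)$ slack. The auxiliary facts $\delta_a + \delta_b \ge -2\delta$, and hence $\delta_a \ge -\delta$ when $\delta_a\ge\delta_b$, come out of submodularity of $f$ together with the approximate solves in the same way.

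The main obstacle is exactly this per-coordinate lemma, and it is where the continuous problem genuinely differs from the discrete one. In the $0/1$ setting of \citet{buchbinder2012tight}, $\mathrm{OPT}_{i-1}$ either already contains $e_i$ (loss zero) or does not (a single marginal-gain comparison suffices), and only one particle moves at each step. Here $\optcont_{e_k}$ is an arbitrary point of the interval $[\underline u_{e_k}, \bar u_{e_k}]$ and \emph{both} particles are reset to $\hat u$, so the loss $f(\mathbf{r}^{k-1}) - f(\mathbf{r}^{k})$ cannot be read off from two function values and must instead be bounded by comparing whole one-dimensional slices. Proving that $h^{X}-h^{O}$ and $h^{O}-h^{Y}$ are monotone — i.e.\ pushing the weak-DR inequality through the reference points $\mathbf{r}^{k-1}$, which are not line segments relative to $\x^{k-1}$ or $\y^{k-1}$ — and then propagating the additive error $\delta$ of the inexact line maximizations cleanly through the sub-cases is the only delicate part. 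The rest (constructing $\mathbf{r}^{k}$ and verifying its endpoints and one-coordinate updates, the telescoping, and the final appeal to $f(\underline{\bu})+f(\bar\bu)\ge 0$) is routine.
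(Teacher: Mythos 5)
Your proposal is correct and follows essentially the same route as the paper's proof: the same intermediate points $(\optcont \vee \x^k)\wedge \y^k$ telescoping from $\optcont$ to $\x^n$, the same key per-coordinate loss lemma bounded by the two particles' gains plus $O(\delta)$, the same auxiliary fact $\delta_a+\delta_b\ge -2\delta$, and the same final rearrangement via $f(\underline{\bu})+f(\bar\bu)\ge 0$. The only cosmetic differences are that the paper splits each round into two half-steps (one particle moving at a time) and phrases the submodularity step as a lattice inequality with a proof by contradiction, whereas you phrase it as monotonicity of differences of one-dimensional slices — these are the same inequality applied to the same points.
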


\paragraph{Remark on Time Complexity.}
It can be seen that the time complexity of
\cref{alg_uscfmax_DoubleGreedy} is $O(n*\texttt{cost\_1D})$, where
$\texttt{cost\_1D}$ is the cost of solving the 1-D subproblem.
Solving a 1-D subproblem is usually computationally inexpensive.

\section[DR-DoubleGreedy: An Optimal 1/2
Approximation]{DR-DoubleGreedy: An Optimal 1/2 Approximation for
  DR-Submodular Maximization}
\label{sec_dr_doublegreedy}

Unfortunately, problem \labelcref{problem_box} is generally hard
though $f$ is DR-submodular: The $1/2$ hardness result
in \cref{prop_np2} can be easily translated
to problem \labelcref{problem_box} when $f$ satisfies the DR property:

\begin{observation}\label{obs_dr_submodular_max}
	The problem of maximizing a generally non-monotone continuous DR-submodular 
	function subject to box-constraints is NP-hard. Furthermore, there is no polynomial-time $(1/2+\epsilon)$-approximation for any $\epsilon>0$, unless RP = NP.
\end{observation}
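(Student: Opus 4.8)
The plan is to reuse the reduction behind \cref{prop_np2}: inspecting it, the continuous submodular instance it constructs is (or may be taken to be) the multilinear extension of a submodular set function, which is in fact DR-submodular, so strictly speaking no new idea is needed. For completeness I would record the reduction explicitly. Start from unconstrained maximization of a non-monotone submodular set function $F\colon 2^\groundset\to\R$ (USM). By \cref{sec_app_multilinear} its multilinear extension $\multi\colon[0,1]^n\to\R$ is continuous DR-submodular and coordinate-wise linear, and it is non-monotone whenever $F$ is, since $\nabla_i\multi(\x)=\multi(\sete{x}{i}{1})-\multi(\sete{x}{i}{0})$ inherits any negative marginal of $F$. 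A general box $[\a,\b]$ reduces to $[\zero,\mathbf{1}]$ by the affine reparametrization $\x\mapsto\a+\diag(\b-\a)\x$, which preserves DR-submodularity since it only rescales Hessian entries by positive factors.

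Next I would check that the reduction preserves the approximation ratio. Because $\multi$ is affine in each coordinate separately, given any $\x\in[0,1]^n$ one can, one coordinate at a time, move $x_i$ to whichever endpoint in $\{0,1\}$ does not decrease $\multi$; after $n$ such steps one reaches a vertex $\mathbf{1}_S$ with $\multi(\mathbf{1}_S)\ge\multi(\x)$ and $\multi(\mathbf{1}_S)=F(S)$. Hence $\max_{\x\in[0,1]^n}\multi(\x)=\max_{S\subseteq\groundset}F(S)$, and from any $\x$ with $\multi(\x)\ge\beta$ one recovers in polynomial time a set $S$ with $F(S)\ge\beta$. Consequently an $\alpha$-approximation (respectively, exact) algorithm for box-constrained DR-submodular maximization yields an $\alpha$-approximation (respectively, exact) algorithm for USM.

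The two claims then follow. Exact USM is NP-hard, so problem \labelcref{problem_box} with $f$ DR-submodular is NP-hard; and USM admits no polynomial-time $(1/2+\epsilon)$-approximation unless RP = NP, which is exactly the inapproximability invoked in \cref{prop_np2}, so neither does problem \labelcref{problem_box} restricted to DR-submodular $f$. I expect the only delicate point to be keeping the reduction polynomial time: $\multi$ is in general an exponential sum, so one must instantiate the hard USM family by set functions with an efficiently computable multilinear extension (for instance the graph-cut or Ising-type functions of \cref{sec_app_multilinear}, whose $\multi$ admit closed forms), after which the coordinate rounding above is trivially polynomial. Since everything beyond this bookkeeping is already contained in the proof of \cref{prop_np2}, the statement is a mere observation rather than a theorem.
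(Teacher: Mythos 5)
Your proposal is correct and follows essentially the same route as the paper: Observation~\ref{obs_dr_submodular_max} is proved there by noting that the multilinear extension of a submodular set function is continuous DR-submodular and reusing the reduction from unconstrained non-monotone submodular set maximization in the proof of \cref{prop_np2}, with the inapproximability inherited from \citet{feige2011maximizing} and \citet{dobzinski2012query}. Your deterministic coordinate-wise rounding (exploiting coordinate-wise linearity) is a minor refinement of the paper's randomized rounding via $R(\hat\x)$, and your caution about polynomial-time evaluability of $\multi$ is a reasonable bookkeeping point the paper leaves implicit, but neither changes the argument.
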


The following question arises
naturally: Is it possible to achieve the optimal $1/2$ approximation
ratio (unless RP=NP)
by properly utilizing the extra DR property?
To affirmatively answer this question, we propose a new Double Greedy
algorithm for continuous DR-submodular maximization called
\drdg (Since it explicitly utilizes the DR property) and prove a $1/2$ approximation
ratio.

\subsection{The Algorithm and Its Guarantee}

\begin{algorithm}[ht]
	\caption{\algname{DR-DoubleGreedy}$(f, \a, \b)$ for continuous DR-submodular maximization with a box constraint \citep{bian2019optimalmeanfield}
	}\label{alg_cont_doublegreedy} 
	\KwIn{ $\max_{\x \in [\a, \b]}f(\x)$,  $f(\x)$ is
		{\color{blue}DR}-submodular,  $[\a,\b]\subseteq  \X$ 
	}
	{$\x^0 \leftarrow \a$,
		$\y^0 \leftarrow \b$\;}
	\For{$k = 1 \rightarrow n$}{
		{ let $v_k$  be the coordinate being operated\;}
		{find
			$ u_a$ \text{ such that }
			$f(\sete{x^{k-1}}{\ele_k}{u_a}) \geq \max_{u'}
			f(\sete{x^{k-1}}{\ele_k}{ u'}) -
			\frac{\delta}{n}$, 
			
			$\delta_a \leftarrow f (\sete{x^{k-1}}{\ele_k}{u_a}) -
			f(\x^{k-1}$)\label{1d_1} \;}

		{find $u_b$ \text{ such that }
			$f(\sete{y^{k-1}}{\ele_k}{u_b})\geq \max_{u'}
			f(\sete{y^{k-1}}{\ele_k}{u'}) -
			\frac{\delta}{n}$, 
			
			$\delta_b \leftarrow f (\sete{y^{k-1}}{\ele_k}{u_b}) -
			f(\y^{k-1})$\label{1d_2} \;}

		{$\x^k \leftarrow \sete{x^{k-1}}{\ele_k}{(
				\frac{\delta_a}{\delta_a + \delta_b}   u_a +
				\frac{\delta_b}{\delta_a + \delta_b}
				u_b)}$\tcp*{update $\ele_k^\text{th}$ coordinate  to be a
				\emph{\color{blue}convex} combination of $u_a$ \& 
				$u_b$}} 
		
		{$\y^k \leftarrow \sete{y^{k-1}}{\ele_k}{(
				\frac{\delta_a}{\delta_a + \delta_b}   u_a +
				\frac{\delta_b}{\delta_a + \delta_b}   u_b)}$\;} 
		
	}
	\KwOut{$\x^n$ or $\y^n \; (\x^n = \y^n)$ }
\end{algorithm}

The pseudocode of \algname{DR-DoubleGreedy} is summarized in
\cref{alg_cont_doublegreedy}. It describes a one-epoch algorithm, sweeping
over the $n$ coordinates in one pass.
Like the previous Double Greedy algorithms, the procedure maintains
two solutions $\x, \y$, that are initialized as the lower bound $\a$
and the  upper bound $\b$, respectively.  In iteration $k$, it operates on
coordinate $\ele_k$, and solves the two 1-D subproblems
$\max_{u'} f(\sete{x^{k-1}}{\ele_k}{ u'})$ and
$\max_{u'} f(\sete{y^{k-1}}{\ele_k}{u'})$, based on $\x^\pare{k-1}$
and $ \y^\pare{k-1}$, respectively. It also allows   solving 1-D subproblems approximately with 
additive error $\delta \geq 0$ ($\delta =0$ recovers the error-free case).
Let $u_a$ and $u_b$ be the solutions of these 1-D subproblems.

Unlike previous Double Greedy algorithms, we change coordinate
$\ele_k$ of $\x^\pare{k-1}$ and $ \y^\pare{k-1}$ to be a \emph{convex}
combination of $u_a$ and $u_b$, weighted by respective gains
$\delta_a$, $\delta_b$. This convex combination is the key step that
utilizes the DR property of $f$, and it also plays a crucial role in
the proof.

Note that the 1-D subproblem has a closed-form solution for
many specific problem instances. For example, for 
ELBO in \cref{opt_problem_meanfield} (and similarly for
PA-ELBO in  \cref{pa_elbo}).  For coordinate $i$, the partial
derivative of the multilinear extension is $\nabla_i\multi(\x)$, and for
the entropy term, it is $\nabla H(x_i) = \log \frac{1-x_i}{x_i}$. Then $x_i$
should be updated as
$x_i \leftarrow \sigma(\nabla_i \multi(\x)) = \bigl(1+ \exp(-
\nabla_i \multi(\x)\bigr)^{-1}$,
where $\sigma$ is the logistic sigmoid function.
\begin{restatable}[]{theorem}{restatheoremone}
	\label{thm_cont_doublegreedy}
	Assume the optimal solution of  $\max_{\x \in [\a, \b]}f(\x)$  is $\optcont$, then for
	\cref{alg_cont_doublegreedy}    it holds,
	\begin{flalign}
	f(\x^n) \geq \frac{1}{2}f(\optcont) + \frac{1}{4}[f(\a) + f(\b)] - \frac{5\delta}{4}.
	\end{flalign}
\end{restatable}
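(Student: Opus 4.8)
The plan is to adapt the analysis of the randomized double-greedy algorithm of \citet{buchbinder2012tight} to the continuous DR-submodular setting, treating the convex-combination update of \cref{alg_cont_doublegreedy} as a deterministic surrogate for randomization. Throughout, let $\mathbf{o}^{k}$ denote the ``hybrid'' point that agrees with $\x^{k}$ $(=\y^{k})$ on the first $k$ processed coordinates and with $\optcont$ on the remaining ones, so $\mathbf{o}^{0}=\optcont$ and $\mathbf{o}^{n}=\x^{n}=\y^{n}$. First I would record the structural invariants of the algorithm: since coordinate $\ele_k$ of both $\x$ and $\y$ is set to the same value $t_k:=\tfrac{\delta_a}{\delta_a+\delta_b}u_a+\tfrac{\delta_b}{\delta_a+\delta_b}u_b\in[a_{\ele_k},b_{\ele_k}]$, we keep $\x^{k}\leqco \mathbf{o}^{k}\leqco \y^{k}$ and $\x^{k},\y^{k}$ equal on the first $k$ coordinates. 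By \cref{lemma_dr} the one-dimensional restrictions $g_x(u):=f(\sete{x^{k-1}}{\ele_k}{u})$, $g_o(u):=f(\sete{o^{k-1}}{\ele_k}{u})$ and $g_y(u):=f(\sete{y^{k-1}}{\ele_k}{u})$ are concave; since along coordinate $\ele_k$ the three base points differ only on coordinates processed after $\ele_k$, the weak-DR inequality (\cref{lemma_support_dr,lemma_weak_antitone}) forces their marginal returns to be ordered, whence the maximizers obey $u_a\ge \arg\max g_o\ge u_b$. I would also note $\delta_a+\delta_b\ge -2\delta/n$, because each $1$-D subproblem may return the current coordinate value (gain $0$), so $\delta_a,\delta_b\ge -\delta/n$.

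The heart of the argument is the per-iteration inequality
\begin{align}\label{eq:drdg_perstep}
f(\mathbf{o}^{k-1})-f(\mathbf{o}^{k}) \;\le\; \tfrac12\bigl[(f(\x^{k})-f(\x^{k-1}))+(f(\y^{k})-f(\y^{k-1}))\bigr]+\tfrac{5\delta}{2n}.
\end{align}
To establish it I would first pass to the randomized double-greedy step (set coordinate $\ele_k$ to $u_a$ with probability $\tfrac{\delta_a^{+}}{\delta_a^{+}+\delta_b^{+}}$ and to $u_b$ otherwise) and prove \eqref{eq:drdg_perstep} in expectation for that step by the usual case split on the position of $o_{\ele_k}$ relative to $u_a,u_b$; each case bounds the loss $f(\mathbf{o}^{k-1})-f(\mathbf{o}^{k})$ using coordinate-wise concavity of $g_o$ together with submodularity (the weak-DR inequality applied to pairs of points sharing all coordinates but $\ele_k$), plus $\delta_a+\delta_b\ge -2\delta/n$. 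Then I would de-randomize: concavity of $g_x$ gives $g_x(t_k)\ge \tfrac{\delta_a}{\delta_a+\delta_b}g_x(u_a)+\tfrac{\delta_b}{\delta_a+\delta_b}g_x(u_b)$, which is exactly the randomized expectation of $f(\x^{k})$; the same holds for $f(\y^{k})$ and for $f(\mathbf{o}^{k})$ (using concavity of $g_o$ and that the loss is convex in the new coordinate value), so the deterministic update satisfies \eqref{eq:drdg_perstep} as well.

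Finally I would telescope. Summing \eqref{eq:drdg_perstep} over $k=1,\dots,n$ and using $\mathbf{o}^{0}=\optcont$, $\mathbf{o}^{n}=\x^{n}=\y^{n}$, $\x^{0}=\a$, $\y^{0}=\b$ gives
\begin{align}\label{eq:drdg_telescope}
f(\optcont)-f(\x^{n})\;\le\;\tfrac12\bigl[(f(\x^{n})-f(\a))+(f(\x^{n})-f(\b))\bigr]+\tfrac{5\delta}{2},
\end{align}
and rearranging yields $f(\x^{n})\ge \tfrac12 f(\optcont)+\tfrac14[f(\a)+f(\b)]-\tfrac{5\delta}{4}$, as claimed.

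The main obstacle is the per-iteration inequality \eqref{eq:drdg_perstep}. In the discrete double-greedy argument OPT's coordinate is binary, but here $o_{\ele_k}$ is an arbitrary real value in $[a_{\ele_k},b_{\ele_k}]$ that need not equal $u_a$ or $u_b$, so the loss term has to be controlled by a genuinely one-dimensional concavity estimate — anchored by the ordering $u_a\ge \arg\max g_o\ge u_b$ — that must be interleaved correctly with the submodularity inequalities and carried through the additive-error bookkeeping; pinning the per-step error constant down to exactly $5\delta/(2n)$ is the delicate part, as is verifying that the convex-combination (rather than randomized) update never loses more than the randomized analysis predicts.
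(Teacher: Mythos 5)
Your proposal is correct and follows essentially the same route as the paper's proof: the same intermediate point $\intermed^{(k)}=(\optcont\vee\x^k)\wedge\y^k$, the same per-iteration lemma bounding the loss $f(\intermed^{(k-1)})-f(\intermed^{(k)})$ by half the combined gain of $\x$ and $\y$ plus $2.5\delta/n$, and the same telescoping; the needed ingredients (coordinate-wise concavity, the DR comparison of $\intermed^{(k-1)}$ against $\x^{(k-1)}$ and $\y^{(k-1)}$, the selection rule, and the ``flip'' bound $f(\sete{x^{(k-1)}}{\ele_k}{u_b})-f(\x^{(k-1)})\geq-\delta/n$) all appear in your sketch. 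Your detour through a randomized step followed by derandomization via Jensen is only a presentational variant --- the paper's proof directly applies the concavity split $g(t_k)\geq r_a g(u_a)+r_b g(u_b)$ inline, which is exactly your derandomization step.
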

\paragraph{Proof Sketch of \cref{thm_cont_doublegreedy}.}
The high level 
proof strategy is to bound the change of an intermediate variable
$\intermed^\pare{k} := (\optcont \vee \x^k) \wedge \y^k$ through the
course of \cref{alg_cont_doublegreedy}, which is the common framework
in the analysis of all existing Double Greedy variants
\citep{buchbinder2012tight,gottschalk2015submodular,bian2017guaranteed,soma2017non}
\footnote{Note
that \citet{buchbinder2012tight} analyzed in the  appendix a Double
Greedy variant (Alg. 4 therein) for maximizing the multilinear
extension of a submodular \textit{set} function, which is a special
case of continuous DR-submodular functions.
However, that variant 
cannot be applied for the general DR-submodular 
objective in \labelcref{problem_box}; Furthermore,  the analysis 
for that variant  is not applicable nor generalizable   for  \labelcref{problem_box}, since it only shows the guarantee wrt. the optimal solution 
that must be {binary}. While 
the optimal solution to \labelcref{problem_box} could be any fractional point
in $[\a,\b]$.}. 
The novelty of our method results from the update of $\x$, $\y$, which
plays a key role in achieving the optimal $1/2$ approximation ratio.
Furthermore, in the analysis we find a way to utilize the DR property
directly, resulting in a succinct proof.

\subsection{Comparision with 
Algorithm of \citet{niazadeh2018optimal}}

Along with the development of our work\footnote{Our work was released
  earlier than \citet{niazadeh2018optimal}.},
\citet{niazadeh2018optimal} proposed an optimal algorithm for
DR-submodular maximization, which is based on a zero-sum game
analysis. Their algorithm (Algorithm 4 in \citet{niazadeh2018optimal},
termed \algname{BSCB}: Binary-Search Continuous Bi-greedy) needs to
estimate the partial derivative of the objective, which is not needed
in our algorithm.  Furthermore, our algorithm is arguably easier to
interpret and to implement than \algname{BSCB}.
We have performed extensive experiments (see \cref{sec_exp} for
details on experimental statistics) to compare them; the results show
that both algorithms generate promising solutions, however, our
algorithm produces better solutions than \algname{BSCB} in most of the
experiments.

For further comparison of these two algorithms, we 
provide a simple example to  show that
DR-DoubleGreedy behaves 
very different from BSCB.
Consider the 2-D DR-submodular quadratic program: 
\begin{align}
  f(\x) = 0.5\x^\trans \BH \x + \bh^\trans \x, \BH = [-1, -1; -1, -2],
  \bh = [0.5; 1]. 
\end{align}

Define
$g([x_1; x_2]) := \frac{\partial f(\x)}{\partial x_1} = - x_1 - x_2 +
0.5$.  Consider the box-constrained DR-submodular maximization problem:

\begin{align}
  \max f(\x), \zero \leqco \x \leqco \one.
\end{align}

Starting with coordinate 1,

- For BSCB: $g([z; 0]) = -z + 0.5$, $g([z; 1]) = -z - 0.5$. In order to find the equilibrium, we set $g([z; 0]) * (1-z) + g([z; 1])* z$ to be 0, which amounts to $-2z + 0.5 = 0$, so $z = 1/4$.

- For \drdg:

Solving 1-D subproblem $u_a = \argmax_{x_1} f([x_1; 0])$ one gets $u_a = 0.5$, and $\delta_a = f([0.5; 0]) - f([0; 0]) = 1/8$.
Solving 1-D subproblem $u_b = \argmax_{x_1} f([x_1; 1])$ one gets $u_b = 0$, and $\delta_b = f([0; 1]) - f([1;1]) = 1$.
So $u = (1/9) * u_a + (8/9) * u_b = 1/18$. 

Note that every step is in closed form in the above derivation.

\section{Experiments on Box Constrained Submodular Maximization}

We show experimental results of the \submodulardg algorithm in this section, and leave the results of \drdg to \cref{chapter_mean_field}.
We experimented with the problem of revenue maximization with
continuous assignments (see \cref{app_revenue_max_alternative} for details), which is a continuous submodular objective.
Without loss of generality, we considered maximizing the revenue from
selling one product (corresponding to $q=1$, see \cref{supp_revenue}
for more details on this model).  It can be observed that the
objective in \cref{eq_re} is generally non-smooth and
\textit{discontinuous} at any point $\x$ which contains the element of
$0$. Since the subdifferential can be empty, we cannot use the
subgradient-based method and could not compare with
\pga.

We considered the following baselines: a) \algname{Random}: uniformly
sample $k_s$ solutions from the constraint set using the hit-and-run
sampler \citep{kroese2013handbook}, and select the best one.  b)
\algname{SingleGreedy}: for non-monotone submodular functions
maximization over a box constraint, we greedily increase each
coordinate, as long as it remains feasible. This approach is similar
to the coordinate ascent method.  In all of the experiments, we use
random order of coordinates for \submodulardg.

\setkeys{Gin}{width=0.8\textwidth}
\begin{figure}[htbp]
  \centering \subfloat[$\alpha =\beta = \gamma=
  10$ \label{fig_revenue2}]{
    \includegraphics[]{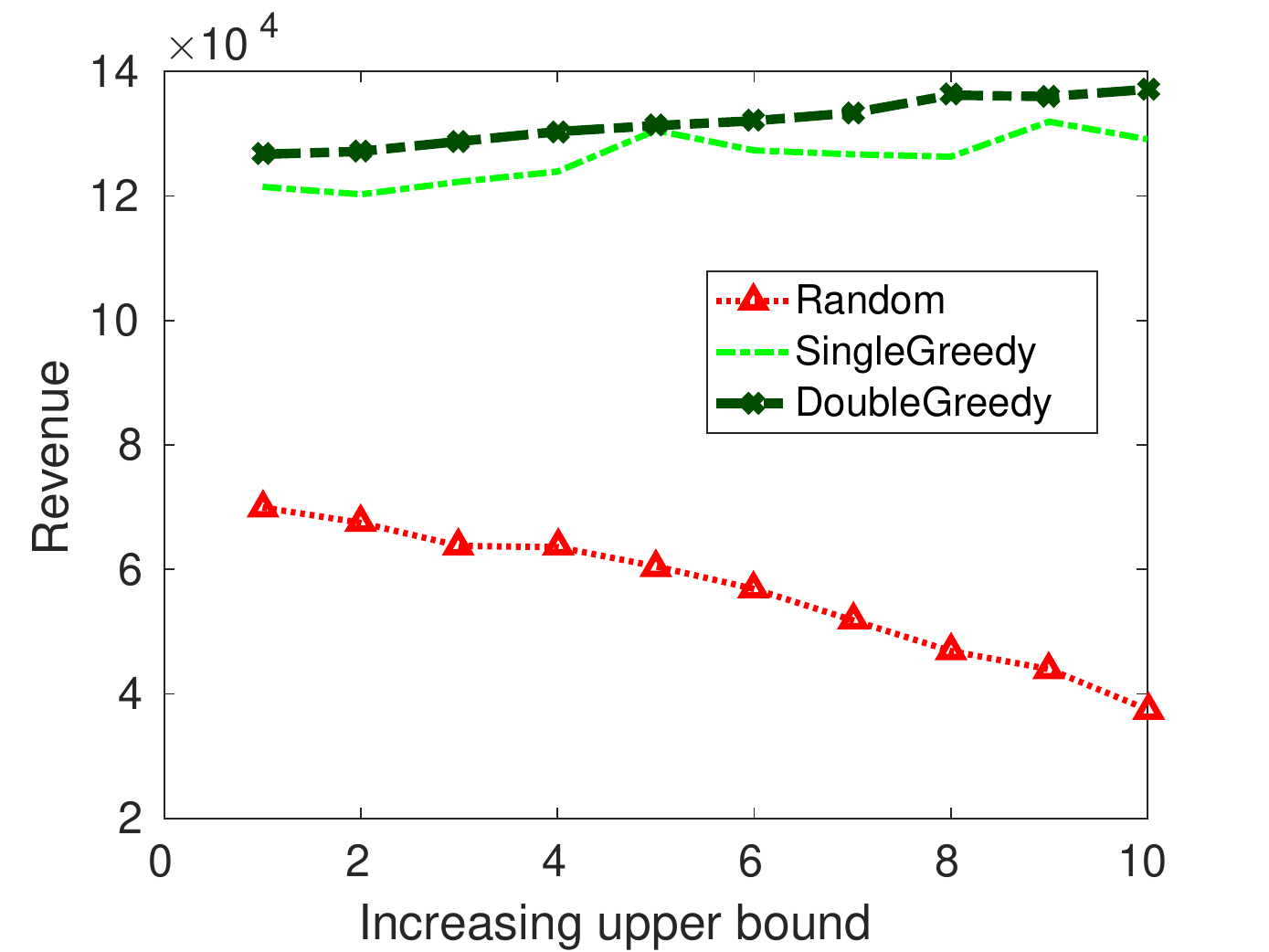}
  }\\
  \subfloat[$ \alpha =10, \beta = 5, \gamma =
  10$ \label{fig_revenue3}]{
    \includegraphics[]{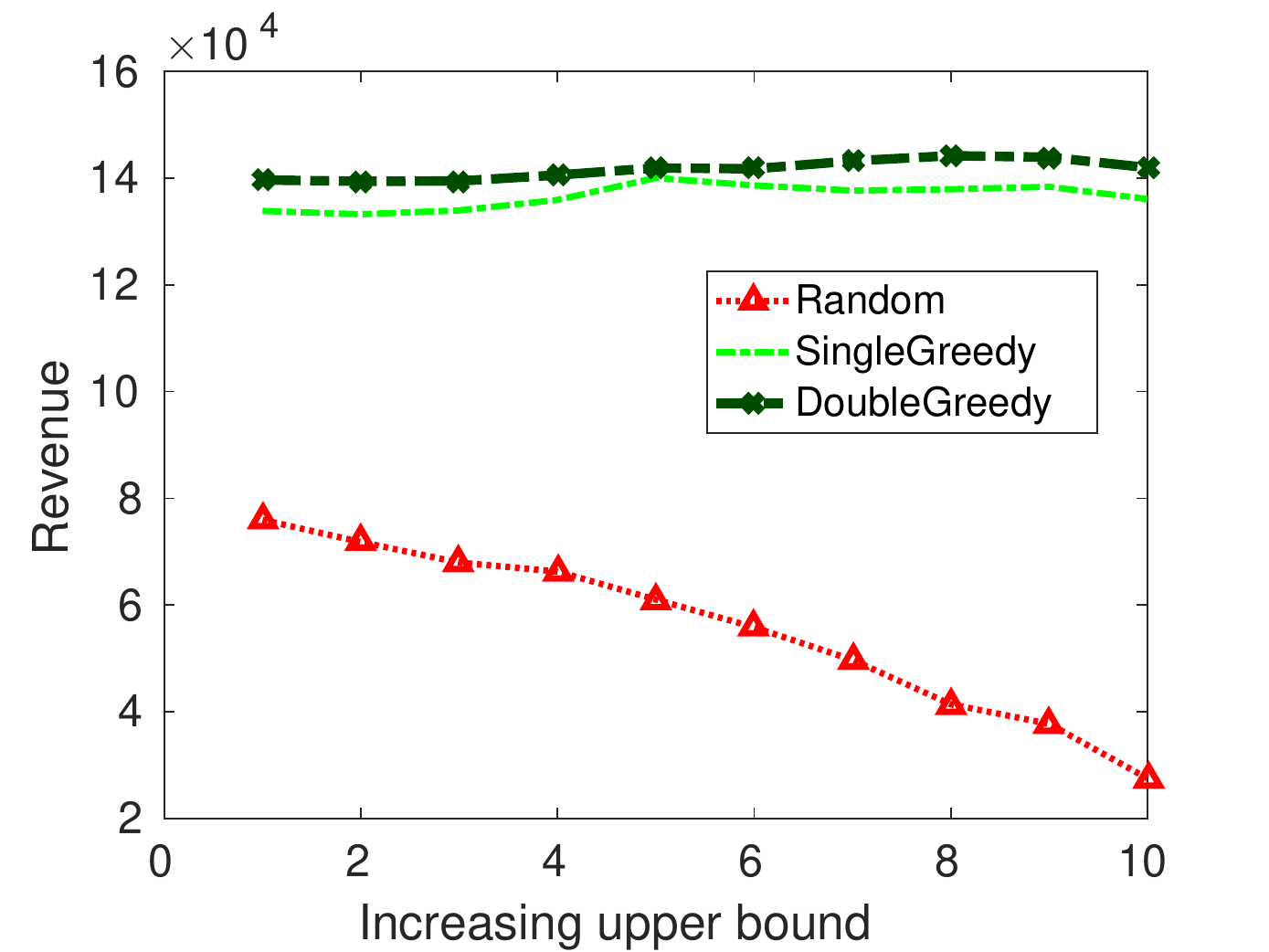}
  }
  \caption{Returned revenues for different experimental settings. In the legend,
    \algname{DoubleGreedy} means \submodulardg.  a, b) Revenue
    returned with different upper bounds on the Youtube social network
    dataset.}
  \label{haha}
\end{figure}

We performed our experiments on the top 500 largest communities of the
YouTube social
network\footnote{\url{http://snap.stanford.edu/data/com-Youtube.html}}
consisting of 39,841 nodes and 224,235 edges. The edge weights were
assigned according to a uniform distribution $U(0, 1)$.  See
\cref{fig_revenue2} and \cref{fig_revenue3} for an illustration of
revenue for varying upper bound ($\bar \bu$) and different
combinations of the parameters $(\alpha, \beta, \gamma)$ in the model
(\cref{eq_re}).  For different values of the upper bound,
\submodulardg outperforms the other baselines, while
\algname{SingleGreedy} maintaining only one intermediate solution
obtained a lower utility than \submodulardg.

\section{Conclusions}

In this chapter we have studied the problem of box-constrained
submodular maximization over continuous domains. We started by
presenting the inapproximability result by a reduction from the
problem of non-monotone submodular set function maximization. Then we 
proposed two algorithms: \submodulardg for maximizing continuous
submodular functions with a 1/3 approximation guarantee, and \drdg for
maximizing continuous DR-submodular functions with the tight 1/2
approximation guarantee. Finally we validated efficiency of our
algorithm on the revenue maximization problem.

\section{Additional Proofs}

\subsection{Proof of  \cref{prop_np2}}

\begin{proof}[Proof of  \cref{prop_np2}]
	
  The main proof follows the reduction of the problem to maximize an
  unconstrained non-monotone submodular set function.
	
  Let us denote $\Pi_1$ as the problem of maximizing an unconstrained
  non-monotone submodular set function, and $\Pi_2$ as the problem of
  maximizing a box constrained non-monotone continuous submodular
  function.
  Following the Appendix A of \citet{buchbinder2012tight}, there exist
  an algorithm $\A$ for $\Pi_1$ that consists of a polynomial time
  computation in addition to a polynomial number of subroutine calls to
  an algorithm for $\Pi_2$. For details see bellow.
	
  Given a submodular set function $F: 2^{\groundset}\rightarrow \R_+$,
  its multilinear extension \citep{calinescu2007maximizing} is a
  function $f: [0,1]^\groundset \rightarrow \R_+$, whose value at a
  point $\x\in [0,1]^\groundset$ is the expected value of $F$ over a
  random subset $R(\x)\subseteq \groundset$, where $R(\x)$ contains
  each element $e\in \groundset$ independently with probability $x_e$.
  Formally,
  $f(\x):= \mathbb{E} [R(\x)] = \sum_{S\subseteq \groundset} F(S)
  \prod_{e\in S} x_e\prod_{e'\notin S}(1-x_{e'})$.
  It can be easily seen that $f(\x)$ is a non-monotone continuous
  submodular function.

  Then algorithm $\A$ can be: 1) Maximize the multilinear extension
  $f(\x)$ over the box constraint $[0, 1]^\groundset$, which can be
  achieved by solving an instance of $\Pi_2$. Obtain the fractional
  solution $\hat \x\in [0, 1]^n$; 2) Return the random set
  $R(\hat \x)$. According to the definition of multilinear extension,
  the expected value of $F(R(\hat \x))$ is $f(\hat \x)$.  Thus proving
  the reduction from $\Pi_1$ to $\Pi_2$.
	
  Given the reduction, the hardness result follows from the hardness
  of unconstrained non-monotone submodular set function maximization.
	
  The inapproximability result comes from that of the unconstrained
  non-monotone submodular set function maximization in
  \citet{feige2011maximizing} and \citet{dobzinski2012query}.
\end{proof}

\subsection{Proof of \cref{thm_double}}
\label{supp_double}

On a high level, the proof follows the proof idea of the
\algname{DoubleGreedy} algorithm for bounded integer lattice in
\citet{gottschalk2015submodular}.

To better illustrate the proof, we reformulate
\cref{alg_uscfmax_DoubleGreedy} into its equivalent form in
\cref{alg_uscfmax_DoubleGreedy_a}, where we split the update into two
steps: when $\delta_a\geq \delta_b$, update $\x$ first while keeping
$\y$ fixed, and then update $\y$ first while keeping $\x$ fixed
($\x^{i}\leftarrow (\sete{x^{i-1}}{e_i}{\hat u_a})$,
$\y^{i}\leftarrow \y^{i-1}$; $\x^{i+1}\leftarrow \x^{i}$,
$\y^{i+1}\leftarrow (\sete{y^{i}}{e_i}{\hat u_a})$ ), when
$\delta_a < \delta_b$, update $\y$ first.  This iteration index change
is only used to ease the analysis.

To prove the theorem, we first prove the following lemmas.

\begin{algorithm}[h]
  \caption{\submodulardg algorithm reformulation (for analysis
    only) \citep{bian2017guaranteed}}\label{alg_uscfmax_DoubleGreedy_a}
  \KwIn{$\max f(\x)$, $\x\in [\underline{\bu}, \bar \bu]$, $f$ is
    generally non-monotone, $f(\underline{\bu}) + f(\bar \bu)\geq 0$}
  {$\x^0 \leftarrow \underline{\bu}$, $\y^0 \leftarrow \bar \bu$\;}
  \For{$i = 1, 3, 5,\cdots, 2n-1$}{ {find $\hat u_a$ \text{ s.t. }
      $f(\sete{x^{i-1}}{e_i}{\hat u_a}) \geq
      \max_{u_a\in[\underline{u}_{e_i}, \bar u_{e_i}]}
      f(\sete{x^{i-1}}{e_i}{u_a}) - \delta$,
      $\delta_a \leftarrow f(\sete{x^{i-1}}{e_i}{\hat u_a}) -
      f(\x^{i-1})$
      \tcp*{$\delta\in [0, \bar \delta]$ is the additive error level.
      }} {find $\hat u_b$
      $\text{ s.t. } f(\sete{y^{i-1}}{e_i}{\hat u_b})\geq
      \max_{u_b\in[\underline{u}_{e_i}, \bar u_{e_i}]}
      f(\sete{y^{i-1}}{e_i}{u_b}) - \delta$,
      $\delta_b \leftarrow f(\sete{y^{i-1}}{e_i}{\hat u_b}) -
      f(\y^{i-1})$
      \;} \If{$\delta_a\geq \delta_b$}{
      {$\x^{i}\leftarrow (\sete{x^{i-1}}{e_i}{\hat u_a})$,
        $\y^{i}\leftarrow \y^{i-1}$ \;} {$\x^{i+1}\leftarrow \x^{i}$,
        $\y^{i+1}\leftarrow (\sete{y^{i}}{e_i}{\hat u_a})$ \;}} \Else{
      {$\y^{i}\leftarrow (\sete{y^{i-1}}{e_i}{\hat u_b})$,
        $\x^{i}\leftarrow \x^{i-1}$\;} {$\y^{i+1}\leftarrow \y^{i}$,
        $\x^{i+1}\leftarrow (\sete{x^{i}}{e_i}{\hat u_b})$\;}} }
  {Return $\x^{2n}$ (or $\y^{2n})$ \tcp*{note that
      $\x^{2n} = \y^{2n}$}}
\end{algorithm}

Lemma \ref{lemma_42} is used to demonstrate that the objective value
of each intermediate solution is non-decreasing,
\begin{lemma}\label{lemma_42}
  $\forall i = 1, 2,..., 2n$, one has,
  \begin{flalign}
    f(\x^i) \geq f(\x^{i-1}) -\delta, \;\; f(\y^i) \geq
    f(\y^{i-1})-\delta.
  \end{flalign}
\end{lemma}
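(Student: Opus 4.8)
The plan is to prove \cref{lemma_42} by a short induction on $i$ that I would run in parallel with a structural invariant on the two particles. At the start of the round that operates on coordinate $e_i$ (recall $i$ is odd in \cref{alg_uscfmax_DoubleGreedy_a}) I claim that $\x^{i-1}\leqco \y^{i-1}$ and, moreover, $x^{i-1}_{e_i}=\underline{u}_{e_i}$ while $y^{i-1}_{e_i}=\bar{u}_{e_i}$. This invariant is immediate from the update rule: both particles start at $\underline{\bu}$ and $\bar{\bu}$, they only ever touch a coordinate that has not been processed before, and on that coordinate they are set to one common value; hence on processed coordinates $\x$ and $\y$ coincide, and on not-yet-processed coordinates $\x$ still sits at the lower bound $\underline{\bu}$ and $\y$ at the upper bound $\bar{\bu}$.

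The easy half of each round requires essentially no work. When $\delta_a\geq\delta_b$, the first half-step changes only $\x$ and leaves $\y$ fixed, and the second half-step changes only $\y$ with $\x$ fixed; two of the four inequalities are therefore trivial equalities. For $f(\x^i)=f(\x^{i-1})+\delta_a$ I use that the 1-D subproblem defining $\hat u_a$ admits the feasible choice of leaving $x^{i-1}_{e_i}$ unchanged, so its optimal value is $\geq f(\x^{i-1})$; combined with the $\delta$-near-optimality of $\hat u_a$ this yields $\delta_a\geq -\delta$, hence $f(\x^i)\geq f(\x^{i-1})-\delta$. The case $\delta_a<\delta_b$ is the mirror image and gives $\delta_b\geq -\delta$.

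The only genuinely nontrivial point in each branch is the ``cross'' update: when $\delta_a\geq\delta_b$, showing $f(\sete{y^{i-1}}{e_i}{\hat u_a})\geq f(\y^{i-1})-\delta$, and symmetrically when $\delta_a<\delta_b$, showing $f(\sete{x^{i-1}}{e_i}{\hat u_b})\geq f(\x^{i-1})-\delta$. I would handle this with submodularity in its weak-DR form (\cref{lemma_support_dr}): because $\x^{i-1}\leqco\y^{i-1}$ and the two points are made to agree on coordinate $e_i$ once its value is fixed, the map $u\mapsto f(\sete{x^{i-1}}{e_i}{u}) - f(\sete{y^{i-1}}{e_i}{u})$ is non-decreasing on $[\underline{u}_{e_i},\bar{u}_{e_i}]$. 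Write $g(u):=f(\sete{x^{i-1}}{e_i}{u})$ and $h(u):=f(\sete{y^{i-1}}{e_i}{u})$. In the first branch $\hat u_a\leq\bar{u}_{e_i}=y^{i-1}_{e_i}$, so the monotonicity of $g-h$ gives $h(\hat u_a)-h(\bar{u}_{e_i})\geq g(\hat u_a)-g(\bar{u}_{e_i})$; since $\hat u_a$ nearly maximizes $g$ we have $g(\hat u_a)\geq g(\bar{u}_{e_i})-\delta$, and therefore $f(\sete{y^{i-1}}{e_i}{\hat u_a})-f(\y^{i-1})=h(\hat u_a)-h(\bar{u}_{e_i})\geq -\delta$. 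In the second branch $\hat u_b\geq\underline{u}_{e_i}=x^{i-1}_{e_i}$, so monotonicity gives $g(\hat u_b)-g(\underline{u}_{e_i})\geq h(\hat u_b)-h(\underline{u}_{e_i})$, and since $\hat u_b$ nearly maximizes $h$ we obtain the same $-\delta$ bound for $\x$. Collecting these four inequalities over all $i$ closes the induction.

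I expect the main obstacle to be careful bookkeeping rather than a conceptual hurdle: one must keep straight which particle plays the ``small-context'' role on the coordinate currently processed so that the weak-DR inequality points in the right direction, and one must exploit that $\hat u_a,\hat u_b$ always lie inside the box $[\underline{u}_{e_i},\bar{u}_{e_i}]$, which is precisely what forces the two sub-cases $\hat u_a\leq\bar{u}_{e_i}$ and $\hat u_b\geq\underline{u}_{e_i}$ and hence the correct sign. With \cref{lemma_42} established, the remaining work toward \cref{thm_double} is the standard Double-Greedy accounting of the loss against $f(\optcont)$ incurred between consecutive half-steps.
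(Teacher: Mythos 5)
Your proof is correct and takes essentially the same route as the paper's: the key ``cross'' inequality you obtain from monotonicity of $g-h$ is precisely the submodularity inequality $f(\sete{y^{i-1}}{e_i}{\hat u_a}) + f(\sete{x^{i-1}}{e_i}{\bar u_{e_i}}) \geq f(\sete{x^{i-1}}{e_i}{\hat u_a}) + f(\y^{i-1})$ that the paper invokes, merely rephrased through the equivalent weak-DR characterization, and the remaining steps coincide. Your treatment of the ``own'' update is marginally more direct (you get $\delta_a \geq -\delta$ and $\delta_b \geq -\delta$ unconditionally from feasibility of leaving the coordinate unchanged, whereas the paper derives $\delta_a+\delta_b\geq -2\delta$ and uses the case condition), but this is a cosmetic difference.
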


\begin{proof}[Proof of Lemma \ref{lemma_42}]
  Let $j:= e_i$ be the coordinate that is going to be changed. From
  submodularity,
  \begin{flalign}
    f(\sete{x^{i-1}}{j}{\bar u_j}) +
    f(\sete{y^{i-1}}{j}{\underline{u}_j}) \geq f(\x^{i-1}) +
    f(\y^{i-1}).
  \end{flalign}
  So one can verify that $\delta_a + \delta_b \geq -2\delta$. Let us
  consider the following two situations:
	
  \textcolor{emp_color}{1)} If $\delta_a \geq \delta_b$, $\x$ is
  changed first.
	
  We can see that the Lemma holds for the first change (where
  $\x^{i-1}\rightarrow \x^i, \y^i = \y^{i-1}$).  For the second
  change, we are left to prove $f(\y^{i+1}) \geq f(\y^i) -\delta$.
  From submodularity:
  \begin{flalign}
    f(\sete{y^{i-1}}{j}{\hat u_a}) + f(\sete{x^{i-1}}{j}{\bar u_j})
    \geq f(\sete{x^{i-1}}{j}{ \hat u_a }) + f(\y^{i-1}).
  \end{flalign}
  Therefore,
  $f(\y^{i+1}) - f(\y^i) \geq f(\sete{x^{i-1}}{j}{\hat u_a}) -
  f(\sete{x^{i-1}}{j}{ \bar u_j}) \geq -\delta$,
  the last inequality comes from the selection rule of $\delta_a$ in
  the algorithm.
	
  \textcolor{emp_color}{2)} Otherwise, $\delta_a < \delta_b$, $\y$ is
  changed first.
	
  The Lemma holds for the first change
  ($\y^{i-1}\rightarrow \y^i, \x^i = \x^{i-1}$).  For the second
  change, we are left to prove $f(\x^{i+1}) \geq f(\x^i) -
  \delta$. From submodularity,
  \begin{flalign}
    f(\sete{x^{i-1}}{j}{\hat u_b}) +
    f(\sete{y^{i-1}}{j}{\underline{u}_j}) \geq
    f(\sete{y^{i-1}}{j}{\hat u_b}) + f(\x^{i-1}).
  \end{flalign}
  So
  $f(\x^{i+1}) - f(\x^i) \geq f(\sete{y^{i-1}}{j}{\hat u_b}) -
  f(\sete{y^{i-1}}{j}{\underline{u}_j}) \geq -\delta$,
  the last inequality also comes from the selection rule of
  $\delta_b$.
\end{proof}

Let $OPT^i := (\optcont\vee \x^i)\wedge \y^i$, it is easy to observe
that $OPT^0 = \optcont$ and $OPT^{2n} = \x^{2n} = \y^{2n}$.
\begin{lemma}\label{lemma_43}
  $\forall i = 1, 2,..., 2n$, it holds,
  \begin{flalign}
    f(OPT^{i-1}) - f(OPT^{i}) \leq f(\x^i) - f(\x^{i-1}) + f(\y^i) -
    f(\y^{i-1}) +2\delta.
  \end{flalign}
\end{lemma}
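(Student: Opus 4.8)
The plan is to establish the bound one sub-iteration at a time, in the style of the standard analyses of double-greedy procedures. The first observation to exploit is the structure of the reformulated \cref{alg_uscfmax_DoubleGreedy_a}: at each sub-iteration $i$ exactly one of the two solutions changes, and only in the single coordinate $j := e_i$, whose value in that solution was still at its initial level (namely $\underline{u}_j$ in $\x$, or $\bar u_j$ in $\y$) because coordinate $j$ had not been touched before. By the built-in symmetry I would treat the representative case $\delta_a \ge \delta_b$ with $i$ the sub-iteration that updates $\x$, so that $\x^i = \sete{x^{i-1}}{j}{\hat u_a}$, $\y^i = \y^{i-1}$, $f(\x^i) - f(\x^{i-1}) = \delta_a$ and $f(\y^i) - f(\y^{i-1}) = 0$; the other cases (updating $\y$, and $\delta_a < \delta_b$) are handled the same way after interchanging roles. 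The target then reads $f(OPT^{i-1}) - f(OPT^i) \le \delta_a + 2\delta$.

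Next I would describe exactly how $OPT^i := (\optcont \vee \x^i)\wedge \y^i$ differs from $OPT^{i-1}$. Since $\x^i$ and $\x^{i-1}$ agree off coordinate $j$ and $\y^i = \y^{i-1}$, the two points $OPT^{i-1}$ and $OPT^i$ agree off coordinate $j$; on coordinate $j$, using $\underline{u}_j \le \optcont_j \le \bar u_j$ and $\hat u_a \le \bar u_j$, they equal $\optcont_j$ and $\optcont_j \vee \hat u_a$ respectively. This gives two cases. If $\hat u_a \le \optcont_j$, then $OPT^i = OPT^{i-1}$ and the left-hand side is $0$; since the computation in the proof of \cref{lemma_42} shows $\delta_a + \delta_b \ge -2\delta$, and $\delta_a \ge \delta_b$ forces $\delta_a \ge -\delta$, we get $0 \le \delta_a + 2\delta$ as desired. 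If $\hat u_a > \optcont_j$, then $OPT^i$ is obtained from $OPT^{i-1}$ by raising its $j$-th coordinate from $\optcont_j$ to $\hat u_a$, and the whole content of the lemma is to control the induced drop $f(OPT^{i-1}) - f(OPT^i)$. Here the idea is to use the un-moved solution $\y^{i-1}$ as a pivot: because $OPT^{i-1}$ is dominated coordinate-wise by $\sete{y^{i-1}}{j}{\optcont_j}$ and agrees with it on coordinate $j$, the weak DR property (\cref{lemma_support_dr}) applied to the increment of coordinate $j$ by $\hat u_a - \optcont_j$ yields $f(OPT^{i-1}) - f(OPT^i) \le f(\sete{y^{i-1}}{j}{\optcont_j}) - f(\sete{y^{i-1}}{j}{\hat u_a})$. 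It then remains to bound this quantity by $\delta_a + 2\delta$, using (a) the $\delta$-near-optimality of $\hat u_b$ for the $\y$-side one-dimensional subproblem to replace $f(\sete{y^{i-1}}{j}{\optcont_j})$ by $f(\y^{i-1}) + \delta_b + \delta$, (b) the $\delta$-near-optimality of $\hat u_a$ for the $\x$-side subproblem together with weak DR transporting the relevant marginal between the $\x^{i-1}$- and $\y^{i-1}$-based contexts to lower-bound $f(\sete{y^{i-1}}{j}{\hat u_a})$, and (c) the greedy comparison $\delta_b \le \delta_a$ so that the loss is charged to the gain that was actually realized.

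The reduction to a single coordinate, the bookkeeping of which solution and which coordinate move at each sub-iteration, and the trivial case are routine; the real obstacle is the estimate in the second case. Two things there need care: getting the orientation of the submodularity / weak-DR inequalities right, so that the drop in $f(OPT)$ is genuinely transferred onto a difference evaluated at the un-moved solution rather than producing a lower bound (which the $0^{\text{th}}$-order inequality \labelcref{eq1} would yield if applied naively), and carrying out the additive-error accounting so that the two $\delta$'s coming from the two approximately solved one-dimensional subproblems combine into exactly the claimed $2\delta$, with $\delta_a \ge \delta_b$ used precisely to pass from the realized gain to the charged bound. Once \cref{lemma_43} is available, summing it over $i = 1, \dots, 2n$ telescopes — with $OPT^0 = \optcont$, $OPT^{2n} = \x^{2n} = \y^{2n}$, $\x^0 = \underline{\bu}$, $\y^0 = \bar \bu$ and the hypothesis $f(\underline{\bu}) + f(\bar \bu) \ge 0$ — to the $1/3$-guarantee of \cref{thm_double}, which is also why the per-sub-iteration slack is chosen to be $2\delta$.
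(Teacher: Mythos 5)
Your proposal is correct and follows essentially the same route as the paper's proof: the same intermediate point $OPT^i$, the same case split on whether the newly written value exceeds $\optcont_j$, the same key submodularity/weak-DR inequality pivoting on the un-moved solution $\sete{y^{i-1}}{j}{\cdot}$, and the same combination of the two $\delta$-near-optimality bounds with the greedy comparison $\delta_a \geq \delta_b$. The only differences are presentational — you argue by a direct chain of inequalities where the paper argues by contradiction, and you defer the ``second change'' sub-iterations to symmetry where the paper writes two of them out explicitly (they are indeed easier, since there $OPT^{i-1}$ and $OPT^i$ often coincide outright).
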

Before proving Lemma \ref{lemma_43}, let us get some intuition about
it.  We can see that when changing $i$ from 0 to $2n$, the objective
value changes from the optimal value $f(\optcont)$ to the value
returned by the algorithm: $f(\x^{2n})$.  \cref{lemma_43} is then used
to bound the objective loss from the assumed optimal objective in each
iteration.

\begin{proof} 
	
  Let $j:= e_i$ be the coordinate that will be changed.
	
  First of all, let us assume $\x$ is changed, $\y$ is kept unchanged
  ($\x^i \neq \x^{i-1}, \y^i = \y^{i-1}$), this could happen in four
  situations: \textcolor{emp_color}{1.1)} $x^i_j \leq \opti{j}$ and
  $\delta_a\geq \delta_b$; \textcolor{emp_color}{1.2)}
  $x^i_j \leq \opti{j}$ and $\delta_a < \delta_b$;
  \textcolor{emp_color}{2.1)} $x^i_j > \opti{j}$ and
  $\delta_a\geq \delta_b$; \textcolor{emp_color}{2.2)}
  $x^i_j > \opti{j}$ and $\delta_a < \delta_b$.
  Let us prove the four situations one by one.
	
  \textbf{If} $x^i_j \leq \opti{j}$, the Lemma holds in the following
  two situations:
	
  {\textcolor{emp_color}{\textbf{1.1})}} When
  $\delta_a \geq \delta_b$, it happens in the first change:
  $x^i_j = \hat u_a \leq \opti{j}$, so $OPT^i = OPT^{i-1}$; According
  to Lemma \ref{lemma_42}, $\delta_a + \delta_b \geq -2\delta$, so
  $f(\x^i) - f(\x^{i-1}) + f(\y^i) - f(\y^{i-1}) +2\delta \geq 0$, so
  the Lemma holds;

  \textcolor{emp_color}{\textbf{1.2})} When $\delta_a < \delta_b$, it
  happens in the second change:
  $x^i_j = \hat u_b \leq \opti{j}, y^i_j = y^{i-1}_j = \hat u_b$, and
  since $OPT^{i-1} = (\optcont\vee \x^{i-1})\wedge \y^{i-1}$, so
  $OPT^{i-1}_j =\hat u_b$ and $OPT^i_j =\hat u_b$, so one still has
  $OPT^i = OPT^{i-1}$. So it amouts to prove that
  $\delta_a + \delta_b \geq -2\delta$, which is true according to
  Lemma \ref{lemma_42}.
	
  \textbf{Else if} $x^i_j > \opti{j}$, it holds that
  $OPT^i_j = x^i_j$, all other coordinates of $OPT^{i-1}$ remain
  unchanged.  The Lemma holds in the following two situations:
	
  \textcolor{emp_color}{\textbf{2.1})} When $\delta_a \geq \delta_b$,
  it happens in the first change. One has
  $OPT^i_j = x^i_j = \hat u_a$, $x^{i-1}_j = \underline{u}_j$, so
  $OPT^{i-1}_j = \opti{j}$. And
  $x^i_j =\hat u_a > \opti{j}, y^{i-1}_j = \bar u_j$.  From
  submodularity,
  \begin{flalign}\label{eq_a}
    f(OPT^i) + f(\sete{y^{i-1}}{j}{\opti{j}}) \geq f(OPT^{i-1}) +
    f(\sete{y^{i-1}}{j}{\hat u_a}).
  \end{flalign}
  Suppose by virtue of contradiction that,
  \begin{flalign}\label{eq_b}
    f(OPT^{i-1}) - f(OPT^{i}) > f(\x^i) - f(\x^{i-1}) + 2\delta.
  \end{flalign}
  Summing \cref{eq_a,eq_b} we get:
  \begin{flalign}\label{eq_19}
    0 > f(\x^i) - f(\x^{i-1}) + \delta + f(\sete{y^{i-1}}{j}{\hat
      u_a}) - f(\sete{y^{i-1}}{j}{\opti{j}}) + \delta.
  \end{flalign}
  Because $\delta_a\geq \delta_b$ then from the selection rule of
  $\delta_b$,
  \begin{flalign}\label{eq_20}
    \delta_a = f(\x^i) - f(\x^{i-1}) \geq \delta_b \geq
    f(\sete{y^{i-1}}{j}{c}) - f(\y^{i-1}) - \delta, \forall
    \underline{u}_j \leq c \leq \bar u_j.
  \end{flalign}
  Setting $c = \opti{j}$ and substitite (\ref{eq_20}) into
  (\ref{eq_19}), one can get,
  \begin{flalign}
    0 > f(\sete{y^{i-1}}{j}{\hat u_a}) - f(y^{i-1}) +\delta =
    f(\y^{i+1}) - f(\y^{i})+\delta,
  \end{flalign}
  which contradicts with Lemma \ref{lemma_42}.
	
  \textcolor{emp_color}{\textbf{2.2})} When $\delta_a < \delta_b$, it
  happens in the second change.
  $y^{i-1}_j =\hat u_b, x^i_j = \hat u_b > \opti{j}, OPT^i_j =\hat
  u_b, OPT^{i-1}_j = \opti{j}$. From submodularity,
  \begin{flalign}\label{eq_a1}
    f(OPT^i) + f(\sete{y^{i-1}}{j}{\opti{j}}) \geq f(OPT^{i-1}) +
    f(\sete{y^{i-1}}{j}{\hat u_b}).
  \end{flalign}
  Suppose by virtue of contradiction that,
  \begin{flalign}\label{eq_b1}
    f(OPT^{i-1}) - f(OPT^{i}) > f(\x^i) - f(\x^{i-1}) + 2\delta .
  \end{flalign}
  Summing Equations \ref{eq_a1} and \ref{eq_b1} we get:
  \begin{flalign}
    0 > f(\x^i) - f(\x^{i-1})+\delta + f(\sete{y^{i-1}}{j}{\hat u_b})
    - f(\sete{y^{i-1}}{j}{\opti{j}}) +\delta.
  \end{flalign}
  From Lemma \ref{lemma_42} we have
  $f(\x^i) - f(\x^{i-1}) +\delta \geq 0$, so
  $0 > f(\sete{y^{i-1}}{j}{\hat u_b}) - f(\sete{y^{i-1}}{j}{\opti{j}})
  +\delta$, which contradicts with the selection rule of $\delta_b$.
	
  The case when $\y$ is changed, $\x$ is kept unchanged is similar,
  the proof of which is omitted here.
\end{proof}

With Lemma \ref{lemma_43} at hand, one can prove Theorem
\ref{thm_double}: Taking a sum over $i$ from 1 to $2n$, one can get,
\begin{flalign}\notag
  f(OPT^0) - f(OPT^{2n}) &\leq f(\x^{2n}) - f(\x^{0}) + f(\y^{2n}) -
  f(\y^{0}) + 4n \delta \\\notag & = f(\x^{2n}) + f(\y^{2n}) -
  (f(\underline{\bu}) + f(\bar \bu)) + 4n \delta\\ 
  &\leq
  f(\x^{2n}) + f(\y^{2n}) + 4n \delta.
\end{flalign}
Then it is easy to see that
$f(\x^{2n}) = f(\y^{2n}) \geq \frac{1}{3} f(\optcont) -
\frac{4n}{3}\delta$.

\subsection{Proof of \cref{obs_dr_submodular_max}}

\begin{proof}[Proof of \cref{obs_dr_submodular_max}]
  The proof is very similar to the that of \citep[Proposition
  5]{bian2017guaranteed}, so we just briefly explain here.  One
  observation is that the multilinear extension of a submodular set
  function is also continuous DR-submodular, so we can use the same
  reduction as in \citep[Proposition 5]{bian2017guaranteed} to prove
  the hardness results as above.
\end{proof}

\subsection{Detailed Proof of \cref{thm_cont_doublegreedy}}

\begin{proof}[Detailed Proof of \cref{thm_cont_doublegreedy}]

  Define $\intermed^\pare{k} := (\optcont \vee \x^k) \wedge \y^k$. It
  is clear that $\intermed^\pare{0} = \optcont$ and
  $\intermed^\pare{n} = \x^\pare{n} = \y ^\pare{n}$.  One can notice
  that as \cref{alg_cont_doublegreedy} progresses,
  $\intermed^\pare{k}$ moves from $\optcont$ to $\x^n$ (or $\y^n$).

  Let $r_a = \frac{\delta_a}{\delta_a + \delta_b}, r_b = 1-r_a$,
  $u = r_a u_a + (1-r_a) u_b$.

  Firstly, using DR-submodularity, we prove that in each iteration, if
  we were to flip the 1-D subproblem solutions of $\x$ and $\y$, it
  still does not decrease the function value (in the error-free case
  $\delta=0$).
  \begin{restatable}[]{lemma}{restalemmaone}
    \label{lemma1}
    For all $k=1,...,n$, it holds that,
    \begin{align}
      f(\sete{x^\pare{k-1}}{\ele_k}{u_b}) - f(\x^\pare{k-1}) \geq -{\delta}/{n},  \\\notag 
      f(\sete{y^\pare{k-1}}{\ele_k}{u_a}) - f(\y^\pare{k-1} )  \geq -{\delta}/{n}.
    \end{align}
  \end{restatable}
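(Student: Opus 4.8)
The plan is to derive both inequalities of \cref{lemma1} directly from the DR property of $f$ (\cref{def_dr}) together with the near-optimality of the one-dimensional subproblem solutions $u_a,u_b$ computed in \cref{alg_cont_doublegreedy}. The only structural input needed is a simple invariant, which I would establish first by induction on $k$: for every $k\in[n]$ one has $\x^{k-1}\leqco \y^{k-1}$, and coordinate $\ele_k$ still carries its initial value, i.e.\ $x^{k-1}_{\ele_k}=a_{\ele_k}$ and $y^{k-1}_{\ele_k}=b_{\ele_k}$. Indeed $\x^0=\a\leqco\b=\y^0$; in iteration $j$ only coordinate $\ele_j$ of each iterate is modified, and it is set to the \emph{common} value $u$, a convex combination of $u_a,u_b\in[a_{\ele_j},b_{\ele_j}]$, which therefore lies in $[a_{\ele_j},b_{\ele_j}]$, so $\leqco$ is preserved; coordinates $\ele_k,\dots,\ele_n$ are untouched before iteration $k$. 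In particular $u_b\ge a_{\ele_k}=x^{k-1}_{\ele_k}$ and $u_a\le b_{\ele_k}=y^{k-1}_{\ele_k}$, the geometric fact that forces the two DR applications in opposite directions.

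For the first bound, I would apply \cref{def_dr} in coordinate $\ele_k$ with increment $u_b-a_{\ele_k}\ge 0$ to the ordered pair $\x^{k-1}\leqco \sete{y^{k-1}}{\ele_k}{a_{\ele_k}}$, whose two members agree in coordinate $\ele_k$. This gives $f(\sete{x^{k-1}}{\ele_k}{u_b})-f(\x^{k-1}) \ge f(\sete{y^{k-1}}{\ele_k}{u_b})-f(\sete{y^{k-1}}{\ele_k}{a_{\ele_k}})$; since $u_b$ is a $(\delta/n)$-approximate maximizer of $u'\mapsto f(\sete{y^{k-1}}{\ele_k}{u'})$ in \cref{alg_cont_doublegreedy}, taking the competitor $u'=a_{\ele_k}$ shows the right-hand side is at least $-\delta/n$, which is the claim. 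The second bound is symmetric: apply \cref{def_dr} in coordinate $\ele_k$ with increment $b_{\ele_k}-u_a\ge 0$ to the pair $\sete{x^{k-1}}{\ele_k}{u_a}\leqco \sete{y^{k-1}}{\ele_k}{u_a}$, obtaining $f(\sete{x^{k-1}}{\ele_k}{b_{\ele_k}})-f(\sete{x^{k-1}}{\ele_k}{u_a}) \ge f(\y^{k-1})-f(\sete{y^{k-1}}{\ele_k}{u_a})$; rearranging this and using that $u_a$ is a $(\delta/n)$-approximate maximizer of $u'\mapsto f(\sete{x^{k-1}}{\ele_k}{u'})$ (competitor $u'=b_{\ele_k}$) yields $f(\sete{y^{k-1}}{\ele_k}{u_a})-f(\y^{k-1})\ge -\delta/n$.

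I do not expect a genuine difficulty here; the one point requiring care is the bookkeeping — which two auxiliary points to compare, and in which direction to invoke \cref{def_dr}, so that the slack appearing in the DR inequality is exactly the near-optimality gap at the correct endpoint ($a_{\ele_k}$ for the first bound, $b_{\ele_k}$ for the second). This is precisely where the invariant $x^{k-1}_{\ele_k}=a_{\ele_k}$, $y^{k-1}_{\ele_k}=b_{\ele_k}$ is used: it makes both increments nonnegative and thereby legitimizes the (one-directional) DR inequality, so it should be recorded explicitly before the two line computations.
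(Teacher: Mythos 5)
Your proof is correct and follows essentially the same route as the paper's: both inequalities come from a single application of the DR property in coordinate $\ele_k$ between $\x^{k-1}$ and $\y^{k-1}$ (with the appropriate endpoint substituted), followed by the $\delta/n$-near-optimality of $u_b$ (competitor $a_{\ele_k}$) and of $u_a$ (competitor $b_{\ele_k}$). The only difference is that you record explicitly the invariant $x^{k-1}_{\ele_k}=a_{\ele_k}$, $y^{k-1}_{\ele_k}=b_{\ele_k}$, which the paper's proof uses implicitly; making it explicit is a harmless (and arguably welcome) addition.
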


  \begin{proof}[Proof of \cref{lemma1}]
	
    One can observe that $\x^\pare{k-1} \leqco \y^\pare{k-1}$, so from
    DR-submodularity:
    $f(\sete{x^\pare{k-1}}{\ele_k}{u_b}) - f(\x^\pare{k-1}) \geq
    f(\sete{y^\pare{k-1}}{\ele_k}{u_b}) -
    f(\sete{y^\pare{k-1}}{\ele_k}{a_{\ele_k}}) \geq
    -\frac{\delta}{n}$.

    Similarly, because of that $\x^\pare{k-1} \leqco \y^\pare{k-1}$ and
    $u_a \leq b_{\ele_k}$, from DR-submodularity:
    $f(\sete{y^\pare{k-1}}{\ele_k}{u_a}) - f(\y^\pare{k-1} ) \geq
    f(\sete{x^\pare{k-1}}{\ele_k}{u_a}) -
    f(\sete{x^\pare{k-1}}{\ele_k}{b_{\ele_k}} ) \geq -\frac{\delta}{n}
    $.
  \end{proof}

  Then using the new update rule and the DR property, we show that the
  loss on intermediate variables
  $f( \intermed^\pare{k-1} ) - f(\intermed^\pare{k})$ can be upper
  bounded by the increase of the objective value in $\x$ and $\y$
  times $1/2$.

  \begin{restatable}[]{lemma}{restalemmatwo}
    \label{lemma2}
    For all $k=1,...,n$, it holds that,
    \begin{align}
      &f( \intermed^\pare{k-1} )	 - f(\intermed^\pare{k}) \\\notag 
      \leq
      & \frac{1}{2}   \left[ f(\x^\pare{k}) -f(\x^\pare{k-1} ) + f(\y
	^\pare{k} ) - f(\y ^\pare{k-1})   \right]  + \frac{2.5 \delta }{n}. 
    \end{align}
  \end{restatable}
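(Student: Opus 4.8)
The plan is to follow the template of the randomized Double Greedy analysis of \citet{buchbinder2012tight}, with the explicit convex combination $u:=r_a u_a + r_b u_b$ playing the role of the expectation over the random choice there, and with the approximate $1$-D oracle contributing $O(\delta/n)$ slack. Write $i:=\ele_k$. First I would record two structural facts that drive the whole argument. (a) The invariant $\x^\pare{j}\leqco \y^\pare{j}$ holds throughout the run of \cref{alg_cont_doublegreedy}: $\x^\pare{j}$ and $\y^\pare{j}$ coincide on the coordinates already processed and equal $\a$, resp.\ $\b$, on the remaining ones; in particular $\x^\pare{k-1}\leqco \intermed^\pare{k-1}\leqco \y^\pare{k-1}$. (b) Since $a_i\le\optcont_i\le b_i$, $\x^\pare{k-1}_i=a_i$ and $\y^\pare{k-1}_i=b_i$, a coordinate-by-coordinate computation shows that $\intermed^\pare{k-1}$ and $\intermed^\pare{k}$ agree on every coordinate except $i$, where $\intermed^\pare{k-1}_i=\optcont_i$ and $\intermed^\pare{k}_i=u$.

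Next I would lower-bound the right-hand side. By coordinate-wise concavity of $f$ (\cref{lemma_dr}) we have $f(\x^\pare{k})=f(\sete{x^{k-1}}{i}{u})\ge r_a f(\sete{x^{k-1}}{i}{u_a})+r_b f(\sete{x^{k-1}}{i}{u_b})$; inserting the definition of $\delta_a$ and the estimate $f(\sete{x^{k-1}}{i}{u_b})-f(\x^\pare{k-1})\ge -\delta/n$ from \cref{lemma1} gives $f(\x^\pare{k})-f(\x^\pare{k-1})\ge r_a\delta_a - r_b\delta/n$, and symmetrically $f(\y^\pare{k})-f(\y^\pare{k-1})\ge r_b\delta_b - r_a\delta/n$. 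Summing and using $r_a+r_b=1$,
\begin{align*}
\tfrac12\bigl[(f(\x^\pare{k})-f(\x^\pare{k-1})) + (f(\y^\pare{k})-f(\y^\pare{k-1}))\bigr]\ \ge\ \frac{\delta_a^2+\delta_b^2}{2(\delta_a+\delta_b)} - \frac{\delta}{2n}.
\end{align*}

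The heart of the proof is to upper-bound the left-hand side $f(\intermed^\pare{k-1})-f(\intermed^\pare{k})$ by the same main term $\frac{\delta_a^2+\delta_b^2}{2(\delta_a+\delta_b)}$ up to $O(\delta/n)$. Only coordinate $i$ moves, so it suffices to study the concave univariate slice $\phi(t)$ obtained by varying coordinate $i$ of $\intermed^\pare{k-1}$, together with $\psi_x(t):=f(\sete{x^{k-1}}{i}{t})$ and $\psi_y(t):=f(\sete{y^{k-1}}{i}{t})$; by the DR property (which, unlike weak DR, lets the two base points differ in coordinate $i$) together with fact (a), every upward increment of $\psi_x$ dominates the corresponding increment of $\phi$, which dominates that of $\psi_y$. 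I would then split on the sign of $u-\optcont_i$. If $u\ge\optcont_i$, transferring the drop $\phi(\optcont_i)-\phi(u)$ down to the base $\y^\pare{k-1}$ via DR and then invoking (i) that $u_b$ approximately maximizes $\psi_y$ over $[a_i,b_i]$, (ii) concavity of $\psi_y$ at the combination $u$, and (iii) the DR comparison $\psi_y(b_i)-\psi_y(u_a)\le\psi_x(b_i)-\psi_x(u_a)\le\delta/n$, bounds the drop by $r_a\delta_b+O(\delta/n)$. The case $u<\optcont_i$ is symmetric (transfer up to $\x^\pare{k-1}$) and yields $r_b\delta_a+O(\delta/n)$. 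Since $r_a\delta_b=r_b\delta_a=\frac{\delta_a\delta_b}{\delta_a+\delta_b}$, in either case the drop is at most $\frac{\delta_a\delta_b}{\delta_a+\delta_b}+O(\delta/n)$, and the elementary bound $2\delta_a\delta_b\le\delta_a^2+\delta_b^2$ turns this into $\frac{\delta_a^2+\delta_b^2}{2(\delta_a+\delta_b)}+O(\delta/n)$. Combining with the right-hand-side estimate and summing the finitely many $\delta/n$ contributions (two from the approximate $1$-D maxima, two from \cref{lemma1}, and the $\delta/(2n)$ above) yields exactly the claimed slack $\tfrac{2.5\delta}{n}$. The main obstacle I anticipate is this last paragraph: arranging the DR transfers and concavity steps so that the drop of $\phi$ is pinned to $\frac{\delta_a\delta_b}{\delta_a+\delta_b}$ and not to something larger, and keeping the additive-error bookkeeping tight enough to hit $2.5\delta/n$; the degenerate case $\delta_a+\delta_b=0$ (and the possible slight negativity of $\delta_a,\delta_b$ under the approximate oracle) also needs a short separate treatment. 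Everything else is coordinate-wise concavity plus the AM--GM step.
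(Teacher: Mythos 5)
Your proposal follows essentially the same route as the paper's proof: the same lower bound on the right-hand side via coordinate-wise concavity and \cref{lemma1} giving $r_a\delta_a+r_b\delta_b-\delta/n=\frac{\delta_a^2+\delta_b^2}{\delta_a+\delta_b}-\frac{\delta}{n}$, the same case split on the sign of $u-x^\star_{\ele_k}$ with a DR transfer of the base point to $\y^\pare{k-1}$ (resp.\ $\x^\pare{k-1}$) followed by concavity and the selection rule to pin the drop at $\frac{\delta_a\delta_b}{\delta_a+\delta_b}+\frac{2\delta}{n}$, and the same AM--GM step to close. Your remark about the degenerate case $\delta_a+\delta_b=0$ and slightly negative $\delta_a,\delta_b$ under the approximate oracle is a legitimate point of care that the paper's proof does not explicitly address, but it does not change the argument.
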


  \begin{proof}[Proof of \cref{lemma2}]

    Step I:
	
    Let us try to lower bound the RHS of \cref{lemma2}.
	
    \begin{align}
      f(\x^\pare{k}) - f(\x^\pare{k-1}) & =
                                          f(\sete{x^\pare{k-1}}{\ele_k}{r_a
                                          u_a + r_b u_b}) -
                                          f(\x^\pare{k-1}) \\ \notag 
                                        &
                                          \overset{\textcircled{1}}{\geq}
                                          r_a
                                          f(\sete{x^\pare{k-1}}{\ele_k}{u_a})
                                          + r_b
                                          f(\sete{x^\pare{k-1}}{\ele_k}{u_b})
                                          - f(\x^\pare{k-1})\\\notag  
                                        & = r_a
                                          [f(\sete{x^\pare{k-1}}{\ele_k}{u_a})
                                          - f(\x^\pare{k-1})] + r_b
                                          [f(\sete{x^\pare{k-1}}{\ele_k}{u_b})
                                          - f(\x^\pare{k-1})]\\ \notag 
                                        &  \overset{\textcircled{2}}{\geq}  r_a \delta_a  - r_b \frac{\delta}{n},
    \end{align}  
    where \textcircled{1} is because of that $f$ is concave along one
    coordinate, \textcircled{2} is from \cref{lemma1}.

    Similarly,
    \begin{align}
      f(\y^\pare{k}) - f(\y^\pare{k-1}) & =
                                          f(\sete{y^\pare{k-1}}{\ele_k}{r_a
                                          u_a + r_b u_b}) -
                                          f(\y^\pare{k-1}) \\ \notag 
                                        & \geq  r_a
                                          f(\sete{y^\pare{k-1}}{\ele_k}{u_a})
                                          + r_b
                                          f(\sete{y^\pare{k-1}}{\ele_k}{u_b})
                                          - f(\y^\pare{k-1})\\\notag 
                                        & = r_a
                                          [f(\sete{y^\pare{k-1}}{\ele_k}{u_a})
                                          - f(\y^\pare{k-1})] + r_b
                                          [f(\sete{y^\pare{k-1}}{\ele_k}{u_b})
                                          - f(\y^\pare{k-1})]\\ \notag 
                                        & \geq - r_a \frac{\delta}{n} + r_b \delta_b. 
    \end{align}

    So it holds that
    \begin{align}\label{eq_step1}
      f(\x^\pare{k}) - f(\x^\pare{k-1}) +  f(\y^\pare{k}) -
      f(\y^\pare{k-1}) \geq r_a\delta_a + r_b\delta_b -
      \frac{\delta}{n} =  \frac{\delta_a^2 + \delta_b^2}{\delta_a +
      \delta_b} - \frac{\delta}{n}.  
    \end{align}

    Step II:
	
    Now let us upper bound the LHS of \cref{lemma2}.
	
    Notice that
    $\intermed^\pare{k-1} := (\optcont \vee \x^\pare{k-1}) \wedge
    \y^\pare{k-1}$.
    For $\intermed^\pare{k-1}$, its $\ele_k$-th coordinate is
    $x^\star_{\ele_k}$. From $\intermed^\pare{k-1}$ to
    $\intermed^\pare{k}$, its $\ele_k$-th coordinate changes to be
    $u$. So,
	
    \begin{align}
      f(\intermed^\pare{k-1}) - f(\intermed^\pare{k}) & = f(
                                                        \sete{\intermed^\pare{k-1}}{\ele_k}{x^\star_{\ele_k}}
                                                        )  - f(
                                                        \sete{\intermed^\pare{k-1}}{\ele_k}{u}
                                                        ) 
    \end{align}
    Let us consider the following two situations:
	
    \begin{enumerate}
    \item $x^\star_{\ele_k} \leq u$.
		
      In this case:
      \begin{align}
        & f(\intermed^\pare{k-1}) - f(\intermed^\pare{k})\\\notag
        & = f( \sete{\intermed^\pare{k-1}}{\ele_k}{x^\star_{\ele_k}} )
          - f( \sete{\intermed^\pare{k-1}}{\ele_k}{u} ) \\\notag  
        & \overset{\textcircled{3}}{\leq}   f(
          \sete{y^\pare{k-1}}{\ele_k}{x^\star_{\ele_k}} )  - f(
          \sete{y^ \pare{k-1}}{\ele_k}{u} )    \\  \notag 
        & =    f( \sete{y^\pare{k-1}}{\ele_k}{x^\star_{\ele_k}} )  -
          f( \sete{y^ \pare{k-1}}{\ele_k}{r_au_a + r_b u_b} )     \\
        \notag   
        &  \overset{\textcircled{4}}{\leq}  r_a [f(
          \sete{y^\pare{k-1}}{\ele_k}{x^\star_{\ele_k}} )  - f(
          \sete{y^ \pare{k-1}}{\ele_k}{u_a} )]  + r_b [f(
          \sete{y^\pare{k-1}}{\ele_k}{x^\star_{\ele_k}} )  - f(
          \sete{y^ \pare{k-1}}{\ele_k}{u_b} ) ] \\ \notag 
        &  \leq  r_a [f( \sete{y^\pare{k-1}}{\ele_k}{x^\star_{\ele_k}} )  - f( \sete{y^ \pare{k-1}}{\ele_k}{u_a} )] + r_b \frac{\delta}{n} \quad \text{(selection rule of \cref{alg_cont_doublegreedy})} \\\notag
        & \overset{\textcircled{5}}{\leq} r_a [f(
          \sete{y^\pare{k-1}}{\ele_k}{u_b} ) + \frac{\delta}{n}  - (f(
          \y^\pare{k-1})  - \frac{\delta}{n} )] + r_b \frac{\delta}{n}
        \\\notag 
        & \leq r_a \delta_b  + (2r_a + r_b)\frac{\delta}{n}, 	 
      \end{align}
      where \textcircled{3} is because
      $\intermed^\pare{k-1} \leq \y^\pare{k-1}$ and DR-submodularity
      of $f$, \textcircled{4} is from concavity of $f$ along one
      coordinate, \textcircled{5} is because of the selection rule of
      \cref{alg_cont_doublegreedy} and \cref{lemma1}.

    \item $x^\star_{\ele_k} > u$:
		
      In this case:
      \begin{align}
        & f(\intermed^\pare{k-1}) - f(\intermed^\pare{k})\\\notag
        & = f( \sete{\intermed^\pare{k-1}}{\ele_k}{x^\star_{\ele_k}} )
          - f( \sete{\intermed^\pare{k-1}}{\ele_k}{u} ) \\ \notag 
        & \leq   f( \sete{x^\pare{k-1}}{\ele_k}{x^\star_{\ele_k}} )  -
          f( \sete{x^ \pare{k-1}}{\ele_k}{u} )    \text{\quad
          ($\intermed^\pare{k-1}\geqco \x^\pare{k-1}$ \&
                                                       DR-submodularity)}   \\ \notag  
        & =    f( \sete{x^\pare{k-1}}{\ele_k}{x^\star_{\ele_k}} )  -
          f( \sete{x^ \pare{k-1}}{\ele_k}{r_au_a + r_b u_b} )     \\
        \notag   
        &  \leq  r_a [f( \sete{x^\pare{k-1}}{\ele_k}{x^\star_{\ele_k}}
          )  - f( \sete{x^ \pare{k-1}}{\ele_k}{u_a} )]  + r_b [f(
          \sete{x^\pare{k-1}}{\ele_k}{x^\star_{\ele_k}} )  - f(
          \sete{x^ \pare{k-1}}{\ele_k}{u_b} ) ] \\ \notag 
        &  \leq r_a \frac{\delta}{n} +  r_b [f(
          \sete{x^\pare{k-1}}{\ele_k}{x^\star_{\ele_k}} )  - f(
          \sete{x^ \pare{k-1}}{\ele_k}{u_b} ) ] \\\notag 
        &   \leq r_a \frac{\delta}{n} +   r_b  [  ( f(
          \sete{x^\pare{k-1}}{\ele_k}{u_a} ) + \frac{\delta}{n} )   -
          ( f(\x^\pare{k-1})  - \frac{\delta}{n} )  ] \\\notag 
        & = r_b \delta_a + ( 2r_b + r_a)\frac{\delta}{n} 	
      \end{align}

    \end{enumerate}
    We can conclude that in both the above cases, it holds that
    \begin{align}\label{eq_step2}
      f(\intermed^\pare{k-1}) - f(\intermed^\pare{k}) \leq
      \frac{\delta_a \delta_b}{\delta_a + \delta_b} +
      \frac{2\delta}{n}.
    \end{align}	
	
    Combining \cref{eq_step1} and \cref{eq_step2} we can get,
    \begin{align}
      \frac{1}{2} [f(\x^\pare{k}) - f(\x^\pare{k-1}) +  f(\y^\pare{k})
      - f(\y^\pare{k-1})] \geq f(\intermed^\pare{k-1}) -
      f(\intermed^\pare{k})   - \frac{2.5\delta}{n}.
    \end{align}
    Thus we reach \cref{lemma2}.
	
  \end{proof}

  Now we can finalize the proof. For \cref{lemma2}, let us sum for
  $k = 1,...,n$, we can get,
  \begin{align}
    f(\optcont) - f(\x^\pare{n}) \leq \frac{1}{2} [ f(\x^\pare{n}) -
    f(\a) + f(\y^\pare{n}) - f(\b)  ] + 2.5\delta.
  \end{align}
  After rearrangement, one can show that
  $ f(\x^n) \geq \frac{1}{2} f(\optcont) + \frac{1}{4} [f(\a) + f(\b)]
  - \frac{5\delta}{4}$.

\end{proof}

\def\dir{chapters/constrained-non-monotone}
\chapter{Maximizing Non-Monotone Continuous DR-Submodular Functions 
  with a Down-Closed Convex Constraint}
\chaptermark{Convex Constrained Continuous DR-Submodular Maximization}
\label{chapter_constrained_nonmonotone}

\begin{chapquote}{Wang Yangming}
Yin and Yang are one vital force -- the primordial aura.
\end{chapquote}

In this chapter, we study the problem of maximizing a non-monotone
continuous DR-submodular function subject to a down-closed convex
constraint, i.e.,
\begin{align}\label{problem_nonmonotone_max}
  \max_{\x \in \P\subseteq \X} f(\x),    
\end{align}
where $f: \X \rightarrow \R$ is DR-submodular, $\P$ is convex and down-closed.

Non-monotone DR-submodular maximization is strictly harder than the
monotone setting.
For the simple situation with only one unit hypercube constraint ($\P = [0, 1]^n$), we have the 1/2 inapproximability result, as shown
in \cref{obs_dr_submodular_max}.

\section{Two-Phase Algorithm: Applying the Local-Global
  Relation}
\label{subsec_local_alg}

\begin{algorithm}[htbp]
\caption{The \algname{Two-Phase} Algorithm \citep{biannips2017nonmonotone}
}\label{lg_fw}
 
\KwIn{$\max_{\x \in \P} f(\x)$,
  stopping tolerances $\epsilon_1, \epsilon_2$, \#iterations
  $K_1, K_2$}
 
{$\x \leftarrow \algname{Non-convex Frank-Wolfe}(f, \P, K_1,
  \epsilon_1, \x^\pare{0}) $ \tcp*{$\x^\pare{0}\in \P$}}
	
{$\Q \leftarrow \P\cap \{\y\in \R_+^n\;|\;\y\leqco \bar \u -\x \}$\;}
{
  $\z \leftarrow \algname{Non-convex Frank-Wolfe}(f, \Q, K_2,
  \epsilon_2, \z^\pare{0}) $ \tcp*{$\z^\pare{0}\in \Q$}}
 
\KwOut{$\argmax \{f(\x), f(\z)\}$ \;}
\end{algorithm}

By directly applying the local-global relation in
\cref{subsec_local_global}, we present the \twophase algorithm in
\cref{lg_fw}. It is generalized from the ``two-phase'' method in
\citet{chekuri2014submodular,gillenwater2012near}. It invokes a
non-convex solver (we use the \nonconvexfw by
\citet{lacoste2016convergence}; pseudocode is included in
\cref{nonconvex_fw} of \cref{sec_nonconvex_fw}) to find approximately
stationary points in $\P$ and $\Q$, respectively, then returns the
solution with the larger function value.

Though we use \nonconvexfw as a subroutine here, it is noteworthy that
any algorithm that is guaranteed to find an approximately stationary
point can be plugged into \cref{lg_fw} as a subroutine.
We give an improved approximation bound by considering more properties
of DR-submodular functions.  Borrowing the results of
\citet{lacoste2016convergence}, we get the following,
\begin{theorem}\label{rate_local_fw}
  The output of \cref{lg_fw} satisfies,
  \begin{align}\label{eq_local_rates}
    &\max \{f(\x), f(\z)\}  \geq 
    \frac{\mu}{8}\left(\|\x -\x^*\|^2 + \|\z - \z^*\|^2\right )\\\notag 
    &    + \frac{1}{4}\left[f(\x^*)  - \min \left\{\frac{\max \{2h_1,
      C_f(\P)\}}{\sqrt{K_1+1}} , \epsilon_1\right\}   -
      \min\left\{\frac{\max \{2h_2, C_f(\Q)\}}{\sqrt{K_2+1}} ,
      \epsilon_2\right\} \right],  
  \end{align}  
  where $h_1 := \max_{\x\in\P}f(\x) - f(\x^\pare{0})$,
  $h_2 := \max_{\z\in\Q}f(\z) - f(\z^\pare{0})$ are the initial
  suboptimalities,
  $C_f(\P) : = \sup_{\x, \v\in \P, \gamma\in (0, 1], \y = \x + \gamma
    (\v - \x )}\frac{2}{\gamma^2}(f(\y) - f(\x) - {(\y -
    \x)^\trans}{\nabla f(\x)})$
  is the curvature of $f$ w.r.t.  $\P$, and $\z^*= \x\vee \x^* -\x$.
\end{theorem}
\cref{rate_local_fw} indicates that \cref{lg_fw} has a $1/4$
approximation guarantee and $1/\sqrt{k}$ rate of convergence.
However, it has good empirical performance as demonstrated by the
practical experiments.  Informally, this can be partially explained by
the term $\frac{\mu}{8}\left(\|\x -\x^*\|^2 + \|\z - \z^*\|^2\right )$
in
\labelcref{eq_local_rates}: if $\x$ strongly deviates from $\x^*$,
then this term will augment the bound; if $\x$ is close to $\x^*$, by
the smoothness of $f$, it should be close to optimal.

\section{Shrunken FW: Follow Concavity and Shrink
  Constraint}\label{sec_fw_variant}

\begin{algorithm}[htbp]
	\caption{The \shrunkenfw Algorithm
		for Non-monotone
		{DR}-submodular
		Maximization \citep{biannips2017nonmonotone}}\label{fw-non-monotone}
	\KwIn{$\max_{\x \in \P} f(\x)$
				; \#iterations $K$; \stepsize $\gamma= 1/K$.}
	{$\x^\pare{0} \leftarrow \zero$, $t^\pare{0}\leftarrow 0$, $k\leftarrow 0$\tcp*{$k:$ iteration index, $t^\pare{k}:$ cumulative \stepsize}}
	\While{$t^\pare{k} <  1$}{
		{$\v^\pare{k} \leftarrow   \argmax_{\v\in\P, \textcolor{blue}{\v\leqco {\bar \u}-\x^\pare{k}}} \dtp{\v}{ \nabla f(\x^\pare{k})}$\tcp*{\emph{ \color{blue} shrunken LMO}} \label{new_lmo}}
		{use uniform \stepsize $\gamma_k = \gamma$;  set $\gamma_k \leftarrow \min\{\gamma_k, 1-t^\pare{k} \}$\;}
		{$\x^\pare{k+1}\leftarrow \x^\pare{k} + \gamma_k \v^\pare{k}$, $t^\pare{k+1} \leftarrow t^\pare{k} + \gamma_k$,  $k\leftarrow k+1$\;}
	}
	\KwOut{$\x^\pare{K}$
		\tcp*{suppose there are $K$ iterations in total}
	}
\end{algorithm}

\cref{fw-non-monotone} summarizes the \shrunkenfw variant, which  is inspired by the  unified continuous
greedy algorithm in \citet{feldman2011unified} for
maximizing the multilinear extension of a submodular
set function.

It initializes the solution $\x^\pare{0}$ to be $\zero$, and maintains
$t^\pare{k}$ as the cumulative \stepsize. At iteration $k$, it
maximizes the linearization of $f$ over a ``shrunken'' constraint set
$\{\v \mid \v\in \P, \v\leqco \bar \u-\x^\pare{k}\}$, which is
different from the classical LMO
of Frank-Wolfe-style algorithms (hence we refer to it as the
``shrunken LMO''). Then it employs an update step in the direction
$\v^\pare{k}$ chosen by the LMO with a uniform step size
$\gamma_k = \gamma$.
The cumulative \stepsize $t^\pare{k}$ is used to ensure that the
overall \stepsizes sum to one, thus the output solution $\x^\pare{K}$
is a convex combination of the LMO outputs, hence also lies in $\P$.

The shrunken LMO (Step \labelcref{new_lmo}) is the key difference
compared to the \submodularfw variant in
\citet{bian2017guaranteed} (detailed in
\cref{alg_sfmax_GradientAscend}). Therefore, we call
\cref{fw-non-monotone} \shrunkenfw.  The extra constraint
$\v\leqco {\bar \u}-\x^\pare{k}$ is added to prevent too aggressive
growth of the solution, since in the non-monotone setting such
aggressive growth may hurt the overall performance.

The next theorem states the guarantees of \shrunkenfw in
\cref{fw-non-monotone}.
\begin{theorem}\label{thm-e}
  Consider \cref{fw-non-monotone} with uniform step size $\gamma$.
  For $k = 1,..., K$ it holds that,
  \begin{flalign}
    f(\x^\pare{k}) \geq t^\pare{k} e^{-t^\pare{k}}f(\x^*) - \frac{L
      D^2}{2}k\gamma^2 - O(\gamma^2)f(\x^*).
  \end{flalign}
\end{theorem}
By observing that $t^\pare{K} = 1$ and applying \cref{thm-e}, we get
the following \namecref{coro_e}:
\begin{corollary}\label{coro_e}
  The output of \cref{fw-non-monotone} satisfies
  	\begin{flalign}
  f(\x^\pare{K}) \geq \frac{1}{e} f(\x^*) - \frac{L D^2}{2K} -
  O\left(\frac{1}{K^2}\right)f(\x^*).
   \end{flalign}
\end{corollary}

\cref{coro_e} shows that \cref{fw-non-monotone} enjoys a sublinear
convergence rate towards some point $\x^\pare{K}$ inside $\P$, with a
$1/e$ approximation guarantee.

\paragraph{Proof sketch of \cref{thm-e}: }
The proof is by induction. To prepare the building blocks, we first of
all show that the growth of $\x^\pare{k}$ is indeed bounded,
\begin{restatable}[Bounding the growth of $\x^\pare{k}$]{lemma}{restalemmatwo}
  \label{prop_non_fw}
  Assume $\x^\pare{0} = \zero$. For $k=0,..., K-1$, it holds,
  \begin{align}
   x_i^\pare{k}\leq \bar u_i[1-(1-\gamma)^{t^\pare{k}/\gamma}],
  \forall i\in [n].
  \end{align}
\end{restatable}

Then the following \namecref{lem_nonmonotone_fw} provides a lower
bound, which depends on the global optimum,

\begin{restatable}[Generalized from Lemma 7 of
  \citet{chekuri2015multiplicative}]{lemma}{restalemmathree}
  \label{lem_nonmonotone_fw}
  Given $\bmtheta\in (\zero, \bar \u]$, let
  $\lambda' = \min_{i\in [n]} \frac{\bar u_i}{\theta_i}$. Then for all
  $\x\in [\zero, \bmtheta]$, it holds,
  \begin{align}
 f(\x\vee \x^*) \geq (1-\frac{1}{\lambda'})f(\x^*).
  \end{align}
\end{restatable}

Then the key ingredient for induction  is the relation between  $f(\x^{\pare{k+1}})$
and $f(\x^{\pare{k}})$ indicated by:  
\begin{restatable}{claim}{restaclaimthree}
  \label{claim3_1}
  For $k = 0,...,K-1$ it holds,
  	\begin{align}
  f(\x^{\pare{k+1}}) \geq (1-\gamma) f(\x^{\pare{k}}) +
  \gamma(1-\gamma)^{t^\pare{k}/\gamma} f(\x^*) -\frac{L
    D^2}{2}\gamma^2.
	\end{align}
\end{restatable}			
It is derived by a combination of the quadratic lower
bound in \cref{eq_quad_lower_bound}, \cref{prop_non_fw} and
\cref{lem_nonmonotone_fw}.

\subsection{Remarks on the Two Algorithms.}
Notice that though the \twophase algorithm has an inferior guarantee
compared to \shrunkenfw, it is still of interest: i) It preserves 
flexibility in using a wide range of existing solvers for finding an
(approximately) stationary point. ii) The guarantees that we present
rely on a worst-case analysis. The empirical performance of the
\twophase algorithm is often comparable or better than that of
\shrunkenfw. This suggests to explore more properties in concrete
problems that may favor the \twophase algorithm.

\section{Experiments}
\label{sec_exp_nonmonotone_max}

\if 0

\subsection{DR-submodular Quadratic Programming}

\todo{MAXCUT QP as real-world instances}

As a state-of-the-art global solver, \algname{quadprogIP}\footnote{We
  used the open source code provided by \citet{xia2015globally}, and
  the IBM CPLEX optimization studio
  {\url{https://www.ibm.com/jm-en/marketplace/ibm-ilog-cplex}} as the
  subroutine.} \citep{xia2015globally} can find the global optimum
(possibly in exponential time), which were used to calculate the
approximation ratios.  Our problem instances are synthetic
DR-submodular quadratic objectives with down-closed polytope
constraints, i.e.,
$f(\x) = \frac{1}{2}\x^\trans \bmH \x + \h^\trans \x +c$ and
$\P = \{\x\in \R_+^n \ |\ \bmA \x \leq \b, \x \leq \bar \u, \bmA\in
\R_{++}^{m\times n}, \b\in \R_+^m \}$.
Both objective and constraints were randomly generated, in the
following two manners:

\textbf{1) Uniform distribution. }
$\bmH\in \R^{n\times n}$ is a symmetric matrix 
with uniformly distributed  entries in $[-1, 0]$; $\bmA\in \R^{m\times n}$ has uniformly distributed entries in $[\nu, \nu +1]$, where $\nu = 0.01$
is a small positive constant in order to make entries of $\bmA$
strictly positive.

\setkeys{Gin}{width=0.33\textwidth}
 \begin{figure}[htbp]
   \center 
  \includegraphics[width=0.56\textwidth]{legend_h.pdf}\\
  \vspace{-0.4cm}
 \subfloat[$m={\floor {0.5n}}$ \label{fig_quad_sub1}]{
 \includegraphics[]{no_legend_ratiosquad_synm-halfn-n_exp20-seed0.pdf}}
 \hspace{-0.4cm}
 \subfloat[$m=n$ \label{fig_quad_sub2}]{
 \includegraphics[]{no_legend_ratiosquad_synm-n-n_exp20-seed0.pdf}}
  \hspace{-0.4cm}
  \subfloat[$m=\floor {1.5n}$ \label{fig_quad_sub3}]{
  \includegraphics[]{no_legend_ratiosquad_synm-onehalfn-n_exp20-seed0.pdf}}
 \caption{Results on   DR-submodular  quadratic instances with uniform distribution.}
 \label{fig_quad}
\end{figure}

\textbf{2) Exponential distribution. }  The entries of $-\bmH$
and $\bmA$ were sampled from exponential distributions $\text{Exp}(\lambda)$ (For a random variable $y\geq 0$, its probability density function is $\lambda e^{-\lambda y}$, and for $y<0$, its  density is $0$).    Specifically, each entry of $-\bmH$
was sampled from $\text{Exp}(1)$, then the matrix $-\bmH$
was made to be  symmetric. Each entry of $\bmA$ was
sampled from $\text{Exp}(0.25) + \nu$, where $\nu = 0.01$
is a small positive constant.

In both the above two cases, we set   $\b = \mathbf{1}^m$, and    $\bar \u$ to be the tightest upper bound of $\P$ by  $\bar u_j = \min_{i\in [m] }\frac{b_i}{A_{ij}}, \forall j\in [n]$. 
In order to make $f$  non-monotone, 
we set $\h = -0.2*\bmH^\trans \bar \u$. 
To make sure that $f$ is non-negative, we first of all solve the 
problem $\min_{\x\in \P} \frac{1}{2}\x^\trans \bmH \x + \h^\trans \x$ using \algname{quadprogIP}, let the solution to be $\hat\x$, then  set 
$c= -f(\hat\x) + 0.1*|f(\hat\x)|$. 

 \setkeys{Gin}{width=0.33\textwidth}
 \begin{figure}[htbp]
   \center 
  \includegraphics[width=0.56\textwidth]{legend_h.pdf}\\
  \vspace{-0.4cm}
 \subfloat[$m= {\floor {0.5n}}$ \label{fig_quad_exp_sub1}]{
 \includegraphics[]{no_legend_ratiosexp_synm-halfn-n_exp20-seed0.pdf}}
 \hspace{-0.4cm}
 \subfloat[$m=n$ \label{fig_quad_exp_sub2}]{
 \includegraphics[]{no_legend_ratiosexp_synm-n-n_exp20-seed0.pdf}}
  \hspace{-0.4cm}
  \subfloat[$m=\floor {1.5n}$ \label{fig_quad_exp_sub3}]{
  \includegraphics[]{no_legend_ratiosexp_synm-onehalfn-n_exp20-seed0.pdf}}
 \caption{Results on   quadratic instances with exponential  distribution.}
 \label{fig_quad_exp}
 \vspace{-0.44cm}
\end{figure}

The approximation ratios w.r.t.  dimensionalities ($n$) are plotted in
\cref{fig_quad,fig_quad_exp}, for the two manners of data
generation. We set the number of constraints to be $m=\floor {0.5n}$,
$m=n$ and $m=\floor {1.5n}$ in
\cref{fig_quad_sub1,fig_quad_sub2,fig_quad_sub3} (and
\cref{fig_quad_exp_sub1,fig_quad_exp_sub2,fig_quad_exp_sub3}),
respectively.

One can see that \algname{two-phase Frank-Wolfe} usually performs the
best, \algname{ProjGrad} follows, and non-monotone
\algname{Frank-Wolfe} variant is the last.  The good performance of
\algname{two-phase Frank-Wolfe} can be partially explained by the
strong DR-submodularity of quadratic functions according to
\cref{rate_local_fw}.  Performance of the two analyzed algorithms is
consistent with the theoretical bounds: the approximation ratios of
\algname{Frank-Wolfe} variant are always much higher than $1/e$.

\fi

\subsection{Maximizing  Softmax Extensions}

With some derivation, one can see the derivative of the softmax
extension in \cref{eq_softmax} is:
 \begin{align} 
\nabla_i f(\x) = \tr{ \{ [{\diag(\x)(\bmL-\bmI) +\bmI }]^{-1} [(\bmL -
  \bmI)_i] \}}, \forall i\in [n],
 \end{align}
where $(\bmL - \bmI)_i$ denotes the matrix obtained by zeroing all
entries except for  the $i^\text{th}$ row of $(\BL - \BI)$. Let
$\BC:= ({\diag(\x)(\bmL-\bmI) +\bmI })^{-1}, \BD:=(\BL - \BI)$, one
can see that $\nabla_i f(\x) = \BD_{i\cdot}^\trans \BC_{\cdot i}$,
which gives an efficient way to calculate the gradient $\nabla f(\x)$.

\setkeys{Gin}{width=0.75\textwidth}
     \begin{figure}[htbp]
   \center 
  \vspace{-0.4cm}
          \subfloat
          {
          \includegraphics[]{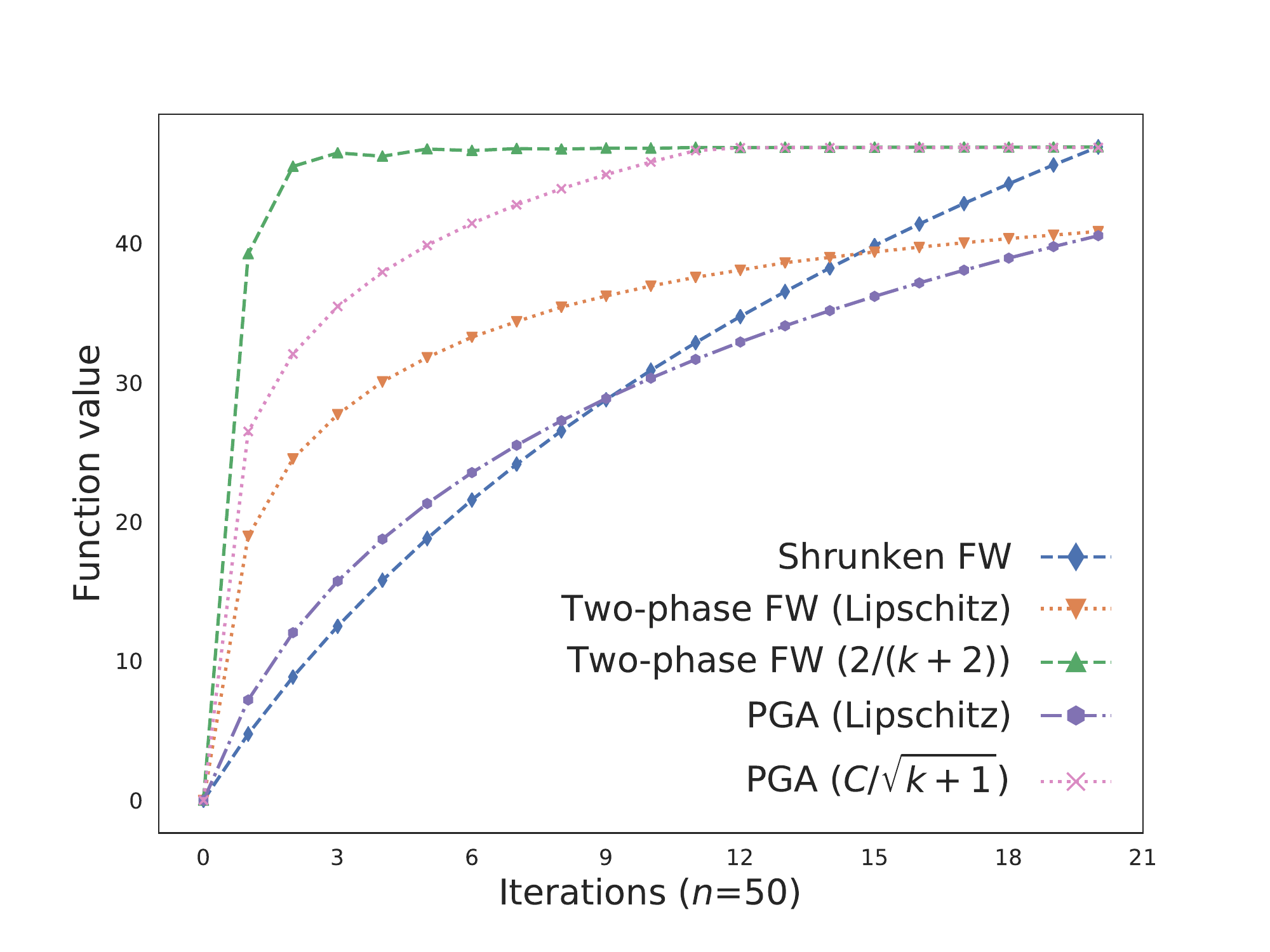}}
     \vspace{-1.4cm}   
    \subfloat
    {
    \includegraphics[]{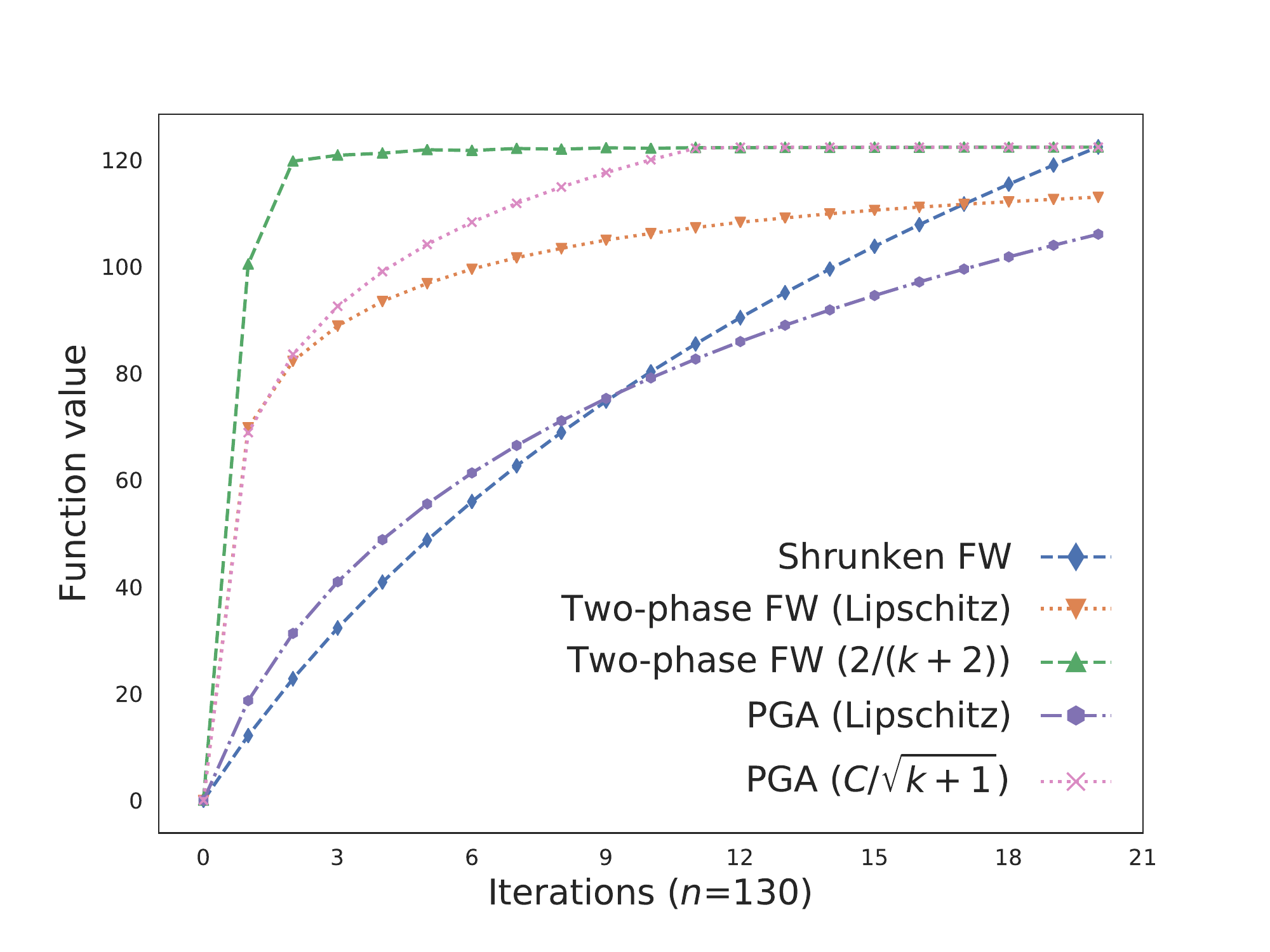}}
     \vspace{-1.4cm}  
  \subfloat
  {
  \includegraphics[]{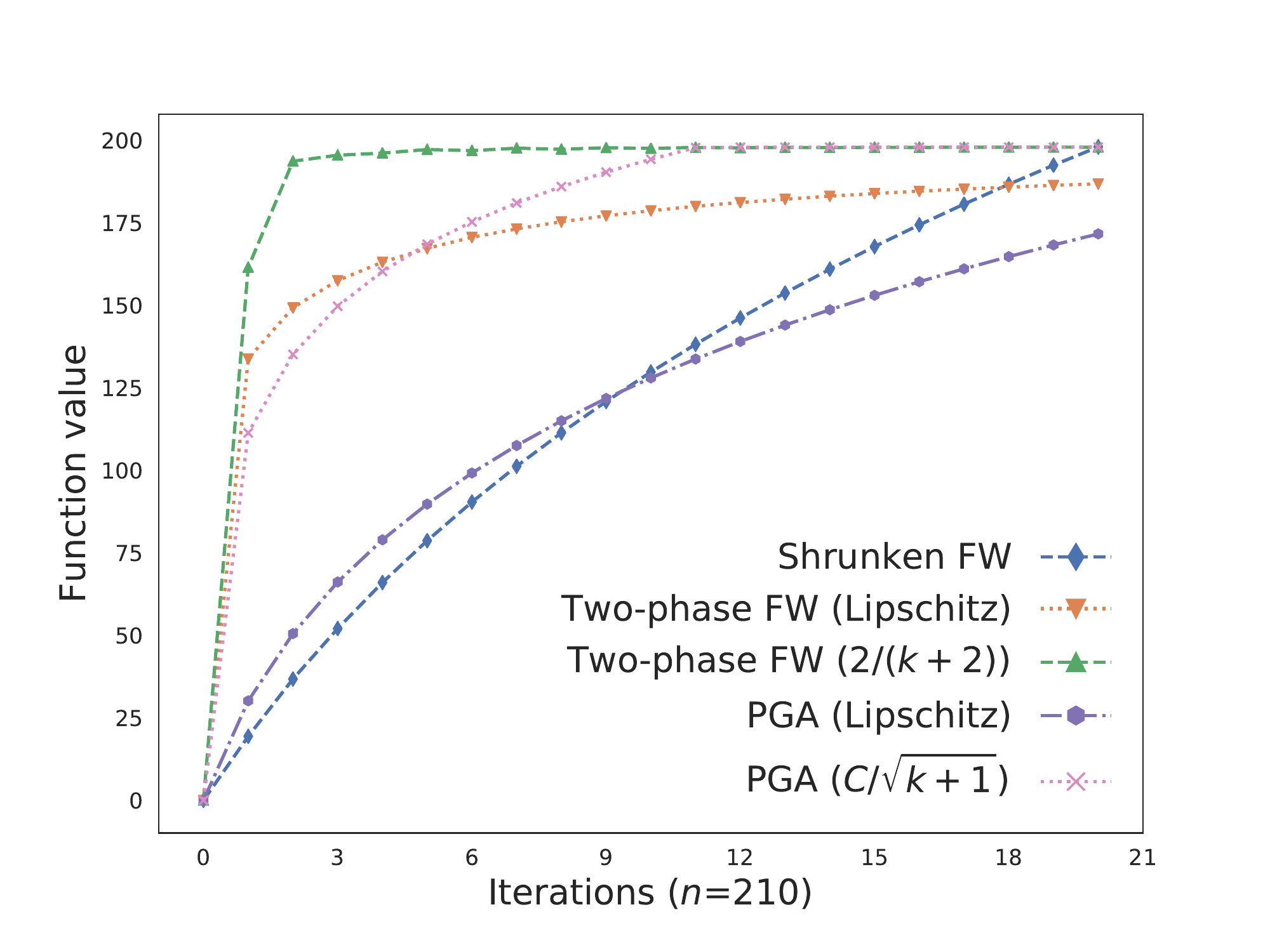}}
\caption{Trajectories of different solvers  on  Softmax instances with one cardinality  constraint.}
     \label{fig_softmax_syn}
\end{figure}   
 
 \paragraph{Results on synthetic data.}
 We generate the softmax objectives (see \labelcref{eq_softmax}) in
 the following way: first generate the $n$ eigenvalues $\d\in \R_+^n$,
 each evenly distributed in $[0, 10]$, and set $\BD =
 \diag(\d)$.
 After generating a random unitary matrix $\BU$, we set
 $\BL = \BU \BD\BU^\trans$.  One can verify that $\BL$ is positive
 semidefinite and has eigenvalues as the entries of $\d$.
Then we generate one cardinality constraint in the form of 
$\BA \x \leqco \b$, where $\BA =  \one^{1\times n}$ and 
$\b = 0.5n$. 

Function value trajectories returned by different solvers are shown in
\cref{fig_softmax_syn}.
One can observe that \twophasefw has the fastest convergence.
\shrunkenfw converges slower, however, it can always return a high
function value finally.  The performance of \pga highly depends on the
hyperparameters of the \stepsizes.

\subsection{Revenue Maximization with Continuous Assignments}

Given a social connection graph with nodes denoting $n$ users and
edges encoding their connection strength, the viral marketing
strategy suggests to choose a small subset of buyers to give them some
product for free, to trigger a cascade of further adoptions through
``word-of-mouth'' effects, in order to maximize the total revenue
\citep{hartline2008optimal}.

One model with ``discrete'' product assignments was used by
\citet{soma2017non} and \citet{durr2019non}, which is motivated by the
observation that giving a user more free products increases the
likelihood that the user will advocate this product.  It is also natural to consider continuous
product assignments which is suitable for products that will be given
to users to try for a certain period (e.g., a new software).

The model  can be viewed
as a simplified variant of the Influence-and-Exploit (IE) strategy of
\citet{hartline2008optimal}.
In the \emph{influence} stage, for each of the user $i$, if giving him
$x_i$ units of products for free, the user becomes an advocate of the
product with probability $1 - q^{x_i}$ (independently from other
users), where $q\in (0, 1)$ is a parameter.  In the \emph{exploit}
stage: suppose that a set $S$ of users advocate the product while the
complement set $\groundset \setminus S$ of users do not. Now the
revenue comes from the users in $\groundset \setminus S$, and they
will be influenced by the advocates with revenue proportional to
the edge weights.  So the expected revenue is a function
$f: \R_+^\groundset \rightarrow \R_+$:
\begin{align} 
f(\x)  
&= \epe[S]{\sum_{i\in S} \sum_{j\in \groundset\setminus S}W_{ij} } 
~=~ \sum_{i\in \groundset} \sum_{j\in \groundset\setminus \{i\}} W_{ij} (1- q^{x_i})q^{x_j},
\end{align}
where $\BW$  is the adjacency matrix of the social connection graph.

\subsubsection{Experimental Setting}

We experimented with several real-world graphs from the Konect network
collection
\citep{kunegis2013konect}\footnote{\url{http://konect.uni-koblenz.de/networks}}
and the SNAP\footnote{\url{http://snap.stanford.edu/}} dataset. 
The graph datasets and corresponding experimental parameters are
documented in \cref{tab_dataset}.

\begin{table}[htbp]
	\begin{center}
	\caption{Graph datasets and corresponding experimental parameters}
		\label{tab_dataset}
		
		\begin{tabularx}{\textwidth}{|r|X|X|X|X|}
			\hline
			Dataset &   $n$  & \#edges & $q$ & budget $b$  \\
			\hline
			\hline 
			``Reality Mining''  & 96 & 1,086,404 (multiedge) &   0.75 &   $0.2nu$  \\
			\hline
			``Residence hall'' & 217 & 2,672 & 0.75  & $0.4nu$  \\
			\hline
			``Infectious'' & 410 & 17,298 & 0.7  & $0.2nu$  \\
			\hline 	
			``U. Rovira i Virgili'' & 1,133  & 5,451& 0.8 & 		$0.2nu$  \\
						\hline 	
			``ego Facebook'' & 4,039  & 88,234& 0.9 & 		$0.1nu$  \\
			\hline 
		\end{tabularx}
	\end{center}
\end{table}

For a specific example, the ``Reality Mining''
\citep{eagle2006reality}
dataset\footnote{\url{http://konect.uni-koblenz.de/networks/mit}, and\\
  \url{http://realitycommons.media.mit.edu/realitymining.html}}
contains the contact data of 96 persons through tracking 100 mobile
phones.  The dataset was collected in 2004 over the course of nine
months and represents approximately 500,000 hours of data on users'
location, communication and device usage behavior.
Here one contact could mean a phone call, Blueteeth sensor proximity
or physical location proximity.  We use the number of contacts as the
weight of an edge, by assuming that the more contacts happen between
two persons, stronger the connection strength should be.

\subsubsection{Experimental Results}

\paragraph{Results on a small graph with visualization.}

We firstly tested on a small graph for a sanity check.  It has to be
small enough in order to visualize the results on the graph. To
achieve this test goal, we select a subgraph from the ``Reality Mining'' dataset
by taking the first 5 users/nodes, the nodes and number of contacts
amongst nodes are shown in \cref{fig_reality_subgraph}. For ease of
illustration, we label the 5 users as ``A, B, C, D,
E''. One can see that there are different level of contacts between
different users, for example, there are 22,194 contacts between A and
B, while there are only 82 contacts between E and C.

\cref{fig_sub_Trajectories_reality} traces the trajectories of
different algorithms when maximizing the revenue objective. They were
all run for 20 iterations. One can see that \shrunkenfw and
\twophasefw reach higher revenue than \pga algorithms. Notice that
\shrunkenfw and \twophasefw with oblivious \stepsizes do not need to
tune any hyperparameters, while the others need to adapt the Lipschitz
parameter $L$ and the constant $C$ to determine the \stepsizes.

\setkeys{Gin}{width=0.9\textwidth}
\begin{figure}[htbp]
	\center 
	\subfloat[The ``Reality Mining'' subgraph. \label{fig_reality_subgraph}]{
		\includegraphics[]{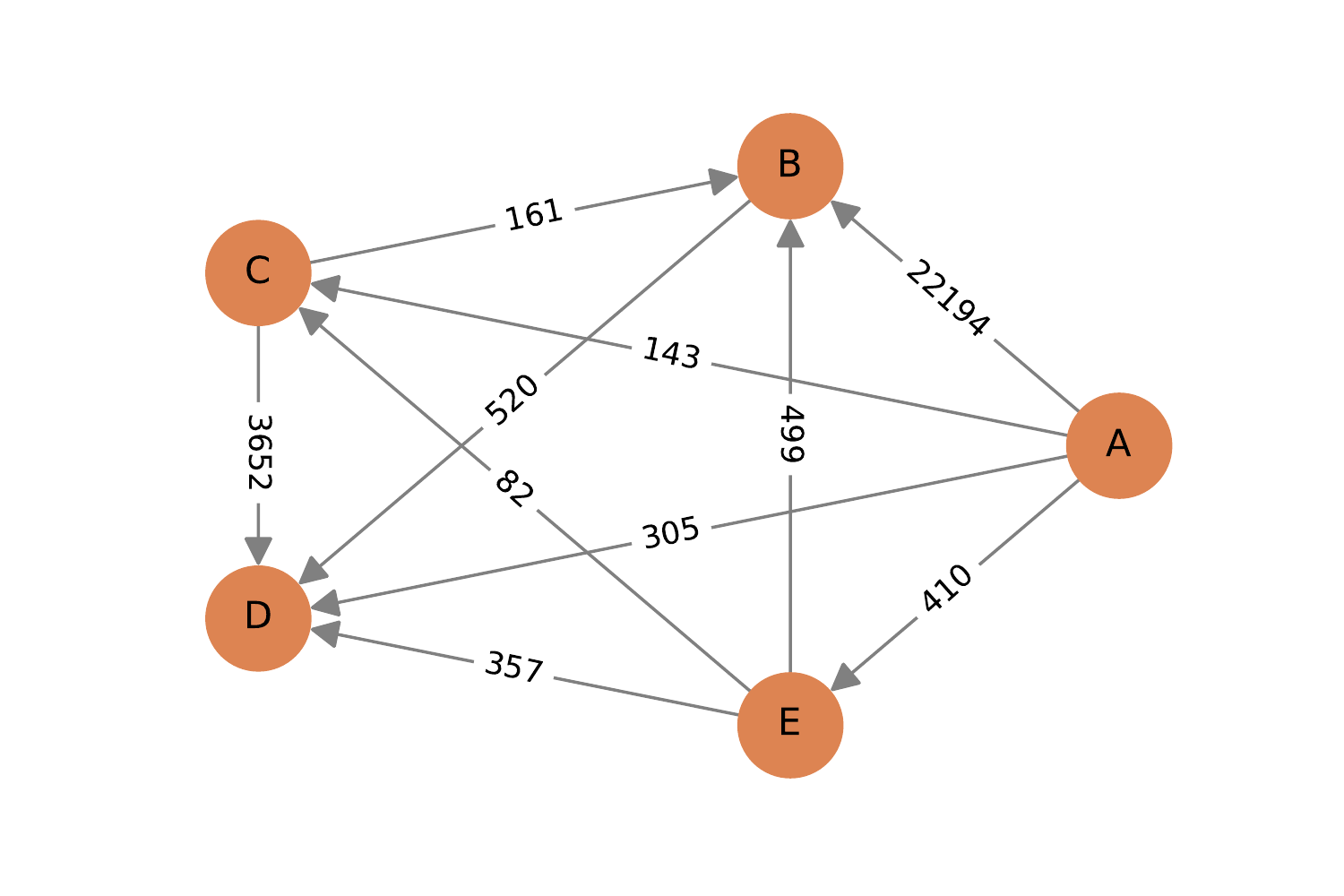}}
	
	\subfloat[Trajectories of algorithms with 20 iterations \label{fig_sub_Trajectories_reality}]{
		\includegraphics[]{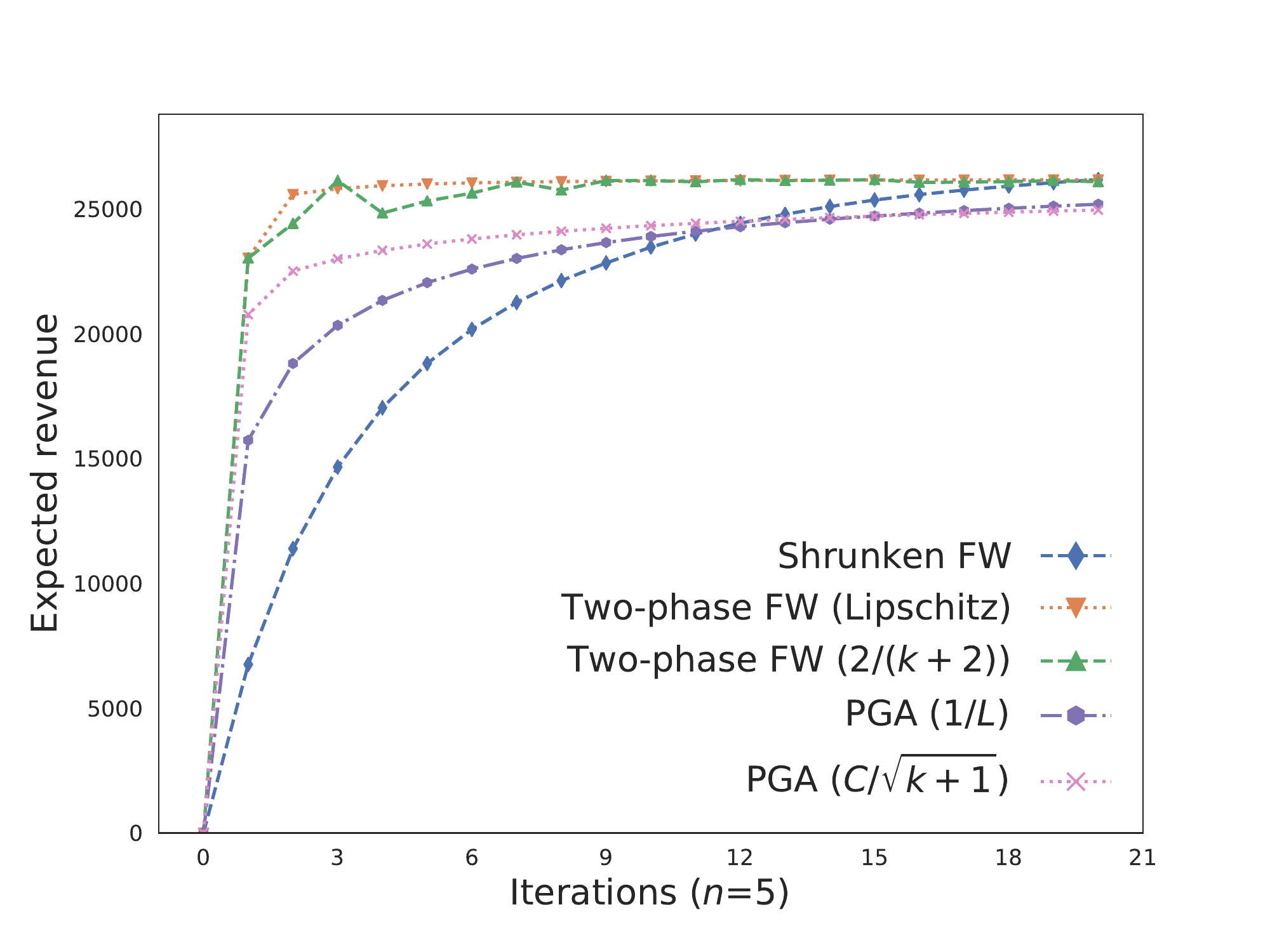}}
	\caption{Results  on real-world graphs with one cardinality constraint, where $b=0.2*n*u$.}
	\label{fig_reality}
\end{figure}

One may ask the question: How does the assignment look like for
different algorithms? In order to show this behavior, we visualize the
assignments in \cref{fig_reality_assignments}.
One can see that \shrunkenfw assigns user A the most free products
(6.1), followed by user C (3.3), then user E (0.6). All other users
get $0$ assignment.  This is consistent with the intuition: one can
observe that user A most strongly influences others users (22,194+
410 + 143), while user D exerts zero influence on others. \twophasefw
provides similar result, while \pga is conservative in assigning free
products to users.

\setkeys{Gin}{width=0.8\textwidth}
\begin{figure}[htbp]
\center 
\subfloat{
\includegraphics[]{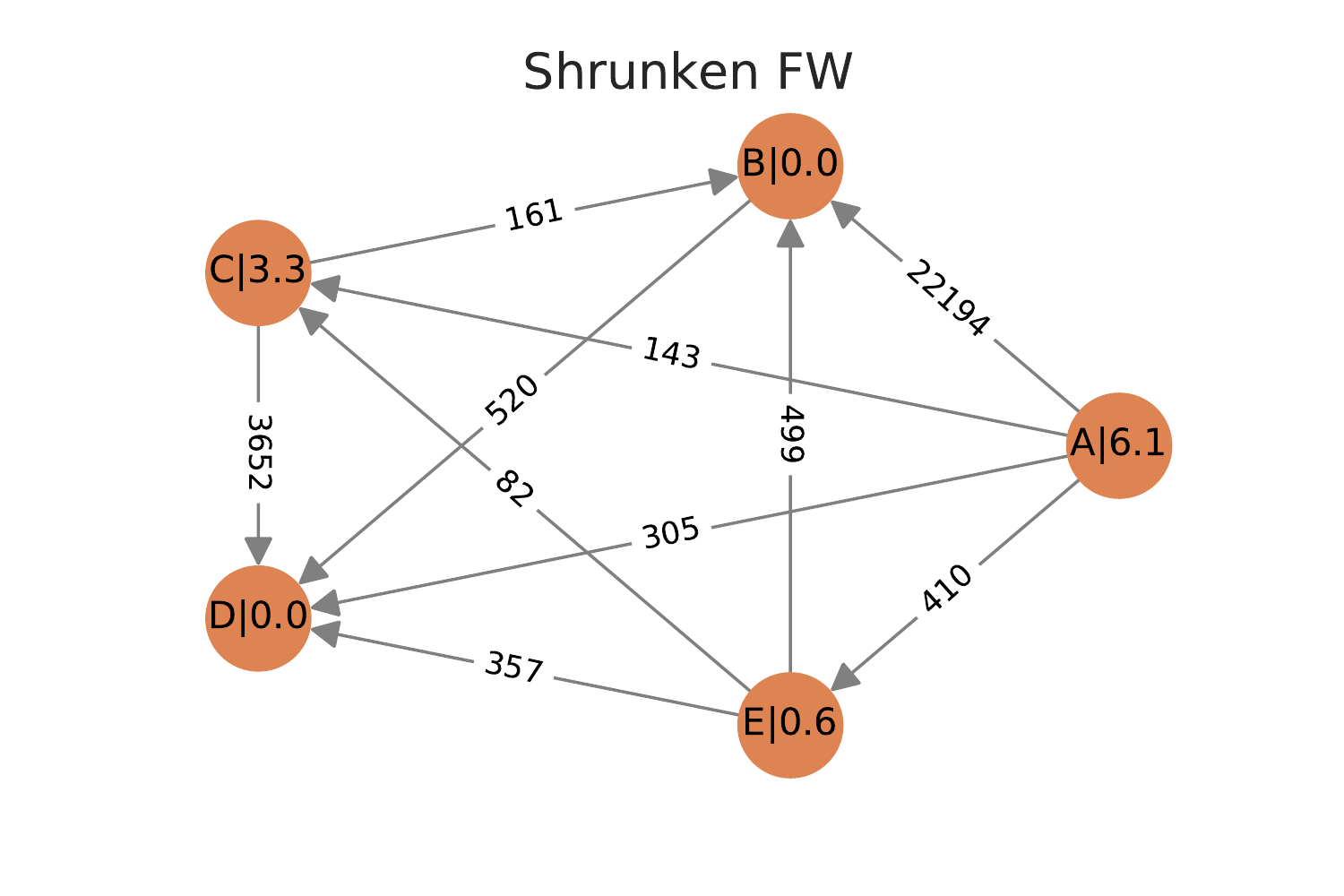}}
\vspace{-0.8cm}
\subfloat{
\includegraphics[]{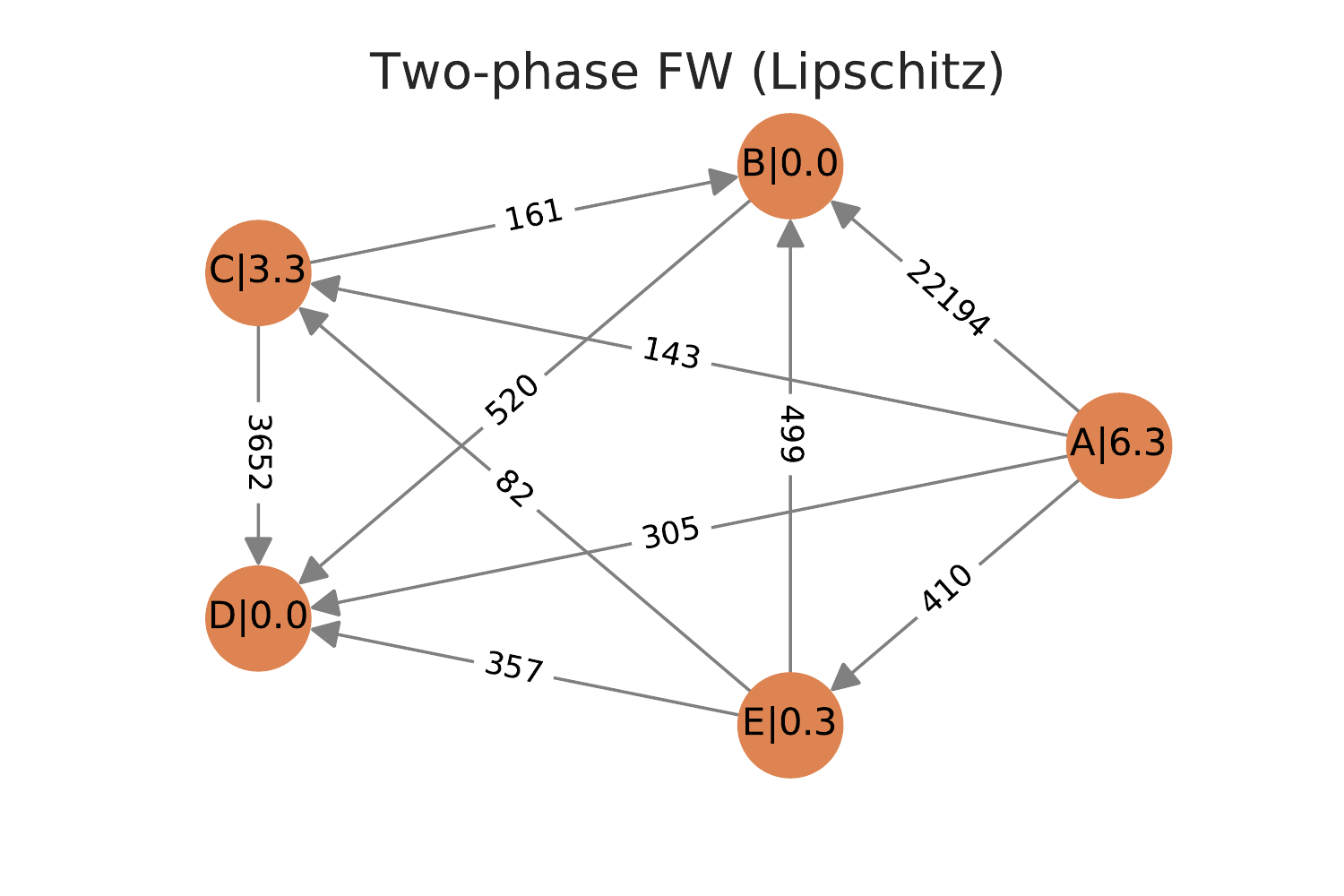}}
\vspace{-0.8cm}
\subfloat
{
\includegraphics[]{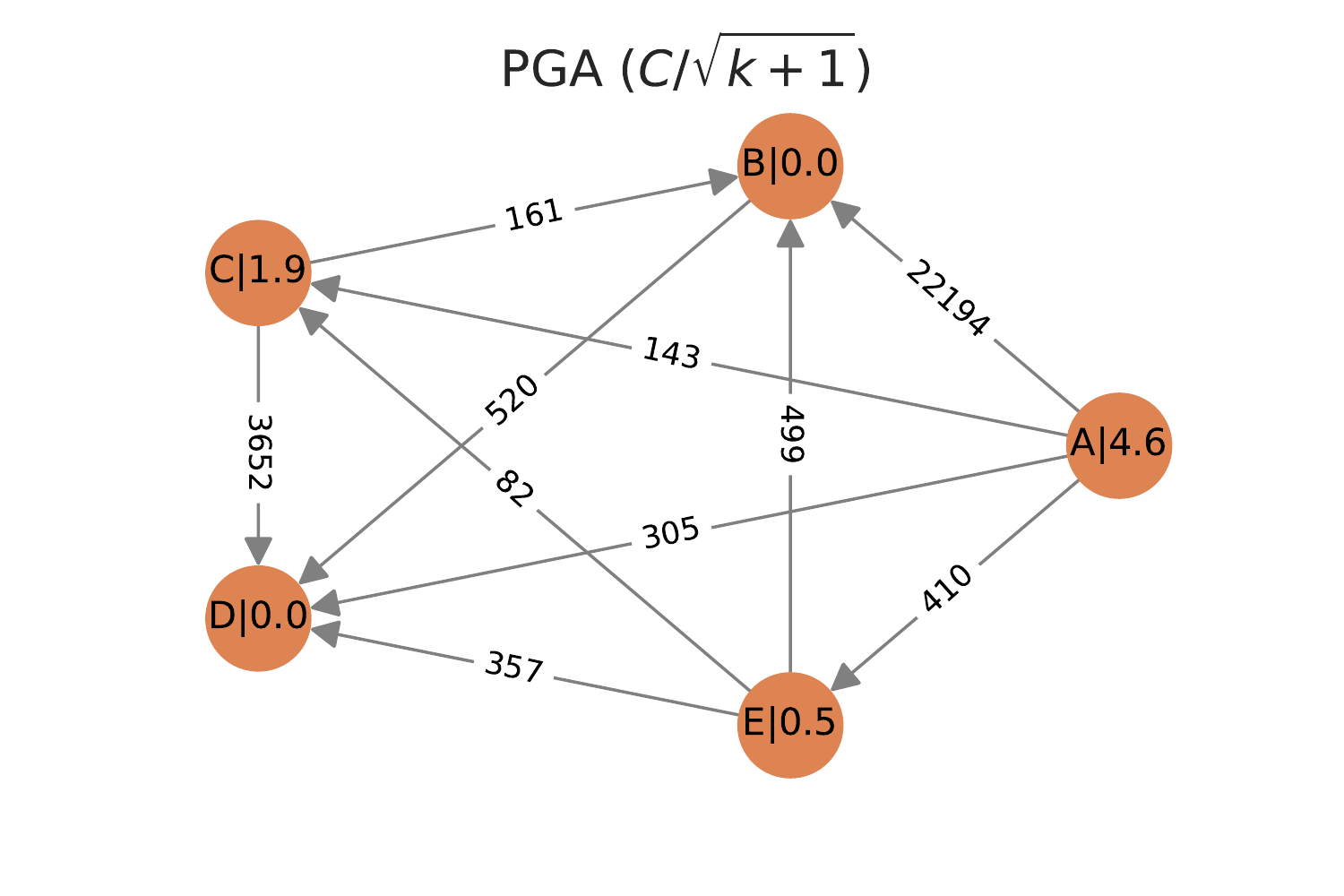}}	
\vspace{-0.4cm}
\caption{Assignments to the users returned by different algorithms.}
\label{fig_reality_assignments}
\end{figure}

\paragraph{Results on big graphs.}

Then we looked at the behavior of the algorithms on the original big
graph, which is plotted in \cref{fig_traj_big_graph} and
\cref{fig_traj_big_graph2}, for real-world graphs with at most
$n = 4,039$ nodes.

One can observe that usually \twophasefw algorithm achieves the highest
objective value, and also converges with the fastest rate. \shrunkenfw converges
slower than \twophasefw, but it always reaches competitive function
value, since it has a $1/e$ approximation guarantee.  \pga algorithms
need to tune parameters for the \stepsize, and converges to lower
objective values.

\setkeys{Gin}{width=0.9\textwidth}
\begin{figure}[htbp]
	\center 
	\subfloat
	[``Residence hall'' dataset]
	{
		\includegraphics[]{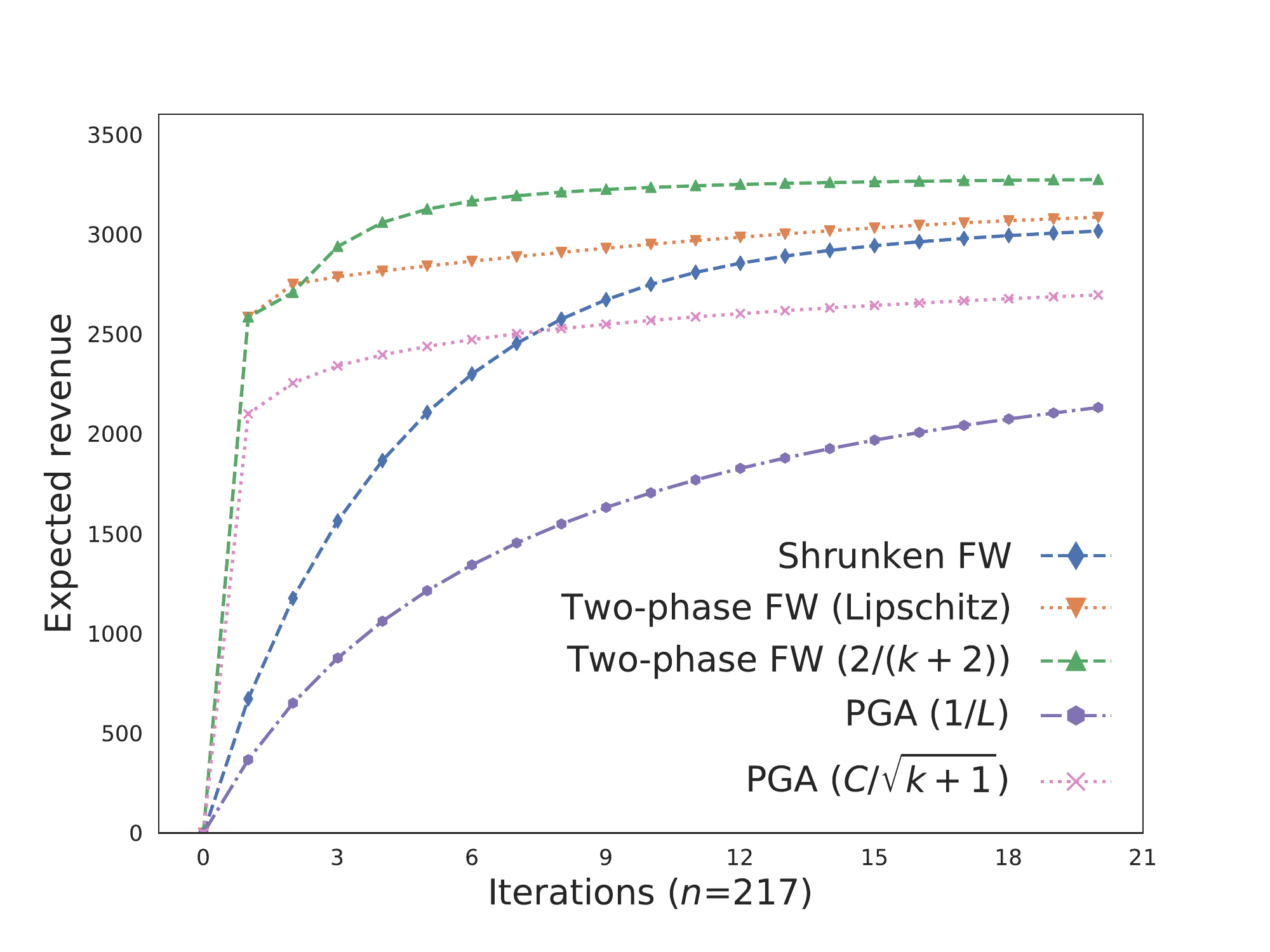}}
	\vspace{-0.4cm}
	\subfloat
	[``Infectious'' dataset]
	{
		\includegraphics[]{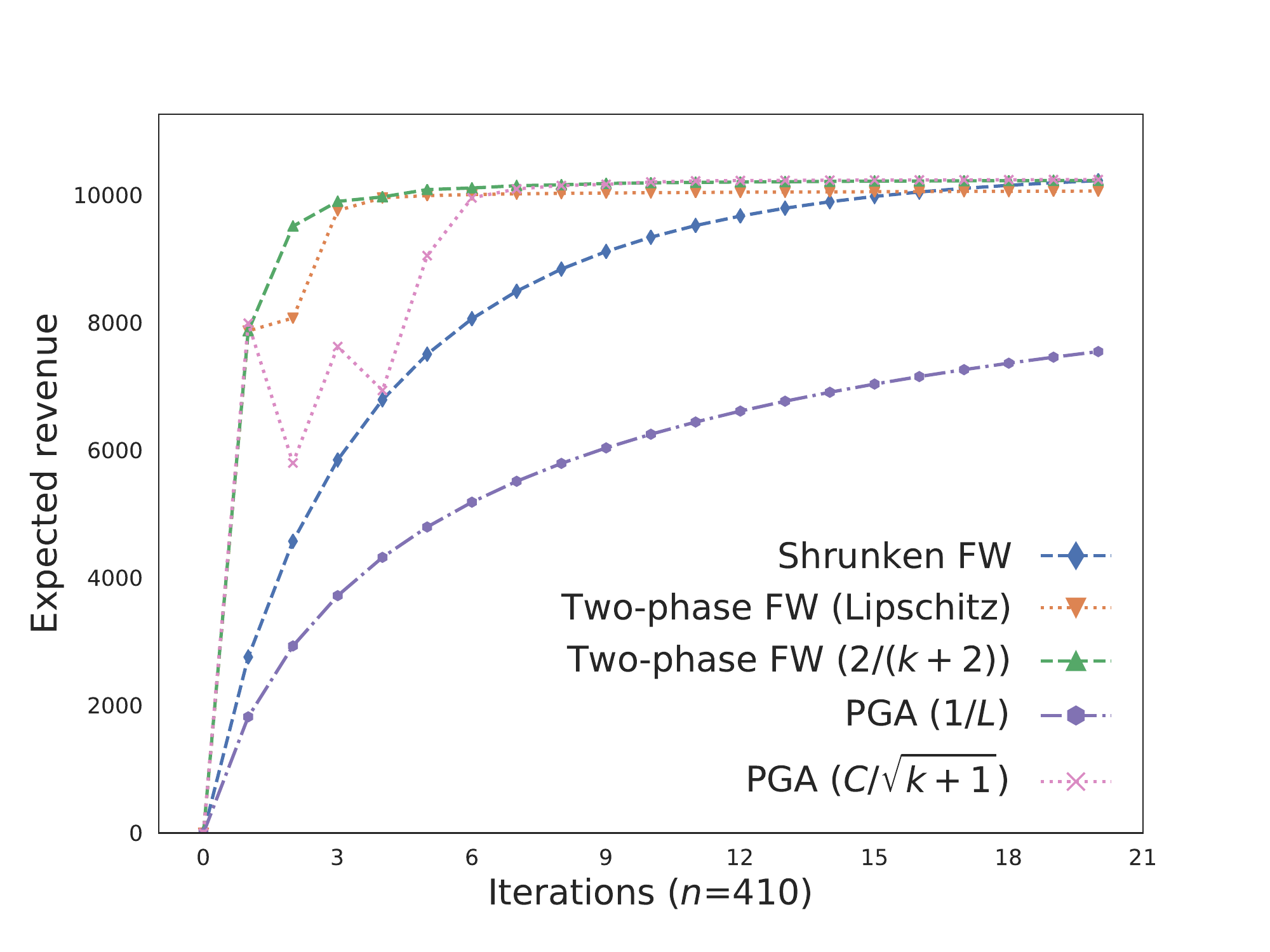}}
	\caption{Trajectory of different algorithms on real-world graphs.}
	\label{fig_traj_big_graph}
\end{figure}

\setkeys{Gin}{width=0.9\textwidth}
\begin{figure}[htbp]
	\center 
	\subfloat
	[``U. Rovira i Virgili'' dataset]
{
	\includegraphics[]{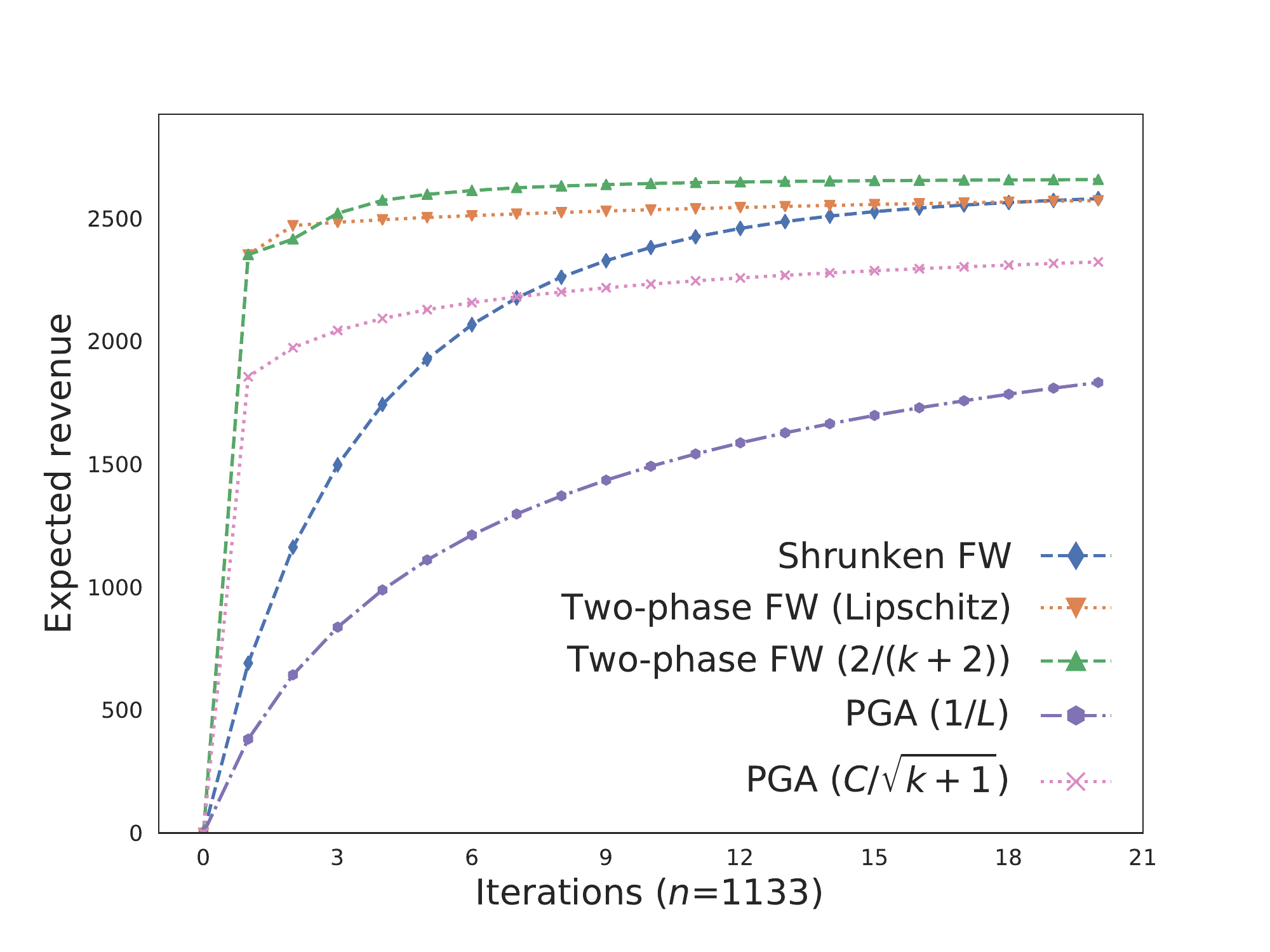}}	
	
	\subfloat
	[``ego Facebook'' dataset]
	{
		\includegraphics[]{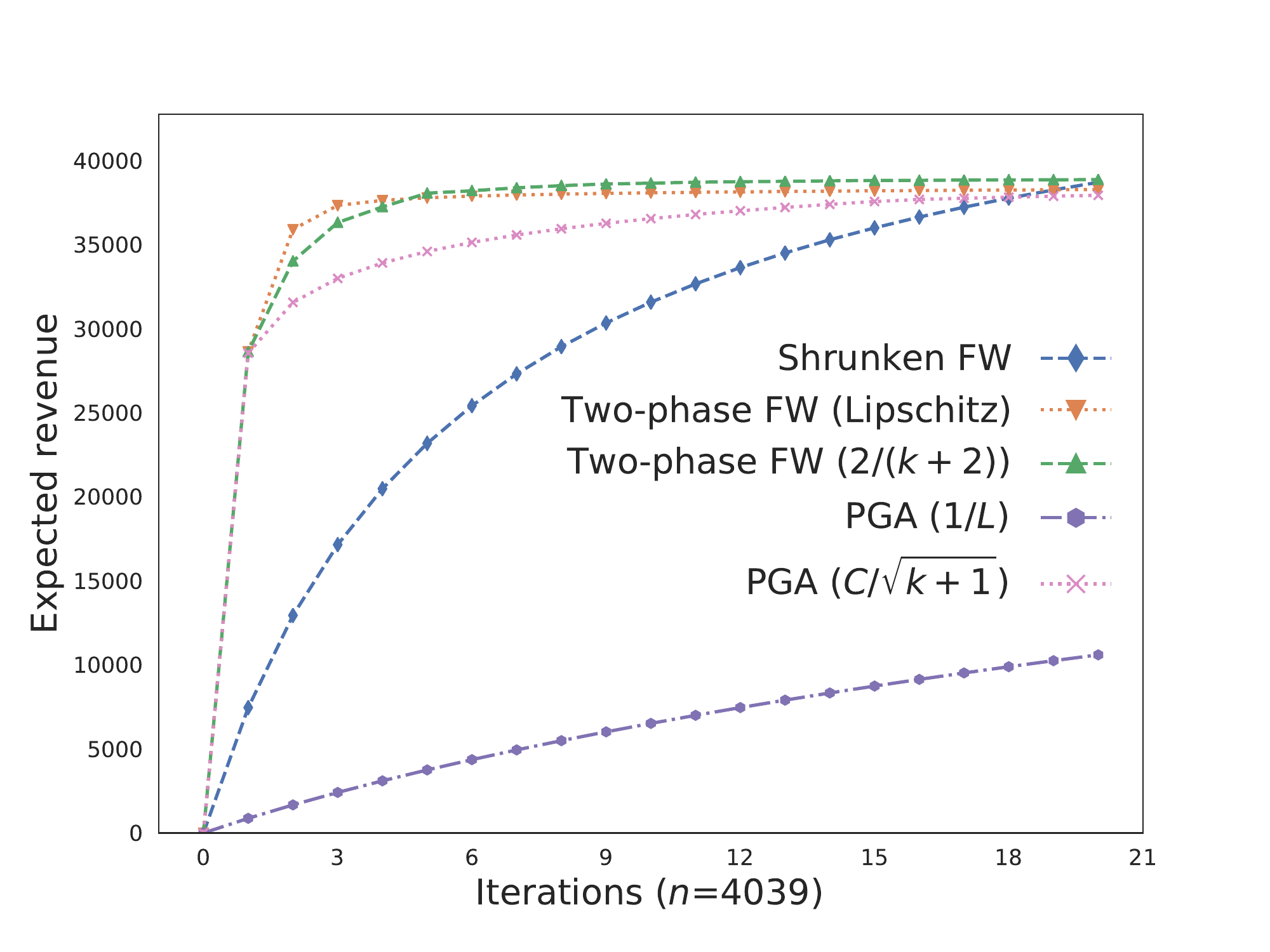}}
	\caption{Trajectories of different algorithms on real-world graphs.}
	\label{fig_traj_big_graph2}
\end{figure}

\section{Conclusions}

In this chapter we have investigated the problem of constrained
non-monotone DR-submodular maximization with a down-closed convex
constraint. We proposed two different algorithms for solving this
problem: the \twophase algorithm with a $1/4$ approximation guarantee
and the \shrunkenfw with a $1/e$ approximation guarantee.  We
extensively demonstrated the efficacy of the proposed algorithms over
the problems of DPP MAP inference and revenue maximization with
continuous assignments.

\section{Additional Proofs}

\subsection{Proof of \cref{rate_local_fw}}

\begin{proof}[Proof of \cref{rate_local_fw}]
	
Let $g_{\P}(\x), g_{\Q}(\z)$ to the non-stationarity of $\x$ and
$\z$, respectively. Since we are using 
the \nonconvexfw (\cref{nonconvex_fw}) as
subroutine, according to \citet[Theorem 1]{lacoste2016convergence}, one can  get,
\begin{align} 
&	g_{\P}(\x) \leq \min\left\{\frac{\max \{2h_1, C_f(\P)\}}{\sqrt{K_1+1}} , \epsilon_1\right\}, \\
&	 g_{\Q}(\z) \leq  \min\left\{\frac{\max \{2h_2, C_f(\Q)\}}{\sqrt{K_2+1}} , \epsilon_2 \right\}.
\end{align}
Plugging the above into \cref{local_global} we reach the  conclusion in \labelcref{eq_local_rates}.
\end{proof}

\subsection{Detailed Proofs for \cref{thm-e}}

\subsubsection{Proof of \cref{prop_non_fw}}

\restalemmatwo*

\begin{proof}[Proof of \cref{prop_non_fw}]
	We prove  by induction. 	
	First of all, it holds when $k=0$, since $x_i^\pare{0}=0$,
	and $t^\pare{0}=0$ as well. 	
	Assume it holds for $k$. Then for $k+1$, we have
	\begin{align} 
	x_i^\pare{k+1} & = x_i^\pare{k} + \gamma v_i^\pare{k}\\
	& \leq x_i^\pare{k} + \gamma ({\bar u_i} - x_i^\pare{k}) \quad \text{(constraint of shrunken LMO)}\\\notag 
	& = (1-\gamma) x_i^\pare{k} + \gamma {\bar u_i}\\
	& \leq (1-\gamma){\bar u_i}[1- (1-\gamma)^{t^\pare{k}/\gamma} ]+ \gamma {\bar u_i} \quad \text{ (induction) } \\\notag 
	& =  \bar u_i [1- (1-\gamma)^{t^\pare{k+1}/\gamma}].
	\end{align}
\end{proof}

\subsubsection{Proof of \cref{lem_nonmonotone_fw}}

\restalemmathree*

\begin{proof}[Proof of \cref{lem_nonmonotone_fw}]
	
	Consider $r(\lambda)= \x^* + \lambda(\x\vee \x^* - \x^*)$, it is easy to
	see that $r(\lambda)\geqco 0, \forall \lambda \geq 0$. 
	
	Notice that $\lambda'\geq 1$. 
	Let $\y = \r(\lambda') =  \x^* + \lambda'(\x\vee \x^* - \x^*)$, it is easy to see that $\y \geqco 0$, it also holds that $\y\leqco \bar u$: Consider one coordinate $i$, 1) if $x_i\geq x_i^*$, then $y_i = x_i^* + \lambda'(x_i - x_i^*)\leq \lambda'x_i \leq \lambda'\theta_i \leq \bar u_i$; 2)  if $x_i< x_i^*$, then $y_i = x_i^* \leq \bar u_i$. So $f(\y) \geq 0$. 
	
	Note that 
	\begin{align}
	\x\vee \x^* = (1-\frac{1}{\lambda'})\x^* + \frac{1}{\lambda'}\y = (1-\frac{1}{\lambda'})r(0) + \frac{1}{\lambda'}r(\lambda'), 
	\end{align}
	
	since $f$ is concave along $r(\lambda)$, so it holds that,
	\begin{align}
	f(\x\vee \x^*) \geq  (1-\frac{1}{\lambda'})f(\x^*) +  \frac{1}{\lambda'}f(\y) \geq (1-\frac{1}{\lambda'})f(\x^*).
	\end{align}
\end{proof}

\subsubsection{Proof of \cref{thm-e}}
\label{app_subsec_thm2_proof}

\begin{proof}[Proof of \cref{thm-e}]
	
  First of all, let us prove the \namecref{claim3_1}:
	
  \restaclaimthree*
	
  \begin{proof}[Proof of \cref{claim3_1}]
    Consider a point
    $\z^\pare{k}:= \x^\pare{k}\vee \x^* - \x^\pare{k}$, one can
    observe that: 1) $\z^\pare{k}\leqco \bar \u -\x^\pare{k}$; 2)
    since $\x^\pare{k}\geqco \zero, \x^*\geqco \zero$, so
    $\z^\pare{k}\leqco \x^*$, which implies that $\z^\pare{k}\in \P$
    (from down-closedness of $\P$). So $\z^\pare{k}$ is a candidate
    solution for the shrunken LMO (Step \labelcref{new_lmo} in \cref{fw-non-monotone}). We have,
    \begin{flalign}
      f(\x^{\pare{k+1}}) - f(\x^{\pare{k}}) & \geq \gamma\dtp{\nabla
        f(\x^\pare{k})}{\v^\pare{k}} - \frac{L}{2}\gamma^2
      \|\v^\pare{k}\|^2  (\text{Quadratic lower bound
        of \labelcref{eq_quad_lower_bound}}) \\ & \geq
      \gamma\dtp{\nabla f(\x^\pare{k})}{\v^\pare{k}} -
      \frac{L}{2}\gamma^2 D^2 \quad (\text{diameter of } \P) \\
      & \geq \gamma \dtp{\nabla f(\x^\pare{k})}{\z^\pare{k}} -
      \frac{L}{2}\gamma^2 D^2\quad (\text{shrunken LMO})\\  &
      \geq \gamma(f(\x^\pare{k}+\z^\pare{k}) - f(\x^\pare{k})) -
      \frac{L}{2}\gamma^2 D^2 \quad (\text{concave along
        $\z^\pare{k}$})\\ & = \gamma [f(\x^\pare{k}\vee \x^*) -
      f(\x^\pare{k})] - \frac{L}{2}\gamma^2 D^2\\ & \geq \gamma
      [(1-\frac{1}{\lambda'})f(\x^*) - f(\x^\pare{k})] -
      \frac{L}{2}\gamma^2 D^2 \quad (\text{\cref{lem_nonmonotone_fw}})
      \\ & =\gamma [ (1-\gamma)^{t^\pare{k}/\gamma} f(\x^*) -
      f(\x^\pare{k})] - \frac{L}{2}\gamma^2 D^2,
    \end{flalign}
    where the last equality comes from {setting }
    $\bmtheta : = \bar \u(1-(1-\gamma)^{t^\pare{k}/\gamma})$ {
      according to \cref{prop_non_fw}}, thus
    $\lambda' = \min_i \frac{\bar u_i}{\theta_i} =
    (1-(1-\gamma)^{t^\pare{k}/\gamma})^{-1}$.
		
    After rearrangement, we reach the claim.
  \end{proof}
  Then, let us prove \cref{thm-e} by \emph{induction}.
	
  First of all, it holds when $k = 0$ (notice that
  $t^\pare{0}=0$). Assume that it holds for $k$. 
  
  Then for $k+1$,
  considering the fact $e^{-t} - O(\gamma)\leq (1-\gamma)^{t/\gamma}$
  when $0< \gamma\leq t \leq 1$ and \cref{claim3_1} we get,
  \begin{align}
    & f(\x^{\pare{k+1}})\\ 
    &  \geq (1-\gamma)  f(\x^{\pare{k}})   +
      \gamma(1-\gamma)^{t^\pare{k}/\gamma} f(\x^*) -\frac{L
      D^2}{2}\gamma^2\\ 
    & \geq  (1-\gamma)  f(\x^{\pare{k}})   + \gamma [e^{-t^\pare{k}} -
      O(\gamma)] f(\x^*) -\frac{L D^2}{2}\gamma^2\\\notag  
    & \geq  (1-\gamma) [ t^\pare{k} e^{-t^\pare{k}}f(\x^*) - \frac{L
      D^2}{2}k\gamma^2 - O(\gamma^2)f(\x^*)]+ \gamma [e^{-t^\pare{k}}
      - O(\gamma)] f(\x^*) -\frac{L D^2}{2}\gamma^2\\\notag 
    & = [(1-\gamma) t^\pare{k} e^{-t^\pare{k}} + \gamma
      e^{-t^\pare{k}}   ]f(\x^*)  - \frac{L D^2}{2}\gamma^2
      [(1-\gamma)k + 1] - [(1-\gamma) O(\gamma^2) + \gamma
      O(\gamma)]f(\x^*)\\\label{eq_30} 
    & \geq  [(1-\gamma) t^\pare{k} e^{-t^\pare{k}} + \gamma
      e^{-t^\pare{k}}   ]f(\x^*) -  \frac{L D^2}{2}\gamma^2(k+1) -
      O(\gamma^2)f(\x^*). 
  \end{align}
  Let us consider the term
  $ [(1-\gamma) t^\pare{k} e^{-t^\pare{k}} + \gamma e^{-t^\pare{k}}
  ]f(\x^*)$.
  We know that the function $g(t) = te^{-t}$ is concave in $[0, 2]$,
  so
  $g(t^\pare{k}+\gamma) - g(t^\pare{k}) \leq \gamma g'(t^\pare{k})$,
  which amounts to,
  \begin{align}
    [(1-\gamma) t^\pare{k} e^{-t^\pare{k}} + \gamma e^{-t^\pare{k}}
    ]f(\x^*) & \geq (t^\pare{k} +\gamma) e^{-(t^\pare{k} +\gamma)}
    f(\x^*)\\\label{eq_34}
    &= t^{\pare{k+1}} e^{-t^{\pare{k+1}}} f(\x^*).
  \end{align}
  Plugging \cref{eq_34} into \cref{eq_30} we get,
  \begin{align}
    f(\x^{\pare{k+1}})    \geq t^{\pare{k+1}} e^{-t^{\pare{k+1}}}
    f(\x^*) -  \frac{L D^2}{2}\gamma^2(k+1) - O(\gamma^2)f(\x^*). 
  \end{align}
  Thus proving the induction, and proving the theorem as well.
\end{proof}

%

\def\dir{chapters/maxcut}
\chapter{Validating Greedy \MAXCUT\ Algorithms}
\label{chapter_greedy_maxcut}

\begin{chapquote}{Richard P. Feynman}
We are trying to prove ourselves wrong as quickly as possible, because only in that way can we find progress.
\end{chapquote}

\MAXCUT\ defines a classical NP-hard problem for graph partitioning
and it serves as a typical instance of the  non-monotone
Unconstrained Submodular Maximization (USM) problem.  Greedy
algorithms to approximately solve \MAXCUT\ rely on greedy vertex
labelling or on an edge contraction strategy. These algorithms have
been studied by measuring their approximation ratios in the worst case
setting but very little is known to characterize their robustness to
noise contaminations of the input data in the average case.

Adapting the framework of Approximation Set Coding of
\citet{Buhmann10isit}, we present a method to exactly measure the
cardinality of the algorithmic approximation sets of five greedy
\MAXCUT\ algorithms. Their information contents are explored for graph
instances generated by two different noise models: the edge reversal
model and Gaussian edge weights model. The results provide insights
into the robustness of different greedy heuristics and techniques for
\MAXCUT, which may be used for algorithm design of general USM
problems.

\section{Why Validating Greedy \MAXCUT\ Algorithms?}

Algorithms are mostly analyzed by measuring their runtime and memory
consumption for the worst possible input instance.  The robustness of
an algorithm to input fluctuations is rarely investigated although
such a property might often be indispensable in applications.  In
these scenarios, algorithms are also selected according to their
``robustness'' to noise perturbations of the input instance and their
insensitivity to randomization during algorithm execution.

Taking the \MAXCUT\ problem for example, in practice, instead of
having the graph $G$ as input to recover the maximal cut, one usually
only has access to multiple noisy observations of the graph $G$.
Assuming for simplicity, there are two noisy observations of the
underlying master graph $G$: $G'$ and $G''$, and we want to recover
the maximal cut with respect to $G$.  The ability of an algorithm to
recover the true solutions given only noisy observations is closely
related to the robustness/informativeness of the algorithm.

How should this ``robustness'' property be measured? Machine learning
requires that algorithms with random variables as input generalize
over these fluctuations. The algorithmic answer has to be stable
with respect to this uncertainty in the input instance. Approximation Set
Coding (ASC) quantifies the impact of input randomness on the solution
space of an algorithm by measuring the attainable resolution for the
algorithm's output.  In this chapter we employ this framework in an
exemplary way by estimating the robustness of \MAXCUT\ algorithms to
specific input instances. Thereby, we effectively perform an average
case analysis of the generalization properties of greedy \MAXCUT
algorithms.

\subsection{\MAXCUT  and Unconstrained Submodular Maximization}

Given an undirected graph $G=(\gndset, E; \BW)$ with vertex set
$\gndset=\{v_1,..., v_n\}$ and edge set $E$ with nonnegative weights
$w_{ij}, \forall (i,j)\in E$, the \MAXCUT\ problem aims to find a
partition of vertices into two disjoint subsets $S_1$ and $S_2$, such
that the cut value
$cut(S_1, S_2):=\sum_{i\in S_1} \sum_{j\in S_2} w_{ij}$ is maximized.

\MAXCUT\ is employed in various applications, such as in
semi-supervised learning (\cite{Wang:2013:SLU:2502581.2502605}), in
social network (\cite{agrawal2003mining}), in statistical physics and
in circuit layout design (\cite{barahona1988application}).  \MAXCUT\
is considered to be a typical case of the USM problem because its
objective can be formulated as a set function:
$f(S):=cut(S, \gndset\setminus S)
, S\subseteq \gndset$,
which is submodular, non-monotone, and symmetric
($f(S) = f(\gndset\setminus S)$).  Beside \MAXCUT, USM captures many
practical problems such as \textsc{MaxDiCut}
\citep{halperin2001combinatorial}, variants of \textsc{MaxSat} and the
maximum facility location problem
\citep{cornuejols1977uncapacitated,ageev19990}.

\subsection{Greedy Heuristics and Techniques}

The five algorithms investigated in this chapter (as summarized in
\cref{tab-alg-summarization}) belong to two greedy
\textit{heuristics}: double greedy and backward
greedy. The \textit{double} greedy algorithms conduct classical
forward greedy and backward greedy simultaneously: it works on two
solutions initialized as $\emptyset$ and the ground set \gndset,
respectively, then processes the
elements (vertices for \MAXCUT\ problem) one at a time, for which it
determines whether it should be added to the first solution or removed
from the second solution.  The \textit{backward} greedy algorithm
removes the smallest weighted edge in each step.  The difference of
the four double greedy algorithms originates from the greedy
{techniques} they use: sorting, randomization and the way to
initialize the first two vertices.

\subsection{Approximation Set Coding for Algorithm Analysis}

In analogy to Shannon's theory of communication, the ASC framework
(\cite{Buhmann10isit}, \cite{JB:mcpr:2011}, \cite{buhmann2013simbad})
determines distinguishable sets of solutions and, thereby, provides a
general principle to conduct model validation
(\cite{DBLP:dblp_journals/jmlr/ChehreghaniBB12},
\cite{zhousparse}). As an algorithmic variant of the ASC framework,
\citet{busse2012information, informativemst} define the
\textit{algorithmic $t$-approximation set} of an algorithm
$\mathscr{A}$ at step $t$ as the set of feasible solutions after $t$
steps, $C_t^{\mathscr{A}}(G)$, which  is the solution
set that  are still considered as viable by $\mathscr{A}$ after $t$
computational steps.

 Since we
investigate the average case behavior of algorithms, we have to
specify the probability distribution of the input instances. 
ASC follows the two-instance scenario (as shown in \cref{sec_intro_alg_information}) to generate the graph instances: 
First,  generate a
``master graph'' $G$, e.g., a complete graph with Gaussian distributed edge
weights.
In a second step, we generate two input graphs
$G^\prime,\;G^{\prime\prime}$ by independently applying a noise
process to edge weights of the master graph $G$.  
With an abuse of notation, we use $\graphrv, \graphrv'$ and $\graphrv''$ to denote the corresponding  random variables in this generative process.

The algorithmic analogy of \textit{information content}
(\cite{Buhmann10isit,informativemst}), i.e., algorithmic information
content $I^{\mathscr{A}}(\graphrv'; \graphrv'')$, is
computed as the maximum stepwise information
$I_t^{\mathscr{A}}(\graphrv'; \graphrv'')$:
\begin{align}\label{eq:ic}
  &  I^{\alg}(\graphrv'; \graphrv'') := 
    \max_{t}  I_t^{\alg} (\graphrv'; \graphrv'')\\
  &= 
    \max_t \E_{G', G''} \left[ 
    \log \left( |\C| \frac{
    |\Delta_t^{\mathscr{A}}(G',G'')|}{|C_t^{\mathscr{A}}(G')|
    |C_t^{\alg}(G'')|} \right)  
    \right],
\end{align}
where
$\Delta_t^{\alg}(G',G'') := C_t^{\mathscr{A}}(G') \cap
C_t^{\mathscr{A}}(G'')$
denotes the intersection of approximation sets, and
$\C$ is the solution space, i.e., all possible cuts.
The information content $ I^{\mathscr{A}}_t(\graphrv'; \graphrv'')$ measures how much
information is extracted by algorithm ${\mathscr{A}}$ at iteration $t$
from the input data that is relevant to the output
data.

\begin{table}
\begin{center}
\normalsize
\caption{Summary of Greedy \MAXCUT\   Algorithms \citep{ITW15_BianGB}}
\label{tab-alg-summarization}
\begin{tabular}{|c|c|c|c|c|c|}
\hline
\multirow{2}{*}{Name}  & Greedy     & \multicolumn{3}{c|}{Techniques}  \\
   \cline{3-5}
     & Heuristic &  Sorting & Randomization & Init. Vertices \\
  \hline
  \hline
  D2Greedy  & \multirow{4}{*}{Double} &  &  &\\
  \cline{1-1}  \cline{3-5}
  RDGreedy  &  &  & $\checkmark$  &\\
  \cline{1-1}  \cline{3-5}
  SG & &  &  & $\checkmark$\\
  \cline{1-1}  \cline{3-5}
  SG3  &  &  $\checkmark$  &   & $\checkmark$\\
  \hline
  EC & Backward & $\checkmark$  &  &\\
  \hline
\end{tabular}
\end{center}
\end{table}

\section{Greedy \MAXCUT\  Algorithms}

We investigate five greedy algorithms (\cref{tab-alg-summarization}) for \MAXCUT. According to the type of
greedy heuristic, they can be
divided into two categories: I)~\textit{Double Greedy}: SG, SG3,
D2Greedy, RDGreedy; II)~\textit{Backward Greedy}: Edge Contraction.

Besides the type of greedy heuristic, the difference between the
algorithms are mainly in three techniques: \textit{sorting} the
candidate elements, \textit{randomization} and the way
\textit{initializing the first two vertices}. In the following, we
briefly introduce one typical algorithm in each category and we
present details of the others  in \cref{sup:alg}.

\subsection{Double Greedy Algorithms}

D2Greedy (\cref{alg:d-usm-4-max-cut}) is the
{\textbf{D}}eterministic double greedy, RDGreedy is the
\textbf{R}andomized double greedy,
they were proposed by \citet{buchbinder2012tight} to solve the general
USM problem with ${1/3}$ and ${1/2}$ worst-case approximation
guarantee, respectively.  They use the same double greedy heuristic as
SG \citep{sahni1976p} and SG3 (presented in \citet{kahruman2007greedy}, it is a variant of SG), which are classical
greedy \MAXCUT\ algorithms.  
We prove in  \cref{app:equivalence-SG-D2Greedy} that, for
\MAXCUT, SG and D2Greedy use equivalent labelling criteria except for
initializing the first two vertices.

\begin{algorithm}
  \caption{D2Greedy \citep{buchbinder2012tight}}\label{alg:d-usm-4-max-cut}
\KwIn{Complete graph $G=(\gndset, E; \BW)$ with nonnegative edges}
\KwOut{A disjoint cut  and the cut value}
{$S^0 :=\emptyset$, $T^0 := \gndset$}\;
\For{$i=1$ to $n$}{
    {$a^i := f(S^{i-1} \cup \{v_i\})-f(S^{i-1})$}\;
    {$b^i := f(T^{i-1} \backslash \{v_i\})-f(T^{i-1})$}\;
    \If{$a^i\geq b^i$}
        {$S^i :=S^{i-1} \cup \set{v_i}$, $T^{i}:=T^{i-1}$ \tcp*[r]{expand $S$}} 
    \Else{$S^i :=S^{i-1}$, $T^{i}:=T^{i-1} \backslash \set {v_i}$ \tcp*[r]{shrink $T$}}
}
\KwRet{$S^n$, $\gndset \backslash S^n$, and $cut(S^n, \gndset \backslash S^n)$} 
\end{algorithm}

As shown in \cref{alg:d-usm-4-max-cut}, D2Greedy maintains two
solution sets: $S$ initialized as $\emptyset$, $T$ initialized as the
ground set \gndset. It labels all the vertices one by one: for vertex
$v_i$, it computes the objective gain of adding $v_i$ to $S$ and the
gain of removing $v_i$ from $T$, then labels $v_i$ to have higher
objective gain.

SG and D2Greedy differ in the initialization of the first two
vertices: SG picks first of all the maximum weighted edge and
distributes its two vertices to the two active subsets. Compared to
D2Greedy, the RDGreedy algorithm uses randomization technique when
labelling each vertex: it labels each vertex with probability
proportional to the objective gain. Compared to SG, SG3 sorts the
unlabelled vertices according to a certain score function (which is
proportional to the possible objective gains), and selects the vertex
with the maximum score to be the next one to be labelled.

\subsection{The Edge Contraction (EC) Algorithm}
\label{sec:ec}

EC \citep{kahruman2007greedy}, which is summarized in  \cref{alg:edge-contraction},  contracts
the smallest edge in each step. The two vertices of this contracted
edge become one ``super'' vertex, and the weight of an edge connecting
this super vertex to any other vertex is assigned as the sum of
weights of the original two edges.  EC belongs to the backward greedy
in the sense that it tries to remove the least expensive edge from the
cut set in each step.  We can easily derive a heuristic for the
{\textsc{Max-k-Cut}} problem by using $n-k$ steps instead of $n-2$
steps.

\begin{algorithm}
\caption{Edge Contraction (EC) \citep{kahruman2007greedy}}\label{alg:edge-contraction}
\KwIn{ Complete graph $G=(\gndset, E; \BW)$ with nonnegative edge weights}
\KwOut{A disjoint cut $S_1, S_2$ and cut value $cut(S_1, S_2)$}
\For{$i=1:n$}
    {$ContractionList(i):= \{i\}$\;}
\For{$i=1:n-2$}{
    {Find a minimum weight edge $(x, y)$ in $G$}\; %
    {$v:=contract(x,y)$, $\gndset:=\gndset\cup\{v\}\backslash\{x,y\}$ \tcp*[r]{contract}}   
    \For{$j\in \gndset\backslash \{v\}$}
        {$w_{vj}:= w_{xj}+w_{yj}$\;}
    {$ContractionList(v) := ContractionList(x) \cup ContractionList(y)$}\;
}
{Denote by $x$ and $y$ the only two vertices in $\gndset$}\;
\Return{$S_1:=ContractionList(x)$, 
$S_2:=ContractionList(y)$,
$cut(S_1, S_2):=w_{xy}$}
\end{algorithm}

\section{Counting Solutions in Approximation Sets}
\label{sec:counting}

To compute the information content according to \cref{eq:ic}, we need
to exactly compute the cardinalities of different solution sets. For
\MAXCUT{} problem, the solution space has the cardinality
$|\C| = 2^{n-1} - 1$. In the following we will present guaranteed
methods for \textit{exact} counting
$|C_t^{\mathscr{A}}(G')|, |C_t^{\mathscr{A}}(G'')|$ and
$|\Delta_t^{\mathscr{A}}(G',G'')|$ (sub-/superscripts omitted for
notational clarity).

\subsection{Counting Methods for Double Greedy Algorithms}
\label{sec:counting-sg3}

The counting methods for the double greedy algorithms are similar, so
we only discuss the method for SG3 here; details about other methods
and the corresponding proofs are in \cref{complement:counting} and
\cref{app:proof-SG3}, respectively.

For the SG3 (\cref{alg:sg3}), after step $t$ ($t = 1, ..., n-1$) there
are $k = n-t-1$ unlabelled vertices, and it is clear that
$|C(G')|=|C(G'')|=2^{k}$.

To count the intersection set $\Delta (G',G'')$, assume the solution
set pair of $G'$ is $(S_1', S_2')$, the solution set pair of $G''$ is
$(S_1'', S_2'')$, so the unlabelled vertex sets are
$T'=\gndset\backslash \{S_1' \cup S_2'\}$,
$T''=\gndset\backslash \{S_1'' \cup S_2''\}$, respectively.  Denote
$L:=T'\cap T''$ be the common vertices of the two unlabelled vertex
sets, so $l=|L|$ ($0\leq l\leq k$) is the number of common vertices in
the unlabelled $k$ vertices. Denote $M' :=T'\backslash L$,
$M'' :=T''\backslash L$ be the sets of different vertex sets between
the two unlabelled vertex sets.
Then,
\begin{equation}\notag
   | \Delta (G',G'') | = 
  \left\{\begin{array}{ll}
           \multirow{2}{*}{$2^l$}  & \textrm{if
                                     $(S_1''\backslash M', S_2''\backslash M')$ is
                                     matched by}\\
                                   & \textrm{$(S_1'\backslash M'', S_2'\backslash
                                     M'')$  or $(S_2'\backslash M'', S_1'\backslash
                                     M'')$}\\  
           0 & \textrm{otherwise.}\\
         \end{array}\right.
     \end{equation}

\subsection{Counting Method for the Edge Contraction Algorithm}
For the EC (\cref{alg:edge-contraction}), after step $t$
($t = 1,..., n-2$) there are $k = n-t$ ``super'' vertices
(i.e.~contracted ones).  It is straightforward to see that
$|C(G')|=|C(G'')|=2^{k-1}-1$.

To count the intersection $\Delta (G',G'')$, suppose there are $l$
($0\leq l\leq k$) common super vertices in the unlabelled $k$
vertices. Remove the $l$ common super vertices from each set, then
there are $h = k-l$ distinct super vertices in each set, denote them
by $P:=\{\mathbf{p}_1,\mathbf{p}_2,\cdots,\mathbf{p}_h\}$,
$Q:=\{\mathbf{q}_1,\mathbf{q}_2,\cdots,\mathbf{q}_h\}$,
respectively. Notice that
$\mathbf{p}_1\cup\mathbf{p}_2 \cup\cdots \cup \mathbf{p}_h =
\mathbf{q}_1 \cup\mathbf{q}_2\cup \cdots \cup\mathbf{q}_h$,
so after some contractions in both $P$ and $Q$, there exist some
common super vertices between $P$ and $Q$. Assume the maximum number 
of common super vertices after all possible contractions is $c^*$,
then it holds
\begin{equation}\label{}
 | \Delta (G',G'') | =2^{c^*+l-1}-1\;.
\end{equation}
To compute $c^*$, we propose a polynomial time algorithm
(\cref{alg:max-common}) with a theoretical guarantee in
\cref{theo:ec}, the proof is deferred to \cref{app:proof-EC}. The
algorithm finds the maximal number of common super vertices after all
possible contractions, that is used to count the volume of  $\Delta (G',G'')$ for EC.
\begin{theorem}[\cite{ITW15_BianGB}]
\label{theo:ec}
Given two distinct super vertex sets
$P:=\{\mathbf{p}_1,\mathbf{p}_2,\cdots,\mathbf{p}_h\}$,
$Q:=\{\mathbf{q}_1,\mathbf{q}_2,\cdots,\mathbf{q}_h\}$ (any two super
vertices inside $P$ or $Q$ do not intersect, and there is no common
super vertex between $P$ and $Q$), such that
$\mathbf{p}_1\cup\mathbf{p}_2 \cup\cdots \cup \mathbf{p}_h =
\mathbf{q}_1 \cup\mathbf{q}_2\cup \cdots \cup\mathbf{q}_h$,
\cref{alg:max-common} returns the maximum number of common super
vertices between $P$ and $Q$ after all possible contractions.
\end{theorem}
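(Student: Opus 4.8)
The plan is to translate the edge-contraction dynamics into the language of set partitions, to identify the quantity the algorithm must return as a combinatorial invariant of the pair $(P,Q)$ --- the number of connected components of their ``overlap graph'' --- and then to check that \cref{alg:max-common} indeed computes this invariant.

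First I would set $U := \mathbf{p}_1 \cup \cdots \cup \mathbf{p}_h = \mathbf{q}_1 \cup \cdots \cup \mathbf{q}_h$ and record that, under the hypotheses, $P$ and $Q$ are two partitions of $U$ into nonempty blocks with no block in common. A single contraction replaces two blocks by their union, so after any sequence of contractions applied to $P$ (resp.\ to $Q$) one obtains a coarsening $P'$ of $P$ (resp.\ $Q'$ of $Q$), and conversely every coarsening arises in this way. A block that is a common super vertex of the contracted sets is then exactly a set $B\subseteq U$ with $B\in P'\cap Q'$, i.e.\ a set that is simultaneously a union of some $\mathbf{p}_i$'s and a union of some $\mathbf{q}_j$'s. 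Hence the maximum number of common super vertices obtainable over all contraction sequences equals the largest cardinality of $P'\cap Q'$ over all coarsenings $P'$ of $P$ and $Q'$ of $Q$.

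Next I would introduce the bipartite \emph{overlap graph} $H$ on vertex set $P\sqcup Q$, placing an edge between $\mathbf{p}_i$ and $\mathbf{q}_j$ whenever $\mathbf{p}_i\cap\mathbf{q}_j\neq\emptyset$, and let $\kappa$ be its number of connected components. The crucial observation is that any $B\subseteq U$ which is both a union of $\mathbf{p}_i$'s and a union of $\mathbf{q}_j$'s must be a union of the ground sets of connected components of $H$: if $B$ contains a block that meets some component, then following the edges of $H$ forces $B$ to contain every block of that component. This gives the \emph{upper bound} --- distinct common super vertices are pairwise disjoint unions of components, so there can be at most $\kappa$ of them. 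For the matching \emph{lower bound}, note that for every component $C$ of $H$ the union of its $\mathbf{p}$-blocks equals the union of its $\mathbf{q}$-blocks (each element of $U$ links its unique containing $\mathbf{p}_i$ and $\mathbf{q}_j$ by an edge of $H$, hence they share a component); therefore contracting, within $P$ and within $Q$, all blocks of each component into one block yields two identical partitions with exactly $\kappa$ blocks, all of them common. Consequently $c^*=\kappa$.

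Finally I would argue that \cref{alg:max-common} returns $\kappa$: the algorithm scans overlapping blocks and merges them, and the correctness proof would exhibit a loop invariant asserting that at every stage the groups it maintains coincide with the connected components of the portion of $H$ examined so far; at termination this portion is all of $H$, so the number of groups is $\kappa$, and each merge costs polynomial time. I expect this last step to be the main obstacle: one must pin down the precise update rule of \cref{alg:max-common} and verify the invariant is preserved --- in particular that the procedure never merges blocks lying in different components (which would undercount $c^*$) and never terminates with two overlap-connected blocks in separate groups (which would overcount it) --- whereas the combinatorial identity $c^*=\kappa$ established in the previous paragraph is comparatively routine.
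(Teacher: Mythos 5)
Your proposal is correct in substance but follows a genuinely different route from the paper. The paper's proof never names the invariant you identify; instead it verifies, by induction on the steps of \cref{alg:max-common}, three properties of the algorithm's state (the maintained blocks stay pairwise distinct, each removed common super vertex is the \emph{smallest} common super vertex containing the blocks that built it, and that smallest one is \emph{unique}), and from these concludes that the algorithm removes exactly the minimal common super vertices and hence returns the maximum count. Your argument makes explicit what the paper leaves implicit: the minimal common super vertices are precisely the ground sets of the connected components of the bipartite overlap graph $H$, so $c^*=\kappa$, and this identity is proved cleanly and independently of the algorithm (upper bound by following edges of $H$, lower bound by contracting each component). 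This buys a characterization of the answer that is easier to check and reuse, and it isolates the algorithmic content as ``\cref{alg:max-common} computes the number of connected components of $H$.'' The trade-off is that you defer exactly the part the paper labors over: verifying that the specific merge-and-remove rule of \cref{alg:max-common} (which picks blocks in arbitrary order and does only one merge per side per iteration) really does implement component counting. Your proposed loop invariant --- merges only ever occur within a single component of $H$, and a block pair is removed and counted exactly when a component has fully collapsed on both sides --- is the right one and does hold, but as you anticipate, writing it out carefully is where the real work lies; the paper's Conditions 1--3 are essentially that verification in disguise. With that step completed, your proof would be a valid and arguably more transparent alternative.
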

\begin{algorithm}
\begin{small}
  \caption{Common Super Vertex Counting \citep{ITW15_BianGB}
}\label{alg:max-common}
\KwIn{Two distinct super vertex sets $P$, $Q$ } \KwOut{Maximum number
  of common super vertices after all possible contractions} {$c:=0$\;}
\While{$P\neq \emptyset$}{ {Randomly pick $\mathbf{p}_i\in P$\;} {Find
    $\mathbf{q}_j\in Q$ s.t.
    $\mathbf{p}_i\cap \mathbf{q}_j\neq \emptyset$\;}
    \If{$\mathbf{q}_j\backslash\mathbf{p}_i\neq\emptyset$}{
        {For $\mathbf{p}_i$, find $\mathbf{p}_{i'}\in P\backslash
          \{\mathbf{p}_i\}$ s.t. $\mathbf{p}_{i'}\cap
          (\mathbf{q}_j\backslash\mathbf{p}_i)\neq\emptyset$\;}  
        {$\mathbf{p_{ii'}}:=\mathbf{p}_i\cup \mathbf{p}_{i'}$,
          $P:=P\cup\{\mathbf{p_{ii'}}\}\backslash \{\mathbf{p}_i,
          \mathbf{p}_{i'}\}$ %
        }\;         
    }
    \If{$\mathbf{p}_i\backslash\mathbf{q}_j\neq\emptyset$}{
        {For $\mathbf{q}_j$, find $\mathbf{q}_{j'}\in
          Q\backslash\{\mathbf{q}_j\}$ s.t. $\mathbf{q}_{j'}\cap
          (\mathbf{p}_i\backslash\mathbf{q}_j)\neq\emptyset$\;} 
        {$\mathbf{q_{jj'}}:=\mathbf{q}_j\cup \mathbf{q}_{j'}$,
          $Q:=Q\cup\{\mathbf{q_{jj'}}\}\backslash \{\mathbf{q}_j,
          \mathbf{q}_{j'}\}$ %
        }\; } \If{$\mathbf{p_{ii'}}==\mathbf{q_{jj'}}$}{ {Remove
          $\mathbf{p_{ii'}}$, $\mathbf{q_{jj'}}$ from $P$, $Q$,
          respectively\;} {$c:=c+1$\;}
    }
}
\Return{$c$}
\end{small}
\end{algorithm}

\section{Experiments}

We conducted experiments on two exemplary models: the edge reversal
model and the Gaussian edge weights model. Each model involves the
master graph $G$ and a noise type used to generate the two noisy
instances $G^\prime$ and $G^{\prime\prime}$. The width of the instance
distribution is controlled by the strength of the noise model. These
models provide the setting to investigate the algorithmic behavior.

\setkeys{Gin}{width=.9\textwidth, height=0.45\textheight}
\begin{figure}[htbp]
\begin{center}
\subfloat[Edge Reversal Model, $n$=100]{
\includegraphics{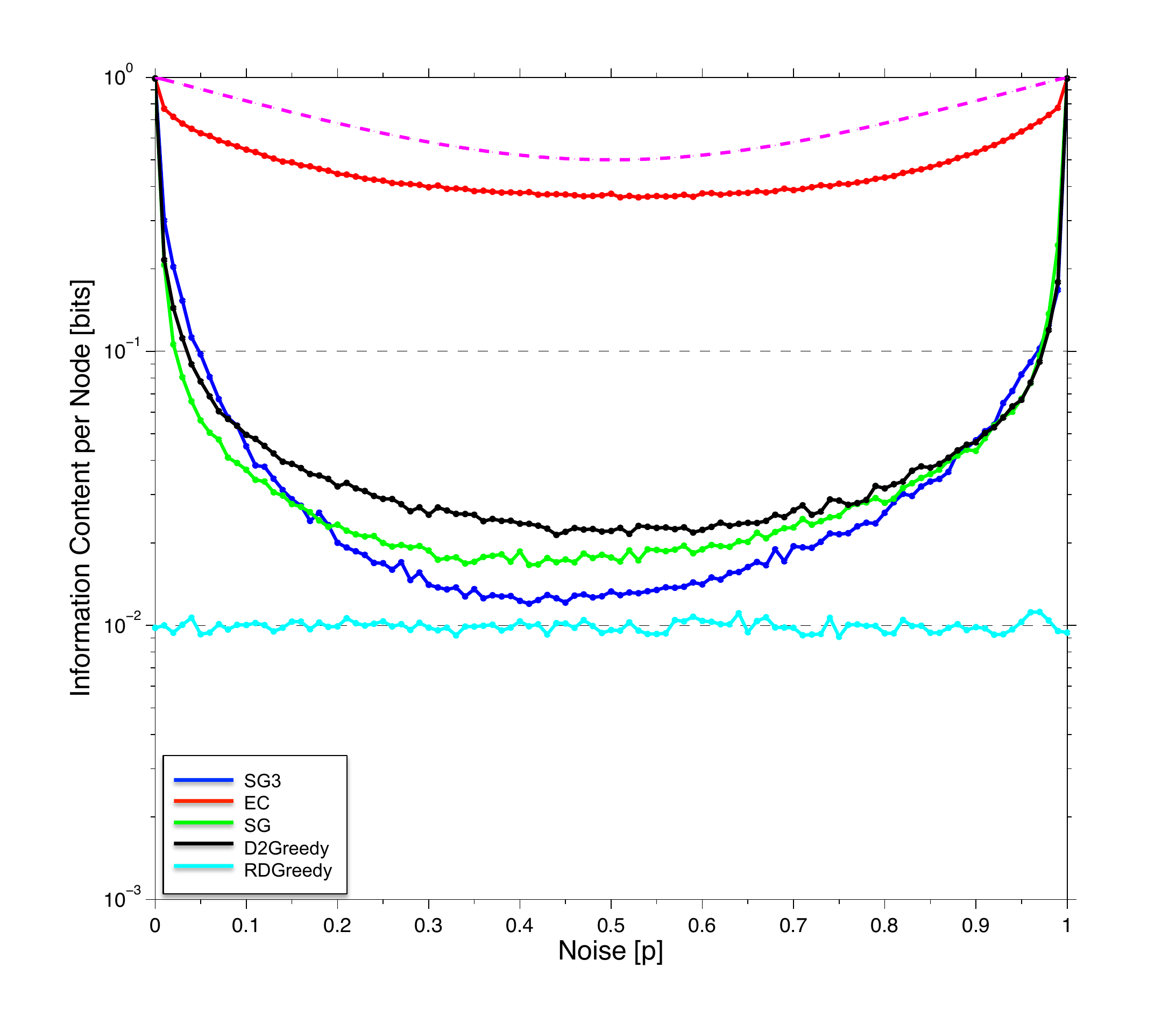}
}\\
\subfloat[Gaussian Edge Weights Model, $n$=100]{
\includegraphics{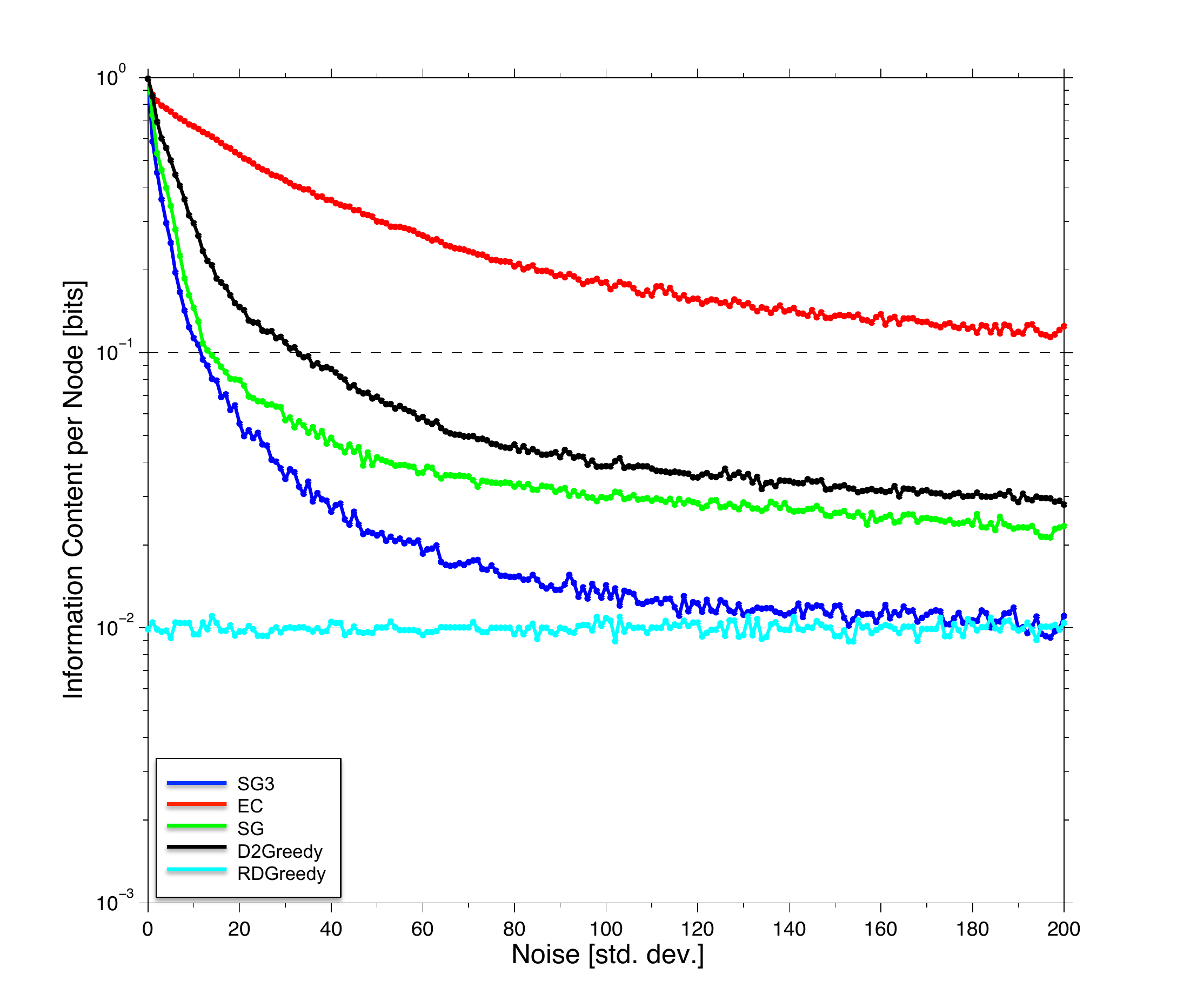}}
\end{center}
\caption{Information content per node.}
\label{fig:ic}
\end{figure}

\subsection{Experimental Setting}

\paragraph{Edge Reversal Model.}%
To obtain the master graph, we generate a balanced bipartite graph
$G_b$ with disjoint vertex sets $S_1$, $S_2$.  Then we assign
uniformly distributed weights in $[0, \frac{8}{n^2}]$ to all edges
inside $S_1$ or $S_2$ and we assign uniformly distributed weights in
$[1- \frac{8}{n^2}, 1]$ to all edges between $S_1$ and $S_2$, thus
generating graph $G'_b$. Then randomly flip edges in $G'_b$ to
generate the master graph $G$. Here, flip edge
$e_{ij}$ means changing its weight $w_{ij}$ to $1-w_{ij}$ with
probability $p_m$, and $(flip\; e_{ij} ) \sim Ber
(p_m)$; $p_m = 0.2$ is used to generate the master graph $G$.
Noisy graphs $G'$, $G''$ are generated by flipping the edges in $G$
with probability $p$, that is $(flip\; e_{ij} ) \sim Ber (p)$.

\paragraph{Gaussian Edge Weights Model.} %
The master graph $G$ is generated with Gaussian distributed edge
weights $w_{ij} \sim N (\mu, \sigma_m^2)$, $\mu = 600, \sigma_m = 50$,
negative edges are set to be $\mu$.  {Noisy graphs $G'$, $G''$} are
obtained by adding Gaussian distributed noise $n_{ij} \sim N (0,
\sigma^2)$, negative noisy edges are set to be $0$.

For both noise models, we conducted 1000 experiments on 
i.i.d. generated noisy graphs $G'$ and $G''$, and then we aggregated the
results to estimate the expectation in  \cref{eq:ic}.

\subsection{Results}

We plot the information content and stepwise information \textit{per
  node} in ~\cref{fig:ic} and ~\cref{fig:stepwise-info},
respectively. For the edge reversal model, we also investigate the
number of equal edge pairs between $G'$ and $G''$: $d = 0, ..., m$
($m$ is the total edge number), $d$ measures the consistency of the
two noisy instances.  The expected fraction of equal edge pairs is
$\mathbb{E} [d] = p^2 + (1 - p)^2$, and it is plotted as the dashed
magenta line in \cref{fig:ic}(a).

\subsection{Analysis}

Before discussing these results, let us revisit the stepwise
information and information content.  From the counting methods in
 \cref{sec:counting}, we derive the analytical form of $|\C|$,
$|C_t^{\mathscr{A}}(G')|$ and $ |C_t^{\mathscr{A}}(G'')|$ (e.g., 
$\mathscr{A}=SG3$), and we insert these values into the definition of
stepwise information,
\begin{align}\label{eq:stepwise-sg3}
  &I_t^{\alg } = \mathbb{E} \log \Bigl(|\C|
  \frac{|\Delta_t^{\mathscr{A}}(G',G'') |}{ |C_t^{\mathscr{A}}(G')|  |C_t^{\mathscr{A}}(G'')|} \Bigr)\\
   & = \mathbb{E} (\log(|\C| |\Delta_t^{\mathscr{A}}(G',G'') |) -
  \log(|C_t^{\mathscr{A}}(G')|  |C_t^{\mathscr{A}}(G'')|))\\ 
   & = \mathbb{E} \log |\Delta_t^{\mathscr{A}}(G',G'')| + 2t +
  \log(2^{n-1} - 1) -2(n - 1).
\end{align}

\setkeys{Gin}{width=0.9\textwidth,height=0.45\textheight}
\begin{figure}[htbp]
\begin{center}
  \subfloat[Edge Reversal Model, $n=100$ , $p = 0.65$]{
    \includegraphics{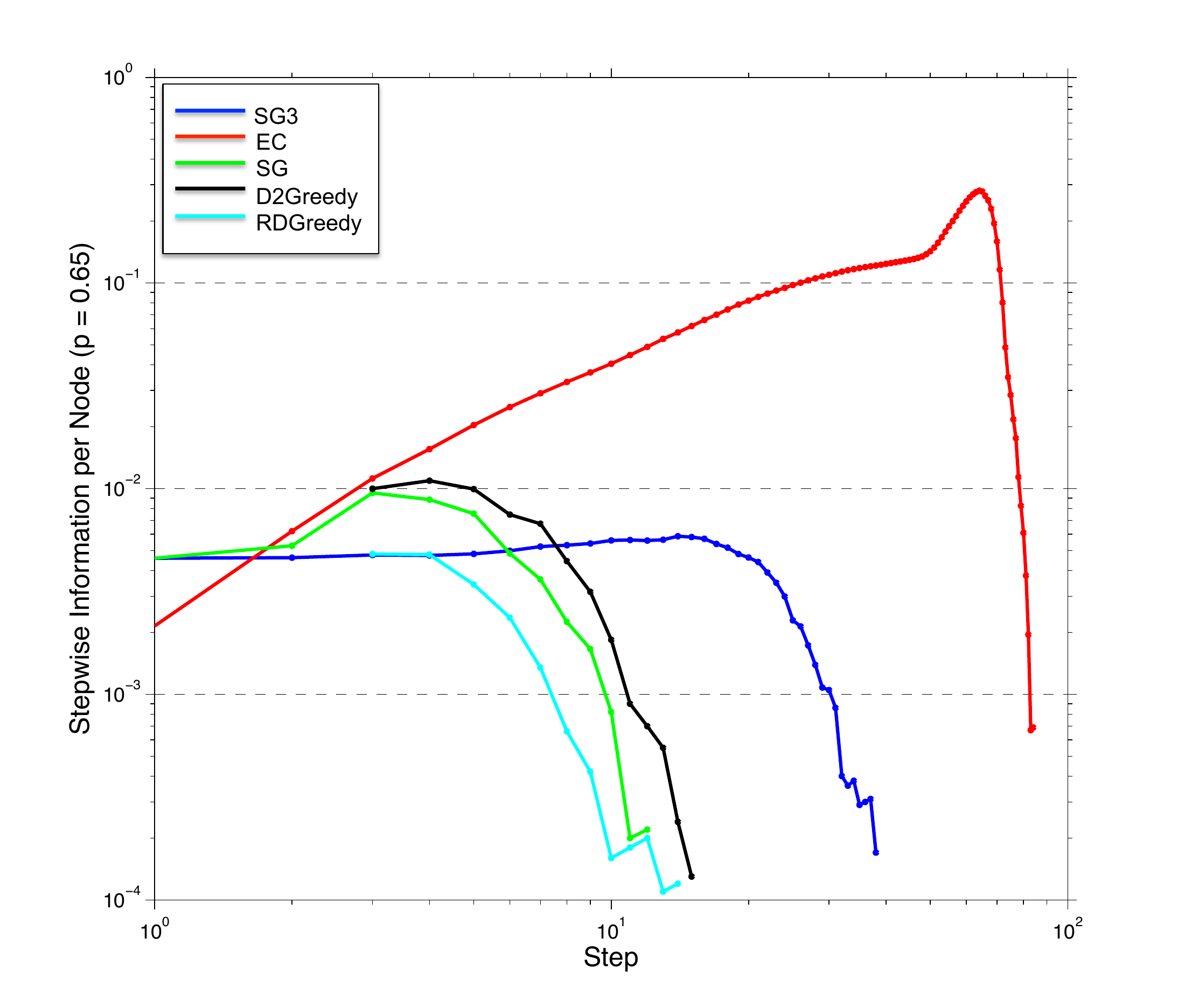}}\\
  \subfloat[Gaussian Edge Weights Model, $n=100$, $\sigma=125$ ]{
    \includegraphics{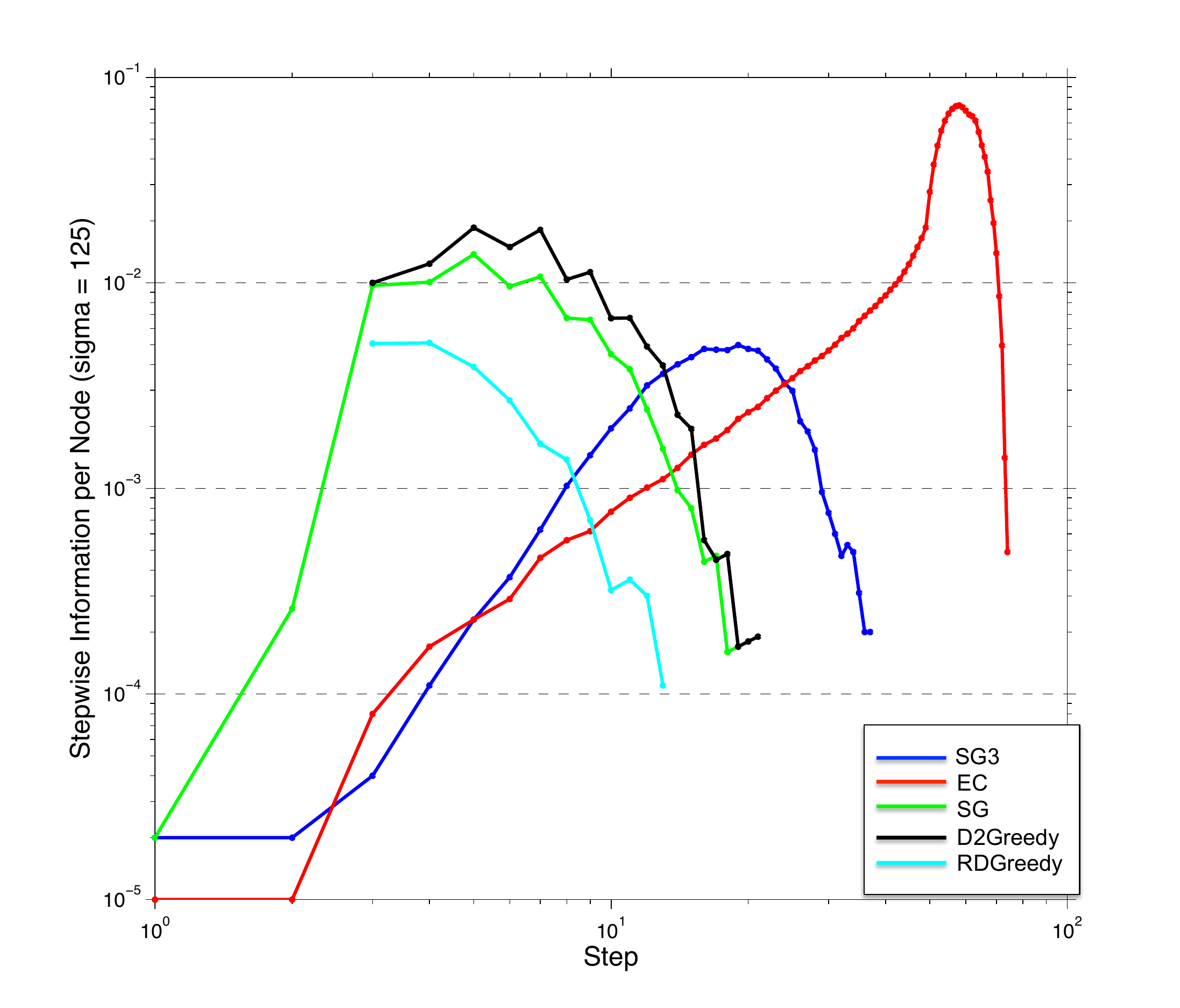}}
\end{center}
\caption{Stepwise information per node.}
\label{fig:stepwise-info}
\end{figure}

The information content is computed as the maximum stepwise
information $I^{\mathscr{A}} := \max_t I_t^{\mathscr{A}}$.  Notice
that $\log |\Delta_t^{\mathscr{A}}(G',G'') |$ measures the ability of
$\mathscr{A}$ to find common solutions for the two noisy instances
$G', G''$, given the underlying input graph $G$.

Our results support the following observations and analysis:

\begin{enumerate}

\item All investigated algorithms reach the maximum information
  content in the noise free limit ($G' = G''$), i.e., for $p=0, 1$ in
  the edge reversal model and for $\sigma=0$ in the Gaussian edge
  weights model.
In this circumstance, 
$\mathbb{E} \log |\Delta_t^{\mathscr{A}}(G',G'')| =
\log|C_t^{\mathscr{A}}(G')| = n - t -1$, so $I_t^{\mathscr{A}} = t +
\log(2^{n-1} - 1) -(n - 1)$, and the information content reaches
its maximum $\log(2^{n-1} - 1)$ at the final step $t = n-1$.

\item \cref{fig:ic}(a) demonstrates that the information content
  qualitatively agrees with the consistency between two noisy
  instances (the dashed magenta line), which reflects that
  $\log |\Delta_t^{\mathscr{A}}(G',G'') |$ is affected by the noise level.  

\item Stepwise information (\cref{fig:stepwise-info}) of the
  algorithms increase initially, but after reaching the optimal step
  $t^*$ (the step with highest information), it decreases and finally
  vanishes.

\item For the greedy heuristics, backward greedy is more informative
  than double greedy under both models.  EC (with the backward greedy strategy) achieves
  the highest information content. We explain this behavior by
  \emph{delayed decision making} of the backward greedy strategy.  With high probability it preserves consistent
  solutions by contracting low weight edges that have a low
  probability to be included in the cut. The same phenomena arises for
  the reverse-delete algorithm to calculate the minimum spanning tree
  of a graph \citep{informativemst}.

\item The information content of the four double greedy algorithms
achieve different rank orders for the two models.  SG3 is inferior to
other double greedy algorithms for the Gaussian edge weights model, but
this only occurs when $p\in [0.2, 0.87]$ for the edge reversal model.
This observation  results from that information content of one specific algorithm is affected by 
both the input master graph $G$ and the noisy instances $G', G''$, 
which are completely different under the two models.

\item Different greedy techniques cast different influences on the
  information content.  The four double greedy algorithms differ by
  the techniques they use (\cref{tab-alg-summarization}). (1) The
  randomization technique makes RDGreedy very fragile
  w.r.t. information content, though it improves the worst-case
  approximation guarantee for the general USM problem
  \citep{buchbinder2012tight}.  RDGreedy labels each vertex with a
  probability proportional to the objective gain, this randomization
  makes the consistency between $C_t^{\mathscr{A}}(G')$ and
  $C_t^{\mathscr{A}}(G'')$ very weak, resulting in small
  approximation set intersection $|\Delta_t^{\mathscr{A}}(G',G'')|$.
  (2) The initializing strategy for the first two vertices as used in SG
  decreases the information content (SG is outperformed by D2Greedy
  under both models) due to \textit{early} decision making. (3) The situation
  is similar for the sorting technique used in SG3 under Gaussian
  edge weights model, it is outperformed by both SG and D2Greedy.  But
  for the edge reversal model, this observation only holds when
  $p\in [0.2, 0.87]$.

\item SG and D2Greedy behave very similar under both models, which is
  caused by an equivalent processing sequence apart from initializing
  of the first two vertices (proved in
  \cref{app:equivalence-SG-D2Greedy}).

\end{enumerate}

\section{Conclusions and Discussions}

In this chapter we advocate an information theoretically guided
average case analysis of the generalization ability of greedy \MAXCUT\
algorithms.  We have presented provably correct methods to
\textit{exactly} compute the cardinality of approximation sets.  The
counting algorithms for approximate solutions enable us to explore the
information content of greedy \MAXCUT\ algorithms.  Based on the
observations and analysis, we propose the following conjecture:

\begin{conjecture}
  Different greedy heuristics (backward, double) and different
  processing techniques (sorting, randomization, initialization)
  sensitively influence the information content. The backward greedy
  with its delayed decision making consistently outperforms the double
  greedy strategies for different noise models and noise levels.
\end{conjecture}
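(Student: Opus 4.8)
This is posed as a conjecture, so a ``proof proposal'' here means outlining a research programme that would upgrade it to a theorem for a concrete parametric family of noise models — realistically the edge--reversal $\mathrm{Ber}(p)$ model and the Gaussian--$\sigma$ model used in the experiments — together with an honest account of where such an argument currently stalls. The plan is to first collapse the whole comparison onto a single scalar functional per algorithm. Using the exact decomposition of \cref{eq:stepwise-sg3} for the double--greedy family, and its analogue with exponent $c^*+l-1$ coming from \cref{theo:ec} for \textsf{EC}, together with the definition in \cref{eq:ic}, every term in $I^{\mathscr{A}}$ except $\mathbb{E}\log|\Delta_t^{\mathscr{A}}(G',G'')|$ is the same across algorithms; hence $I^{\mathscr{A}} = \Phi^{\mathscr{A}} + \log(2^{n-1}-1) - 2(n-1)$ with $\Phi^{\mathscr{A}} := \max_t\bigl(\mathbb{E}\log|\Delta_t^{\mathscr{A}}(G',G'')| + 2t\bigr)$, and \emph{both} halves of the conjecture — ``techniques influence $I^{\mathscr{A}}$'' and ``backward greedy dominates'' — reduce to comparing the scalars $\Phi^{\mathscr{A}}$, uniformly in the noise parameter. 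As a preliminary step I would prove a concentration lemma: the exponent $c^*+l$ (resp.\ $l$ for the double--greedy algorithms) is a bounded--difference function of the independent edge perturbations, so it concentrates around its mean and one may replace $\mathbb{E}\log|\Delta_t^{\mathscr{A}}|$ by $\log\mathbb{E}|\Delta_t^{\mathscr{A}}|$ up to $o(n)$, which linearises the remaining work.

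\textbf{The backward--greedy side.} For \textsf{EC} (\cref{alg:edge-contraction}) I would control the exponent $c^*+l$ through order statistics of the perturbed edge weights. The formal content of ``delayed decision making'' is that the state of \textsf{EC} after $t$ steps is exactly the partition obtained by contracting the $t$ globally lightest edges, and a light edge of $G'$ is, with probability $1-O(\epsilon)$, also among the lightest in $G''$; consequently the two contraction forests share a common sub--forest whose expected size is $(n-t)\,(1-O(t\epsilon))$, giving $\mathbb{E}[l] = (n-t)\,(1-O(t\epsilon))$ and a further lower bound on $\mathbb{E}[c^*]$ via the shared sub--forest. This yields $\Phi^{\mathrm{EC}} \ge (1-o_\epsilon(1))\log|\C|$: the decisions \textsf{EC} commits to are precisely the ones stable under the noise, so in the small--noise regime it recovers nearly the full solution space on the intersection.

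\textbf{The double--greedy side and the techniques comparison.} For the double--greedy algorithms the intersection is $2^l$ or $0$, and the $0$ event — a mismatch between the already--labelled parts of $G'$ and $G''$ — is fatal. I would show (i) a single disagreeing early label zeros $|\Delta_t^{\mathscr{A}}|$ for every later $t$, and (ii) on the random ensemble the first few labelling margins $|a^i - b^i|$ are of the same order as the noise with constant probability, so $\Pr[\text{early mismatch}] \ge \delta > 0$; together these force $\Phi^{\mathscr{A}} \le (1-\delta')\log|\C|$ for each double--greedy variant, separating them from \textsf{EC}. The intra--family ranking is then obtained by isolating one technique at a time: \textsf{RDGreedy} couples to \textsf{D2Greedy} through the same $(G',G'')$ but injects fresh internal randomness independent across the two runs, and a conditional computation gives $\mathbb{E}\log|\Delta_t^{\mathrm{RD}}| \le \mathbb{E}\log|\Delta_t^{\mathrm{D2}}| - \Omega(t)$; the \textsf{SG} initialization turns the heaviest edge into an early hard decision; the \textsf{SG3} sort rigidifies the processing order, which can only weaken the coupling; and the equivalence of \textsf{SG} and \textsf{D2Greedy} up to initialization proved in \cref{app:equivalence-SG-D2Greedy} pins the rest.

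\textbf{Main obstacle.} The hard part is not any individual estimate but the word ``consistently'': the statement quantifies over all noise models and levels, while the \textsf{EC} exponent $c^*$ has no closed form — it is \emph{defined} by \cref{alg:max-common} — so one must replace it by a purely structural surrogate (the shared contraction sub--forest of $G'$ and $G''$) that lower--bounds $c^*+l$ and is tractable under order--statistic analysis. I expect the $\mathbb{E}\log$--versus--$\log\mathbb{E}$ gap to be the first technical bottleneck and the ensemble--uniform lower bound on the double--greedy early--mismatch probability to be the second; absent a unifying argument covering arbitrary noise processes, the realistic deliverable is a theorem for the two experimental models together with the conjecture as stated for the general case.
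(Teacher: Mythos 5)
The statement you are addressing is explicitly a \emph{conjecture}: the paper offers no proof of it, only the empirical evidence of the two noise models (the information-content curves in \cref{fig:ic}, the stepwise curves in \cref{fig:stepwise-info}, and the seven enumerated observations in the analysis section). So there is no argument in the paper to match yours against; your proposal is a research programme that goes beyond what the chapter attempts, and you are right to frame it that way. Your opening reduction to the scalar $\Phi^{\mathscr{A}}=\max_t\bigl(\mathbb{E}\log|\Delta_t^{\mathscr{A}}|+2t\bigr)$ is essentially the same normalization the paper already performs in \cref{eq:stepwise-sg3} (modulo the $-1$ discrepancy between $|C_t|=2^{n-t-1}$ for the double-greedy family and $2^{n-t-1}-1$ for \textsf{EC}, which is asymptotically harmless), and the observation that a single early labelling disagreement zeroes $|\Delta_t^{\mathscr{A}}|$ for all later $t$ is correct given the counting rules of \cref{sec:counting}.

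Two concrete problems, though. First, your formalization of ``delayed decision making'' is wrong as stated: the state of \textsf{EC} after $t$ steps is \emph{not} the partition obtained by contracting the $t$ globally lightest edges of the input graph, because \cref{alg:edge-contraction} re-sums edge weights after every contraction ($w_{vj}:=w_{xj}+w_{yj}$), so the edge selected at step $t$ depends adaptively on all earlier contractions. Any order-statistics coupling of the light edges of $G'$ and $G''$ has to track this adaptive re-weighting, which is precisely where the combinatorics of $c^*$ (and the need for \cref{alg:max-common}) comes from; the ``shared sub-forest'' surrogate does not obviously survive it. Second, your two headline bounds do not combine into ``consistently outperforms.'' The paper's own observation 1 notes that in the noise-free limit \emph{all five} algorithms attain the maximum $\log(2^{n-1}-1)$, so a double-greedy upper bound of the form $\Phi^{\mathscr{A}}\le(1-\delta')\log|\C|$ with $\delta'>0$ independent of the noise level is false; $\delta'$ must vanish as the noise vanishes, and the separation then requires showing that the double-greedy deficit vanishes strictly slower than the \textsf{EC} deficit, uniformly over the noise parameter. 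That comparison of two vanishing quantities is the actual content of the conjecture and is absent from the outline. With these caveats, the honest deliverable you name at the end --- a theorem for the two experimental models only --- is the right target, but even that requires repairing the \textsf{EC} characterization first.
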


In this work ASC has been employed as a descriptive tool to compare
algorithms. We could also use the method for algorithm design. A
meta-algorithm modifies the algorithmic steps of a \MAXCUT\ procedure
and measures the resulting change in information content. Beneficial
changes are accepted and detrimental changes are rejected. It is also
imaginable that design principles like delayed decision making are
systematically identified and then combined to improve the
informativeness of novel algorithms.

\section{Additional Details}

\subsection{Details of Double Greedy Algorithms}\label{sup:alg}

\begin{algorithm}
\begin{small}
\caption{\textbf{SG} \citep{sahni1976p}}\label{alg:sg}
\KwIn{A complete graph $G=(\gndset, E; \BW)$ with nonnegative edge
  weights $w_{ij}, \forall i,j\in \gndset, i\neq j$} \KwOut{A disjoint
  cut and the cut value} {Pick the maximum weighted edge $(x,y)$\;}
{$S_1:=\{x\}$, $S_2:=\{y\}$, $cut(S_1, S_2):=w_{xy}$\;}
\For{$i = 1:n-2$}{
    {If $w(i, S_1)>w(i, S_2)$, add $i$ to $S_2$\tcp*[r]{$w(i, S_k):= \sum_{j\in S_k}w_{ij}, k=1,2$}}
     {Else add $i$ to
      $S_1$\;}
    {$cut(S_1, S_2):=cut(S_1, S_2)+ \max\{w(i, S_1), w(i, S_2)\}$\;}
}
\Return{$S_1$, $S_2$, and $cut(S_1, S_2)$}
\end{small}
\end{algorithm}

\begin{algorithm}[htbp]
\begin{small}
\caption{\textbf{RDGreedy} (\cite{buchbinder2012tight})}\label{alg:r-usm-4-max-cut}
\KwIn{A complete graph $G=(\gndset, E; \BW)$ with nonnegative edge
  weights $w_{ij}, \forall i,j\in \gndset, i\neq j$} \KwOut{A disjoint
  cut and the cut value } {$S^0 :=\emptyset$, $T^0 := \gndset$\;}
\For{$i=1$ to $n$}{ {$a_i := f(S^{i-1} \cup \set{v_i})-f(S^{i-1})$\;}
  {$b_i := f(T^{i-1} \backslash \set{v_i})-f(T^{i-1})$\;}
  {$a_i':=\max\{a_i, 0\}$, $b_i':=\max\{b_i, 0\}$\;} {\textbf{With
      probability} $\frac{a_i'}{a_i'+b_i'}$ \textbf{do}:
    $S^i :=S^{i-1} \cup \set{v_i}$, $T^{i}:=T^{i-1}$ \tcp{If
      $a_i'=b_i'=0$, assume $\frac{a_i'}{a_i'+b_i'}=1$}}
  {\textbf{Else} (with the compliment probability
    $\frac{b_i'}{a_i'+b_i'}$) \textbf{do}:} {$S^i :=S^{i-1}$,
    $T^{i}:=T^{i-1} \backslash \set{v_i}$\;} } \Return{Two subsets:
  $S^{n}$, $\gndset \backslash S^{n}$, and
  $cut(S^{n}, \gndset \backslash S^{n})$}
\end{small}
\end{algorithm}
\begin{algorithm}
\begin{small}
\caption{\textbf{SG3} \citep{kahruman2007greedy}}\label{alg:sg3}
\KwIn{A complete graph $G=(\gndset, E; \BW)$ with nonnegative edge
  weights $w_{ij}, \forall i,j\in \gndset, i\neq j$} \KwOut{A disjoint
  cut $S_1, S_2$ and the cut value $cut(S_1, S_2)$} {Pick the maximum
  weighted edge $(x,y)$\;}
{$S_1:=\{x\}$, $S_2:=\{y\}$,$\gndset:=\gndset\backslash\{x,y\}$,
  $cut(S_1, S_2):=w_{xy}$\;}
\For{$i=1:n-2$}{
    \For{$j\in \gndset$}{
        {$score(j):=|w(j, S_1)-w(j, S_2)|$ \tcp*[r]{$w(j, S_k):= \sum_{j'\in S_k}w_{jj'}, k=1,2$}} 
    }
    {Choose the vertex $j^*$ with the maximum score\;}
    {If $w(j^*, S_1)>w(j^*, S_2)$, then add $j^*$ to $S_2$, else add it to $S_1$\;}
    {$\gndset:=\gndset\backslash\{j^*\}$\;}
    {$cut(S_1, S_2):=cut(S_1, S_2) + \max\{w(j^*, S_1), w(j^*, S_2)\}$\;}
}
\Return{$S_1$, $S_2$, and $cut(S_1, S_2)$}
\end{small}
\end{algorithm}

\subsection{Equivalence Between Labelling Criteria of SG and D2Greedy}
\label{app:equivalence-SG-D2Greedy}

\begin{claim}
Except for processing the first two vertices,
D2Greedy and SG conduct the same labelling strategy for each vertices.
\end{claim}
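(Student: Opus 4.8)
The plan is to reduce both algorithms, once the partition of the already-processed vertices is fixed, to the \emph{same} one-vertex decision rule, with identical tie-breaking. First I would set up notation: for a vertex $v$ and a set $X\subseteq\gndset$ write $w(v,X):=\sum_{u\in X}w_{vu}$, and recall $f(S)=cut(S,\gndset\setminus S)$. Throughout the run of D2Greedy the invariant $S^{i-1}\subseteq T^{i-1}$ holds, and the ground set decomposes as $\gndset = S^{i-1}\sqcup(\gndset\setminus T^{i-1})\sqcup(T^{i-1}\setminus S^{i-1})$, where $S^{i-1}$ collects the vertices among $v_1,\dots,v_{i-1}$ already placed on the ``$S$-side'', $\gndset\setminus T^{i-1}$ collects those already placed on the ``$T$-side'', and the still-undecided vertices $v_i,\dots,v_n$ all lie in $T^{i-1}\setminus S^{i-1}$. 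The natural identification is $S^{i-1}\leftrightarrow S_1$ and $\gndset\setminus T^{i-1}\leftrightarrow S_2$; this is legitimate because the \MAXCUT\ objective is symmetric, $f(S)=f(\gndset\setminus S)$, so D2Greedy's asymmetric bookkeeping (``the set $S$'' versus ``its complement $T$'') is in fact a symmetric competition between the two sides of the cut, exactly as in SG.

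The heart of the argument is an explicit evaluation of the two marginal gains at step $i$. Writing $A:=S^{i-1}$, $B:=\gndset\setminus T^{i-1}$, and $U:=(T^{i-1}\setminus S^{i-1})\setminus\{v_i\}$, a direct expansion of $cut(\cdot,\cdot)$ gives
\begin{align*}
a^i &= f(A\cup\{v_i\})-f(A) = w(v_i,B\cup U)-w(v_i,A),\\
b^i &= f(T^{i-1}\setminus\{v_i\})-f(T^{i-1}) = w(v_i,A\cup U)-w(v_i,B),
\end{align*}
since moving $v_i$ onto the $A$-side gains its edges to $\gndset\setminus(A\cup\{v_i\})=B\cup U$ and loses its edges to $A$, and, symmetrically, removing $v_i$ from $T^{i-1}=A\sqcup U\sqcup\{v_i\}$ gains its edges to $A\cup U$ and loses its edges to $B$ (note that $f$ is applied here to $T^{i-1}$ itself, not to its complement). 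Subtracting and cancelling the common term $w(v_i,U)$ yields the single clean identity $a^i-b^i = 2\bigl(w(v_i,B)-w(v_i,A)\bigr)$. Hence D2Greedy assigns $v_i$ to the $S$-side iff $a^i\ge b^i$ iff $w(v_i,B)\ge w(v_i,A)$.

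Then I would match this against SG: SG sends $v_i$ to $S_2$ when $w(v_i,S_1)>w(v_i,S_2)$ and to $S_1$ otherwise, i.e.\ it places $v_i$ in $S_1$ precisely when $w(v_i,S_2)\ge w(v_i,S_1)$ — which, under the identification $A=S_1$, $B=S_2$, is exactly the D2Greedy criterion above, \emph{including} the tie-breaking (``$\ge$'' routes the vertex to the $S$-side, respectively to $S_1$, in both algorithms). An induction on the number of processed vertices then completes the proof: for vertices beyond the first two, the two partitions maintained by the algorithms agree on their common processed vertices (the first two genuinely differ, since SG seeds $S_1,S_2$ with the endpoints of a maximum-weight edge whereas D2Greedy merely applies its rule to $v_1,v_2$), and the identity above shows that the next labelling decision coincides as well, which preserves the invariant.

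The only real obstacle is careful bookkeeping — keeping straight, at each step $i$, which vertices are ``decided $S$-side'', ``decided $T$-side'', and ``still undecided'', and expanding $f(T^{i-1}\setminus\{v_i\})-f(T^{i-1})$ with $f$ evaluated at $T^{i-1}$ (not at $\gndset\setminus T^{i-1}$). Once the decomposition $\gndset=A\sqcup B\sqcup U\sqcup\{v_i\}$ is fixed, the cancellation is immediate; no submodularity or any deeper property is required beyond the explicit form of the cut function and its symmetry.
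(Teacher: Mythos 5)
Your proposal is correct and follows essentially the same route as the paper: both arguments expand the two marginal gains $a^i$ and $b^i$ explicitly in terms of the cut function and arrive at the identity $a^i-b^i=2\bigl(w(v_i,S_2)-w(v_i,S_1)\bigr)$ under the identification $S_1\leftrightarrow S$, $S_2\leftrightarrow \gndset\setminus T$. Your decomposition $\gndset=A\sqcup B\sqcup U\sqcup\{v_i\}$ with the cancellation of $w(v_i,U)$ is a slightly cleaner presentation of the same algebra, and your explicit check of the tie-breaking is a welcome detail the paper leaves implicit.
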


\begin{proof}
  To verify this, assume in the beginning of a certain step $i$, the
  solution set pair of SG is $(S_1, S_2)$, of D2Greedy is $(S, T)$
  (for simplicity omit the step index here).

  Note that the relationship between solution sets of SG and D2Greedy
  is: $S_1 \leftrightarrow S$ and
  $S_2 \leftrightarrow (\gndset \backslash T)$.

  For SG, the labelling criterion for vertex $i$ is:
  \begin{align}
    w(i, S_2) - w(i, S_1) = \sum_{i,  j\in S_2} w_{ij} -  \sum_{i,  j\in S_1} w_{ij}.
  \end{align}

  For D2Greedy, the labelling criterion for vertex $i$ is:

  \begin{align}
    \notag	a_i - b_i & = [f(S \cup \sett{v_i}) - f(S) ] - [f(T \backslash \sett{v_i}) -f(T) ] \\
    \notag		& = \left( \sum_{i\in S \cup
                                          \sett{v_i}, j\in V
                                          \backslash S \backslash
                                          \sett{v_i}} w_{ij} -
                                          \sum_{i\in S, j\in V
                                          \backslash S} w_{ij} \right)
                                          -  \\ 
                          & {\phantom = } \left( \sum_{i\in T
                            \backslash \sett{v_i}, j\in V \backslash T
                            \cup \sett{v_i}} w_{ij} -  \sum_{i\in T,
                            j\in V \backslash T} w_{ij} \right)  \\
    \notag				& = \left( \sum_{i, j\in V
                                          \backslash S \backslash
                                          \sett{v_i}} w_{ij} -
                                          \sum_{i\in S, j = i} w_{ij}
                                          \right) -  \\   
                          & {\phantom = } \left( \sum_{i\in T
                            \backslash \sett{v_i}, j = i} w_{ij} -
                            \sum_{i, j\in V \backslash T}
                            w_{ij}\right)  \\	 
    \notag				& = \left( \sum_{i, j\in (V
                                          \backslash T) \cup (T
                                          \backslash S \backslash
                                          \sett{v_i})} w_{ij} -
                                          \sum_{i, j\in S}
                                          w_{ij}\right) -  \\   
                          & {\phantom = } \left( \sum_{i, j \in (S)
                            \cup (T \backslash S \backslash
                            \sett{v_i})} w_{ij} -  \sum_{i, j\in V
                            \backslash T} w_{ij}\right)  \\ 
    \notag			& = 2\left(\sum_{i, j\in V \backslash
                                  T} w_{ij} - \sum_{i, j\in S} w_{ij}
                                  \right)\\ 
\label{eq:relation}   & =  2\left(\sum_{i, j\in S_2} w_{ij} - \sum_{i,
                        j\in S_1} w_{ij}  \right)\\ 
\notag 				& = 2[w(i, S_2) - w(i, S_1)],			   
\end{align}

\noindent where \cref{eq:relation} comes from the relationship between
solution sets of SG and D2Greedy.

So the labelling criterion for SG and D2Greedy is equivalent with each
other.
\end{proof}

\subsection{Counting Methods for  Double Greedy Algorithms}
\label{complement:counting}

\textbf{D2Greedy}: summarized in \cref{alg:d-usm-4-max-cut}, we have
proved that it has the same labelling criterion with SG, the
relationship between solution sets of SG and D2Greedy is:
$S_1 \leftrightarrow S$ and $S_2 \leftrightarrow (V \backslash T)$, we
will use $S_1$ and $S_2$ in the description of its counting methods.

In step $t$ ($t = 1, \cdots, n$) there are $k = n-t$ unlabelled
vertices, it is not difficult to know that the number of possible
solutions for each instance is

\begin{equation}
  |C(G')| = |C(G'')| = \left\{\begin{array}{ll}
                                2^{k}  & \textrm{if $S_1 \neq
                                         \emptyset$ and $S_2 \neq
                                         \emptyset$}\\

                                2^{k}-1  & \textrm{otherwise}\\
                              \end{array}\right.
\end{equation}

To count the intersection set (i.e. $|C(G')\cap C(G'')|$), assume the
solution sets of $G'$ is $(S_1', S_2')$, the solution sets of $G''$ is
$(S_1'', S_2'')$, so the unlabelled vertex sets are
$T'=V\backslash S_1'\backslash S_2'$,
$T''=V\backslash S_1''\backslash S_2''$, respectively.  Denote
$L:=T'\cap T''$ be the common vertices of the two unlabelled vertex
sets, so $l=|L|$ ($0\leq l\leq k$) is the number of common vertices in
the unlabelled $k$ vertices. Denote $M' :=T'\backslash L$,
$M'' :=T''\backslash L$ be the sets of different vertex sets between
the two unlabelled vertex sets.
Then,

\begin{enumerate}

\item if $(S_1'\backslash M'', S_2'\backslash M'')$ or
  $(S_2'\backslash M'', S_1'\backslash M'')$ matches
  $(S_1''\backslash M', S_2''\backslash M')$.

Assume w.l.o.g. that $(S_1'\backslash M'', S_2'\backslash M'')$
matches $(S_1''\backslash M', S_2''\backslash M')$:

  \begin{equation}\notag
  \begin{split}
    & |C(G^{'})\cap C(G^{''})| = \\
    & \left\{\begin{array}{ll}
               2^{l}  & \textrm{if $S_1' \cup S_1'' \neq \emptyset$
                        and $S_2' \cup S_2'' \neq \emptyset$}\\ 
               2^{l}-1  & \textrm{otherwise}\\
                    \end{array}\right.
\end{split}
\end{equation}

\item  otherwise, $|C(G^{'})\cap C(G^{''})|= 0$  
\end{enumerate}

\textbf{SG3}:  presented in  \cref{sec:counting-sg3}.

\textbf{SG}: summarized in \cref{alg:sg}, the methods to count its
approximation sets is the same as that of SG3.

\textbf{RDGreedy}: summarized in \cref{alg:r-usm-4-max-cut}, the
methods to count its approximation sets is the same as that of
D2Greedy.

\subsection{Proof of the Correctness of Method to Count
  $|C(G')\cap C(G'')|$ of SG3}
\label{app:proof-SG3}

\begin{proof}

  First of all, notice that $M'$ must be included in $S_1''\cup S_2''$
  and $M''$ must be included in $S_1'\cup S_2'$, because $M'$ has no
  intersection with $M''$, and we know that
  $S_1''\cup S_2''\cup M''=S_1'\cup S_2'\cup M'$. After removing $M'$
  from $S_1''\cup S_2''$, and $M''$ from $S_1'\cup S_2'$, the vertices
  in the pairs, $(S_1'\backslash M'', S_2'\backslash M'')$ and
  $(S_1''\backslash M', S_2''\backslash M')$, can not be changed by
  distributing any other unlabelled vertices , so if they can not
  match with each other, there will be no common solutions.

  If they can match, in the following, there is only one way to
  distribute $M'$ and $M''$ to have common solutions. And the vertices
  in the common set $L=T'\cap T''$ can be distributed consistently in
  the two instances, so in this situation $|C(G')\cap C(G'')|=2^l$.
\end{proof}

\subsection{Proof of Theorem \ref{theo:ec}}
\label{app:proof-EC}
\begin{proof}
First of all, 
We will prove the following claim, then use the claim to prove
\cref{theo:ec}.

\begin{claim}
In each step $t$ ($t = 0, \cdots, n-2$), the following conditions hold:
\begin{enumerate}

\item The remained super vertices in $P, Q$ are distinct with each
  other, that means any two super vertices inside $P$ or $Q$ do not
  have intersection, and there are no common super vertex between $P$
  and $Q$.

\item The common super vertex removed from $P, Q$, i.e.,
  $\mathbf{p_{ii'}}=\mathbf{q_{jj'}}$, is the smallest common super
  vertex containing $\mathbf{p}_i$ or $\mathbf{p}_{i'}$ (respectively,
  $\mathbf{q}_j$ or $\mathbf{q}_{j'}$)

\item The common super vertex removed from $P, Q$, i.e.,
  $\mathbf{p_{ii'}}=\mathbf{q_{jj'}}$, are ``unique'' (i.e., there
  does not exist $\mathbf{p_{ii''}}=\mathbf{q_{jj''}}$, such that
  $\mathbf{p_{ii''}} \neq \mathbf{p_{ii'}}$). That means, there is
  only one possible way to construct the removed common super vertex.

\end{enumerate}

\end{claim}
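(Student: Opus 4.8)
The plan is to prove the three conditions simultaneously, as a joint invariant, by induction on the number of merge/removal operations executed by \cref{alg:max-common}. First I would record the one structural fact that drives everything: since any two super vertices inside $P$ (resp.\ inside $Q$) are disjoint and $\bigcup_i \mathbf{p}_i = \bigcup_j \mathbf{q}_j =: U$, both $P$ and $Q$ are \emph{partitions} of the same set $U$; hence each element of $U$ lies in exactly one block of $P$ and exactly one block of $Q$, which is what makes the ``find'' steps well defined and, as will emerge, essentially deterministic in effect. It is convenient to introduce the bipartite intersection graph $H$ whose two vertex classes are the current blocks of $P$ and of $Q$, with an edge between $\mathbf{p}$ and $\mathbf{q}$ whenever $\mathbf{p}\cap\mathbf{q}\neq\emptyset$. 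Its connected components partition $U$, and merging all the $P$-blocks of a component yields the same set as merging all its $Q$-blocks, namely the union of that component; these unions are exactly the finest common coarsening blocks, so the maximum achievable number of common super vertices equals the number of components of $H$, which I will eventually identify with $c^*$.

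For the base case ($t=0$) the three conditions hold by the hypothesis of \cref{theo:ec}. For the inductive step I would do a case analysis on the operation performed. When $\mathbf{q}_j\setminus\mathbf{p}_i\neq\emptyset$ the algorithm replaces $\mathbf{p}_i,\mathbf{p}_{i'}$ by $\mathbf{p}_{ii'}=\mathbf{p}_i\cup\mathbf{p}_{i'}$; because $\mathbf{p}_i$ and $\mathbf{p}_{i'}$ were each disjoint from every other block of $P$, so is $\mathbf{p}_{ii'}$, and $P$ remains a partition of $U$ — symmetrically for a $Q$-merge. The only moment at which a block of $P$ could coincide with a block of $Q$ is right after such a merge, and that is precisely the event $\mathbf{p}_{ii'}=\mathbf{q}_{jj'}$ on which the algorithm deletes both; after the deletion the surviving blocks of $P$ and $Q$ are again pairwise disjoint across the two sets, provided no \emph{second} common block appeared at the same time — which is exactly condition~3. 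So conditions~1 and~3 are proved together.

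For conditions~2 and~3 the key lemma to establish is a \emph{locality} statement: once $\mathbf{p}_i$ has been picked, every block merged thereafter (on either side) while forming $\mathbf{p}_{ii'}$ and $\mathbf{q}_{jj'}$ belongs to the connected component $K$ of $\mathbf{p}_i$ in $H$, since each ``find'' step chases a block that strictly intersects the current super vertex, and in a partition intersecting means sharing an element, so one never leaves $K$. Consequently the block that is eventually removed must be $\bigcup_{\mathbf{p}\in K\cap P}\mathbf{p}=\bigcup_{\mathbf{q}\in K\cap Q}\mathbf{q}$, the union of the whole component. This union is the unique smallest common super vertex containing $\mathbf{p}_i$ (and $\mathbf{p}_{i'}$), because any common super vertex containing a given block must contain every block reachable from it through intersections — giving condition~2 — and it is the only common block those merges can produce, since $K$ determines it — giving condition~3. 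Then \cref{theo:ec} follows: by conditions~1 and~3 the algorithm removes exactly one common super vertex per component of $H$ and never creates spurious ones, so the counter $c$ equals the number of components, which is $c^*$.

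The hard part will be the bookkeeping around \emph{interleaving}: a pass of the while loop performs at most one merge per side before testing $\mathbf{p}_{ii'}=\mathbf{q}_{jj'}$, so the algorithm may jump to another component on the next iteration and return to a half-built one later. I would address this by phrasing the invariant so that it tolerates partially processed components — roughly, ``$P$ and $Q$ are partitions of $U$, their block sets have the same union restricted to each component of $H$, and the blocks already split off are exactly the fully closed components processed so far'' — and then verifying that each of the finitely many operations (the number being controlled by a simple potential argument on $|P|+|Q|$, consistent with the index range $t=0,\dots,n-2$) preserves it. The nondeterminism in the ``find'' steps is then harmless, precisely because the locality lemma shows the result of closing a component does not depend on the order in which its blocks are merged.
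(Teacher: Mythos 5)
Your proof is correct but follows a genuinely different route from the paper's. The paper proves the three conditions directly by induction, using only local set-theoretic manipulations: condition 1 because the freshly contracted pair is removed the moment it becomes common; condition 2 because the only proper sub-unions of $\mathbf{p_{ii'}}$ containing $\mathbf{p}_i$ or $\mathbf{p}_{i'}$ are $\mathbf{p}_i$ and $\mathbf{p}_{i'}$ themselves, which by condition 1 cannot be common; condition 3 by deriving $\mathbf{p}_{i'}\supseteq \mathbf{q}_j\setminus\mathbf{p}_i$ and $\mathbf{p}_{i''}\supseteq \mathbf{q}_j\setminus\mathbf{p}_i$ and contradicting disjointness of distinct blocks of $P$. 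You instead observe that $P$ and $Q$ are two partitions of the same ground set, introduce the bipartite intersection graph $H$, and prove the single structural fact that any common super vertex must be a union of entire connected components of $H$; all three conditions (and the identity $c=c^*$ in \cref{theo:ec}) then fall out at once, since the block removed for a component is forced to be that component's union. Your version buys a cleaner conceptual picture, a strictly stronger form of condition 2 (global minimality rather than minimality among sub-blocks of $\mathbf{p_{ii'}}$), and a transparent identification of $c^*$ with the number of components, at the price of extra machinery and the interleaving bookkeeping you defer — which does need to be carried out, since an iteration may leave a component half-merged and the equality test only compares the two blocks touched in that iteration (your locality lemma and the partition property do cover the one delicate point, namely that a merged $P$-block cannot silently coincide with a $Q$-block other than $\mathbf{q}_{jj'}$). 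The paper's argument is more elementary and shorter; yours is more illuminating and proves more.
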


We will use inductive assumption to prove the {claim}.  First of all,
in the beginning (step $0$), the conditions hold. Assume the
conditions hold in step $t$.  In step $t+1$, there are two possible
situations:

\begin{itemize}
\item There are no common super vertex removed.

  Condition 1 holds because the contracted super vertices pair do not
  equal. Condition 2, 3 hold as well because there are no contracted
  super vertices removed.

\item There are common super vertex removed.

  Condition 1 holds because the only common super vertices pair have
  been removed from $P, Q$, respectively.\\ 

  To prove condition 2, notice that the smaller vertices for
  $\mathbf{p_{ii'}}$ are
  $\mathbf{p_{ii'}} \backslash \mathbf{p_{i}}=\mathbf{p_{i'}}$ and
  $\mathbf{p_{ii'}} \backslash \mathbf{p_{i}'}=\mathbf{p_{i}}$,
  respectively, for $\mathbf{q_{jj'}}$ are
  $\mathbf{q_{jj'}} \backslash \mathbf{q_{j}}=\mathbf{q_{j'}}$ and
  $\mathbf{q_{jj'}} \backslash \mathbf{q_{j}'}=\mathbf{q_{j}}$,
  according to Condition 1, they can not be common
  super vertices, so there are no smaller common super vertices.\\

To prove condition 3, assume there exists
$\mathbf{p_{ii''}}=\mathbf{q_{jj''}}$, such that $\mathbf{p_{ii''}}
\neq \mathbf{p_{ii'}}$ (respectively, $\mathbf{q_{jj''}} \neq
\mathbf{q_{jj'}}$), so $\mathbf{p_{i''}}\neq \mathbf{p_{i'}}$
($\mathbf{p_{j''}}\neq \mathbf{p_{j'}}$). From
\cref{alg:max-common} we know that $\mathbf{p_{i}} \cup
\mathbf{p_{i''}}=\mathbf{p_{ii''}} \supseteq \mathbf{q_{j}} \backslash
\mathbf{p_{i}}$ and $\mathbf{p_{i}}
\cup\mathbf{p_{i'}}=\mathbf{p_{ii'}} \supseteq \mathbf{q_{j}}
\backslash \mathbf{p_{i}}$ (respectively, $\mathbf{q_{j}} \cup
\mathbf{q_{j''}}=\mathbf{q_{jj''}} \supseteq \mathbf{p_{i}} \backslash
\mathbf{q_{j}}$ and $\mathbf{q_{j}} \cup
\mathbf{q_{j'}}=\mathbf{q_{jj'}} \supseteq \mathbf{p_{i}} \backslash
\mathbf{q_{j}}$), so that $\mathbf{p_{i''}} \supseteq \mathbf{q_{j}}
\backslash \mathbf{p_{i}}$ and $\mathbf{p_{i'}}\supseteq
\mathbf{q_{j}} \backslash \mathbf{p_{i}}$ (respectively,
$\mathbf{q_{j''}} \supseteq \mathbf{p_{i}} \backslash \mathbf{q_{j}}$
and $\mathbf{q_{j'}}\supseteq \mathbf{p_{i}} \backslash
\mathbf{q_{j}}$), that contradicts the known truth that
$\mathbf{p_{i'}}$ and $\mathbf{p_{i''}}$ (respectively,
$\mathbf{q_{j'}}$ and $\mathbf{q_{j''}}$) must be totally different
with each other (from Condition 1).

\end{itemize}

Then we use the claim to prove that the $c$ returned by
\cref{alg:max-common} is exactly the maximum number of common super
vertices after all possible contractions.  Because the three
conditions hold for each step, we know that finally all the common
super vertices are removed out from $P$ and $Q$.  From Condition 2 we
know that all the removed common super vertices are the smallest ones,
from Condition 3 we get that there is not a second way to construct
the common super vertices, so the resulted $c$ is the maximum number
of common super vertices after all possible contractions.

\end{proof}

\def\dir{chapters/Goemans-Williamson}
\chapter{Validating Goemans-Williamson's MaxCut Algorithm}
\label{chapter_Goemans_Williamson}

\begin{chapquote}{Carl Friedrich Gauss}
You have no idea, how much poetry there is in the calculation of a table of logarithms!
\end{chapquote}

In \cref{chapter_greedy_maxcut} we have investigated the robustness of
\textit{greedy} \MAXCUT\ algorithms by employing the Approximation Set
Coding framework \citep{Buhmann10isit,informativemst} for measuring
information content of algorithmic solutions. However, the methodology
used in the last chapter has to be generalized to analyze continuous
\MAXCUT\ algorithms, e.g., the Goemans-Williamson's \MAXCUT\ algorithm
using semidefinite programming relaxation
(\citet{goemans1995improved}, abbreviated as \MAXCUT-SDP).
In order to analyze the generalization performance of non-greedy,
continuous algorithms, we propose an information-theoretic algorithmic
regularization and validation strategy based on \emph{posterior
  agreement} (PA), and further theoretically justify it by presenting
the ``coding by posterior'' framework. The strategy regularizes
algorithms and ranks them according to the informativeness of their
output given noisy input.

The \maxcut-SDP algorithm firstly obtains a fractional solution by solving an SDP relaxation of the \maxcut problem, then rounds the fractional solution back  to a solution of graph cut using the technique of random hyperplane rounding. The vanilla \maxcut-SDP algorithm does not provide a sequence of posterior distributions of graph cut solutions. In order to study its generalization performance, we derive a generalization of the \maxcut-SDP algorithm in the following sense: i) We allow the SDP solver of \maxcut-SDP to stop at any running time $t$; ii) For a given stopping time $t$, we propose methods to evaluate the posterior distribution of cuts induced by the status of \maxcut-SDP at this time; iii) Given the evaluation of posteriors, we utilize the PA based approach to investigate the generalization ability of the \maxcut-SDP algorithm. 
Experimental comparison with representative 
greedy \MAXCUT\ algorithms shows that \MAXCUT-SDP with the best
known approximation ratio generalizes worse than greedy \MAXCUT\
algorithms under high noise level.

\section[Generalization Ability of Algorithms]{Information Content as
  Generalization Ability of Algorithms}

Classical algorithms usually search for a unique or a randomized
solution in the hypothesis class. Input noise often renders such
algorithmic solutions highly unstable. Therefore, we require an
algorithm to return a posterior distribution of solutions given the
noisy input. Such a posterior should concentrate on few solutions but
the posterior must be stable for equally likely inputs. We interpret
this tradeoff between precise localization in the hypothesis class and
stability of posteriors as the \emph{generalization} ability of an
algorithm.  Under this strategy, an algorithm should stop early to
recover the stable solutions (posterior distribution of solutions).
For an algorithm $\A$ to succeed with an informative and stable output
in the two-instance scenario, e.g., the \MAXCUT-SDP of
\citet{goemans1995improved}, we propose a general
information-theoretic regularization and validation strategy, which is
based on a provable analogue of information content for algorithms.

It is well-known that when training machine learning models, e.g.,
training neural network with the stochastic gradient descent
algorithm, one should stop the algorithm early to recover
generalizable models, which is called the ``early-stopping'' strategy
\citep{girosi1995regularization,giles2001overfitting}.  With empirical success, few theory has
been proposed for this well-utilized strategy.
By analogue between the generalizable solutions and machine learning
models, this work also provides an information-theoretic verification
of this ``early-stopping'' strategy.  Though we use \MAXCUT\
algorithms as an illustrating example in this chapter, it is
noteworthy that the strategy applies generally to any algorithms in
the two-instance scenario.

\section{Algorithm Validation via Posterior Agreement}
\label{sec_coding}

The posterior agreement objective for algorithm validation is
motivated by the ``coding by posterior'' framework, which will be
presented in this section.  On a high level, it is based on an
analogue to the noisy communication channel in Shannon's information
theory \citep{cover2012elements}.

We denote as $\da$, $\da'$, $\da''\in \Z$ different data instances in
the two-instance scenario (details in \cref{sec_intro_alg_information}).
Often, a computational problem is associated with some cost function
$R(\h, \da)$, which measures how well a hypothesis $\h$ in the
hypothesis space $\H$ will solve the problem on input $\da$.  An
algorithm $\A$ maps the input space to the hypothesis space
$\A : \Z \rightarrow \H$.  We introduce parameters
$\bmtheta\in \Theta$ to enumerate a set of algorithms.  In
combinatorial optimization for example, $\bmtheta$ might denote the
approximation precision or stopping time.

In general, we assume that algorithm $\A$ assigns non-negative weights
$w_{\bmtheta}(\h, \da)$ to all hypotheses dependent on the input and
the parameters, i.e.,
\begin{equation}
w: \H\times \Z \times \Theta \rightarrow   [0, +\infty), \quad
(\h,\da,\bmtheta) \mapsto w_{\bmtheta}(\h, \da)\,.
\end{equation}
Gibbs weights $w_{\beta}(\h, \da) = \exp\bigl(-\beta R(\h, \da)\bigr)$
with inverse temperature $\beta$, for example, rank different
hypotheses according to how well they solve the problem in terms of
costs $R(\h, \da)$.

Such a weighting of hypotheses can be interpreted as a
{posterior distribution $\prob_{\bmtheta}(\h|\da)$} induced by algorithm
$\A$, and is defined as
\begin{equation}
  \prob_{\bmtheta}(\h|\da) := w_{\bmtheta}(\h, \da) /
  \sum\nolimits_{\h^\prime\in\H}w_{\bmtheta}(\h^\prime, \da),\quad   
  \forall \h\in \H \,. 
\end{equation}

For example, if we choose an indicator function as weights
\begin{equation}\label{eq_gate}
w_{\bmtheta}(\h, \da)= \mathbf{1} \bigl\{
R(\h, \da) \leq R(\h^{\perp}, \da) + \gamma(\bmtheta) \bigr\},
\end{equation} 
where $\gamma(\bmtheta)$ denotes a precision value determined by a
specific $\A$ and the empirical risk minimizer
$c^{\perp}(\da) ={\arg\min}_{\h\in\H}R(\h, \da)$ centers an
{``approximation set''} of size $\gamma(\bmtheta)$ in $\H$. In this manner 
we can recover the ``approximation set coding'' framework  of 
\citet{Buhmann10isit}.

The posterior $\prob_{\bmtheta}(\h|\da)$ effectively partitions the
hypothesis class into statistically equivalent solutions with high
weight values and discards hypotheses with vanishing weights.
$\prob_{\bmtheta}(\h|\da)$ plays the role of a codebook vector with the
associated Voronoi cell. 

To generate alternative posteriors for a
coding protocol we have to use the given data $\da$ and have to
transform the mapping from $\Z$ to $\H$. Such transformations should
not change the measurements represented by $\da$ but the algorithmic
mapping.

\begin{definition}[Transformation set]
Given data instance $\da$ and algorithm $\A$ with posterior $\prob_{\bmtheta}(\h|\da)$,  we define 
the \emph{{transformation set}} $\T$  as a set of mappings $\t: \Z \rightarrow
\Z$ such that the following two conditions are satisfied,
\begin{enumerate}
	
	\item $\A(\t\circ\da), \t\in \T$ generates an ``approximately uniform
	cover'' of the hypothesis space $\H$, i.e.,
	$\sum_{\t\in\T} \prob_{\bmtheta}(\h|\t\circ\da) \in \bigl[
	\frac{|\T|}{|\H|} (1 - \rho), \frac{|\T|}{|\H|} (1 + \rho)\bigr]$, for $0<\rho < 1$;
	
	\item For every transformation $\t\in\T$ there exists an associated
	transformation $\t^\H: \H \mapsto \H$ such that
	$w_{\bmtheta}(\h, \t\circ\da) = w_{\bmtheta}(\t^\H\circ\h,
        \da)$.
\end{enumerate}
\end{definition}

Given a posterior and transformations, we can define a \textit{virtual
	communication scenario}. It requires
a sender $\se$, a receiver $\re$, and a problem generator $\pg$ as a
noisy channel between $\se$ and $\re$.  Sender and
receiver agree on algorithm $\A$ and its induced posteriors. The communication scenario consists the 
following parts: 

\subsection{Code Book Generation}

The communication code is generated by the procedure:

\begin{enumerate}
	
	\item Sender $\se$ and receiver $\re$ obtain the sample  $\da'$
	from the problem generator $\pg$.
	\item Sender $\se$ and receiver $\re$ calculate the posterior
	$\prob_{\bmtheta}(\h|\da')$.
	\item A set of transformations
	$\CB = \{\t_1,\cdots, \t_M \}\subseteq \T$ is generated uniformly with
	associated posteriors $\prob_{\bmtheta}(\h|\t_j\circ \da'),\; 1\leq j\leq M$.  
	\item $\se$ and $\re$ agree on a transformation set $\CB$ and 
	posteriors $\prob_{\bmtheta}(\h|\t_j\circ \da'), 1\leq j\leq M$.
	
\end{enumerate} 

The posteriors $\prob_{\bmtheta}(\h|\t_j\circ \da'), \t_j\in \CB$ play the
role of codebook vectors in Shannon's theory of communication.

\subsection{Communication Protocol}

\begin{enumerate}
	
\item The sender $\se$ selects a transformation $\t_s\in \CB$ as
  message and sends it to the problem generator $\pg$.
	
\item $\pg$ generates the instance
  $\da''$ and applies the transformation $\t_s$ to $\da''$, yielding
  $\datilde := \t_s\circ \da''$.
	
\item $\pg$ sends $\datilde$ to $\re$ without revealing $\t_s$.
	
\item $\re$ calculates the posterior $\prob_{\bmtheta}(\h|\datilde)$.
	
\item $\re$ estimates the message $\t_s$ by using the decoding rule:
  \begin{flalign}
    \hat \t = \arg\max_{\t\in \CB} \E_{\h\sim
      \prob_{\bmtheta}(\h|\t\circ\da')} \prob_{\bmtheta}(\h|\t_s\circ
    \da'')
    \\
    = \arg\max_{\t\in \CB}\sum\nolimits_{\h\in\H}
    \prob_{\bmtheta}(\h|\t\circ\da') \prob_{\bmtheta}(\h|\datilde).
  \end{flalign}
\end{enumerate}

\subsection{Error Analysis of the Virtual Communication Protocol}
\label{sec:erroranalysis}

The probability of a communication error amounts to
\begin{align}
& \prob(\hat{\t} \neq \t_s | \t_s)  \\
&= \prob\Bigl(\max_{\t_j\in
	\CB\backslash \t_s} \E_{\prob(\h|\t_j\circ \da')} [\prob(\h |\datilde)]
\geq \E_{\prob(\h|\t_s\circ \da')} [\prob(\h |\datilde)] 
\Bigr)\\\label{eq:union-bound}  
& \stackrel{(a)}{\leq} \sum_{\t_j\in \CB\backslash
	\t_s} \prob \left( \E_{\prob(\h|\t_j\circ \da')} [\prob(\h |\datilde)] \geq
\E_{\prob(\h|\da')} [\prob( \h |\da'')] 
\right)\\ 
& \stackrel{(b)}{\leq} \sum_{\t_j\in \CB\backslash \t_s}
\E_{\da', \da''} 
\frac{\E_{\t_j} \E_{\prob(\h|\t_j\circ \da')} [\prob(\h |\datilde)] 
}{\E_{\prob(\h|\da')} [\prob( \h | \da'')]}\, ,
\end{align}
by applying the union bound $(a)$ and Markov's inequality $(b)$.

Abbreviating
$Z_{\T} :=\E_{\t_j}\E_{\prob(\h|\t_j\circ \da')} [\prob(\h |\datilde)]$,
we derive
\begin{flalign} 
& Z_{\T}\\\notag 
&=\E_{\t_j}\E_{\prob(\h|\t_j\circ \da')} \prob(\h |\datilde)
= \E_{\t_j} \sum_{\h\in\H} \prob(\h|\t_j\circ \da') \prob(\h | \datilde) \\ 
& = \sum_{\h\in\H} \prob(\h | \datilde) \E_{\t_j} \prob(\h|\t_j\circ\da')
=  \sum_{\t_j\in\T}\prob(\t{_j}) \prob(\h|\t_j\circ\da) \\\label{eq_cover}
& \leq  (1 + \rho) |\H|^{-1},
\end{flalign}
where  \cref{eq_cover} arises from the approximately uniform
coverage of $\H$ by the posteriors, i.e.,
$\sum_{\t\in\T} \prob(\h|\t\circ\da) \in \bigl[\frac{|\T|}{|\H|}(1 -
\rho), \frac{|\T|}{|\H|}(1 +
\rho\bigr)]$
and $\prob(\t) = 1/|\T|$.  
Substituting  \cref{eq_cover} into  \cref{eq:union-bound} we
derive the error bound
\begin{flalign} 
\prob(\hat{\t} \neq \t_s | \t_s)
& \leq   \sum_{\t_j\in T\backslash\t_s}\! \E_{\da', \da''}\left[
\bigl(\cardH \E_{\prob(\h|\da')} [\prob( \h |  \da'')]\bigr)^{-1}
\right] \\\label{eq9}
& = (M - 1) \E_{\da', \da''}\left[  
\bigl(\cardH k(\da',\da'') \bigr)^{-1} \right]\\\label{eq_Eexp} 
&\le M  \E_{\da', \da''}\left[
\exp \bigl( - \log (\cardH k(\da',\da'')) \bigr)
\right],
\end{flalign}

where  \cref{eq9} comes from the definition of
posterior agreement in  \cref{eq:pa}. 

We then analyze 
$\hat{I} := \log\bigl( |\H| k(\da',\da'') \bigr)$ in 
\cref{eq_Eexp} at its expected value
\begin{equation}\label{eq:i}
I := \E_{\da', \da''} \left[ \log\bigl( |\H| k(\da',\da'')
\bigr)\right].
\end{equation}
To control the fluctuations $\Delta_{\da',\da''} :=\hat{I}-I$, we
assume that for all $\epsilon>0, \delta>0$, there exists
$n_0\in\mathbb{N}$ s.t. for all $n>n_0$ 
\begin{equation} 
\prob\left( |\Delta_{\da',\da''}| \ge \epsilon I \right) <\delta \,.
\end{equation} 
This assumption of asymptotically vanishing fluctuations 
yields the following upper bound
\begin{equation}
\E_{\da', \da''}\bigl[
\exp(-\hat{I}) \bigr] \le  \exp\bigl( -I(1-\epsilon) \bigr).
\end{equation}
Since $\epsilon$ can be chosen arbitrarily small in the asymptotic
limit, the error probability is bounded with high probability by
\begin{equation}\label{eq_errorbound}
  \prob(\hat{\t} \neq \t_s | \t_s) \le  \exp \bigl( - I +\log
  (M(1+\rho)) \bigr) \,. 
\end{equation}
For $I$ exceeding the effective total rate $\log (M(1+\rho))$, the
error vanishes asymptotically since $I = O(\log|\H|)$. This bound
suggests that we should maximize $I$ in \cref{eq:i} when searching for
informative algorithms, thus verifying the definition of algorithmic
information content defined in \cref{eq_ic}.

\subsection{Connection to Classical Mutual Information}\label{sec:classical_mi}

Let $\graphrv'$ and $\graphrv''$ be the two random variables of the
noisy graph instances in the two-instance scenario (as specified in \cref{sec_intro_alg_information}).  With a bit abuse
of notation, we use $G', G''$ as the realizations of $\graphrv'$ and
$\graphrv''$, respectively.  Then we make a connection between the
classical mutual information $\I(\graphrv'; \graphrv'')$ and the
algorithmic information content in \cref{eq_ic}.

We start by expanding the joint distribution
$\prob(G^\prime, G^{\prime\prime})$ with cut variables $\h \in \H$ and
transformations $\tau \in \CB$:
\begin{flalign} 
  \I(\graphrv'; \graphrv'') & = \E_{\da', \da''} \log
  \frac{\prob(G^\prime,
    G^{\prime\prime})}{\prob(G^\prime)\prob(G^{\prime\prime})} \\
  & = \E_{\da', \da''} %
  \log \frac{\sum_c \sum_\t \prob(G^\prime, G^{\prime\prime}| c, \t)
    \prob( c, \t) }%
  {\prob(G^\prime)\prob(G^{\prime\prime})}. \label{eq:classical_ic_joint} 
\end{flalign} 
The conditional distribution
$\prob(G^\prime, G^{\prime\prime}| c, \t)$ of $\da',\da''$ factorizes
due to conditioning on $\h$ and $\t$,
\begin{align}
  \prob(G^\prime, G^{\prime\prime}| c, \t) 
  & {=}{} \prob(G^\prime | c, \t) \prob( G^{\prime\prime} | c,
    \t ) \label{eq:mut_inf_lemma_1}  \\
  &  \overset{(a)}{=}{} \frac{\prob(c | G^\prime, \t)}{\prob(c|\t)}
    \prob(G^\prime|\t)  
    \frac{\prob(c |G^{\prime\prime}, \t)}{\prob(c|\t) }
    \prob(G^{\prime\prime}|\t),\, 
\end{align}
since $\da',\da''$ are independent given $c$ and $\tau$.  The
transformation $\t$ plays the role of a latent variable.  Step $(a)$
applies the Bayes rule twice. Substitute \cref{eq:mut_inf_lemma_1}
into \cref{eq:classical_ic_joint} and assume
$\prob(c)=\vert \mathcal{C}\vert^{-1}$, we get (detailed derivation of
\cref{eq_deferred_proof} is deferred to \cref{sec:proof})
\begin{align}\label{eq_deferred_proof}
\I(\graphrv'; \graphrv'')
& =  \E_{\da', \da''} \log
\sum\nolimits_c \Bigl[\prob(c|G')\prob(c|G'') / \prob(c) \Bigr] \\ 
& =  
\E_{\da', \da''} \log |\H| \sum\nolimits_c \prob(c|G')\prob(c|G'') = I(\graphrv'; \graphrv'').
\end{align} 
Thus we reach the algorithmic
information content in  \cref{eq_ic}.

\section{\MAXCUT\ Algorithm using SDP Relaxation}\label{sec:interpret_sdp}
\begin{algorithm}[t]
\caption{\MAXCUT-SDP \citep{goemans1995improved}}\label{alg:maxcut-sdp}
\KwIn{undirected graph $G =(V, E; \BW)$ with non-negative weights $\BW$} \KwOut{cut
  $c = (S, V\setminus{S})$} {solve problem $(R)$, obtaining an optimal set of
  vectors $\v_i \in S_{n-1}$\;}
{let $\r$ be a vector \textit{uniformly} distributed on $S_{n-1}$\;}
\KwRet{$S := \{i \;|\; \v_i\cdot \r \geq 0, \forall i\in V\}$ and
  $V\setminus{S}$}
\end{algorithm}

In this section we give a geometric interpretation of Goemans-Williamson's \MAXCUT\
algorithm using semidefinite programming relaxation
(\citet{goemans1995improved}, abbreviated as \MAXCUT-SDP),  which will facilitate deriving
methods to calculate the posterior of cuts. 
\ALG \ref{alg:maxcut-sdp} summarizes the \MAXCUT-SDP algorithm: It rounds
the solution to a non-linear programming relaxation, which can be
interpreted as SDP,
then it solves the SDP using standard algorithms, such as
interior-point methods \citep{helmberg1996interior}, bundle method or
block coordinate descent \citep{waldspurger2015phase}. 

Concretely,
\MAXCUT\ is formulated as the NP-complete
integer program:
\begin{flalign}
\begin{array}{llr}
     & \max \frac{1}{2}\sum_{i<j} W_{ij}(1 - v_i v_j) &\\
(Q)\;\;\; & \text{s.t.} \;\; v_i \in \{-1, 1\}  & \forall i \in V
\end{array}
\end{flalign}
then $(Q)$ is relaxed to define the following non-linear problem, 
\begin{flalign}\label{eq:r}
\begin{array}{llr}
     & \max \frac{1}{2}\sum_{i<j} W_{ij}(1 - \v_i^\trans  \v_j) &\\
(R) & \text{s.t.} \;\; \v_i \in S_{n-1}  & \forall i \in V
\end{array}
\end{flalign}
where $S_{n-1}$ is the $(n-1)$-dimensional unit sphere, i.e., $S_{n-1} = \{ \v\in \R^n \;|\; \|\v\|_2 = 1 \}$.
Arrange the $n$ vectors $\v_1, \cdots, \v_n$ to be the $n$ columns of a $n\times n$ matrix $\BD$, that is,   $\BD= (\v_1, \v_2, \cdots, \v_n)$. Let  $\BX:= \BD^{\trans} \BD$, then the $ij^\text{th}$ entry of $\BX$ is 
$X_{ij} = \v_i^\trans \v_j$. One can observe that  $(R)$ equals to the following SDP problem
with only equality constraints:
\begin{flalign}\label{eq:sd}
\begin{array}{llr}
         &  \max \frac{1}{2}\sum_{i<j} W_{ij}(1 - X_{ij})\\
(\text{SDP}) & \text{s.t.} \;\; X_{ii} = 1, \forall i \in V,   \\ & \qquad \BX \; \text{is symmetric positive semidefinite}.\;
\end{array}
\end{flalign}
We use one classical interior-point method
\citep{helmberg1996interior} to solve the SDP problem in (\ref{eq:sd}).

\setkeys{Gin}{width=0.8\textwidth,height=.51\textwidth}
\begin{figure}[htbp]
\center
\includegraphics[]{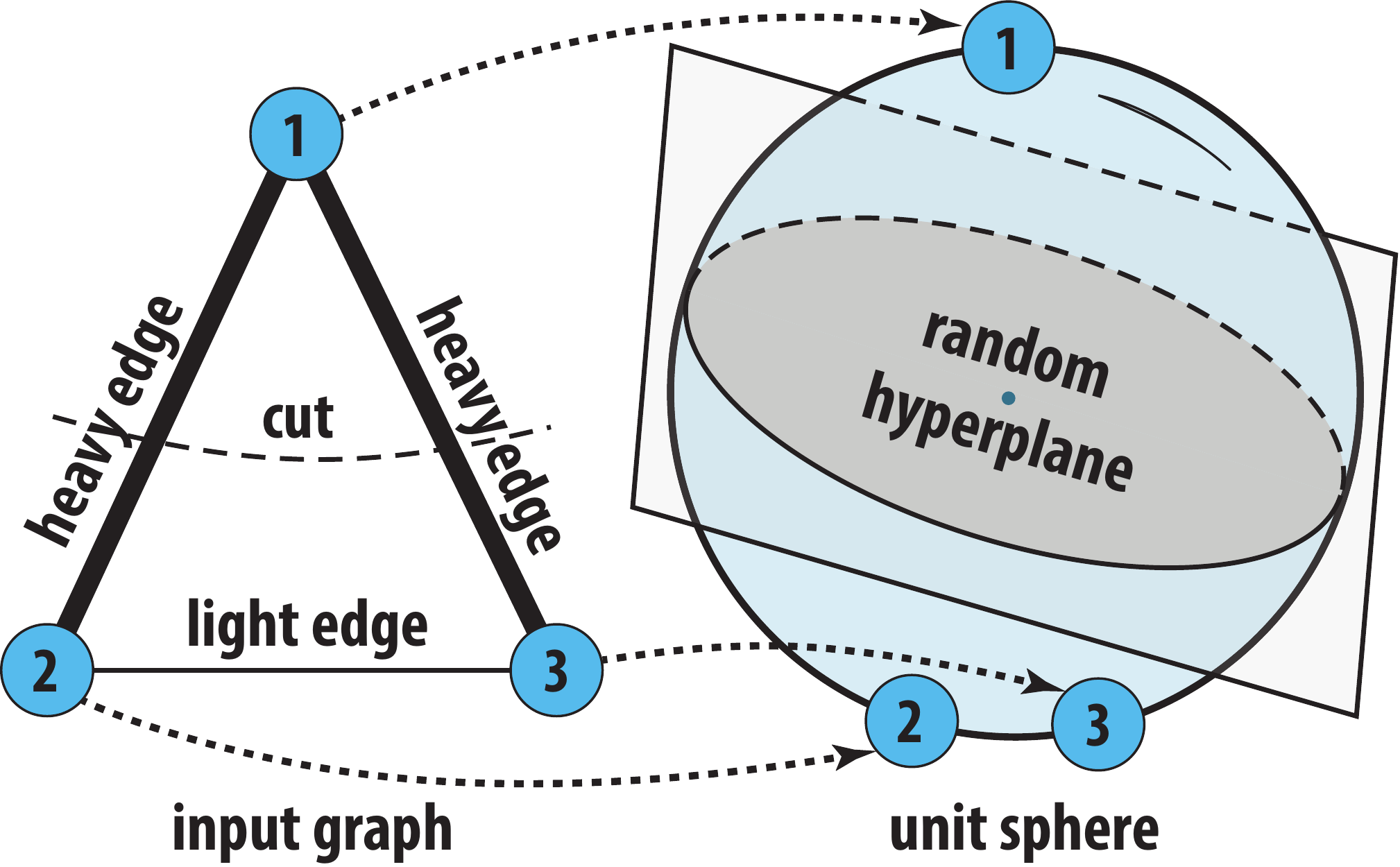} 
\caption{A geometric view of  \cref{alg:maxcut-sdp}}\label{fig:geometric}
\end{figure}

A geometric view in 
\cref{fig:geometric} explains the essence of  \cref{alg:maxcut-sdp}:
It maps vertices to vectors on the unit sphere. 
A feasible SDP solution corresponds to a
point configuration on the unit sphere, while a feasible solution to
\MAXCUT\ assigns a sign variable $\{\pm 1\}$ with every graph vertex.
An optimal solution of SDP tends to send adjacent vertices with heavy
edges to antipodal points, thereby maximizing
$(1 - \v_i^\trans \v_j)/2$.  A rounding technique is required that
separates most far away pairs, and hence keeps close
pairs together.  \textit{Random hyperplane rounding} works gracefully: A
random hyperplane through the origin partitions the sphere into two
halves, which correspond to cut parts (see \cref{fig:geometric}).
The ratio between the expected cut value over the maximum cut value is
never worse than $\alpha \approxeq 0.87856$, which is the expected
approximation guarantee of \maxcut-SDP.

\section{Calculate Posterior Probability of  Cuts}\label{sec:calc_posterior}

Given the geometric interpretation of the \MAXCUT-SDP algorithm
in  \cref{sec:interpret_sdp}, we can derive the scheme to calculate posterior probability of cuts here, which will be used to
evaluate the posterior agreement in  \cref{eq:pa}. 

In step $t$, the relaxed SDP problem (\ref{eq:r}) outputs the $n$
intermediate vectors $O_t = \{\v_1, \v_2, \cdots, \v_n\}$.  Each cut
$c := (S, \bar{S})$, where $S = \{1,\ldots, \ell\}$, 
$\bar{S} := V\setminus S =  \{\ell+1,\ldots, n\}$, induces a set,
\begin{flalign}\label{eq:cut-induced-set}
  B(c) &= \{\b_1, \cdots, \b_n\} := \{\v_1, \cdots, \v_{\ell}, -\v_{\ell + 1},
    \cdots, -\v_{n}\}.\notag
\end{flalign}
Let the corresponding matrix with columns specified by $\{\b_1, \cdots, \b_n\}$ be $\BB$.
$B(c)$ is used to define a polygonal intersection  cone:
\begin{definition}[Polygonal \textit{intersection}
  cone]\label{def:intersection}
The cone $C$ determined by  the cut $c = (S, \bar{S})$
is the intersection of $n$ half-spaces:
\begin{equation}
  C = C(c) := \{\x\in \mathbb{R}^n, \|\x\|_2\leq 1 \;|\; \x^\trans \b_i
  \geq 0, \forall i\in V   \}.
\end{equation}
\end{definition}
It determines the
posterior of the corresponding cut by  \cref{the:pro} (all proof of lemmas are in  \cref{sec:proof}) in the following,
\begin{lemma}\label{the:pro}
The posterior $\prob(c|G)$ of
a cut $c$ is
\begin{align}\label{eq:rounding}
  \prob(c|G)  
   & = \frac{2*\text{\emph{unit spherical area of} }    C(c)}{\text{\emph {area of
    unit sphere}}}   \\
 &  = \frac{2*\text{\emph{volume of }} C(c)}{\text{\emph{volume of  unit ball}}}\\
 &   =
    \frac{2*\text{\emph{solid angle of }}C(c)}{\text{\emph{solid angle of unit
    sphere}}} \,.
\end{align}
\end{lemma}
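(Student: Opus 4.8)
The plan is to analyze the random hyperplane rounding step of \cref{alg:maxcut-sdp} directly and to rewrite the event ``the rounding returns cut $c$'' as a geometric region on the unit sphere $S_{n-1}$. Fix the intermediate configuration $O_t=\{\v_1,\dots,\v_n\}$ and a cut $c=(S,\bar S)$, and recall $\b_i=\v_i$ for $i\in S$ and $\b_i=-\v_i$ for $i\in\bar S$, as in $B(c)$. First I would observe that, up to the measure-zero set on which $\v_i^\trans\r=0$ for some $i$, the rounding $S(\r)=\{i:\v_i^\trans\r\ge0\}$ yields the partition $\{S,\bar S\}$ precisely when $\r$ lies in the open cone $\{\x\in S_{n-1}:\x^\trans\b_i>0\ \forall i\}$ (which is the trace of $C(c)$ on the sphere) or in its antipode $\{\x\in S_{n-1}:\x^\trans\b_i<0\ \forall i\}$. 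The step requiring the most care is exactly this reduction: one must verify both inclusions --- that $\r$ in the open cone forces $S(\r)=S$ and not a different cut --- and must check that the cone and its antipode can overlap only on $\{\r:\r^\trans\b_i=0\ \forall i\}$, a proper subspace intersected with the sphere whenever the $\v_i$ span $\R^n$ (and a null set in any case), so that the two regions contribute additively without double counting.

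Next I would invoke the invariance of the uniform law on $S_{n-1}$ under $\r\mapsto-\r$, which makes the two contributions equal, so that
\[
\prob(c|G)=2\,\Pr[\r\in C(c)\cap S_{n-1}]=\frac{2\cdot(\text{unit spherical area of }C(c))}{\text{area of }S_{n-1}},
\]
the final equality being just the defining property of the uniform distribution on the sphere. This already establishes the first equality in the statement.

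Finally I would pass to the remaining two expressions via the elementary spherical-coordinates identity: writing $K=\{\x:\x^\trans\b_i\ge0\ \forall i\}$ for the cone underlying $C(c)=K\cap B_n$, integration of the radial variable gives $\mathrm{vol}(K\cap B_n)=\int_0^1 r^{n-1}\,dr\cdot(\text{spherical area of }K)=\tfrac1n(\text{spherical area of }K)$, and in particular $\mathrm{vol}(B_n)=\tfrac1n\,\mathrm{area}(S_{n-1})$. Since the solid angle of $K$ is by definition its spherical area and the full solid angle is $\mathrm{area}(S_{n-1})$, the factors $\tfrac1n$ cancel and all three ratios --- spherical area over sphere area, cone volume over ball volume, solid angle over full solid angle --- coincide, with the factor $2$ preserved throughout. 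What remains is bookkeeping: confirming the measure-zero exceptional sets contribute nothing, and that the relaxed SDP at time $t$ indeed returns well-defined unit vectors $\v_i$ so that all of the above is legitimate.
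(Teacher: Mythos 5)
Your proposal is correct and follows essentially the same route as the paper's proof: identify the set of rounding directions yielding cut $c$ with the union of the spherical trace of $C(c)$ and its antipodal copy, conclude the probability is twice the normalized spherical area, and pass to the volume and solid-angle ratios by elementary geometry. You simply make explicit some details the paper leaves implicit (the measure-zero boundary sets, the $\r\mapsto-\r$ symmetry, and the radial integration giving $\mathrm{vol}(K\cap B_n)=\tfrac1n\,\mathrm{area}(K\cap S_{n-1})$).
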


Ensured by  \cref{the:pro}, the cut
probabilities are measured either by spherical area, by volume
or by solid angle.  
Without loss of generality, we calculate the solid angle to derive
$\prob(c|G)$. Since it is convenient to express the method of calculating
 solid angle in terms of the spanning cone definition, let us
transform the intersection cone $C$ into the spanning cone,
\begin{definition}[Polygonal \textit{spanning}
  cone]\label{def:spanning}
  According to \citet{tiel1984convex}, a polygonal spanning cone is
  spanned by a set of $n$ linearly independent unit vectors
  $ \BA= (\a_1, \a_2, \cdots, \a_n)$ in $\mathbb{R}^n$:
  \begin{flalign}\notag 
    C' 
    :=  \{\x\in
    \mathbb{R}^n, \|\x\|_2\leq 1\; |\; \x = \sum\nolimits_{i=1}^n \lambda_i \a_i,
    \lambda_i \geq 0, 1\leq i \leq n \}.
  \end{flalign}
\end{definition}
Given an intersection cone $C$, one can get an equivalent spanning
cone $C'$ by taking $\BA^{\trans} = \BB^{-1}$, which is ensured by,
\begin{lemma}\label{lemma:def-equi}
  Given one intersection cone $C$ (Definition \ref{def:intersection}) and
  one spanning cone $C'$ (Definition \ref{def:spanning}), if  
  $\exists\; k_1, \cdots, k_n >0$, \text{s.t.,}
  $\BA^{\trans} = \text{diag}(k_1, \cdots, k_n) \BB^{-1}$, then
  $C' = C$.
\end{lemma}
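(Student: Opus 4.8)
\textbf{Proof plan for Lemma \ref{lemma:def-equi}.}

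The plan is to show that the spanning cone $C'$ and the intersection cone $C$ coincide by proving mutual inclusion, using the defining inequality characterization of each. First I would unpack what membership in each cone means. A point $\x$ lies in the intersection cone $C$ precisely when $\x^\trans \b_i \geq 0$ for every $i$, i.e.\ when $\BB^\trans \x \geqco \zero$. A point $\x$ lies in the spanning cone $C'$ precisely when $\x = \sum_i \lambda_i \a_i = \BA \bmlambda$ for some $\bmlambda \geqco \zero$, i.e.\ when the vector of coefficients $\bmlambda = \BA^{-1}\x$ is componentwise non-negative (here $\BA$ is invertible since its columns $\a_1,\dots,\a_n$ are linearly independent).

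Next I would use the hypothesis $\BA^\trans = \operatorname{diag}(k_1,\dots,k_n)\,\BB^{-1}$ with all $k_i > 0$. Transposing and inverting, this gives $\BA^{-1} = \BB^\trans \operatorname{diag}(k_1^{-1},\dots,k_n^{-1})^{-1}$ after the appropriate rearrangement; more directly, from $\BA^\trans = \operatorname{diag}(\bmk)\BB^{-1}$ one gets $\BA = (\BB^{-1})^\trans \operatorname{diag}(\bmk) = (\BB^\trans)^{-1}\operatorname{diag}(\bmk)$, hence $\BA^{-1} = \operatorname{diag}(\bmk)^{-1}\BB^\trans = \operatorname{diag}(k_1^{-1},\dots,k_n^{-1})\,\BB^\trans$. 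Therefore for any $\x$ with $\|\x\|_2 \leq 1$,
\begin{align}\notag
\x \in C' &\iff \BA^{-1}\x \geqco \zero \iff \operatorname{diag}(k_1^{-1},\dots,k_n^{-1})\,\BB^\trans \x \geqco \zero.
\end{align}
Since each $k_i^{-1} > 0$, the $i$-th component of $\operatorname{diag}(k_1^{-1},\dots,k_n^{-1})\BB^\trans\x$ is non-negative if and only if the $i$-th component of $\BB^\trans\x$ is non-negative. Thus $\BA^{-1}\x \geqco \zero \iff \BB^\trans\x \geqco \zero \iff \x \in C$. Combining with the norm constraint $\|\x\|_2 \leq 1$ common to both definitions, we conclude $C' = C$.

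I do not anticipate a genuine obstacle here; the only care needed is bookkeeping with transposes and inverses to get the relation between $\BA^{-1}$ and $\BB^\trans$ correct, and to note explicitly that left-multiplication by a positive diagonal matrix preserves the sign pattern of each coordinate, which is what lets the non-negativity constraints translate between the two representations. One should also remark that $\BA$ is well-defined and invertible: linear independence of $\{\a_i\}$ is part of Definition \ref{def:spanning}, and it is consistent with $\BA^\trans = \operatorname{diag}(\bmk)\BB^{-1}$ since $\BB$ is invertible (its columns $\b_i$ are $\pm$ the SDP output vectors, which are linearly independent for a full-rank configuration).
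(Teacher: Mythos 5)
Your proposal is correct and rests on the same key fact as the paper's proof, namely that the hypothesis makes $\BA^{\trans}\BB = \operatorname{diag}(k_1,\cdots,k_n)$ a positive diagonal matrix, so the generator description of $C'$ and the half-space description of $C$ impose equivalent sign conditions; the paper phrases this as ``$\bmlambda^{\trans}\BA^{\trans}\BB \geqco \zero$ for all $\bmlambda \geqco \zero$'' while you invert to get $\BA^{-1} = \operatorname{diag}(k_1^{-1},\cdots,k_n^{-1})\BB^{\trans}$ and compare coordinate signs directly. If anything your version is slightly more complete, since the explicit equivalence $\BA^{-1}\x \geqco \zero \Leftrightarrow \BB^{\trans}\x \geqco \zero$ yields both inclusions, whereas the paper's chain only literally argues $C' \subseteq C$ (just fix the garbled intermediate line before your ``more directly'' derivation, which is the correct one).
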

\begin{algorithm}[htbp]
  \caption{Calculate posterior of each cut \citep{bian2016information}}\label{alg:prob-calc}
  \KwIn{independent vectors $\{\v_1, \v_2, \cdots, \v_n\}$ on
    $S_{n-1}$} 
  \KwOut{posterior of each cut
    $\mathbb{P}(c|G), \forall c\in \H$} \For{\emph{each cut} $c\in \H$}{ {get
      the cut induced set $B(c)$, let the corresponding matrix be $\BB$\;
    }

    {$\BA^{\trans} \leftarrow \BB^{-1}$\tcp*{ensured by Lemma
        \ref{lemma:def-equi}}}
		
    {compute $\mathbb{P}(c|G)$ by   \cref{eq:combination,eq:solid-angle}\;} }
  \KwRet{$\mathbb{P}(c|G), \forall c\in \H$}
\end{algorithm}
Now we have the spanning cone $C'$ associated with the cut $c$, we
borrow the results of $n$-dimensional solid angle calculating
\citep{hajja2002measure,ribando2006measuring}: The solid angle of a
spanning cone $C'$ from  \cref{def:spanning} is given by:
\begin{equation}\label{eq:solid-angle}
E = |\det(\BA)| \int_S \|\BA\s\|_2^{-n} dS, 
\end{equation}
where 
the integral is calculated over a unit sphere $\|\s\|_2 = 1$ in
the positive orthant given by $s_i \geq 0$.

Combined with the fact that the solid angle subtended by $S_{n-1}$ is
$\Omega_n = \frac{2\pi^{\frac{n}{2}}}{\Gamma(\frac{n}{2})}$
($\Gamma(\cdot)$ is the Gamma function),
according to  \cref{eq:rounding}, 
\begin{equation}\label{eq:combination}
\prob(c|G) = 2E/\Omega_n = E\cdot  \Gamma(n/2)/\pi^{\frac{n}{2}}.
\end{equation}
The complete procedure\footnote{If the $n$ intermediate vectors
$O_t = \{\v_1, \v_2, \cdots, \v_n\}$ have mutual dependencies,
one can add small perturbations to them in order to make them
independent, and the perturbation would still be insignificant w.r.t.
vector positions.} to calculate
the posterior probability of each cut is summarized in 
\cref{alg:prob-calc}.
The way to exactly evaluate the surface integral (\cref{eq:solid-angle}) is in \cref{supp_evalute_surf}. It involves an $(n-1)$-variate integral, which is computationally intractable, we only
use it in the low dimensional case as ground truth.\\

\paragraph{Sampling to approximate posterior of cut.}
For the high dimensional case, ensured by Lemma \ref{the:pro}, we
propose one simple and efficient sampling method in 
\cref{alg:prob-sampling} to approximate the posterior: In each
iteration it uniformly samples one hyperplane with normal vector $\r$
and records the cut $c$ separated by that hyperplane, then it
estimates $\prob(c|G)$ by the statistics of each cut's frequency
of occurrence.
Theoretical analysis of approximation guarantee of 
\cref{alg:prob-sampling} and space-efficient implementation of it are in
  \cref{supp_ana_sampling} and
\cref{sec:space-eff}, respectively.

\begin{algorithm}[t]
	\caption{Approximate cut's posterior by
		sampling \citep{bian2016information}}\label{alg:prob-sampling}
	\KwIn{$\{\v_1, \v_2, \cdots, \v_n\}$ on $S_{n-1}$,
		$\#\text{samplings}$} \KwOut{approximate posterior of each cut
	} {initialize $\text{count}(c) \leftarrow 0, \forall c\in \H$\;}
	\For{\emph{each $\r$ \textit{uniformly} sampled from $S_{n-1}$}}{
		{$\tilde{c}\leftarrow (S,  V\setminus {S})$, where
			$S = \{i \;|\; \v_i\cdot \r \geq 0, \forall i\in V\}$\;}
		{$\text{count}(\tilde{c})\leftarrow \text{count}(\tilde{c}) + 1$;}
	}
	\KwRet{$\prob(c|G) \approxeq 
		{\emph{count}(c)}/{\#\emph{samplings}}, \forall c\in \H$}
\end{algorithm}

\section{Experiments}\label{sec:exp}

We compare the \MAXCUT-SDP algorithm (abbreviated as ``\algname{SDP}''
in the following) with two representative greedy \MAXCUT\ algorithms:
The double greedy \algname{D2Greedy} (\texttt{D}eterministic
\texttt{D}ouble \texttt{Greedy} algorithm in
\cite{buchbinder2012tight}), and the backward greedy \algname{EC}
(\texttt{E}dge \texttt{C}ontraction algorithm in
\citet{kahruman2007greedy}).  The way to evaluate their posteriors
(approximation sets) can be found in \cref{chapter_greedy_maxcut}.
Let $W_{\A}(G)$ be the cut value generated by an algorithm  $\A$ on graph $G$, 
$W_{*}(G)$ be the optimal cut value of $G$. 
The  approximation ratio of an algorithm $\A$ is the worst-case bound $\min_G \frac{W_{\A}(G)}{W_{*}(G)}$,
which  ranks the
three algorithms as
$\algname{SDP} \succ \algname{D2Greedy} \succ  \algname{EC}$.
Since finding $W_{*}(G)$ for NP-complete problem is non-trivial, we use
$\frac{W_{\A}(G)}{W(G)}$  as a natural 
lower bound of $\frac{W_{\A}(G)}{W_*(G)}$, where $W(G)$ denotes the  total weight of $G$.

\subsection{Experimental Setting}  

 We experimented with the
{Gaussian edge weights model} \citep{informativemst}: 
The graph instances are generated in a two-step fashion: Firstly, a
random ``master'' graph $G$ is generated with Gaussian distributed
edge weights $W_{ij} \sim N (\mu, \sigma_m^2)$,
$\mu = 300, \sigma_m = 50$, negative edges are set to be $\mu$.
Secondly, noisy graphs $G'$, $G''$ are obtained by adding Gaussian
distributed noise $n_{ij} \sim N (0, \sigma^2)$, negative edges are
set to be zero.  We perform 1000 repeated noisy samplings to estimate
the expectation over $(G', G'')$ in  \cref{eq_ic}.

\subsection{Results and Analysis}  
 \cref{fig:stepwise} shows the temporal information content
($I_t^{\mathscr{A}}$ in  \cref{eq_ic})
for two $\sigma$
values: 10 and 58. For all the algorithms,
$I_t^{\mathscr{A}}$
increases at the beginning. After reaching some optimal step
$t^*$,
where the highest $I_t^{\A}$
($I^{\A}$
in  \cref{eq_ic}) is achieved, it decreases and finally vanishes.
This observation confirms the principle of regularization by early stopping
at time $t^*$ when the maximum $I_t^{\A}$ is reached.

\setkeys{Gin}{width=0.9\textwidth,height=0.7\textwidth}
\begin{figure}[htbp]
\begin{center}
\subfloat[$\sigma  = 10$]{
  \includegraphics{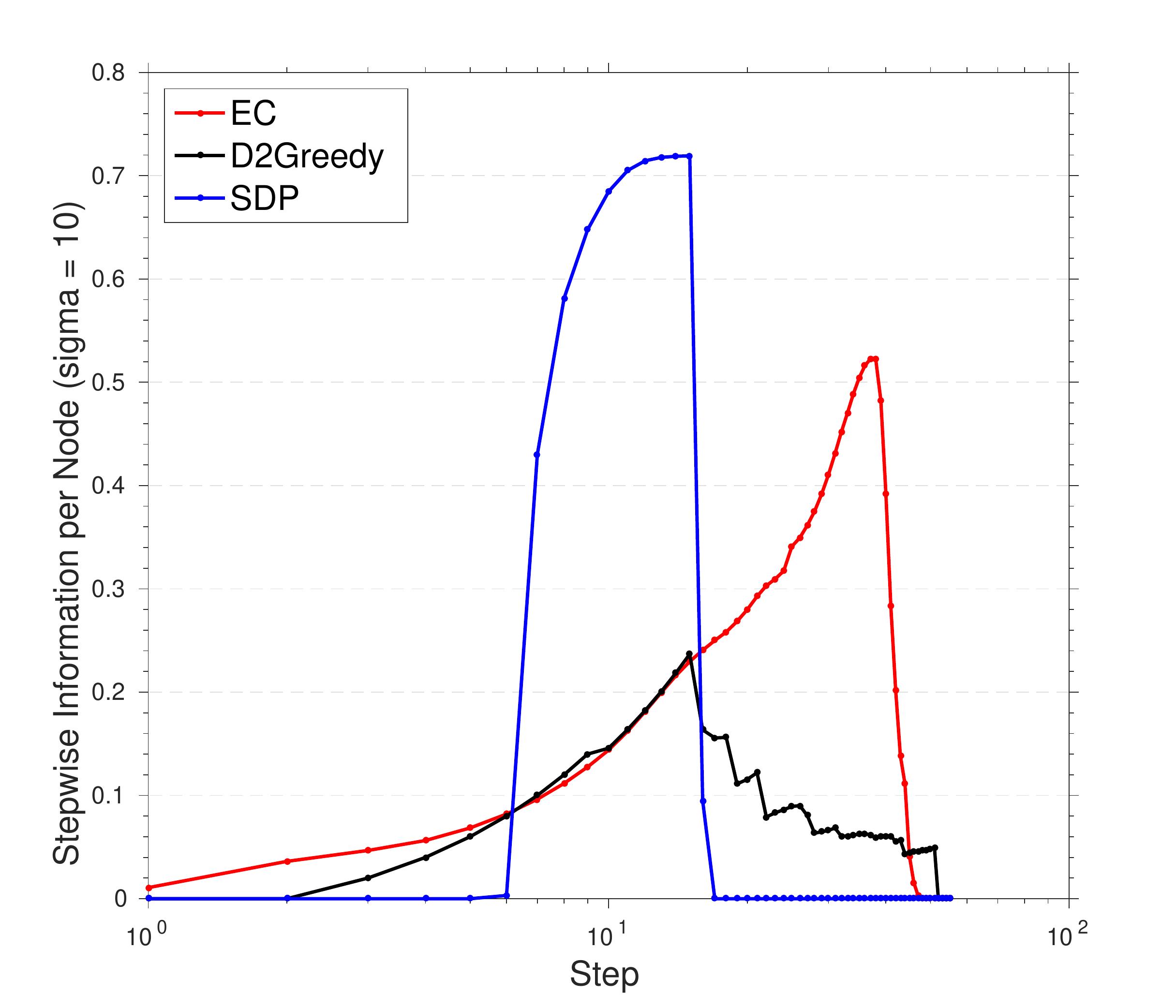}
}\\
\subfloat[$\sigma  = 58$]{
  \includegraphics{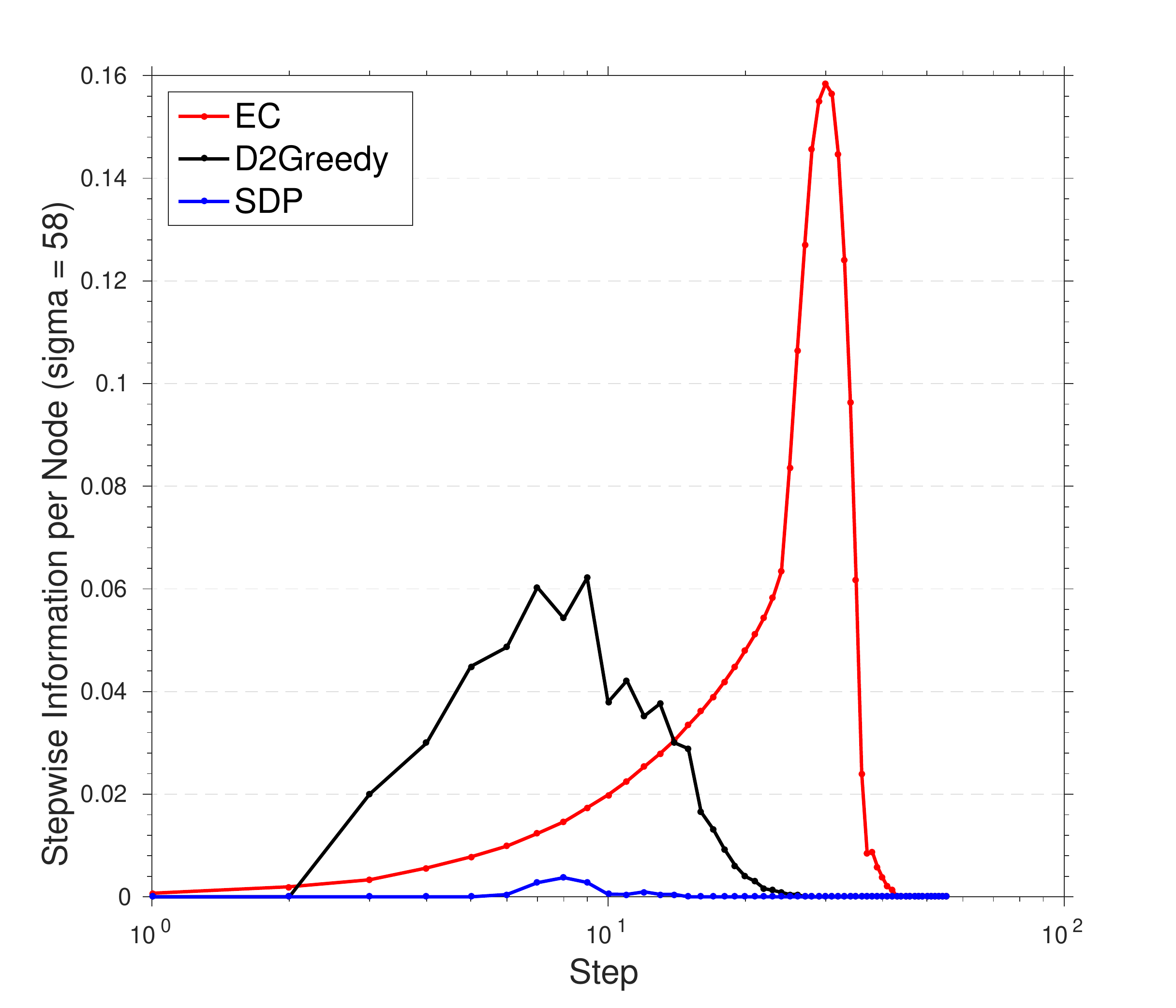}}
\end{center}
\caption{$I_t^{\mathscr{A}}$  {per vertex} w.r.t. $t$. $n = 50$.}
\label{fig:stepwise}
\end{figure}

\cref{fig:ic-gaussian} shows (a) the information content, and (b)
the fully overlapping curves
${W_{\A}(G')}/{W(G')}$,
${W_{\A}(G'')}/{W(G'')}$,
respectively. $\sigma$
controls the noise level, larger $\sigma$
means larger noise.  In the noiseless case, $G'=G''$,
so $\prob(c|G')=\prob(c|G'')$,
and $
I_t^{\mathscr{A}} = \mathbb{E}_{G', G''} [ \log ( |\H| \sum_{c\in
  \H}\prob^2(c|G')) ]
$.  All algorithms start with uniform distribution of solutions when
$t
=
0$; as the algorithm proceeds, the distribution of solutions
concentrates more and more on a small support, $\sum_{c\in
  \H}\prob^2(c|G')$ increases and reaches a maximum in the final step, so
all algorithms reach the maximum $I_t^{\A}$
in the final step. For greedy algorithms (\algname{D2Greedy} and
\algname{EC}), there is only one final solution with probability 1 in
the last step, so $\sum_{c\in
  \H}\prob^2(c|G') = 1$ and the maximum $I_t^{\A}$ is $\log (|\H|) =
\log(2^{n-1} -
1)$, as shown by  \cref{fig:ic-gaussian}(a).  For \algname{SDP},
however, when $\sigma
=
0$, \algname{SDP} can only approximately solve the input graphs, in
the last step there are several solutions with non-zero probability,
which renders its information content less than $\log (|\H|)$.

It is worth noting that for greedy algorithms (\algname{D2Greedy} and
\algname{EC}), the higher the approximation ratio is for noisy graphs,
the lower is the information content achieved by the algorithm. This
behavior is quite intuitive since high approximation ratio means
better adaptation to empirical fluctuations and, therefore,
overfitting to noisy graphs. Consequently, there will be less
agreement between the solutions of the two noisy graphs and
the information content of the algorithm drops. A similar conclusion
has also been drawn in \citet{bousquet2008tradeoffs}.

However, for the non-greedy algorithm \algname{SDP}, there are two factors
affecting its information content: The approximation ratio and its
\emph{probabilistic weighting strategy} to down-weight solutions without
discarding them.  \algname{SDP} keeps all the possible solutions,
instead of removing the bad solutions as greedy algorithms do, it
assigns less probabilistic weights to them, so it can capture some
uncertainty in the input.

\setkeys{Gin}{width=0.85\textwidth}
\begin{figure}[htbp]
\begin{center}
\subfloat[$I^{\mathscr{A}}$ \textit{per vertex} w.r.t. $\sigma$]{
\includegraphics{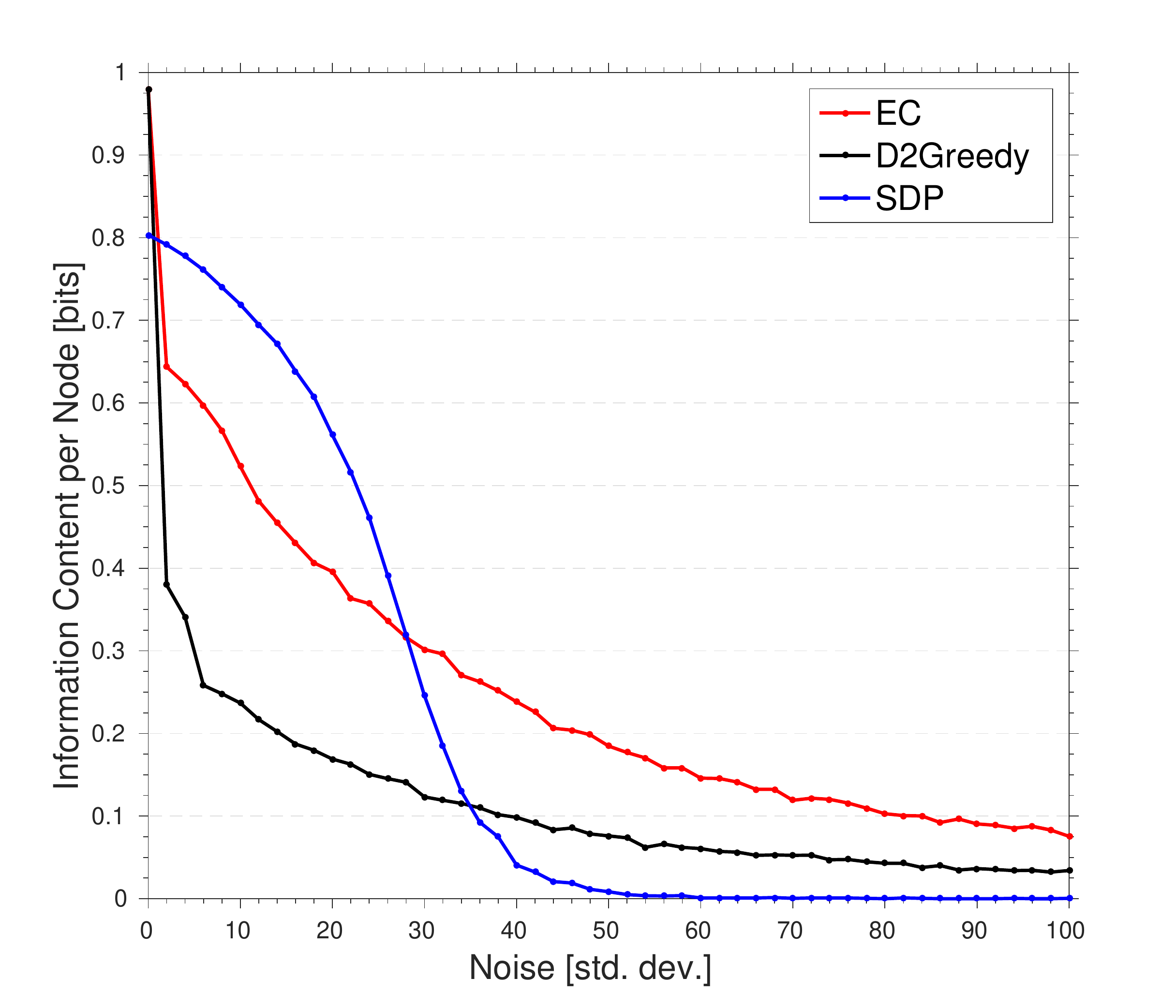}
}\\
\vspace{-.3cm}
\subfloat[$\frac{W_{\A}(G')}{W(G')}$, $\frac{W_{\A}(G'')}{W(G'')}$]{
\includegraphics{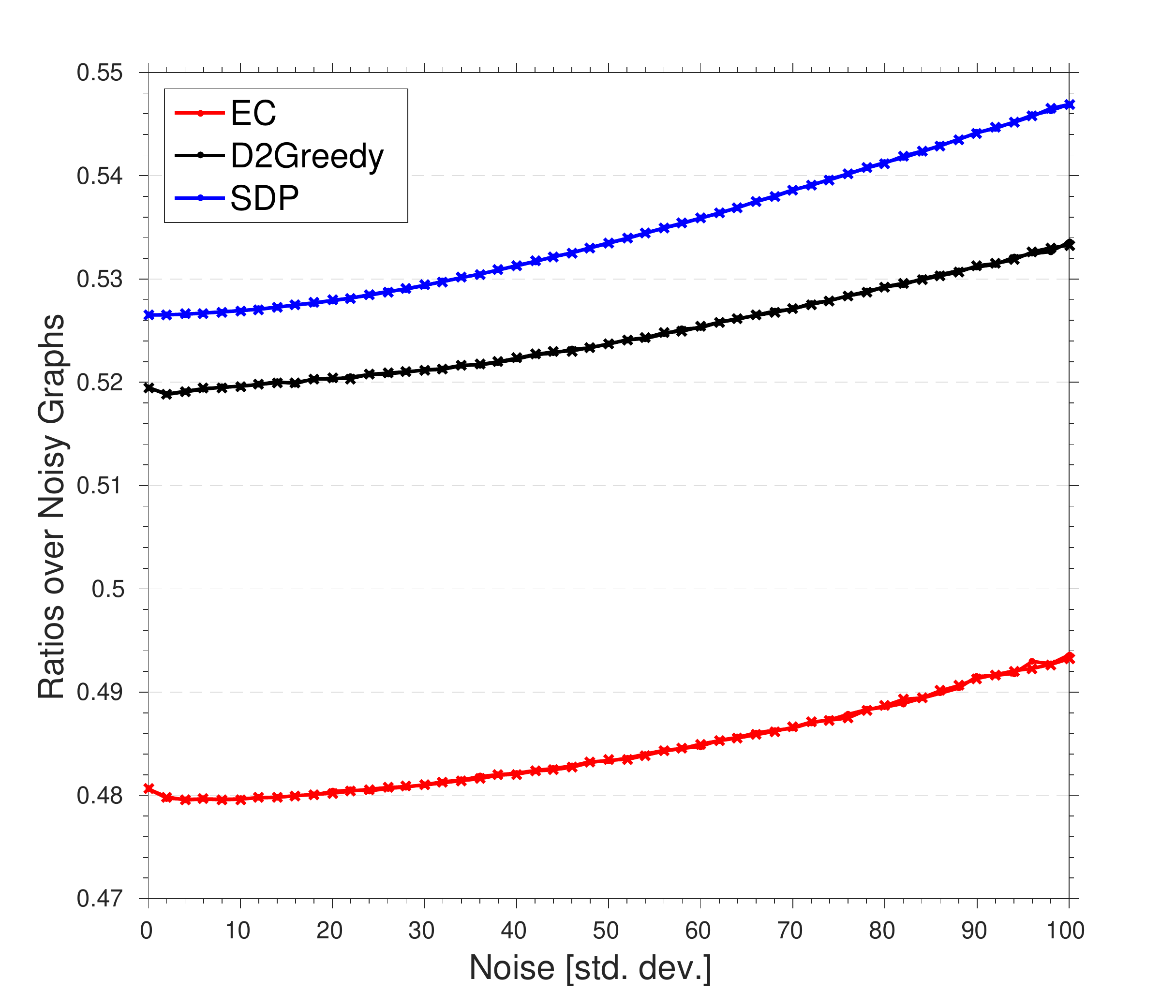}}
\end{center}
\vspace{-.3cm}
\caption{Information content and lower bounds of approximation ratios.}
\label{fig:ic-gaussian}
\end{figure}

The information content of \algname{SDP} shows the influence of
both factors: For low noise, the probabilistic weighting
strategy dominates, \algname{SDP} outperforms greedy algorithms in
information content; while in high noise level, the influence of
approximation ratio dominates, and \algname{SDP} is inferior to greedy
algorithms.

\section{Conclusions and Discussions}\label{sec:disc_con}

The objective 
$ I^{\mathscr{A}} $ in  \cref{eq_ic}
measures the information content of an algorithm given a noisy source
of instances. We have theoretically justified this criterion, and  applied it to study the robustness of
\MAXCUT\ algorithms with different approximation ratios. Of particular
interest is the SDP based algorithm by 
\citet{goemans1995improved}, since it pursues a non-greedy strategy for the \MAXCUT problem.

Comparison of \algname{SDP} with two representative greedy \MAXCUT\
algorithms (\algname{D2Greedy} and \algname{EC}) demonstrates that the
ability of this approximation algorithm to achieve a high
approximation ratio might decrease its generalization ability. The
property of an algorithm to efficiently find a good empirical minimum
might increase its fragility due to noise adaptation. This observation
could be generalized or even proved for general
approximation algorithms provided that the algorithms operate in a
similar settings or use similar optimization strategies.

The posterior agreement based criterion also enables a meta-algorithm to
search for more informative  algorithms. Algorithms are
usually tuned by parameter adaptation or by modifying the algorithmic
strategy in the spirit of genetic programming. Thereby, the
meta-algorithm will search through the space of  algorithms
guided by maximal gradient ascent on posterior agreement. With a
validation criterion as posterior agreement, we enable algorithm
engineering to explore multi-objective optimization of algorithms
with respect to time, space and robustness.

\section{Additional Details}\label{sec:proof}

\subsection{Detailed Proof in  \cref{sec:classical_mi}}

The classical mutual information is closed related to the  information content defined  in  \cref{eq_ic}.
The classical mutual information $\I(\graphrv'; \graphrv'')$ is defined as
\begin{flalign}
\I(\graphrv'; \graphrv'') 
& = \E_{\da', \da''} \log
\frac{\prob(G^\prime, G^{\prime\prime})}{\prob(G^\prime)\prob(G^{\prime\prime})} \label{eq:MI}
\end{flalign}

\begin{figure}[htbp]
	\center 
	\includegraphics[width=0.6\textwidth]{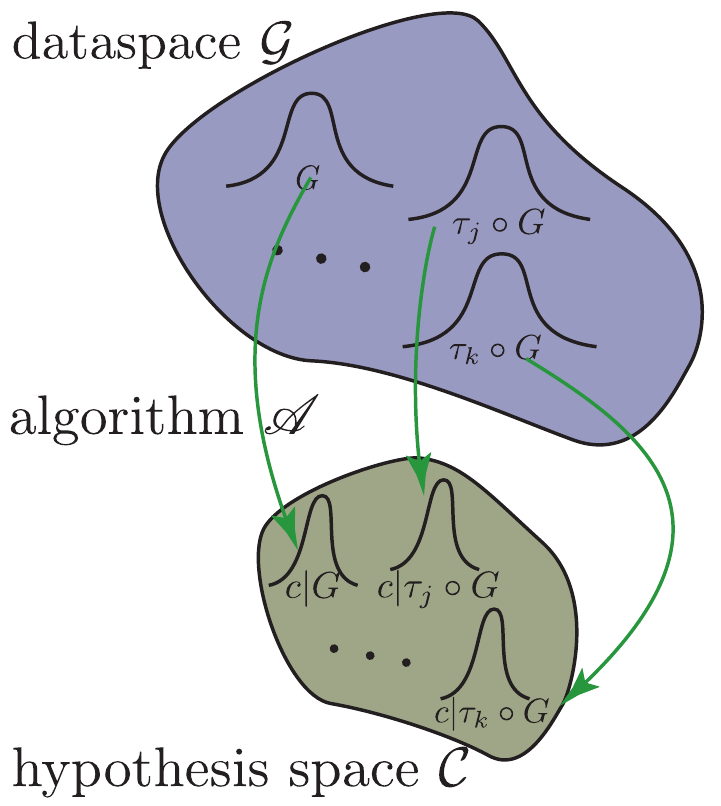} 
	\caption{Illustration of the mixture distribution}\label{fig_mix}
\end{figure}

From the definition of virtual communication scenario, the data instances 
$G', G''$ can be treated to be drawn from a mixture distribution, as illustrated in  \cref{fig_mix}.

We consider the special transformations that map from input space to output spaces, then the joint probability can be factorized in the following way,

\begin{flalign} 
&\prob(\da', \da'') = \sum_{c\in \H} \sum_{\CB \in \T^M} \prob(G^\prime,
\da^{\prime\prime}| c, \CB) \prob(c, \CB)\\\notag 
& = \sum_{j=1}^{M} p_j  \sum_{\t_j \in \T}\sum_{c\in \H} \prob(G^\prime,
\da^{\prime\prime}|\tau_j, c ) \prob(\tau_j, c)\\\notag
& \overset{(a)}{=}  \sum_{j=1}^{M} p_j  \sum_{\tau_j \in \T}\sum_{c\in \H} \prob(G^\prime|\tau_j, c )\prob(G^{\prime\prime}|\tau_j, c )  \prob(c|\tau_j)\prob(\tau_j)\\\notag
&\overset{(b)}{=}  \sum_{j=1}^{M} p_j  \sum_{\tau_j \in \T}\prob(\tau_j)\sum_{c\in \H} \frac{\prob(c | \da^\prime,
	\t_j)}{\prob(c|\t_j)} \prob(\da^\prime|\t_j)  
\frac{\prob(c |\da^{\prime\prime}, \t_j)}{\prob(c|\t_j) }
\prob(G^{\prime\prime}|\t_j) \prob(c|\t_j) \\\notag
& \overset{(c)}{=} \sum_{j=1}^{M} p_j  \sum_{\tau_j \in \T} \prob(\da^\prime|\t_j)  \prob(G^{\prime\prime}|\t_j) \prob(\t_j)
\underbrace{\sum_{c\in \H} \frac{\prob(c | \da^\prime,
		\t_j)\prob(c |\da^{\prime\prime}, \t_j)}{\prob(c|\t_j)} }_{=\tilde{k}(\t_j\circ \da', \t_j\circ \da'')}.
\end{flalign}
Step $(a)$ exploits the fact that conditioning on the %
transformation $\t_j$ and hypothesis $c$ renders $G^\prime, G^{\prime\prime}$
statistically independent since the two instances are drawn
i.i.d. from the same component of the mixture distribution; $(b)$ applies Bayes rule twice.
In step $(c)$ we define the \textit{generalized posterior agreement}  as $$\tilde{k}(\da',  \da''):=\sum_{c\in \H} \frac{\prob(c | \da^\prime)\prob(c |\da^{\prime\prime})}{\prob(c)}.$$
From condition $2)$ of the definition of the transformation set $\T$, one can get that 
\begin{flalign}\label{eq25}
\tilde{k}(\t_j\circ \da', \t_j\circ \da'')=\tilde{k}(\da',  \da'')
\end{flalign}
Combining  \cref{eq25} with $(c)$ one can get,
\begin{flalign}
& \frac{\prob(\da', \da'')}{\prob(\da')\prob(\da'')}  \\
&= \tilde{k}(\da',  \da'')  \underbrace{\frac{\sum_{j=1}^{M} p_j  \sum_{\tau_j \in \T} \prob(\da^\prime|\t_j)  \prob(G^{\prime\prime}|\t_j) \prob(\t_j)}{\sum_{j=1}^{M} p_j  \sum_{\t_j \in \T} \prob(\da^\prime|\t_j) \prob(\t_j)\sum_{l=1}^{M} p_l  \sum_{\t_l \in \T} \prob(\da''|\t_l) \prob(\t_l)}}_{\text{to be proved $=|\T|$}}\\\label{eq27}
& = \tilde{k}(\da',  \da'') |\T|.
\end{flalign}

Let us prove  \cref{eq27} first of all. We simplify the term,
\begin{flalign}  
& \frac{\sum_{j=1}^{M} p_j  \sum_{\tau_j \in \T} \prob(\da^\prime|\t_j)  \prob(G^{\prime\prime}|\t_j) \prob(\t_j)}{\sum_{j=1}^{M} p_j  \sum_{\t_j \in \T} \prob(\da^\prime|\t_j) \prob(\t_j)\sum_{l=1}^{M} p_l  \sum_{\t_l \in \T} \prob(\da''|\t_l) \prob(\t_l)} = \\\notag
& \frac{\sum_{j=1}^{M} p_j  \sum_{\tau_j \in \T} \prob(\da^\prime|\t_j)  \prob(G^{\prime\prime}|\t_j) \prob(\t_j)}{\sum_{j=1}^{M}\sum_{l=1}^{M}\sum_{\t_j \in \T}  \sum_{\t_l \in \T} p_j p_l \prob(\t_j) \prob(\t_l) \prob(\da^\prime|\t_j)    \prob(\da''|\t_l)} = \\\notag 
& \frac{\sum_{j=1}^{M} p_j  \sum_{\tau_j \in \T} \prob(\da^\prime|\t_j)  \prob(G^{\prime\prime}|\t_j) \prob(\t_j)}{\sum_{j=1}^{M}\sum_{\t_j \in \T}   p_j^2 \prob(\t_j)^2 \prob(\da^\prime|\t_j)    \prob(\da''|\t_l)+ \sum_{j\neq l}\sum_{\t_j \in \T}  \sum_{\t_l \in \T} p_j p_l \prob(\t_j) \prob(\t_l) \prob(\da^\prime|\t_j)    \prob(\da''|\t_l)}  
\end{flalign}

We further make the simplifying assumption that $\prob(\t_j)  = \prob(\t_l)= 1/|\T|$, then
\begin{flalign}\notag 
& \Rightarrow  \frac{\frac{1}{|\T|}\sum_{j=1}^{M} p_j  \sum_{\tau_j \in \T} \prob(\da^\prime|\t_j)  \prob(G^{\prime\prime}|\t_j)}{\frac{1}{|\T|^2}\left(\sum_{j=1}^{M}\sum_{\t_j \in \T}   p_j^2 \prob(\da^\prime|\t_j)    \prob(\da''|\t_l)+ \sum_{j\neq l}\sum_{\t_j \in \T}  \sum_{\t_l \in \T} p_j p_l \prob(\da^\prime|\t_j)    \prob(\da''|\t_l)\right)} \\\notag
&  = |\T|  \frac{\sum_{j=1}^{M} p_j  \sum_{\tau_j \in \T} \prob(\da^\prime|\t_j)  \prob(G^{\prime\prime}|\t_j)}{\left(\sum_{j=1}^{M}p_j^2\sum_{\t_j \in \T}    \prob(\da^\prime|\t_j)    \prob(\da''|\t_l)+ \sum_{j\neq l}p_j p_l\sum_{\t_j \in \T}  \sum_{\t_l \in \T}  \prob(\da^\prime|\t_j)    \prob(\da''|\t_l)\right)} \\\notag
&  \overset{p_j = \Delta_{js}}{=} |\T|  \frac{ \sum_{\tau_s \in \T} \prob(\da^\prime|\t_s)  \prob(G^{\prime\prime}|\t_s)}{\sum_{\tau_s \in \T} \prob(\da^\prime|\t_s)  \prob(G^{\prime\prime}|\t_s)+ \sum_{j\neq l}\underbrace{\Delta_{js}\Delta_{ls}}_{\text{$=0$ for $l\neq j$}} \sum_{\t_j \in \T}  \sum_{\t_l \in \T}  \prob(\da^\prime|\t_j)    \prob(\da''|\t_l)} \\\notag
& = |\T|.
\end{flalign} 

Inserting  \cref{eq27} into
 \cref{eq:MI} proves the claim that the mutual information is identical to  the information content if assuming $\prob(c)=\vert \mathcal{C}\vert^{-1}$,
\begin{flalign}\notag 
\I(\graphrv'; \graphrv'')  
& = \E_{\da', \da''} \log
\frac{\prob(G^\prime, G^{\prime\prime})}{\prob(G^\prime)\prob(G^{\prime\prime})} \\\notag
& =  \E_{\da', \da''} \log \kert(\da', \da'') \\\notag
& = \E_{\da', \da''} \log  \sum_{c\in \H}  \frac{\prob(c | \da^\prime)\prob(c | \da'')}{\prob(c)}\\\notag
&= \E_{\da', \da''} \log |\H| \sum\nolimits_{c\in \H} \prob(c|G')\prob(c|G'')\\\notag
& = I(\graphrv'; \graphrv'').
\end{flalign}

\subsection{Proof of Lemma \ref{the:pro}}

\begin{proof}
  For a specific cut $c := (S, \bar{S})$, assume the collections of
  normal vectors of all hyperplanes that give the cut $c$ is $R(c)$,
  according to the random hyperplane rounding technique,
  \begin{flalign}
    R(c) =& \{\r\in S_{n-1} \;|\; \r\cdot \b_i \geq 0,\forall i\in V
    \}\\\notag & \cup \{\r\in S_{n-1} \;|\; \r\cdot \b_i \leq 0,
    \forall i\in V \}
  \end{flalign}
  So $R(c)$ is two times the unit spherical surface of
  $C(c)$. Considering the fact that normal vectors of all hyperplanes
  constitute the surface of unit sphere, we get the first equality in
  \cref{eq:rounding}.  Using simple geometrical knowledge, we can get
  the second and third equalities.
\end{proof}

\subsection{Proof of Lemma \ref{lemma:def-equi}}

\begin{proof}
 
  $C' = C \Leftrightarrow$ any point in $C'$ must be in $C$
  $\Leftrightarrow$
  $(\sum_1^{n} \lambda_i \a_i)\cdot \b_j \geq 0, 1\leq i, j \leq n$
  holds $\forall \lambda_i \geq 0$ $\Leftrightarrow$
  $(\lambda_1, \cdots, \lambda_n)\cdot \BA^\trans \BB \geq 0$ holds
  $\forall (\lambda_1, \cdots, \lambda_n) \geq 0$

  So if
  $\exists k_1, \cdots, k_n >0, \text{ s.t.}\; \BA^\trans =
  \text{diag}(k_1, \cdots, k_n) \BB^{-1}$,
  one can get that $(\lambda_1, \cdots, \lambda_n)\cdot \BA^\trans \BB \geq 0$
  holds $\forall (\lambda_1, \cdots, \lambda_n) \geq 0$, so $C' = C$.
\end{proof}

\subsection{The Way to Exactly Evaluate the Surface Integral}
 \label{supp_evalute_surf}

Exactly calculating probability of cuts involves evaluating the high
dimensional surface integral in  \cref{eq:solid-angle}, To do this,
we first of all parametrize it using spherical polar coordinates, then
transform it to be a multivariate integral. Writing
$\s = \sum_{1}^{n} s_i \e_i$, we get:
\begin{equation}\label{eq:a}
  \|\BA\s\|_2^2 = \sum_{i = 1}^{n} \a_i\cdot \a_i s_i^2 +
  2\sum_{i<j}\a_i\cdot \a_j s_i s_j =  1+ 2\sum_{i<j}\a_i\cdot \a_j
  s_i s_j.
\end{equation}
Plugging  \cref{eq:a} into \cref{eq:solid-angle} one can express the
surface integral in a more manageable form
\begin{flalign}\label{eq:manageable}
E = |\det(\BA)| \int_S (1+ 2\sum_{i<j}\a_i\cdot \a_j s_i s_j)^{-n/2} dS 
= |\det(\BA)| \int_S f^{-n/2}(\s) dS ,
\end{flalign}
where $f(\s) = 1+ 2\sum_{i<j}\a_i\cdot \a_j s_i s_j$.  Then
parametrizing by spherical polar coordinates
$\bmtheta = (\theta_1, \cdots, \theta_{n-1})$:
\begin{flalign}\label{eq:polar-trans}
s_i =\cos(\theta_i)\prod_{j=1}^{i-1}\sin(\theta_j), i = 1, \cdots, n-1; 
s_n=\prod_{i =1}^{n-1}\sin(\theta_i)
\end{flalign}
for $0\leq \theta_i \leq \pi/2, 0\leq i \leq n-1$,  considering that the Jacobian is
$\prod_{i =1}^{n-2}{\sin}^{n-1-i}(\theta_i)$, substitute  \cref{eq:polar-trans} to 
  \cref{eq:manageable} it reaches the multivariate
integral:
\begin{equation}\label{eq:multivariate}
E=|{\det}(\BA)| \int_{\theta_1}\cdots \int_{\theta_{n-1}} \frac{\prod_{i =1}^{n-2}\sin^{n-1-i}(\theta_i)}{f^{n/2}(\bmtheta)} d\theta_1  \cdots d\theta_{n-1}.
\end{equation}

\subsection{Theoretical Analysis of  \cref{alg:prob-sampling}}
\label{supp_ana_sampling}

We will show that for a cut $c$ with high ground truth probability
$p_c := \mathbb{P}(c|V)$, the estimated cut probability $\hat p_c$ by  uniform sampling in  \cref{alg:prob-sampling} will be close to $p_c$ with high probability. 

Let $k = \#\text{samplings}$, random variable $X_i = 1$ means
recovering cut $c$ in the $i^\text{th}$ sampling, $X_i = 0$
means not recovering $c$ in the $i^\text{th}$ sampling. So
$\hat p_c = \sum_{i=1}^k X_i / k$, from the Chernoff-Hoeffding theorem \citep{hagerup1990guided},
for $\epsilon >0$,
\begin{algorithm}[ht]
	\caption{Pseudo-code to calculate estimate of $\sum_c \prob(c|G')\prob(c|G'')$ when $k<|\H|$ \citep{bian2016information}}\label{alg:implementation}
	\KwIn{$\texttt{cutIndices}(G')$, $\texttt{cutIndices}(G'')\in \R^k$, wherein the indices are in ascending order}
	\KwOut{estimate of $\sum_c \prob(c|G')\prob(c|G'')$}
	{initialize $\texttt{idx1} = \texttt{idx2} = 1$, $\texttt{sum} = 0$\;} 
	\While{$\texttt{idx1}\leq k\; \&\&\; \texttt{idx2}\leq k$}{
		\If{$\texttt{cutIndices}(G')_{\texttt{idx1}} == \texttt{cutIndices}(G'')_{\texttt{idx2}}$}{
			{$\texttt{commonIdx} = \texttt{cutIndices}(G')_{\texttt{idx1}}$\;}
			{$\texttt{cutNum1} = \texttt{cutNum2} = 1$\;}
			{$\texttt{idx1}++, \texttt{idx2}++$ \;}
			\While{$\texttt{idx1}\leq k\; \&\&\; \texttt{commonIdx} ==  \texttt{cutIndices}(G')_{\texttt{idx1}}$}{
				{$\texttt{cutNum1}++, \texttt{idx1}++$\;}
			}
			\While{$\texttt{idx2}\leq k\; \&\&\; \texttt{commonIdx} ==  \texttt{cutIndices}(G'')_{\texttt{idx2}}$}{
				{$\texttt{cutNum2}++, \texttt{idx2}++$\;}
			}
			{$\texttt{sum} +=\texttt{cutNum1}* \texttt{cutNum2}$\;}
		}
		\ElseIf{$\texttt{cutIndices}(G')_{\texttt{idx1}} < \texttt{cutIndices}(G'')_{\texttt{idx2}}$}{
			{$\texttt{idx1}++$\;}
		}
		\Else{
			{$\texttt{idx2}++$\;}
		}
	}
	\KwRet{$\sum_c\prob(c|G')\prob(c|G'') \approxeq \texttt{sum}/k^2$}
\end{algorithm}
\begin{align}\notag  
\prob(\hat p_ c \geq p_c + \epsilon) &\leq \left [\left (\frac{p_c}{p_c + \epsilon}\right )^{p_c + \epsilon}\left (\frac{1-p_c}{1-p_c - \epsilon}\right)^{1-p_c-\epsilon}\right ]^k = e^{-D(p_c + \epsilon || p_c)k}\\ 
&\leq  \left( \frac{p_c}{p_c +\epsilon}\right)^{k(p_c + \epsilon)}\cdot e^{k\epsilon}, \\\notag 
\prob(\hat p_ c \leq p_c - \epsilon) &\leq \left [\left (\frac{p_c}{p_c - \epsilon}\right )^{p_c - \epsilon}\left (\frac{1-p_c}{1-p_c + \epsilon}\right)^{1-p_c + \epsilon}\right ]^k \\ 
& = e^{-D(p_c - \epsilon || p_c)k}\leq  \left( \frac{p_c}{p_c -\epsilon}\right)^{k(p_c - \epsilon)}\cdot e^{-k\epsilon},
\end{align}
where $D(\cdot)$ is the Kullback-Leibler divergence between two Bernoulli random variables.

So to ensure that with probability at most $\delta < 1$, 
the estimated probability $\hat p_c$ is at most $\epsilon$-distant from the true probability $p_c$, one 
need to ensure that:
\begin{equation} 
	\max(e^{-D(p_c + \epsilon || p_c)k}, e^{-D(p_c - \epsilon || p_c)k}) \leq \delta,
\end{equation} 
which is equivalent to:
\begin{equation}\label{eq:bound-sampling}
	k \geq \max\left( \frac{-\ln \delta}{D(p_c + \epsilon || p_c)}, \frac{-\ln \delta}{D(p_c - \epsilon || p_c)} \right),
\end{equation}
which gives the lower bound of the sampling number $k$ required to recover the ground truth $p_c$  
  with probability $\delta$ at a specific error level $\epsilon$.

\subsection{Space-Efficient Implementation of  \cref{alg:prob-sampling}}\label{sec:space-eff}
When sampling number  $k\ge |\H|$, use array $\texttt{cutFrequency}\in \R^{|\H|}$ to record 
cuts' frequency of occurrence, and the posterior agreement $\sum_c \prob(c|G')\prob(c|G'')$ is estimated as the inner product $\langle\texttt{cutFrequency}(G'), \allowbreak \texttt{cutFrequency}(G'')\rangle$.

When $k<|\H|$, use array $\texttt{cutIndices}\in \R^{k}$ to record indices of 
sampled cuts in each sampling, note that there would be duplicated cuts in $\texttt{cutIndices}$.
Then sort the array $\texttt{cutIndices}$ to make the indices in it be in ascending order. 
Finally, use the way described by  the  pseudo-code in  \cref{alg:implementation}  to calculate estimate of  posterior agreement $\sum_c \prob(c|G')\prob(c|G'')$.

\def\dir{chapters/mean-field-pa}
\chapter{Provable Mean Field Approximation via
	Continuous DR-Submodular Maximization}
\chaptermark{Provable Mean Field Approximation}
\label{chapter_mean_field}

\begin{chapquote}{Bruce Lee}
You must be shapeless, formless, like water. When you pour water in a cup, it becomes the cup. When you pour water in a bottle, it becomes the bottle. When you pour water in a teapot, it becomes the teapot. Water can drip and it can crash. Become like water my friend.
\end{chapquote}

Mean field inference 
in probabilistic  models
is generally a
highly non-convex problem.  
Existing optimization methods, e.g., coordinate ascent algorithms, can
only generate local optima.
In this chapter we discuss provable mean field methods for probabilistic
log-submodular models and its posterior agreement (PA) via continuous DR-submodular maximization.
The main algorithmic technique is the  \drdg algorithm  for {continuous} DR-submodular
maximization with box-constraints.  
We validate the superior performance of our algorithms with
baseline results on real-world datasets.

\section{Why Do We Need Provable Mean Field Methods?}

Consider the following scenario: You want to build a recommender
system for $n$ products to sell. Let $\groundset$ contain all the
products. The system is expected to recommend a subset of products
$S\subseteq \groundset$ to the user.  This recommendation 
should reflect relevance and diversity of the user's choice, such that
it will raise the readiness to buy.  

The two most important components
in building such a system are (1) learning a utility function $F(S)$,
which measures the utility of any subset of products, and (2) inference,
i.e., finding the subset $\opt$ with the highest utility given the
learnt utility function $F(S)$.
The above task can be achieved by using a class of probabilistic
graphical models that devise a distribution on all subsets of
$\groundset$. Such a distribution is known as a point process. 
Specifically, it defines $p(S)\propto \exp(F(S))$, which renders subset
of products $S$ with high utility to be very likely suggested.  In
general, inference in point processes is \#P-hard.  One resorts to
approximate inference methods via either variational techniques
\citep{wainwright2008graphical} or sampling.

Both of the two components in the  recommender system example above can
be achieved via provable mean field methods since (i) the latter  provide
approximate inference given a utility function $F(S)$ and, (ii) by using
proper differentiation techniques, the iterative process of mean field
approximation can be unrolled to serve as a differentiable layer
\citep{zheng2015conditional}, thus enabling backpropagation of the
training error to parameters of $F(S)$.  Thereby, learning $F(S)$ in
an end-to-end fashion can utilize modern deep learning and stochastic
optimization techniques.

The most important property which we require on $F(S)$ is
\emph{submodularity},
which naturally models relevance and diversity. 
\citet{djolonga14variational} have used submodular functions $F(S)$ to
define two classes of point processes: 
$p(S)\propto \exp(F(S))$ is
termed probabilistic log-submodular models, while
$p(S)\propto \exp(-F(S))$ is called  probabilistic log-supermodular models.  
They
are strict generalizations of classical point processes, such as DPPs
\citep{kulesza2012determinantal}.
The variational techniques from
\citet{djolonga14variational,djolonga16mixed} focus on giving
tractable upper bounds of the log-partition functions.  This work
provides provable \emph{lower} bounds through mean field approximation, which
also completes the picture of variational inference for probabilistic
submodular models (PSMs).

\paragraph{Typical Application Domains.}
Recommender systems are just one illustrating example. There are
numerous scenarios that can benefit from the mean field method in this
work. These settings include, but not limited to, existing applications of
submodular models, such as diversity models
\citep{Tschiatschek16diversity,djolonga16mixed}, experimental design
using approximate submodular objectives
\citep{bianicml2017guarantees}, variable selection
\citep{Krause05nearoptimalnonmyopic}, data summarization
\citep{lin2011class},
dictionary learning \citep{krause2010submodular} etc.
Another category of applications is conducting model validation using
information-theoretic criteria. In order to infer the hyperparamters
in the model  $F(S)$, practitioners do validation by
splitting the training data into multiple folds, and then train models
on them.  Posterior Agreement (PA,
\citep{Buhmann10isit,bian2016information}) provides an
information-theoretic criterion for the models trained on these
folds, to measure the fitness of one specific hyperparameter
configuration.  We will show in \cref{sec_pa} that PA can be efficiently
approximated by the techniques developed in this work.

\subsection{A Shortcoming of Classical Mean Field Method}
The most frequently used algorithm for mean field approximation is
the \algname{\nmf} algorithm\footnote{It is known under various names in
the literature, e.g., iterated conditional modes (ICM), naive mean
field algorithm, etc.}. It maximizes the ELBO objective in a
coordinate-wise manner, which is detailed in \cref{alg_ca}.
\algname{\nmf} has been shown to reach stationary points/local
optima. However, local optima may be arbitrarily poor,  and \algname{\nmf} would
get stuck in these poor local optima without extra techniques, which
motivates our pursuit to develop provable methods.

Below we show that there may exist poor local optima for 
problems with the same structure as the ELBO objective, which will be formalized in  \labelcref{opt_problem_meanfield}.

\begin{algorithm}[ht]
	\caption{The \ca algorithm}\label{alg_ca} 
	\KwIn{ $\max_{\x \in [\a, \b]}f(\x)$,  $f(\x)$ is
		{\color{blue}DR}-submodular,  $[\a,\b]\subseteq  \X$, \#iterations $K$
	}
	{Initialize $\x^0 \in  [\a, \b]$, $k \leftarrow 1$\;}
	\While{$k\leq K$}{
		{ let $v_k$  be the coordinate being operated\;}
		{find
			$ u_a$ \text{ such that }
			$f(\sete{x^{k-1}}{\ele_k}{u_a}) \geq \max_{u'}
			f(\sete{x^{k-1}}{\ele_k}{ u'})$\;}

		{$\x^k \leftarrow \sete{x^{k-1}}{\ele_k}{u_a}$\;} 
		
		{$k \leftarrow k + 1$\;}
	}
	\KwOut{$\x^\pare{K}$ }
\end{algorithm}

\paragraph{There Exist Poor Local Optima.}
\label{sec_bad_localoptima}

If one  only assume the objective function $f(\x)$
to be continuous DR-submodular, and considering that 
the multilinear extension of a submodular set function 
is continuous DR-submodular, we can take the 
examples from literature on combinatorial optimization, e.g., 
\citet{feige2011maximizing}, 
to show that bad  local optima exist. 

Here we provide a \textit{stronger} example,
where we  assume that the objective function $f(\x)$
 has the 
same structure as the ELBO objective in  \labelcref{opt_problem_meanfield}. And still there exist 
bad local optima. 
These local optima have  arbitrarily
small objective value  compared to the global optimum. 
And \algname{\nmf}
will get stuck in this local optimum without extra techniques. 

Suppose that we have a directed graph $G = (\groundset, A)$ with 
 four vertices, $\groundset = \{ 1,2,3,4 \} $ and four  arcs, $A = \{ (1,2), 
(2,3), (3,2), (3,4)  \}$. The weights of the arcs are (let $b,c$  be
large positive numbers): $w_{1,2} = c$, $w_{2,3} = c$, $w_{3,4} = c$, $w_{3,2} = bc$.
Let 
$F(S)$ denote the sum of weights of arcs
leaving $S$.  Consider its ELBO (using techniques from \cref{gibbs_multilinear}), 
\begin{align}
f(\x)&  = \multi(\x) + \sum_{i \in\groundset} H(x_i) \\ \notag
& =  \sum_{(i,j)\in A} w_{ij} x_i (1- x_j) +  \sum_{i \in\groundset} H(x_i)\\ \notag
& = c x_1(1-x_2) + c x_2(1-x_3) + c x_3 (1-x_4) + bc x_3 (1-x_2) +  \sum_{i \in\groundset} H(x_i).
\end{align}

Consider  the point $\y = [0.5, 1, 0, 0.5 ]^\trans$,  it has  function value $f(\y) = c + 2\log 2$. Consider a second   point $\bar \x = [ 1,0,1,0 ]^\trans$,  while the global optimum $f(\x^\star)$ must be greater than $f(\bar \x) = (2+b)c$. When $b$ becomes  large, the ratio $\frac{f(\y)}{f(\x^\star)} \leq \frac{c + 2\log 2}{(2+b)c}$ can be arbitrarily small.

{\algname{\nmf}\ may get stuck on the point $\y = [0.5, 1, 0, 0.5 ]^\trans$.}
This can be illustrated by  considering the course  of \algname{\nmf}. 
Suppose wlog. that  \algname{\nmf}\ processes  coordinates in the order of $1\rightarrow 4$ (actually it is
the same with any orders). 

For coordinate 1, 
$\nabla_1 \multi(\x) = c (1-x_2) $, so $\nabla_1 \multi(\y) = 0$, after applying $\sigma(\nabla_1 \multi(\y) )$,
$y_1$ remains to be $0.5$. 

For coordinate 2, $\nabla_2 \multi(\x) = c (1-x_3) - bc x_3 $, so $\nabla_2 \multi(\y) = c$. When $c$ is sufficiently large (approaching infinity), after applying $\sigma(\nabla_2 \multi(\y))$, $y_2$ will still be 1.

For coordinate 3, $\nabla_3 \multi(\x) = - c x_2 + c (1-x_4) +  bc (1-x_2) $, so $\nabla_3 \multi(\y) = -0.5 c$. When $c$ is sufficiently large (approaching infinity), after applying $\sigma(\nabla_3 \multi(\y))$, $y_3$ will still be $0$.

For coordinate 4, 
$\nabla_4 \multi(\x) = -c x_3 $, so $\nabla_4 \multi(\y) = 0$, after applying $\sigma(\nabla_4 \multi(\y) )$,
$y_4$ remains to be $0.5$.

\section{Problem Statement and Related Work}

All of the mean field approximation problems investigated in this chapter
fall into the  problem of continuous DR-submodular maximization:
\begin{align}\label{opt_problem}
\underset{{ \x \in [\a, \;\b]}}{{\text {maximize}}} \;\; f(\x),   
\end{align}
where $f: \X \rightarrow \R$ is continuous DR-submodular.

\paragraph{Background and Related Work. 
}
\label{cont_subopt_bk}
Submodularity is one of the most well studied properties in combinatorial
optimization and many applications for machine learning, with strong
implications for both guaranteed minimization and approximate
maximization in polynomial time \citep{krause2012submodular}.  
Continuous extensions of submodular set functions play an 
important role in submodular optimization, representative 
instances include Lov{\'a}sz extension \citep{lovasz1983submodular}, multilinear extension \citep{calinescu2007maximizing,DBLP:conf/stoc/Vondrak08,chekuri2014submodular,chekuri2015multiplicative}
and the softmax extension for DPPs \citep{gillenwater2012near}.
These
guaranteed optimizations have been advanced to continuous domains
recently, for both minimization
\citep{bach2015submodular,staib2017robust} and maximization
\citep{bian2017guaranteed,biannips2017nonmonotone,Wilder2017RiskSensitiveSO,chen2018online,mokhtari2018decentralized}.
Specifically,  \citet{bach2015submodular} studies continuous
submodular minimization without constraints. He also discusses the
possibility of using the technique for mean field inference of
probabilistic log-supermodular models.
\citet{bian2017guaranteed,biannips2017nonmonotone} characterize
continuous submodularity using the weak  DR property and propose provable
algorithms for maximization.

Most related to this chapter  is the classical problem of {unconstrained
  submodular maximization (USMs)}, which has been studied in binary
\citep{buchbinder2012tight}, integer \citep{soma2017non} and
continuous domains \citep{bian2017guaranteed}.
For the general problem \labelcref{opt_problem}, at first glance one
may consider discretization-based methods: Discretizing the continuous
domain and transform problem \labelcref{opt_problem} to be an integer
optimization problem, then solve it using the reduction
\citep{ene2016reduction} or the integer Double Greedy algorithm
\citep{soma2017non}.  However, discretization-based methods are not
practical for problem \labelcref{opt_problem}: Firstly discretization will
inevitably introduce errors for the original continuous
problem \labelcref{opt_problem}; Secondly, the computational cost is
too high\footnote{e.g., the method from \citet{soma2017non} reaches
	$\frac{1}{2+\epsilon}$-approximation in
	$O( \frac{|\groundset|}{\epsilon}) \log( \frac{\Delta}{\delta})
	\log(B)(\theta + \log (B) )$
	time, $B$: \#grids of discretization, $\Delta$: the maximal
  positive marginal gain, $\delta$: minimum positive marginal gain}.
Thus we turn to continuous methods.
The \shrunkenfw in \citet{biannips2017nonmonotone} provides a
$1/e$ approximation guarantee and sublinear rate of convergence for problem 
\labelcref{opt_problem}, but it is still computationally too
expensive:
In each iteration it has to calculate the full gradient, which costs
$n$ times as much as computing a partial derivative.

Based on the above analysis, the most promising algorithm to consider
would be the \drdg  algorithm presented in  \cref{chapter_doublegreedy}, which
needs to solve  $\bigo{n}$ 1-D subproblems, and achieves a tight  $1/2$
guarantee for continuous \emph{DR-submodular} maximization.

{\palong} (PA) is developed as an information-theoretic criterion for
model selection \citep{gorbach2017model} and algorithmic validation
\citep{informativemst,bian2016information}.  It originates from the
approximation set coding framework proposed by \citet{Buhmann10isit}.
Recently, \citet{buhmannaposterior} prove rigorous asymptotics of PA on
two typical combinatorial problems: Sparse minimum bisection and
Lawler's quadratic assignment problem.
\citet{djolonga14variational,djolonga15scalable} study variational
inference for PSMs, they propose L-Field to give upper bounds for
log-supermodular models through optimizing the
subdifferentials.

\section{Application to Classical Mean Field Inference}

Mean field inference aims to approximate the intractable distribution
$p(S)\propto \exp(F(S))$ by a fully factorized surrogate distribution
$q(S|{\x}):= \prod_{i\in S}x_i \prod_{j\notin S}(1-x_j), \x\in[0,1]^n$.
This target can be achieved by maximizing the \text{(ELBO)} objective, which
provides a lower bound for the log-partition function,
$ \text{(ELBO)}\leq \log\parti = \log \sum_{S\subseteq
\groundset} \exp(F(S))$.  Specifically, the optimization problem is,
\begin{align}\notag 
  \max_{ \x \in [\zero, \one]} f(\x) 
  &  := 
    \overbrace{\E_{q(S\mid \x)}[F(S)]}^{\text{multilinear extension of }  F(S): \multi(\x)}\\\notag
 & - \sum\nolimits_{i=1}^{n} [x_i\log x_i + (1-x_i)\log(1-x_i)]  \\\label{opt_problem_meanfield}
  & \; = \multi(\x)   + \sum\nolimits_{i \in\groundset} H(x_i),
    \quad \text{(\textcolor{link_color}{ELBO})}
\end{align}
where $H(x_i) := -[ x_i \log x_i + (1-x_i)\log (1-x_i ) ] $ is the
binary entropy function and by default $0\log 0 =0$.
$\multi(\x) :=\E_{q(S\mid \x)}[F(S)]$ is the multilinear extension
\citep{DBLP:dblp_conf/ipco/CalinescuCPV07} of $F(S)$.  The above
(ELBO) is continuous DR-submodular w.r.t. $\x$, thus falling into the
general problem class \labelcref{opt_problem}.  

At first glance,
$\multi(\x)$ seems to require an exponential number of operations for
evaluation; we have shown in \cref{sec_app_multilinear} that
$\multi(\x)$ and its gradients can be computed precisely in polynomial
time for many classes of practical objectives.

\subsection{Mean Field Lower Bounds for PSMs}
\label{sec_meanfield_lowerbounds}

Maximizing \text{(ELBO)} to optimality provides  the tightest
lower bound of $\log \parti$ in terms of the KL divergence
$\kl{q}{p}$.  
We put details here.
For a probabilistic log-submodular model
$p(S) = \frac{1}{\parti}\exp(F(S))$,   
$\parti = \sum_{S\subseteq \groundset}\exp(F(S))$
is the  partition function.
Mean field inference aims to approximate $p(S)$ by
a fully factorized product  distribution
$q(S|{\x}):= \prod_{i\in S}x_i \prod_{j\notin S}(1-x_j), \x\in
[0,1]^n$,
by minimizing the distance measured w.r.t. the Kullback-Leibler
divergence between $q$ and $p$, i.e.,
$\kl{q}{p} =  \sum_{S\subseteq \groundset} q(S|{\x})
\log\frac{q(S|{\x})}{p(S)}$. $\kl{q}{p}$ is non-negative,
so 
\begin{align}\notag 
& 0\leq \kl{q}{p} =  \sum_{S\subseteq \groundset} q(S|{\x})
\log\frac{q(S|{\x})}{p(S)}\\
& = -  \E_{q(S|{\x})} [\log p(S)] - \entropy{q(S|{\x})}  \\   
&= 
-\sum_{S\subseteq \groundset}  F(S) \prod_{i\in S}x_i \prod_{j\notin S}(1-x_j)+ \\
& \sum\nolimits_{i=1}^{n} [x_i\log x_i + (1-x_i)\log(1-x_i)] + \log \parti,
\end{align}
where $\entropy{\cdot}$ is the entropy. So one can get 
$\log \parti \geq \sum_{S\subseteq \groundset}  F(S) \prod_{i\in S}x_i \prod_{j\notin S}(1-x_j)- \sum\nolimits_{i=1}^{n} [x_i\log x_i + (1-x_i)\log(1-x_i)]  = (\text{ELBO})$.

Multilinear extension $\multi(\x)$ of 
a submodular set function is continuous DR-submodular \citep{bach2015submodular}, 
and $- \sum\nolimits_{i=1}^{n} [x_i\log x_i + (1-x_i)\log(1-x_i)] $
is seperable and concave along each coordinate, so 
$(\text{ELBO})$ is DR-submodular w.r.t. $\x$.
Maximizing $(\text{ELBO})$ amounts to 
minimizing the Kullback-Leibler divergence.

\section[Application to Mean Field Inference  of PA]{Application to Mean Field Inference  of \palong (PA)}\label{sec_pa}

In addition to the traditional mean field objective (ELBO)
in \labelcref{opt_problem_meanfield}, here we further formulate a
second class of mean field objectives. They come from \palong (PA) for
probabilistic log-submodular models, which is an information-theoretic
criterion to conduct model and algorithmic validation
\citep{Buhmann10isit,buhmannaposterior,bian2016information}.

Let us again consider the recommender example: usually there are some
hyperparameters in the model/utility function $F(S)$ that require
adaptation to the input data.
One natural way to do so is through model validation: Split the
training data into multiple folds, train a model on each fold $\data$
one would infer a ``noisy'' posterior distribution $p(S\mid \data)$.  PA
measures the agreement between these ``noisy'' posterior distributions.

Assume w.l.o.g. that  there are
two folds of data $\data', \data''$  in the sequel. 
In the PA framework, we have two consecutive 
targets:  1) Direct inference based on the two posterior  distributions 
$p(S\mid \data')$ and $p(S\mid \data'')$. 
This task amounts to find the MAP solution of the PA distribution
(which is discussed in the next paragraph),
it can be approximated by standard mean field inference.
2) Use the log PA objective in \labelcref{pa_objective} as a criterion for
model validation/selection. Since in general the PA
objective \labelcref{pa_objective} is intractable, we will still use
mean field lower bounds and some upper bounds in
\citet{djolonga14variational} to provide estimations for it.

\subsection{Mean Field Approximation of the Posterior Agreement Distribution}
\label{sec_pa_distribution}

A probabilistic log-submodular model is a special case of a Gibbs
random field with unit temperature and $-F(S)$ as the energy
function. In PA framework, we explicitly keep $\beta$ as the inverse
temperature,
$p_{\beta}(S | \data) := \frac{\exp(\beta F(S |
  \data))}{\sum_{\tilde{S} \subseteq \groundset}\exp(\beta F(\tilde S
  | \data))}, \forall S \subseteq \groundset$,
where $\data$ is the dataset used to train the model $F(S\mid \data)$.
The \emph{PA distribution} is defined as,
\begin{align} 
\label{eq_pa_distribution}
  p^{\pa}(S)  \propto p_{\beta}(S \mid \data') p_{\beta}(S\mid
  \data'') \propto \exp[ \beta ( F(S| \data') \! +\! F(S| \data'')  )]. 
\end{align}
Note that its log partition function is still intractable. In order to
approximate $p^{\pa}(S)$, we use mean field approximation with a
surrogate distribution
$q(S|{\x}):= \prod_{i\in S}x_i \prod_{j\notin S}(1-x_j)$,
 \begin{align}\notag 
   & \log \parti^{\pa} =  \log  \sum\nolimits_{S \subseteq \groundset}
     \exp[ \beta ( F(S| \data') + F(S| \data'')  )] \\\label{pa_elbo} 
   & \geq \beta \ \mathbb{E}_{q(S|{\x})} [F(S|\data')] +\beta \
     \mathbb{E}_{q(S|{\x})} [F(S|\data'')] \\\notag 
     &   \quad   + \sum\nolimits_{i
     \in\groundset} H(x_i).    \quad\quad\qquad\qquad\qquad
     \text{(\textcolor{link_color}{PA-ELBO})} 
 \end{align}
Maximizing \text{(PA-ELBO)} in \cref{pa_elbo} still falls into
the general problem class of \cref{opt_problem}.
For \text{(PA-ELBO)} in   \labelcref{pa_elbo}, it is the sum of two multilinear extensions (weighted by $\beta >0$) and the binary entropy term, since the non-negative
sum of two DR-submodular functions is still DR-submodular, so 
\text{(PA-ELBO)} in \labelcref{pa_elbo} is also continuous DR-submodular. Thus it fits into the general optimization problem of \labelcref{opt_problem}.
 
Maximizing
\text{(PA-ELBO)} also serves as a building block for the second
target below.

\subsection{Lower Bounds for the  \palong\ Objective}

The \pa\ objective is used to measure the agreement between  two
posterior distributions motivated by an information-theoretic analogy
\citep{buhmannaposterior,bian2016information}. By introducing the same
surrogate distribution $q(S\mid\x)$, one can derive that,
\begin{align}\label{pa_objective}
&  \mathrm{log} \sum\nolimits_{S \subseteq \groundset}
  p_{\beta}(S \mid \data') p_{\beta}(S \mid \data'')   \quad
  \text{(log PA\  objective)} \\  
&  \!\!\geq  \underbrace{\entropy{q} \!+ \!\beta \ {\E}_q
  F(S|\data') \!+\!\beta \ {\E}_q F(S|\data'')}_{\text{(PA-ELBO)
  in \cref{pa_elbo}}}  \\\notag 
&  \quad   -\log \parti(\beta; \data') - \log \parti(\beta; \data'') ,
\end{align}
where $\entropy{q}$ is the entropy of $q$,  $\parti(\beta; \data')$ and $\parti(\beta; \data'')$ are the
partition functions of the two noisy distributions, respectively.
In order to find the best lower bound for \pa, one need to maximize
w.r.t. $q(S|\x)$ the (PA-ELBO) objective, at the same time, find the
upper bounds for
$\log \parti(\beta; \data') + \log \parti(\beta; \data'')$.  The
latter can be achieved using techniques of
\citet{djolonga14variational}.  We summarize the details in
\cref{upperbounds_pa} to make it self-contained.

\section{Multi-Epoch Extensions of DoubleGreedy Algorithms
}
Though \algname{DR-DoubleGreedy} reaches the optimal
$1/2$ approximation  guarantee with  one epoch, in practice
it usually helps to use its output as an initializer,
and continue optimizing coordinate-wisely  for
additional epochs. Since each step of  coordinate update
will never decrease the function value, the
approximation guarantees will hold.
We call this class of algorithms \algname{DoubleGreedy-MeanField},
abbreviated as \algname{DG-MeanField}, and summarize the pseudocode in
\cref{alg_dg_meanfield}.
\begin{algorithm}[htbp]
	\caption{\algname{DG-MeanField-$1/2$} \& \algname{DG-MeanField-$1/3$} \citep{bian2019optimalmeanfield} }\label{alg_dg_meanfield}
	\KwIn{ $\max_{\x \in [\a, \b]}f(\x)$,  e.g., from
	the ELBO \labelcref{opt_problem_meanfield} or  PA-ELBO \labelcref{pa_elbo} objective
	}
	{Option I: \algname{DG-MeanField-$1/3$}: run \algname{Submodular-DoubleGreedy} (detailed in \cref{alg_uscfmax_DoubleGreedy}) to get a ${1}/{3}$ initializer $\hat\x$}
	
	{Option II: \algname{DG-MeanField-$1/2$}: run \algname{DR-DoubleGreedy} (detailed in \cref{alg_cont_doublegreedy}) to get a $1/2$ initializer $\hat\x$  \;}
	{ beginning with $\hat\x$, optimize $f(\x)$ coordinate by coordinate for $T$ epochs \;}
\end{algorithm}

\section{Experiments}
\label{sec_exp}

The objectives under investigation are
ELBO \labelcref{opt_problem_meanfield} and
PA-ELBO \labelcref{pa_elbo} (We set  $\beta = 1$ in PA-ELBO).  We tested on the representative FLID
model on the following algorithms and baselines: The first category is
one-epoch algorithms, including \textcircled{1}
\algname{Submodular-DoubleGreedy} from \citet{bian2017guaranteed} with a
$1/3$ guarantee,  \textcircled{2} \algname{BSCB}  (Algorithm 4 in \cite{niazadeh2018optimal}, 
where we chose $\epsilon=10^{-3}$) with a $1/2$ guarantee 
and \textcircled{3} \algname{DR-DoubleGreedy}
(\cref{alg_cont_doublegreedy}) with a $1/2$ guarantee.
The second category contain multiple-epoch algorithms: \textcircled{4}
 \algname{CoordinateAscent-0}: initialized as $\zero$ and coordinate-wisely improving  the solution;
\algname{CoordinateAscent-1}: initialized as $\one$;
\algname{CoordinateAscent-Random}: initialized as a uniform  vector $U(\zero, \one)$.
\textcircled{5} \algname{\dgmf-${1}/{3}$}.  \textcircled{6}
\algname{\dgmf-${1}/{2}$} from \cref{alg_dg_meanfield}.
\textcircled{7} \algname{BSCB-Multiepoch}, which is the
multi-epoch extension of \algname{BSCB}: After the first
epoch, it continues to improve the solution coordinate-wisely.

{\begin{table}
		\begin{center}
			\footnotesize 
			\caption{Summary of results on ELBO objective \labelcref{opt_problem_meanfield} and PA-ELBO objective \labelcref{pa_elbo}. 
			}
			\label{tab_summ_elbo}
			\begin{tabular}{|c|c||c|c|c||c|c|c|}
				\cline{1-8} 
				&  &  \multicolumn{3}{|c||}{\textbf{ELBO} objective \labelcref{opt_problem_meanfield}} &  \multicolumn{3}{|c|}{\textbf{PA-ELBO} objective \labelcref{pa_elbo}}  \\
				\cline{1-8}
				Category	& $D$ &  \algname{Sub-DG} & \algname{BSCB} & \algname{DR-DG} &  \algname{Sub-DG} & \algname{BSCB} & \algname{DR-DG} \\
				\hline
				\hline
				\multirow{2}{*}{furniture}  & 2  & 2.078$\pm$0.091 & 2.771$\pm$0.123  & \textbf{3.035}$\pm$0.059  & 0.918$\pm$0.768 & 2.287$\pm$0.399  & \textbf{2.402}$\pm$0.159\\
				& 3  & 1.835$\pm$0.156 & 2.842$\pm$0.128  & \textbf{3.026}$\pm$0.099 & 1.296$\pm$1.176 & 2.536$\pm$0.439  & \textbf{2.693}$\pm$0.181 \\
				$n$=32 & 10  & 1.375$\pm$0.194 & \textbf{2.951}$\pm$0.161  & {2.917}$\pm$0.103 & 1.504$\pm$1.110 & 2.764$\pm$0.405  & \textbf{2.882}$\pm$0.248 \\
				\hline
				\multirow{2}{*}{carseats}  & 2  & 2.089$\pm$0.166 & 2.863$\pm$0.090  & \textbf{3.045}$\pm$0.069 & 1.015$\pm$1.081 & 2.106$\pm$0.228  & \textbf{2.348}$\pm$0.219\\
				& 3  & 1.890$\pm$0.146 & 3.003$\pm$0.110  & \textbf{3.138}$\pm$0.082 & 1.309$\pm$1.218 & 2.414$\pm$0.267  & \textbf{2.707}$\pm$0.208 \\
				$n$=34 & 10  & 1.390$\pm$0.232 & \textbf{3.100}$\pm$0.140  & {3.003}$\pm$0.157 & 1.599$\pm$1.317 & 2.684$\pm$0.271  & \textbf{2.915}$\pm$0.250 \\
				\hline
				\multirow{2}{*}{safety}  & 2  & 1.934$\pm$0.402 & 2.727$\pm$0.212  & \textbf{2.896}$\pm$0.098 & 1.370$\pm$1.203 & 2.049$\pm$0.280  & \textbf{2.341}$\pm$0.161\\
				& 3  & 1.867$\pm$0.453 & 2.830$\pm$0.191  & \textbf{2.970}$\pm$0.110 & 1.706$\pm$1.296 & 2.288$\pm$0.297  & \textbf{2.619}$\pm$0.167 \\
				$n$=36 & 10  & 1.546$\pm$0.606 & 2.916$\pm$0.191  & \textbf{2.920}$\pm$0.149 & 1.948$\pm$1.353 & 2.467$\pm$0.270  & \textbf{2.738}$\pm$0.187 \\
				\hline
				\multirow{2}{*}{strollers}  & 2  & 2.042$\pm$0.181 & 2.829$\pm$0.144  & \textbf{2.928}$\pm$0.060 & 0.865$\pm$0.952 & 1.933$\pm$0.256  & \textbf{2.202}$\pm$0.226\\
				& 3  & 1.814$\pm$0.264 & 2.958$\pm$0.146  & \textbf{2.978}$\pm$0.077 & 1.172$\pm$1.063 & 2.181$\pm$0.297  & \textbf{2.543}$\pm$0.254 \\
				$n$=40 & 10  & 1.328$\pm$0.544 & \textbf{3.065}$\pm$0.162  & 2.910$\pm$0.140 & 1.702$\pm$1.334 & 2.480$\pm$0.304  & \textbf{2.767}$\pm$0.336 \\
				\hline
				\multirow{2}{*}{media}  & 2  & 3.221$\pm$0.066 & 3.309$\pm$0.055  & \textbf{3.493}$\pm$0.051 & 0.372$\pm$0.286 &\textbf{ 1.477}$\pm$0.128  & 1.336$\pm$0.101\\
				& 3  & 3.276$\pm$0.082 & 3.492$\pm$0.083  & \textbf{3.712}$\pm$0.079 & 0.418$\pm$0.366 & 1.736$\pm$0.177  & \textbf{1.762}$\pm$0.095 \\
				$n$=58 & 10  & 2.840$\pm$0.183 & 3.894$\pm$0.122  & \textbf{3.924}$\pm$0.114 & 0.653$\pm$0.727 & 2.309$\pm$0.244  & \textbf{2.524}$\pm$0.130 \\
				\hline
				\multirow{2}{*}{health}  & 2  & 3.197$\pm$0.067 & 3.174$\pm$0.074  & \textbf{3.516}$\pm$0.043 & 0.548$\pm$0.282 & \textbf{1.655}$\pm$0.122  & 1.650$\pm$0.073\\
				& 3  & 3.231$\pm$0.055 & 3.306$\pm$0.108  & \textbf{3.707}$\pm$0.064 & 0.649$\pm$0.413 & 1.903$\pm$0.173  & \textbf{2.025}$\pm$0.083 \\
				$n$=62 & 10  & 2.633$\pm$0.115 & 3.508$\pm$0.120  & \textbf{3.675}$\pm$0.110 & 0.768$\pm$0.628 & 2.233$\pm$0.196  & \textbf{2.375}$\pm$0.101 \\
				\hline
				\multirow{2}{*}{toys}  & 2  & 3.543$\pm$0.047 & 3.454$\pm$0.091  & \textbf{3.856}$\pm$0.044 & 0.597$\pm$0.480 & 1.731$\pm$0.182  & \textbf{{1.761}}$\pm$0.133\\
				& 3  & 3.362$\pm$0.055 & 3.412$\pm$0.070  & \textbf{3.736}$\pm$0.051 & 0.578$\pm$0.520 & 1.738$\pm$0.192  & \textbf{1.802}$\pm$0.151 \\
				$n$=62 & 10  & 3.037$\pm$0.138 & 3.706$\pm$0.108  & \textbf{3.859}$\pm$0.119 & 0.758$\pm$0.871 & 2.140$\pm$0.242  & \textbf{2.330}$\pm$0.177 \\
				\hline
				\multirow{2}{*}{diaper}  & 2  & 3.500$\pm$0.058 & 3.517$\pm$0.058  & \textbf{3.636}$\pm$0.043 & 0.295$\pm$0.158 & \textbf{1.119}$\pm$0.063  & 0.665$\pm$0.116\\
				& 3  & 3.739$\pm$0.080 & 3.753$\pm$0.065  & \textbf{3.974}$\pm$0.065 & 0.337$\pm$0.240 & \textbf{1.429}$\pm$0.111  & 1.141$\pm$0.120 \\
				$n$=100 & 10  & 3.423$\pm$0.110 & 4.150$\pm$0.120  & \textbf{4.203}$\pm$0.086 & 0.386$\pm$0.504 & 1.969$\pm$0.201  & \textbf{2.009}$\pm$0.199 \\
				\hline
				\multirow{2}{*}{feeding}  & 2  & 3.942$\pm$0.041 & 3.808$\pm$0.024  & \textbf{3.970}$\pm$0.036 & 0.393$\pm$0.034 & \textbf{0.894}$\pm$0.022  & 0.501$\pm$0.029\\
				& 3  & 4.333$\pm$0.031 & 4.095$\pm$0.032  & \textbf{4.390}$\pm$0.031 & 0.503$\pm$0.072 & \textbf{1.232}$\pm$0.041  & 0.893$\pm$0.046 \\
				$n$=100 & 10  & 4.611$\pm$0.053 & 4.553$\pm$0.079  & \textbf{4.860}$\pm$0.056 & 0.608$\pm$0.239 & 1.808$\pm$0.087  & \textbf{1.820}$\pm$0.078 \\
				\hline
				\multirow{2}{*}{gear}  & 2  & 3.311$\pm$0.046 & 3.150$\pm$0.037  & \textbf{3.430}$\pm$0.040 & 0.232$\pm$0.068 & \textbf{1.019}$\pm$0.048  & 0.590$\pm$0.043\\
				& 3  & 3.538$\pm$0.048 & 3.347$\pm$0.045  & \textbf{3.721}$\pm$0.050 & 0.303$\pm$0.132 & \textbf{1.257}$\pm$0.085  & 1.020$\pm$0.064 \\
				$n$=100 & 10  & 3.065$\pm$0.083 & 3.550$\pm$0.050  & \textbf{3.670}$\pm$0.067 & 0.312$\pm$0.232 & \textbf{1.566}$\pm$0.130  & 1.514$\pm$0.072 \\
				\hline
				\multirow{2}{*}{bedding}  & 2  & 3.406$\pm$0.080 & 3.374$\pm$0.088  & \textbf{3.620}$\pm$0.062 & 0.525$\pm$0.121 & 1.932$\pm$0.194  & \textbf{2.001}$\pm$0.080\\
				& 3  & 3.648$\pm$0.106 & 3.564$\pm$0.083  & \textbf{3.876}$\pm$0.081 & 2.499$\pm$0.972 & 2.250$\pm$0.269  & \textbf{2.624}$\pm$0.066 \\
				$n$=100 & 10  & 3.355$\pm$0.161 & 3.799$\pm$0.144  & \textbf{3.912}$\pm$0.082 & \textbf{3.919}$\pm$0.045 & 2.578$\pm$0.358  & {3.157}$\pm$0.091 \\
				\hline
				\multirow{2}{*}{apparel}  & 2  & 3.560$\pm$0.094 & 3.527$\pm$0.046  & \textbf{3.784}$\pm$0.059 & 0.268$\pm$0.109 & \textbf{1.552}$\pm$0.141  & 1.513$\pm$0.191 \\
				& 3  & 3.878$\pm$0.092 & 3.755$\pm$0.062  & \textbf{4.140}$\pm$0.063 & 0.490$\pm$0.677 & 1.900$\pm$0.237  & \textbf{2.225}$\pm$0.136\\
				$n$=100 & 10  & 3.751$\pm$0.087 & 4.084$\pm$0.075  & \textbf{4.425}$\pm$0.066 & 0.820$\pm$1.372 & 2.351$\pm$0.337  & \textbf{2.967}$\pm$0.150 \\
				\hline
				\multirow{2}{*}{bath}  & 2  & 2.957$\pm$0.087 & 3.024$\pm$0.032  & \textbf{3.198}$\pm$0.056 & 0.197$\pm$0.090 & \textbf{1.101}$\pm$0.083  & 0.795$\pm$0.078\\
				& 3  & 3.062$\pm$0.085  & 3.195$\pm$0.058  & \textbf{3.448}$\pm$0.058 & 0.247$\pm$0.163 & \textbf{1.368}$\pm$0.134  & 1.269$\pm$0.059 \\
				$n$=100 & 10  & 2.497$\pm$0.135 & 3.426$\pm$0.076  & \textbf{3.438}$\pm$0.089 & 0.327$\pm$0.312 & 1.711$\pm$0.183  & \textbf{1.742}$\pm$0.098 \\
				\hline
			\end{tabular}
		\end{center}
	\end{table}
}

For all algorithms, we use the same random order to process
the coordinates within each  epoch.
We are trying to understand:
1)  In terms of continuous DR-submodular maximization,   how good are  the  solutions returned by one-epoch
algorithms?
2) How good are the realized lower bounds?
  For small scale problems 
  we can calculate the true
  log-partitions exhaustively, which servers as
  a natural upper bound of ELBO.
All algorithms and subroutines are implemented in Python3, and source
code are released on github (\url{https://github.com/bianan/optimal-dr-submodular-max}).

\setkeys{Gin}{width=1.1\textwidth, height=.6\textheight}
\begin{figure}[htbp]
	\center 
	\includegraphics[]{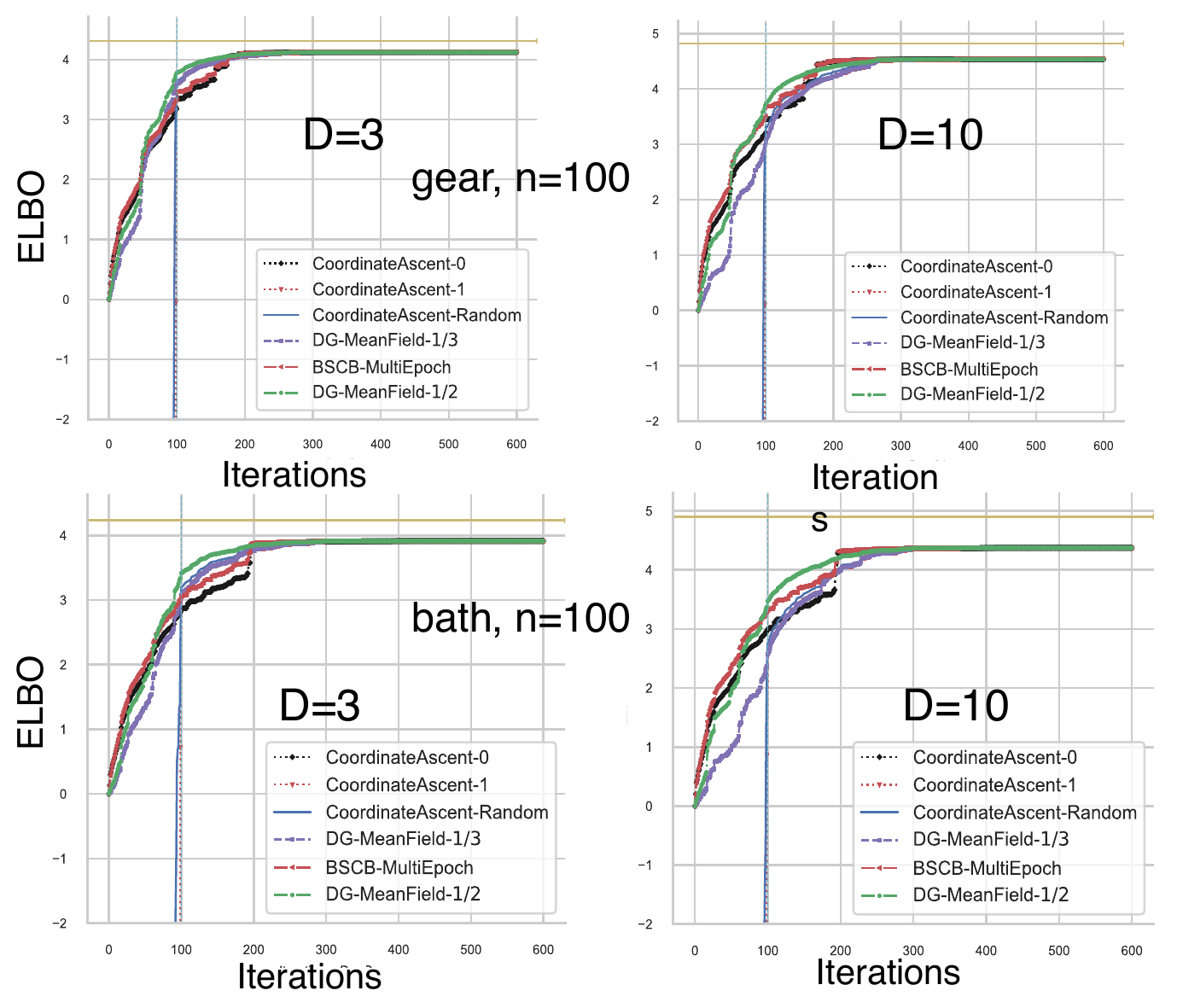}
	\caption{Typical trajectories of multi-epoch algorithms on ELBO objective  for Amazon data. 1st row:  ``gear'';
		2nd row: ``bath''.
		Cyan vertical line shows the one-epoch point. Yellow line shows  the true value of log-partition.}
	\label{fig3}
\end{figure}

\paragraph{Real-world Dataset. }
We  tested the mean field methods on the
trained FLID models from \citet{Tschiatschek16diversity} on Amazon Baby Registries
dataset.
After preprocessing, this dataset
has 13 categories, e.g., ``feeding'' and ``furniture''.
One category contains a certain number of registries
over the ground set of this category, e.g., ``strollers'' has
5,175 registries with $n=40$.
One can refer to \cref{tab_summ_elbo} for specific dimensionalities
on each of the category\footnote{More details on this dataset can be found in  \citet{gillenwater2014expectation}.}.
For each category, three
classes of
models were trained, with latent dimensions $D= 2, 3, 10$, repectively, on
10 folds of the data.

\subsection{Results on One-Epoch Algorithms}

\cref{tab_summ_elbo}  summarizes the outputs
of one-epoch algorithms for both ELBO and PA-ELBO
objectives. 
\algname{Sub-DG} stands for  \algname{Submodular-DoubleGreedy}, \algname{DR-DG} stands for  \algname{DR-DoubleGreedy}. Boldface numbers indicate the best mean of function values returned. For ELBO,  the mean and standard deviation were calculated
for 10 FLID models trained on 10 folds of the data. For PA-ELBO, the mean and standard deviation were calculated  for models trained over $45$ pairs of folds.
For each category, the results
of FLID models with three dimensionalities ($D= 2, 3, 10$)
are reported. 

\paragraph{ELBO Objective.}
The results are summarized in columns 3 to 5 in \cref{tab_summ_elbo}.
 The mean and standard deviation are calculated
for 10 FLID models trained on 10 folds of the data.
One can observe that both \algname{DR-DoubleGreedy} and \algname{BSCB}
improve over the baseline \algname{Submodular-DoubleGreedy}, which has only a 1/3 approximation guarantee.
Furthermore, \algname{DR-DoubleGreedy} generates better solutions
than \algname{BSCB} for almost all of the cases, though they have the same
approximation guarantees in the worst case.

\paragraph{PA-ELBO objective.}

The results are summarized in columns 6 to 8 in \cref{tab_summ_elbo}.
For each category, out of the 10 folds of data,
we have ${10\choose  2} = 45$ pairs of folds.
The mean and standard deviation are computed 
for these $45$ pairs for each category and each
latent dimensonality $D$.
One can still observe that \algname{DR-DoubleGreedy} and \algname{BSCB} significantly 
improve over \algname{Submodular-DoubleGreedy}. 
Moreover, \algname{DR-DoubleGreedy}  produces better
solutions than \algname{BSCB} in most of the experiments.

\subsection{Results on Multi-Epoch Algorithms}

\paragraph{ELBO Objective.}
\label{supp_elbo}

\cref{fig3} records typical trajectories of multi-epoch algorithms for ELBO objectives.
Note that the cyan vertical lines indicate the one-epoch
point.
It shows that after one epoch, \algname{\dgmf-$1/2$}
almost always returns the best solution, and
it  is also the fastest  one to converge.
However, \algname{\nmf} is quite sensitive to initializations.
After sufficiently many iterations, all  multi-epoch
algorithms converge to similar ELBO value.
This is consistent with the intuition since after one epoch,
all algorithms are using the same strategy: conducting
coordinate-wise maximization.
One can also observe that the obtained ELBO is
close to the true log partition functions (yellow lines).

\paragraph{PA-ELBO Objective.}

\setkeys{Gin}{width=1.05\textwidth, height=0.6\textheight}
\begin{figure}[htbp]
	\includegraphics[]{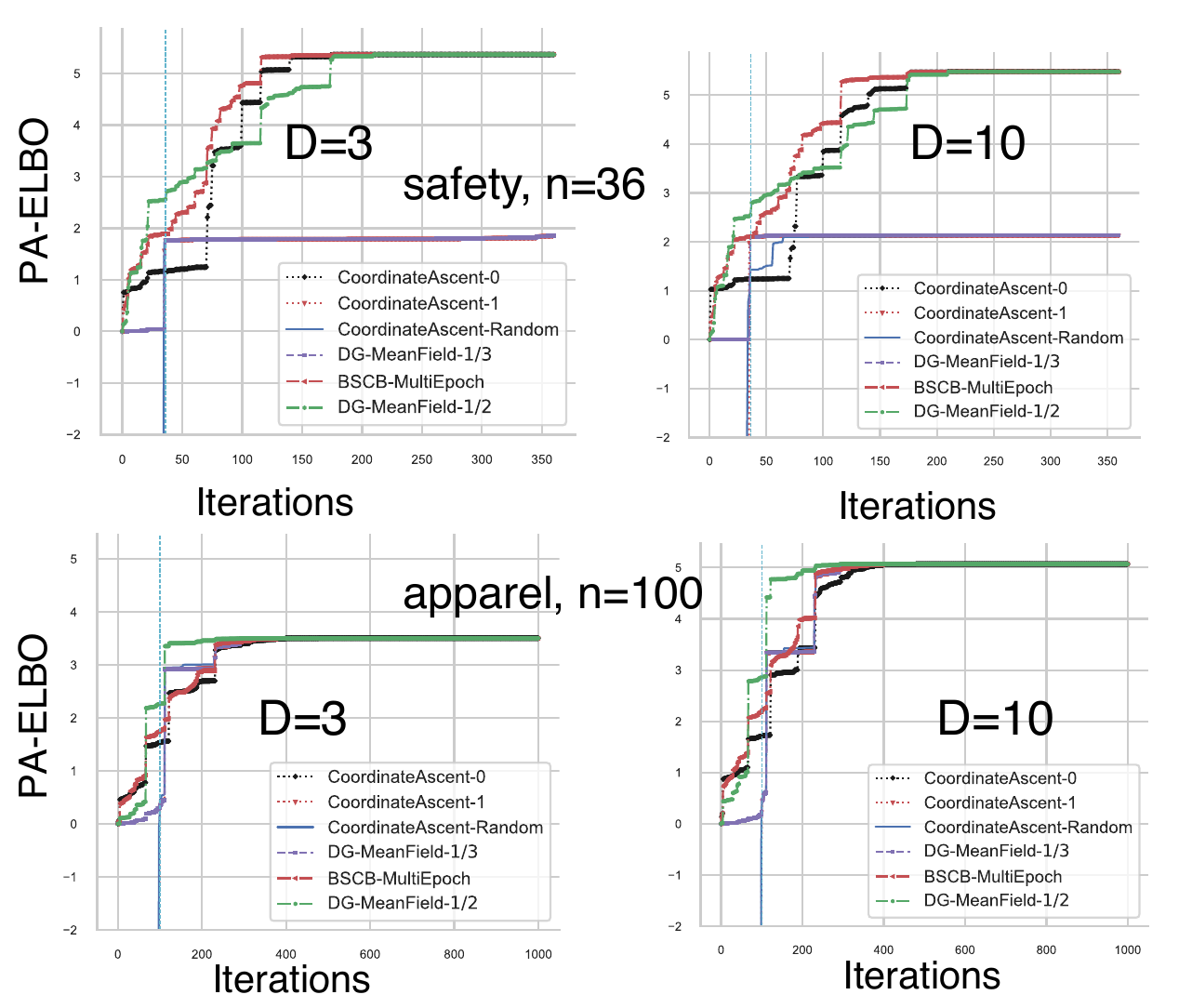}
	\caption{PA-ELBO  on Amazon data.  The figures  trace trajectories of multi-epoch algorithms. Cyan vertical line shows the one-epoch point.
	}
	\label{PA-ELBO_safety}
\end{figure}

\cref{PA-ELBO_safety} shows representative
results on PA-ELBO objectives.
One can see that after one epoch, \algname{\dgmf-$1/2$}
almost always returns the best solution.
In most of the experiments, \algname{\dgmf-$1/2$}  was the
fastest algorithm to converge.
However, \algname{\nmf} is quite sensitive to initializations.
After sufficiently many iterations, most  multi-epoch
algorithms converge to similar PA-ELBO value.
However, for \algname{\nmf} with unlucky initializations, e.g., for category ``safety'' (row 1), it may get stuck in poor
local optima.

\section{Conclusions}
Probabilistic structured models play an eminent
role in machine learning today, especially models with submodular
costs. Validating such models and their parameters remains an open
issue in applications. We have proposed provable mean field algorithms
for probabilistic log-submodular models and their posterior agreement
score. 
This optimization technique promises to open new
avenues for model inference by combining approximation guarantees of
submodular maximization with robustness of probabilistic inference.

\section{Additional Details}

\subsection{Complete Lower Bounds of the PA Objective}
\label{upperbounds_pa}

By giving  upper bounds for 
$\log \parti(\beta; \data') + \log \parti(\beta; \data'')$,
we can get the full lower bounds of the PA objective.

Let us take one $\log \parti(\beta; \data')$ for example. 
This  can be achieved using techniques of \citet{djolonga14variational}, which is done by 
 optimizing  supergradients of  $F(S\mid \data')$.
A representative supergradient
is the bar supergradient, which is defined as: if $i\in A$, $\bar \s^A = F_{\groundset -\{i\}} (\{i \}\mid \data' )$, if $i\notin A$, $\bar \s^A = F (\{i \} \mid \data')$, 
where $F_{B} (A\mid \data')$ is the marginal
gain of $A$ based on $B$. 
 Then, 
\begin{align}
\log \parti(\beta; \data') 
& \leq \min _{A} \log \parti^+(\bar \s^A, F(A\mid \data') - \bar \s^A(A) )  \\
& = \min _{A} F(A\mid \data') + \m (A\mid \data'),
\end{align}
where $\m ({\{i\} \mid \data'}) = \log ( 1 + e^ { -F_{\groundset -\{i\}} (\{i \}\mid \data') }) - \log ( 1 + e^{F(\{i\} \mid \data')})$. 

So the full lower bound of  PA objective in  \labelcref{pa_objective} is, 
\begin{align}\label{}
&  \mathrm{log} \sum\nolimits_{S \subseteq \groundset} p_{\beta}(S \mid \data') p_{\beta}(S \mid \data'') \\ 
&{=} -\left[ \sum\nolimits_{S \subseteq \groundset} 	q(S|\x) \right] \mathrm{log} \frac{\sum_{S\subseteq \groundset} q(S|\x)}{\sum\nolimits_{S \subseteq \groundset} p_\beta (S \mid \data') p_\beta (S \mid \data'')} \nonumber \\\notag 
&\stackrel{\text{log-sum inequality}}{\geq} -\sum\nolimits_{S \subseteq \groundset} q(S|\x) \ \mathrm{log} \frac{q(S|\x)}{p_\beta(S \mid \data') p_\beta(S \mid \data'')} 
 \\\notag 
&
= \entropy{q} + \mathbb{E}_q \mathrm{log} \ p_\beta(S \mid \data') +\mathbb{E}_q \log \ p_\beta(S\mid \data'') \\\notag 
&  = \underbrace{\entropy{q} + \beta \ \mathbb{E}_q F(S|\data') +\beta \ \mathbb{E}_q F(S|\data'')}_{\text{(PA-ELBO) in \labelcref{pa_elbo}}}  \!   -\log \parti(\beta; \data') - \!\log \parti(\beta; \data'').
\end{align}

Since the above holds for all $q$, so we get the lower bound,
\begin{flalign}
&  \mathrm{log} \sum\nolimits_{S \subseteq \groundset} p_{\beta}(S \mid \data') p_{\beta}(S \mid \data'')   \quad \text{(log \pa   objective)} \\
& \geq \max_{q}  \underbrace{\entropy{q} + \beta \ \mathbb{E}_q F(S|\data') +\beta \ \mathbb{E}_q F(S|\data'')}_{\text{(PA-ELBO) in \labelcref{pa_elbo}}}\\\notag 
& \quad  - 
 \min _{A} \left[F(A\mid \data') + \m (A\mid \data')\right] -  \min _{A} \left[F(A\mid \data'') + \m (A\mid \data'')\right].
\end{flalign}

\def\dir{chapters/discussion}

\chapter{Discussions and Future Work}
\label{chapter_disc_future_work}

\begin{chapquote}{Albert Einstein}
	The important thing is not to stop questioning. Curiosity has its own reason for existence.
\end{chapquote}

In this thesis we have studied how submodularity can be 
generalized as a unified structure that ensures provable non-convex optimization and algorithm validation. We
believe that the continuous generalization: continuous submodularity, will play a more and more significant role in the area of non-convex optimization.

Though lots of details have been discussed, we are still curious about the following open problems.

\section{Tighter Guarantees for Continuous DR-Submodular Maximization}

For monotone DR-submodular maximization with 
a down-closed convex constraint, we have studied several algorithms in \cref{chapter_max_monotone}. See \cref{tab_alg_monotone_max} for a summary of these algorithms. 
The algorithms motivated by local-global relation have a 1/2 approximation guarantee, while the optimal algorithm, \submodularfw,  has an approximation ratio of $1-1/e$.

\begin{table}[htbp]
	\begin{center}
		\normalsize
		\caption{Summary of algorithms for monotone DR-submodular maximization}
		\label{tab_alg_monotone_max}
		\begin{tabularx}{\textwidth}{|X|X|X|X|}
			\hline
			{Name}  & Technique     & {Approximation ratio} & Convergence rate  \\
			\hline
			\hline
			\nonconvexfw  & \multirow{2}{*}{local-global} &  $1/2$  &  $1/\sqrt{k}$  \\
			\cline{1-1}  \cline{3-4}
			\pga   &  &  $1/2$  &  $1/{k}$ \\
			\hline
			\submodularfw  & follow concavity & $1-1/e$  & $1/k$ \\
			\hline 			
		\end{tabularx}
	\end{center}
\end{table}

However,  in experiments one can usually observe, for excample, from \cref{fig_traj_influence_50_100,fig_traj_influence_150_200},  that 
\nonconvexfw has the fastest convergence rate and returns the best solution.
Similar phenomenon was also observed in experiments for non-monotone DR-submodular maximization algorithms. 

This observation motivates us to think of more properties of continuous DR-submodular functions that can help with explaining the practical performance of these algorithms. 
In this direction, we have proposed the strong DR-submodularity property (\cref{eq_strongly_dr}). Nevertheless, there should be more properties that shall be explored in the future work.

\section{Explore Submodularity over Arbitrary Conic Lattices}

Motivated by applications such as logistic regression with 
non-convex regularizers, we have  
studied generalized submodularity over 
the orthant conic lattice $(\X, \preceq_{\cone_{\bmalpha}})$ in \cref{sec_lattice}.
However, it
is noteworthy that the framework can be potentially generalized 
to arbitrary conic lattices, which may be of interest 
to model a larger group of applications.

\section{Sampling Methods for Estimating PA in Probabilistic Log-Submodular Models}

In \cref{chapter_mean_field} we provide a lower bound of the PA objective through mean filed approximation. However, it is not clear how large the deviation between the lower bound and the true objective is.  
This shortcoming of mean field approximation  naturally motivates us to consider sampling methods to estimate the PA objective, which amounts  to estimating the following three terms: 
\begin{align}
&  \log  \sum\nolimits_{S \subseteq \groundset}
\exp[ \beta ( F(S| \data') )],\\
&  \log  \sum\nolimits_{S \subseteq \groundset}
\exp[ \beta ( F(S| \data'')  )],\\
&  \log  \sum\nolimits_{S \subseteq \groundset}
\exp[ \beta ( F(S| \data') + F(S| \data'')  )].
\end{align}
All of them are in the form of a log partition function of some Gibbs distribution, which can be estimated using sampling methods such as Monte Carlo sampling.

\section{Negative Dependence for Continuous Random Variables}

Given  that the discrete random variables with negative dependence among each other has been formulated with probabilistic log-submodular models, 
it is natural to study continuous random variables with negative dependence. One would start by formulating  these distributions in a principled way, and then study approximate inference methods such as variational inference or sampling. 
It is noteworthy that 
\citet{karlin1980classesreverse} studied  the MTP$_2$ (stands for ``multivariate totally positive of order 2'') probability distribution, which is defined by a continuous function $f$ that are log-supmodular: $\forall \x, \y$, it holds that  $f(\x)f(\y) \leq f(\x\vee\y)f(\x\wedge \y) $. This definition implies positive dependency up to a logarithm operation.

\section{Incorporate Continuous Submodularity as Domain Knowledge into Deep Neural Net Architecture}

We have shown that continuous submodularity essentially captures the repulsion effect (or negative dependence) amongst different dimensionalities, which could be a valuable domain knowledge for modeling various practical scenarios. Its stronger version, continuous DR-submodularity, models the diminishing returns phenomenon. Continuous DR-submodularity has already been used as the domain knowledge in designing deep submodular set functions \citep{bilmes2017deep}, where the function induced by the submodular neural net is essentially a continuous DR-submodular function if feeding continuous inputs into the neural net. 

The negative dependence effect is prevalent in real-world applications. For instance, in financial areas, there exists the concept of substitutes and complements, which mean negative and positive dependence, respectively. However, a principled way to incorporate continuous submodularity into modern deep neural net architecture is still lacked. One can imagine some modular approaches for adding continuous submodularity into neural net architecture, and the subsequent specialized training algorithms for these continuous submodular models. 

\backmatter
\def\dir{frontbackmatter}
\manualmark
\markboth{\spacedlowsmallcaps{\bibname}}{\spacedlowsmallcaps{\bibname}} 
\refstepcounter{dummy}
\addtocontents{toc}{\protect\vspace{\beforebibskip}} 
\addcontentsline{toc}{chapter}{\tocEntry{\bibname}}
\printbibliography

@article{girosi1995regularization,
  title={Regularization theory and neural networks architectures},
  author={Girosi, Federico and Jones, Michael and Poggio, Tomaso},
  journal={Neural computation},
  volume={7},
  number={2},
  pages={219--269},
  year={1995},
  publisher={MIT Press}
}

@article{ising1925contribution,
  title={Contribution to the theory of ferromagnetism},
  author={Ising, Ernst},
  journal={Z. Phys.},
  volume={31},
  pages={253--258},
  year={1925}
}

@article{ring2012optimization,
  title={Optimization methods on Riemannian manifolds and their application to shape space},
  author={Ring, Wolfgang and Wirth, Benedikt},
  journal={SIAM Journal on Optimization},
  volume={22},
  number={2},
  pages={596--627},
  year={2012},
  publisher={SIAM}
}

@incollection{qi2010riemannian,
  title={Riemannian BFGS algorithm with applications},
  author={Qi, Chunhong and Gallivan, Kyle A and Absil, P-A},
  booktitle={Recent advances in optimization and its applications in engineering},
  pages={183--192},
  year={2010},
  publisher={Springer}
}

@article{jeuris2012survey,
  title={A survey and comparison of contemporary algorithms for computing the matrix geometric mean},
  author={Jeuris, Ben and Vandebril, Raf and Vandereycken, Bart},
  journal={Electronic Transactions on Numerical Analysis},
  volume={39},
  number={ARTICLE},
  pages={379--402},
  year={2012}
}

@inproceedings{zadeh2016geometric,
  title={Geometric mean metric learning},
  author={Zadeh, Pourya and Hosseini, Reshad and Sra, Suvrit},
  booktitle={International conference on machine learning},
  pages={2464--2471},
  year={2016}
}

@inproceedings{hosseini2015matrix,
  title={Matrix manifold optimization for Gaussian mixtures},
  author={Hosseini, Reshad and Sra, Suvrit},
  booktitle={Advances in Neural Information Processing Systems},
  pages={910--918},
  year={2015}
}

@article{sra2015matrix,
  title={On the matrix square root via geometric optimization},
  author={Sra, Suvrit},
  journal={arXiv preprint arXiv:1507.08366},
  year={2015}
}

@incollection{sra2016geometric,
  title={Geometric optimization in machine learning},
  author={Sra, Suvrit and Hosseini, Reshad},
  booktitle={Algorithmic Advances in Riemannian Geometry and Applications},
  pages={73--91},
  year={2016},
  publisher={Springer}
}

@book{fuchssteiner2011convex,
  title={Convex cones},
  author={Fuchssteiner, Benno and Lusky, Wolfgang},
  volume={56},
  year={2011},
  publisher={Elsevier}
}

@book{garg2015introduction,
  title={Introduction to lattice theory with computer science applications},
  author={Garg, Vijay K},
  year={2015},
  publisher={John Wiley \& Sons}
}

@book{kroese2013handbook,
  title={Handbook of Monte Carlo Methods},
  author={Kroese, Dirk P and Taimre, Thomas and Botev, Zdravko I},
  volume={706},
  year={2013},
  publisher={John Wiley \& Sons}
}

@article{vondrak2013symmetry,
  title={Symmetry and approximability of submodular maximization problems},
  author={Vondr{\'a}k, Jan},
  journal={SIAM Journal on Computing},
  volume={42},
  number={1},
  pages={265--304},
  year={2013},
  publisher={SIAM}
}

@article{vishnoi2018geodesic,
  title={Geodesic Convex Optimization: Differentiation on Manifolds, Geodesics, and Convexity},
  author={Vishnoi, Nisheeth K},
  journal={arXiv preprint arXiv:1806.06373},
  year={2018}
}

@inproceedings{kempe2003maximizing,
  title={Maximizing the spread of influence through a social network},
  author={Kempe, David and Kleinberg, Jon and Tardos, {\'E}va},
  booktitle={Proceedings of the ninth ACM SIGKDD international conference on Knowledge discovery and data mining},
  pages={137--146},
  year={2003},
  organization={ACM}
}

@article{eagle2006reality,
  title={Reality mining: sensing complex social systems},
  author={Eagle, Nathan and Pentland, Alex Sandy},
  journal={Personal and ubiquitous computing},
  volume={10},
  number={4},
  pages={255--268},
  year={2006},
  publisher={Springer}
}

@article{mokhtari2018stochastic,
  title={Stochastic conditional gradient methods: From convex minimization to submodular maximization},
  author={Mokhtari, Aryan and Hassani, Hamed and Karbasi, Amin},
  journal={arXiv preprint arXiv:1804.09554},
  year={2018}
}

@inproceedings{kunegis2013konect,
  title={Konect: the koblenz network collection},
  author={Kunegis, J{\'e}r{\^o}me},
  booktitle={Proceedings of the 22nd International Conference on World Wide Web},
  pages={1343--1350},
  year={2013},
  organization={ACM}
}

@article{DBLP:journals/mor/Wolsey82,
	author    = {Laurence A. Wolsey},
	title     = {Maximising Real-Valued Submodular Functions: Primal and Dual Heuristics
	for Location Problems},
	journal   = {Math. Oper. Res.},
	volume    = {7},
	number    = {3},
	pages     = {410--425},
	year      = {1982}
}

@inproceedings{singh2012bisubmodular,
  title={On bisubmodular maximization},
  author={Singh, Ajit P and Guillory, Andrew and Bilmes, Jeff},
  booktitle={International Conference on Artificial Intelligence and Statistics},
  pages={1055--1063},
  year={2012}
}

@incollection{kolmogorov2011submodularity,
  title={Submodularity on a Tree: Unifying  $L^{\natural}$-Convex and Bisubmodular Functions},
  author={Kolmogorov, Vladimir},
  booktitle={Mathematical Foundations of Computer Science},
  pages={400--411},
  year={2011},
  publisher={Springer}
}

@article{golovin2011adaptive,
  title={Adaptive submodularity: Theory and applications in active learning and stochastic optimization},
  author={Golovin, Daniel and Krause, Andreas},
  journal={Journal of Artificial Intelligence Research},
  pages={427--486},
  year={2011}
}

@inproceedings{lee2009non,
	title={Non-monotone submodular maximization under matroid and knapsack constraints},
	author={Lee, Jon and Mirrokni, Vahab S and Nagarajan, Viswanath and Sviridenko, Maxim},
	booktitle={Theory of computing},
	pages={323--332},
	year={2009},
	organization={ACM}
}

@inproceedings{gharan2011submodular,
  title={Submodular maximization by simulated annealing},
  author={Gharan, Shayan Oveis and Vondr{\'a}k, Jan},
  booktitle={Proceedings of the twenty-second annual ACM-SIAM symposium on Discrete Algorithms},
  pages={1098--1116},
  year={2011},
  organization={Society for Industrial and Applied Mathematics}
}

@phdthesis{gronskiy2018statistical,
  title={Statistical Mechanics and Information Theory in Approximate Robust Inference},
  author={Gronskiy, Alexey},
  year={2018},
  school={ETH Zurich}
}

@inproceedings{chehreghani2012information,
  title={Information theoretic model validation for spectral clustering},
  author={Chehreghani, Morteza Haghir and Busetto, Alberto Giovanni and Buhmann, Joachim M},
  booktitle={Artificial Intelligence and Statistics},
  pages={495--503},
  year={2012}
}

@inproceedings{frank2011selecting,
  title={Selecting the rank of truncated SVD by Maximum Approximation Capacity},
  author={Frank, Mario and Buhmann, Joachim M},
  booktitle={2011 IEEE International Symposium on Information Theory Proceedings},
  pages={1036--1040},
  year={2011},
  organization={IEEE}
}

@book{cover2012elements,
  title={Elements of information theory},
  author={Cover, Thomas M and Thomas, Joy A},
  year={2012},
  publisher={John Wiley \& Sons}
}

@article{ghadimi2016mini,
  title={Mini-batch stochastic approximation methods for nonconvex stochastic composite optimization},
  author={Ghadimi, Saeed and Lan, Guanghui and Zhang, Hongchao},
  journal={Mathematical Programming},
  volume={155},
  number={1-2},
  pages={267--305},
  year={2016},
  publisher={Springer}
}

@inproceedings{allen2016variance,
  title={Variance Reduction for Faster Non-Convex Optimization},
  author={Allen-Zhu, Zeyuan and Hazan, Elad},
  booktitle={International Conference on Machine Learning (ICML)},
  pages={699--707},
  year={2016}
}

@inproceedings{li2015accelerated,
  title={Accelerated Proximal Gradient Methods for Nonconvex Programming},
  author={Li, Huan and Lin, Zhouchen},
  booktitle={NIPS},
  pages={379--387},
  year={2015}
}

@inproceedings{sra2012scalable,
  title={Scalable nonconvex inexact proximal splitting},
  author={Sra, Suvrit},
  booktitle = {Advances in Neural Information Processing Systems (NIPS)},
  pages={530--538},
  year={2012}
}

@inproceedings{lin2010multi,
  title={Multi-document summarization via budgeted maximization of submodular functions},
  author={Lin, Hui and Bilmes, Jeff},
  booktitle={Annual Conference of the North American Chapter of the Association for Computational Linguistics},
  pages={912--920},
  year={2010}
}

@inproceedings{leskovec2007cost,
  title={Cost-effective outbreak detection in networks},
  author={Leskovec, Jure and Krause, Andreas and Guestrin, Carlos and Faloutsos, Christos and VanBriesen, Jeanne and Glance, Natalie},
  booktitle={ACM SIGKDD international conference on Knowledge discovery and data mining},
  pages={420--429},
  year={2007}
}

@article{motzkin1965maxima,
  title={Maxima for graphs and a new proof of a theorem of Tur{\'a}n},
  author={Motzkin, Theodore S and Straus, Ernst G},
  journal={Canad. J. Math},
  volume={17},
  number={4},
  pages={533--540},
  year={1965}
}

@article{DBLP:journals/jacm/Skutella01,
	author    = {Martin Skutella},
	title     = {Convex quadratic and semidefinite programming relaxations in scheduling},
	journal   = {J. {ACM}},
	year      = {2001},
	timestamp = {Thu, 20 Nov 2003 12:28:22 +0100},
	biburl    = {http://dblp.uni-trier.de/rec/bib/journals/jacm/Skutella01},
	bibsource = {dblp computer science bibliography, http://dblp.org}
}

@article{durr2019non,
  title={Non-monotone DR-submodular Maximization: Approximation and Regret Guarantees},
  author={D{\"u}rr, Christoph and Thang, Nguyen Kim and Srivastav, Abhinav and Tible, L{\'e}o},
  journal={arXiv preprint arXiv:1905.09595},
  year={2019}
}

@article{soma2018maximizing,
  title={Maximizing monotone submodular functions over the integer lattice},
  author={Soma, Tasuku and Yoshida, Yuichi},
  journal={Mathematical Programming},
  volume={172},
  number={1-2},
  pages={539--563},
  year={2018},
  publisher={Springer}
}

@article{ke2007quasiconvex,
  title={Quasiconvex optimization for robust geometric reconstruction},
  author={Ke, Qifa and Kanade, Takeo},
  journal={IEEE Transactions on Pattern Analysis and Machine Intelligence},
  volume={29},
  number={10},
  pages={1834--1847},
  year={2007},
  publisher={IEEE}
}

@book{wolfstetter1999topics,
  title={Topics in microeconomics: Industrial organization, auctions, and incentives},
  author={Wolfstetter, Elmar},
  year={1999},
  publisher={cambridge university Press}
}

@inproceedings{hazan2015beyond,
  title={Beyond convexity: Stochastic quasi-convex optimization},
  author={Hazan, Elad and Levy, Kfir and Shalev-Shwartz, Shai},
  booktitle={Advances in Neural Information Processing Systems},
  pages={1594--1602},
  year={2015}
}

@inproceedings{reddi2016stochastic,
  title={Stochastic frank-wolfe methods for nonconvex optimization},
  author={Reddi, Sashank J and Sra, Suvrit and P{\'o}czos, Barnab{\'a}s and Smola, Alex},
  booktitle={2016 54th Annual Allerton Conference on Communication, Control, and Computing (Allerton)},
  pages={1244--1251},
  year={2016},
  organization={IEEE}
}

@inproceedings{dobzinski2012query,
	title={From query complexity to computational complexity},
	author={Dobzinski, Shahar and Vondr{\'a}k, Jan},
	booktitle={Proceedings of the forty-fourth annual ACM symposium on Theory of computing},
	pages={1107--1116},
	year={2012},
	organization={ACM}
}

@inproceedings{DBLP:conf/stoc/Vondrak08,
  author    = {Jan Vondr{\'{a}}k},
  title     = {Optimal approximation for the submodular welfare problem in the value
               oracle model},
  booktitle = {Proceedings of the 40th Annual {ACM} Symposium on Theory of Computing},
  pages     = {67--74},
  year      = {2008},
  timestamp = {Sat, 19 Jun 2010 14:44:24 +0200},
  biburl    = {http://dblp.uni-trier.de/rec/bib/conf/stoc/Vondrak08},
  bibsource = {dblp computer science bibliography, http://dblp.org}
}

@InProceedings{DBLP:dblp_journals/jmlr/ChehreghaniBB12,
  Title                    = {Information Theoretic Model Validation for Spectral Clustering.},
  Author                   = {Morteza Haghir Chehreghani and Alberto Giovanni Busetto and Joachim M. Buhmann},
  Booktitle                = {AISTATS},
  Year                     = {2012},
  Pages                    = {495-503},
  Ee                       = {http://jmlr.csail.mit.edu/proceedings/papers/v22/haghir12.html}
}

@article{bian2013parallel,
doi = {10.1109/tnnls.2018.2889976},
	url = {https://doi.org/10.1109%2Ftnnls.2018.2889976},
	year = 2019,
	month = {nov},
	publisher = {Institute of Electrical and Electronics Engineers ({IEEE})},
	pages = {3233--3245},
	author = {Yatao An Bian and Xiong Li and Yuncai Liu and Ming-Hsuan Yang},
	title = {Parallel Coordinate Descent Newton Method for Efficient L1-Regularized Loss Minimization},
	journal = {{IEEE} Transactions on Neural Networks and Learning Systems}
}

@inproceedings{biannips2017nonmonotone,
  title={Continuous DR-submodular Maximization: Structure and Algorithms},
  author={Bian, An and Levy, Kfir Y. and Krause, Andreas and Buhmann, Joachim M.},
  booktitle={Advances in Neural Information Processing Systems (NIPS)},
  pages={486--496},
  year={2017}
}

@inproceedings{bian2016information,
  title={Information-theoretic analysis of MaxCut algorithms},
  author={Bian, Yatao and Gronskiy, Alexey and Buhmann, Joachim M},
  booktitle={IEEE Information Theory and Applications Workshop (ITA)},
  pages={1--5},
  year={2016}
}

@inproceedings{ITW15_BianGB,
  author    = {Yatao Bian and Alexey Gronskiy and Joachim M. Buhmann},
  title     = {Greedy MaxCut algorithms and their information content},
  booktitle = {IEEE Information Theory Workshop (ITW)},
  pages     = {1--5},
  year      = {2015}
}

@article{sra2015conic,
  title={Conic geometric optimization on the manifold of positive definite matrices},
  author={Sra, Suvrit and Hosseini, Reshad},
  journal={SIAM Journal on Optimization},
  volume={25},
  number={1},
  pages={713--739},
  year={2015},
  publisher={SIAM}
}

@book{boyd2004convex,
  title={Convex optimization},
  author={Boyd, Stephen and Vandenberghe, Lieven},
  year={2004},
  publisher={Cambridge university press}
}

@incollection{niazadeh2018optimal,
title = {Optimal Algorithms for Continuous Non-monotone Submodular and DR-Submodular Maximization},
author = {Niazadeh, Rad and Roughgarden, Tim and Wang, Joshua},
booktitle = {Advances in Neural Information Processing Systems (NeurIPS)},
pages = {9617--9627},
year = {2018},
}

@inproceedings{Wilder2017RiskSensitiveSO,
  title={Risk-Sensitive Submodular Optimization},
  author={Bryan Wilder},
  booktitle={AAAI},
  year={2017}
}

@incollection{gottschalk2015submodular,
  title={Submodular Function Maximization on the Bounded Integer Lattice},
  author={Gottschalk, Corinna and Peis, Britta},
  booktitle={Approximation and Online Algorithms},
  pages={133--144},
  year={2015},
  publisher={Springer}
}

@article{mokhtari2018decentralized,
  title={Decentralized Submodular Maximization: Bridging Discrete and Continuous Settings},
  author={Mokhtari, Aryan and Hassani, Hamed and Karbasi, Amin},
  journal={arXiv preprint arXiv:1802.03825},
  year={2018}
}

@article{buhmannaposterior,
  title={Posterior Agreement for Large Parameter-Rich Optimization Problems},
  author={Buhmann, Joachim M and Dumazert, Julien and Gronskiy, Alexey and Szpankowski, Wojciech},
  journal={TCS},
  year={2018}
}

@inproceedings{chen2018online,
  title={Online Continuous Submodular Maximization},
  author={Chen, Lin and Hassani, Hamed and Karbasi, Amin},
  booktitle={International Conference on Artificial Intelligence and Statistics (AISTATS)},
  pages={1896--1905},
  year={2018}
}

@inproceedings{lin2011class,
  title={A class of submodular functions for document summarization},
  author={Lin, Hui and Bilmes, Jeff},
  booktitle={Proceedings of the 49th Annual Meeting of the Association for Computational Linguistics: Human Language Technologies-Volume 1},
  pages={510--520},
  year={2011},
  organization={Association for Computational Linguistics}
}

@article{krause2012submodular,
	title={Submodular function maximization},
	author={Krause, Andreas and Golovin, Daniel},
	journal={Tractability: Practical Approaches to Hard Problems},
	volume={3},
	pages={19},
	year={2012}
}

@article{kulesza2012determinantal,
  title={Determinantal point processes for machine learning},
  author={Kulesza, Alex and Taskar, Ben and others},
  journal={Foundations and Trends{\textregistered} in Machine Learning},
  volume={5},
  number={2--3},
  pages={123--286},
  year={2012},
  publisher={Now Publishers, Inc.}
}

@inproceedings{djolonga16mixed,
    title={Variational inference in mixed probabilistic submodular models},
    author={Djolonga, Josip and Tschiatschek, Sebastian and Krause, Andreas},
    booktitle={Advances in Neural Information Processing Systems (NIPS)},
    pages={1759--1767},
    year={2016}
  }

@inproceedings{zheng2015conditional,
  title={Conditional random fields as recurrent neural networks},
  author={Zheng, Shuai and Jayasumana, Sadeep and Romera-Paredes, Bernardino and Vineet, Vibhav and Su, Zhizhong and Du, Dalong and Huang, Chang and Torr, Philip HS},
  booktitle={Proceedings of the IEEE International Conference on Computer Vision},
  pages={1529--1537},
  year={2015}
}

@article{feige2011maximizing,
  title={Maximizing non-monotone submodular functions},
  author={Feige, Uriel and Mirrokni, Vahab S and Vondrak, Jan},
  journal={SIAM Journal on Computing},
  volume={40},
  number={4},
  pages={1133--1153},
  year={2011},
  publisher={SIAM}
}

@inproceedings{gillenwater2014expectation,
  title={Expectation-maximization for learning determinantal point processes},
  author={Gillenwater, Jennifer A and Kulesza, Alex and Fox, Emily and Taskar, Ben},
  booktitle={Advances in Neural Information Processing Systems (NIPS)},
  pages={3149--3157},
  year={2014}
}

@article{hoeffding1963probability,
  title={Probability inequalities for sums of bounded random variables},
  author={Hoeffding, Wassily},
  journal={Journal of the American statistical association},
  volume={58},
  number={301},
  pages={13--30},
  year={1963},
  publisher={Taylor \& Francis Group}
}

@inproceedings{Tschiatschek16diversity,
    Author = {Sebastian Tschiatschek and Josip Djolonga and Andreas Krause},
    Booktitle = {Proc. International Conference on Artificial Intelligence and Statistics (AISTATS)},
    Title = {Learning Probabilistic Submodular Diversity Models Via Noise Contrastive Estimation},
    Year = {2016}
}

@inproceedings{lin2011optimal,
  title={Optimal selection of limited vocabulary speech corpora},
  author={Lin, Hui and Bilmes, Jeff},
  booktitle={Twelfth Annual Conference of the International Speech Communication Association},
  year={2011}
}

@book{nesterov2013introductory,
  title={Introductory lectures on convex optimization: A basic course},
  author={Nesterov, Yurii},
  volume={87},
  year={2013},
  publisher={Springer Science \& Business Media}
}

@inproceedings{soma2017non,
  title={Non-Monotone DR-Submodular Function Maximization.},
  author={Soma, Tasuku and Yoshida, Yuichi},
  booktitle={AAAI},
  volume={17},
  pages={898--904},
  year={2017}
}

@article{wainwright2008graphical,
  title={Graphical models, exponential families, and variational inference},
  author={Wainwright, Martin J and Jordan, Michael I and others},
  journal={Foundations and Trends{\textregistered} in Machine Learning},
  volume={1},
  number={1--2},
  pages={1--305},
  year={2008},
  publisher={Now Publishers, Inc.}
}

@article{mokhtari2017conditional,
  title={Conditional Gradient Method for Stochastic Submodular Maximization: Closing the Gap},
  author={Mokhtari, Aryan and Hassani, Hamed and Karbasi, Amin},
  journal={arXiv preprint arXiv:1711.01660},
  year={2017}
}

@inproceedings{bianicml2017guarantees,
  title={Guarantees for Greedy Maximization of Non-submodular Functions with Applications},
  author={Bian, Andrew An and Buhmann, Joachim M. and Krause, Andreas and Tschiatschek, Sebastian},
  booktitle={International Conference on Machine Learning (ICML)},
  pages={498--507},
  year={2017}
}

@inproceedings{staib2017robust,
 author = {Staib, Matthew and Jegelka, Stefanie},
 title = {Robust Budget Allocation via Continuous Submodular Functions},
 booktitle = {Proceedings of the 34th International Conference on Machine Learning (ICML)},
 year = {2017}
}

@inproceedings{sigg2008expectation,
  title={Expectation-maximization for sparse and non-negative PCA},
  author={Sigg, Christian D and Buhmann, Joachim M},
  booktitle={Proceedings of the 25th international conference on Machine learning},
  pages={960--967},
  year={2008},
  organization={ACM}
}

@article{zass2007nonnegative,
  title={Nonnegative sparse PCA},
  author={Zass, Ron and Shashua, Amnon},
  journal={Advances in Neural Information Processing Systems (NIPS)},
  pages = {1561--1568},
  year={2007},
  publisher={MIT; 1998}
}

@article{montanari2016non,
  title={Non-negative principal component analysis: Message passing algorithms and sharp asymptotics},
  author={Montanari, Andrea and Richard, Emile},
  journal={IEEE Transactions on Information Theory},
  volume={62},
  number={3},
  pages={1458--1484},
  year={2016},
  publisher={IEEE}
}

@book{fujishige2005submodular,
  title={Submodular functions and optimization},
  author={Fujishige, Satoru},
  volume={58},
  year={2005},
  publisher={Elsevier}
}

@article{topkis1978minimizing,
  title={Minimizing a submodular function on a lattice},
  author={Topkis, Donald M},
  journal={Operations research},
  volume={26},
  number={2},
  pages={305--321},
  year={1978},
  publisher={INFORMS}
}

@article{bach2015submodular,
  title={Submodular functions: from discrete to continuous domains},
  author={Bach, Francis},
  journal={arXiv preprint arXiv:1511.00394},
  year={2015}
}

@article{lacoste2016convergence,
  title={Convergence Rate of Frank-Wolfe for Non-Convex Objectives},
  author={Lacoste-Julien, Simon},
  journal={arXiv preprint arXiv:1607.00345},
  year={2016}
}

@article{bilmes2017deep,
  title={Deep Submodular Functions},
  author={Bilmes, Jeffrey and Bai, Wenruo},
  journal={arXiv preprint arXiv:1701.08939},
  year={2017}
}

@inproceedings{djolonga14variational,
    author = {Josip Djolonga and Andreas Krause},
    booktitle = {Neural Information Processing Systems (NIPS)},
    title = {From MAP to Marginals: Variational Inference in Bayesian Submodular Models},
    year = {2014}
}

@inproceedings{djolonga15scalable,
    author = {Josip Djolonga and Andreas Krause},
    booktitle = {International Conference on Machine Learning (ICML)},
    title = {Scalable Variational Inference in Log-supermodular Models},
    year = {2015}
}

@inproceedings{ito2016large,
  title={Large-Scale Price Optimization via Network Flow},
  author={Ito, Shinji and Fujimaki, Ryohei},
  booktitle={Advances in Neural Information Processing Systems},
  pages={3855--3863},
  year={2016}
}

@article{antoniadis2011penalized,
  title={Penalized likelihood regression for generalized linear models with non-quadratic penalties},
  author={Antoniadis, Anestis and Gijbels, Ir{\`e}ne and Nikolova, Mila},
  journal={Annals of the Institute of Statistical Mathematics},
  volume={63},
  number={3},
  pages={585--615},
  year={2011},
  publisher={Springer}
}

@article{reddi2016fast,
  title={Fast Stochastic Methods for Nonsmooth Nonconvex Optimization},
  author={Reddi, Sashank J and Sra, Suvrit and Poczos, Barnabas and Smola, Alex},
  journal={arXiv preprint arXiv:1605.06900},
  year={2016}
}

@article{chekuri2014submodular,
  title={Submodular function maximization via the multilinear relaxation and contention resolution schemes},
  author={Chekuri, Chandra and Vondr{\'a}k, Jan and Zenklusen, Rico},
  journal={SIAM Journal on Computing},
  volume={43},
  number={6},
  pages={1831--1879},
  year={2014},
  publisher={SIAM}
}

@article{anandkumar2014tensor,
  title={Tensor decompositions for learning latent variable models.},
  author={Anandkumar, Animashree and Ge, Rong and Hsu, Daniel and Kakade, Sham M and Telgarsky, Matus},
  journal={Journal of Machine Learning Research},
  volume={15},
  number={1},
  pages={2773--2832},
  year={2014}
}

@inproceedings{ward2014maximizing,
  author    = {Justin Ward and
               Stanislav Zivny},
  title     = {Maximizing Bisubmodular and \emph{k}-Submodular Functions},
  booktitle = {SODA 2014},
  pages     = {1468--1481},
  year      = {2014}
}

@inproceedings{calinescu2007maximizing,
  title={Maximizing a submodular set function subject to a matroid constraint},
  author={Calinescu, Gruia and Chekuri, Chandra and P{\'a}l, Martin and Vondr{\'a}k, Jan},
  booktitle={International Conference on Integer Programming and Combinatorial Optimization},
  pages={182--196},
  year={2007},
  organization={Springer}
}

@article{kim2003exact,
  title={Exact solutions of some nonconvex quadratic optimization problems via SDP and SOCP relaxations},
  author={Kim, Sunyoung and Kojima, Masakazu},
  journal={Computational Optimization and Applications},
  volume={26},
  number={2},
  pages={143--154},
  year={2003},
  publisher={Springer}
}

@inproceedings{chekuri2015multiplicative,
  title={On multiplicative weight updates for concave and submodular function maximization},
  author={Chekuri, Chandra and Jayram, TS and Vondr{\'a}k, Jan},
  booktitle={Proceedings of the 2015 Conference on Innovations in Theoretical Computer Science},
  pages={201--210},
  year={2015},
  organization={ACM}
}

@article{sviridenko2004note,
  title={A note on maximizing a submodular set function subject to a knapsack constraint},
  author={Sviridenko, Maxim},
  journal={Operations Research Letters},
  volume={32},
  number={1},
  pages={41--43},
  year={2004},
  publisher={Elsevier}
}

@incollection{lovasz1983submodular,
  title={Submodular functions and convexity},
  author={Lov{\'a}sz, L{\'a}szl{\'o}},
  booktitle={Mathematical Programming The State of the Art},
  pages={235--257},
  year={1983},
  publisher={Springer}
}

@inproceedings{gillenwater2012near,
  title={Near-optimal MAP inference for determinantal point processes},
  author={Gillenwater, Jennifer and Kulesza, Alex and Taskar, Ben},
  booktitle={Advances in Neural Information Processing Systems},
  pages={2735--2743},
  year={2012}
}

@inproceedings{allen2015nearly,
  title={Nearly-linear time positive LP solver with faster convergence rate},
  author={Allen-Zhu, Zeyuan and Orecchia, Lorenzo},
  booktitle={Proceedings of the Forty-Seventh Annual ACM on Symposium on Theory of Computing},
  pages={229--236},
  year={2015},
  organization={ACM}
}

@article{ene2016reduction,
  title={A Reduction for Optimizing Lattice Submodular Functions with Diminishing Returns},
  author={Ene, Alina and Nguyen, Huy L},
  journal={arXiv preprint arXiv:1606.08362},
  year={2016}
}

@article{das2011submodular,
  title={Submodular meets spectral: Greedy algorithms for subset selection, sparse approximation and dictionary selection},
  author={Das, Abhimanyu and Kempe, David},
  journal={arXiv preprint arXiv:1102.3975},
  year={2011}
}

@INPROCEEDINGS{Krause05nearoptimalnonmyopic,
    author = {Andreas Krause and Carlos Guestrin},
    title = {Near-optimal Nonmyopic Value of Information in Graphical Models},
    booktitle = {21st conference on uncertainty in artificial intelligence (UAI)},
    year = {2005}
}

@inproceedings{DBLP:conf/aaai/HatanoFMK15,
  author    = {Daisuke Hatano and
               Takuro Fukunaga and
               Takanori Maehara and
               Ken{-}ichi Kawarabayashi},
  title     = {Lagrangian Decomposition Algorithm for Allocating Marketing Channels},
  booktitle = {AAAI},
  pages     = {1144--1150},
  year      = {2015}
}

@inproceedings{krause2010submodular,
  title={Submodular dictionary selection for sparse representation},
  author={Krause, Andreas and Cevher, Volkan},
  booktitle={Proceedings of the 27th International Conference on Machine Learning (ICML)},
  pages={567--574},
  year={2010}
}

@InProceedings{Buhmann10isit,
title={Information theoretic model validation for clustering},
author={Buhmann, Joachim M},
booktitle={Information Theory Proceedings (ISIT), 2010 IEEE International Symposium on},
pages={1398--1402},
year={2010},
organization={IEEE}
}

@InProceedings{DBLP:dblp_conf/ipco/CalinescuCPV07,
  Title                    = {Maximizing a Submodular Set Function Subject to a Matroid Constraint (Extended Abstract).},
  Author                   = {Gruia Calinescu and Chandra Chekuri and Martin Pál and Jan Vondrák},
  Booktitle                = {IPCO},
  Year                     = {2007},
  Pages                    = {182-196}
}

@Article{ageev19990,
  Title                    = {An 0.828-approximation algorithm for the uncapacitated facility location problem},
  Author                   = {Ageev, Alexander A and Sviridenko, MI},
  Journal                  = {Discrete Applied Mathematics},
  Year                     = {1999},
  Number                   = {2},
  Pages                    = {149--156},
  Volume                   = {93},

  Publisher                = {Elsevier}
}

@InProceedings{agrawal2003mining,
  Title                    = {Mining newsgroups using networks arising from social behavior},
  Author                   = {Agrawal, Rakesh and Rajagopalan, Sridhar and Srikant, Ramakrishnan and Xu, Yirong},
  Booktitle                = {WWW},
  Year                     = {2003},
  Organization             = {ACM},
  Pages                    = {529--535}
}

@Article{barahona1988application,
  Title                    = {An application of combinatorial optimization to statistical physics and circuit layout design},
  Author                   = {Barahona, Francisco and Gr{\"o}tschel, Martin and J{\"u}nger, Michael and Reinelt, Gerhard},
  Journal                  = {Operations Research},
  Year                     = {1988},
  Number                   = {3},
  Pages                    = {493--513},
  Volume                   = {36},

  Publisher                = {INFORMS}
}

@InProceedings{buchbinder2012tight,
  Title                    = {A tight linear time (1/2)-approximation for unconstrained submodular maximization},
  Author                   = {Buchbinder, Niv and Feldman, Moran and Naor, Joseph and Schwartz, Roy},
  Booktitle                = {Foundations of Computer Science (FOCS), 2012 IEEE 53rd Annual Symposium on},
  Year                     = {2012},
  Organization             = {IEEE},
  Pages                    = {649--658}
}

@InCollection{buhmann2013simbad,
  Title                    = {SIMBAD: Emergence of pattern similarity},
  Author                   = {Buhmann, Joachim M},
  Booktitle                = {Similarity-Based Pattern Analysis and Recognition},
  Publisher                = {Springer},
  Year                     = {2013},
  Pages                    = {45--64}
}

@InProceedings{busse2012information,
  Title                    = {The information content in sorting algorithms},
  Author                   = {Busse, Ludwig M and Chehreghani, Morteza Haghir and Buhmann, Joachim M},
  Booktitle                = {Information Theory Proceedings (ISIT), 2012 IEEE International Symposium on},
  Year                     = {2012},
  Organization             = {IEEE},
  Pages                    = {2746--2750}
}

@Article{cornuejols1977uncapacitated,
  Title                    = {On the uncapacitated location problem},
  Author                   = {Cornuejols, Gerard and Fisher, Marshall and Nemhauser, George L},
  Journal                  = {Annals of Discrete Mathematics},
  Year                     = {1977},
  Pages                    = {163--177},
  Volume                   = {1},

  Publisher                = {Elsevier}
}

@Article{frank1956algorithm,
  Title                    = {An algorithm for quadratic programming},
  Author                   = {Frank, Marguerite and Wolfe, Philip},
  Journal                  = {Naval research logistics quarterly},
  Year                     = {1956},
  Number                   = {1-2},
  Pages                    = {95--110},
  Volume                   = {3},
  Publisher                = {Wiley Online Library},
  Timestamp                = {2015.05.17}
}

@Article{goemans1995improved,
  Title                    = {Improved approximation algorithms for maximum cut and satisfiability problems using semidefinite programming},
  Author                   = {Goemans, Michel X and Williamson, David P},
  Journal                  = {Journal of the ACM (JACM)},
  Year                     = {1995},
  Number                   = {6},
  Pages                    = {1115--1145},
  Volume                   = {42},

  Publisher                = {ACM}
}

@InProceedings{informativemst,
title={How informative are minimum spanning tree algorithms?},
author={Gronskiy, Alexey and Buhmann, Joachim M},
booktitle={Information Theory (ISIT), 2014 IEEE International Symposium on},
pages={2277--2281},
year={2014},
organization={IEEE}
}

@Article{hagerup1990guided,
  Title                    = {A guided tour of Chernoff bounds},
  Author                   = {Hagerup, Torben and R{\"u}b, Christine},
  Journal                  = {Information processing letters},
  Year                     = {1990},
  Number                   = {6},
  Pages                    = {305--308},
  Volume                   = {33},

  Publisher                = {Elsevier}
}

@Article{hajja2002measure,
  Title                    = {The measure of solid angles in n-dimensional Euclidean space},
  Author                   = {Hajja, Mowaffaq and Walker, Peter},
  Journal                  = {International Journal of Mathematical Education in Science and Technology},
  Year                     = {2002},
  Number                   = {5},
  Pages                    = {725--729},
  Volume                   = {33},

  Publisher                = {Taylor \& Francis}
}

@InProceedings{halperin2001combinatorial,
  Title                    = {Combinatorial approximation algorithms for the maximum directed cut problem},
  Author                   = {Halperin, Eran and Zwick, Uri},
  Booktitle                = {Proceedings of the twelfth annual ACM-SIAM symposium on Discrete algorithms},
  Year                     = {2001},
  Organization             = {Society for Industrial and Applied Mathematics},
  Pages                    = {1--7}
}

@InProceedings{hartline2008optimal,
  Title                    = {Optimal marketing strategies over social networks},
  Author                   = {Hartline, Jason and Mirrokni, Vahab and Sundararajan, Mukund},
  Booktitle                = {Proceedings of the 17th international conference on World Wide Web},
  Year                     = {2008},
  Organization             = {ACM},
  Pages                    = {189--198}
}

@Article{helmberg1996interior,
  Title                    = {An interior-point method for semidefinite programming},
  Author                   = {Helmberg, Christoph and Rendl, Franz and Vanderbei, Robert J and Wolkowicz, Henry},
  Journal                  = {SIAM Journal on Optimization},
  Year                     = {1996},
  Number                   = {2},
  Pages                    = {342--361},
  Volume                   = {6},

  Publisher                = {SIAM}
}

@InProceedings{jaggi13,
  Title                    = {Revisiting Frank-Wolfe: Projection-free sparse convex optimization},
  Author                   = {Jaggi, Martin},
  Booktitle                = {Proceedings of the 30th International Conference on Machine Learning (ICML-13)},
  Year                     = {2013},
  Pages                    = {427--435},

  __markedentry            = {[neo:6]},
  Owner                    = {neo},
  Timestamp                = {2015.05.17}
}

@Article{kahruman2007greedy,
  Title                    = {On greedy construction heuristics for the MAX-CUT problem},
  Author                   = {Kahruman, Sera and Kolotoglu, Elif and Butenko, Sergiy and Hicks, Illya V},
  Journal                  = {International Journal of Computational Science and Engineering},
  Year                     = {2007},
  Number                   = {3},
  Pages                    = {211--218},
  Volume                   = {3},

  Publisher                = {Inderscience}
}

@InProceedings{mirzasoleiman2013distributed,
  Title                    = {Distributed submodular maximization: Identifying representative elements in massive data},
  Author                   = {Mirzasoleiman, Baharan and Karbasi, Amin and Sarkar, Rik and Krause, Andreas},
  Booktitle                = {Advances in Neural Information Processing Systems},
  Year                     = {2013},
  Pages                    = {2049--2057}
}

@Article{nemhauser1978analysis,
  Title                    = {An analysis of approximations for maximizing submodular set functions—I},
  Author                   = {Nemhauser, George L and Wolsey, Laurence A and Fisher, Marshall L},
  Journal                  = {Mathematical Programming},
  Year                     = {1978},
  Number                   = {1},
  Pages                    = {265--294},
  Volume                   = {14},

  Publisher                = {Springer}
}

@Article{ribando2006measuring,
  Title                    = {Measuring solid angles beyond dimension three},
  Author                   = {Ribando, Jason M},
  Journal                  = {Discrete \& Computational Geometry},
  Year                     = {2006},
  Number                   = {3},
  Pages                    = {479--487},
  Volume                   = {36},

  Publisher                = {Springer}
}

@Article{sahni1976p,
  Title                    = {P-complete approximation problems},
  Author                   = {Sahni, Sartaj and Gonzalez, Teofilo},
  Journal                  = {Journal of the ACM (JACM)},
  Year                     = {1976},
  Number                   = {3},
  Pages                    = {555--565},
  Volume                   = {23},

  Publisher                = {ACM}
}

@Article{tiel1984convex,
  Title                    = {Convex analysis},
  Author                   = {Tiel, Jan van},
  Year                     = {1984},

  Publisher                = {John Wiley}
}

@Article{waldspurger2015phase,
  Title                    = {Phase recovery, maxcut and complex semidefinite programming},
  Author                   = {Waldspurger, Ir{\`e}ne and d’Aspremont, Alexandre and Mallat, St{\'e}phane},
  Journal                  = {Mathematical Programming},
  Year                     = {2015},
  Number                   = {1-2},
  Pages                    = {47--81},
  Volume                   = {149},

  Publisher                = {Springer}
}

@Article{Wang:2013:SLU:2502581.2502605,
  Title                    = {Semi-supervised Learning Using Greedy Max-cut},
  Author                   = {Wang, Jun and Jebara, Tony and Chang, Shih-Fu},
  Journal                  = {JMLR},
  Year                     = {2013},

  Month                    = mar,
  Number                   = {1},
  Pages                    = {771--800},
  Volume                   = {14},

  Acmid                    = {2502605},
  ISSN                     = {1532-4435},
  Issue_date               = {January 2013},
  Keywords                 = {bivariate formulation, graph transduction, greedy max-cut, mixed integer programming, semi-supervised learning},
  Numpages                 = {30},
  Publisher                = {JMLR.org}
}

@inproceedings{DBLP:conf/uai/KrauseG05,
  author    = {Andreas Krause and
               Carlos Guestrin},
  title     = {Near-optimal Nonmyopic Value of Information in Graphical Models},
  booktitle = {UAI},
  pages     = {324--331},
  year      = {2005}
}

@inproceedings{bach2010structured,
  title={Structured sparsity-inducing norms through submodular functions},
  author={Bach, Francis R},
  booktitle={NIPS},
  pages={118--126},
  year={2010}
}

@inproceedings{djolonga2014map,
 title={From map to marginals: Variational inference in bayesian submodular models},
 author={Djolonga, Josip and Krause, Andreas},
 booktitle={Advances in Neural Information Processing Systems},
 pages={244--252},
 year={2014}
}

@article{iwata2001combinatorial,
  title={A combinatorial strongly polynomial algorithm for minimizing submodular functions},
  author={Iwata, Satoru and Fleischer, Lisa and Fujishige, Satoru},
  journal={Journal of the ACM},
  volume={48},
  number={4},
  pages={761--777},
  year={2001},
  publisher={ACM}
}

@incollection{NIPS2016_6073,
  author =        {Eghbali, Reza and Fazel, Maryam},
  booktitle =     {Advances in Neural Information Processing Systems (NIPS)},
  pages =         {3279--3287},
  title =         {Designing smoothing functions for improved worst-case
                   competitive ratio in online optimization},
  year =          {2016},
}

@inproceedings{soma2014optimal,
  title={Optimal budget allocation: Theoretical guarantee and efficient algorithm},
  author={Soma, Tasuku and Kakimura, Naonori and Inaba, Kazuhiro and Kawarabayashi, Ken-ichi},
  booktitle={Proceedings of the 31st International Conference on Machine Learning},
  pages={351--359},
  year={2014}
}

@inproceedings{DBLP:conf/nips/SomaY15,
 author    = {Tasuku Soma and
 Yuichi Yoshida},
 title     = {A Generalization of Submodular Cover via the Diminishing Return Property
 on the Integer Lattice},
 booktitle = {Advances in Neural Information Processing Systems (NIPS)},
 pages     = {847--855},
 year      = {2015}
}

@article{soma2015maximizing,
  title={Maximizing submodular functions with the diminishing return property over the integer lattice},
  author={Soma, Tasuku and Yoshida, Yuichi},
  journal={arXiv preprint arXiv:1503.01218},
  year={2015}
}

@inproceedings{hassani2017gradient,
  title={Gradient Methods for Submodular Maximization},
  author={Hassani, Hamed and Soltanolkotabi, Mahdi and Karbasi, Amin},
  booktitle={Advances in Neural Information Processing Systems (NIPS)},
  pages={5837--5847},
  year={2017}
}

@inproceedings{DBLP:conf/icml/Jaggi13,
  author    = {Martin Jaggi},
  title     = {Revisiting Frank-Wolfe: Projection-Free Sparse Convex Optimization},
  booktitle = {{ICML} 2013},
  pages     = {427--435},
  year      = {2013}
}

@article{DBLP:journals/siamcomp/CalinescuCPV11,
  author    = {Gruia C{{a}}linescu and
               Chandra Chekuri and
               Martin P{\'{a}}l and
               Jan Vondr{\'{a}}k},
  title     = {Maximizing a Monotone Submodular Function Subject to a Matroid Constraint},
  journal   = {{SIAM} J. Comput.},
  volume    = {40},
  number    = {6},
  pages     = {1740--1766},
  year      = {2011}
}

@article{ageev2004pipage,
  title={Pipage rounding: A new method of constructing algorithms with proven performance guarantee},
  author={Ageev, Alexander A and Sviridenko, Maxim I},
  journal={Journal of Combinatorial Optimization},
  volume={8},
  number={3},
  pages={307--328},
  year={2004},
  publisher={Springer}
}

@article{feige1998threshold,
  title={A threshold of ln n for approximating set cover},
  author={Feige, Uriel},
  journal={Journal of the ACM (JACM)},
  volume={45},
  number={4},
  pages={634--652},
  year={1998},
  publisher={ACM}
}

@inproceedings{feldman2011unified,
  title={A unified continuous greedy algorithm for submodular maximization},
  author={Feldman, Moran and Naor, Joseph and Schwartz, Roy},
  booktitle={Foundations of Computer Science (FOCS), 2011 IEEE 52nd Annual Symposium on},
  pages={570--579},
  year={2011},
  organization={IEEE}
}

@article{xia2015globally,
  title={Globally solving non-convex quadratic programs via linear integer programming techniques},
  author={Xia, Wei and Vera, Juan and Zuluaga, Luis F},
  journal={arXiv preprint arXiv:1511.02423},
  year={2015}
}

@InCollection{JB:mcpr:2011,
  Title                    = {Context Sensitive Information: Model Validation by Information Theory},
  Author                   = {Buhmann, Joachim},
  Booktitle                = {Pattern Recognition},
  Publisher                = {Springer Berlin / Heidelberg},
  Year                     = {2011},
  Pages                    = {12-21},
  Series                   = {LNCS 6718},

  Affiliation              = {Departement Informatik, CAB G 69.2, ETH Zürich,
 Universitätsstraße 6, CH-8092 Zürich, Switzerland},
  ISBN                     = {978-3-642-21586-5},
  Keyword                  = {Computer Science}
}

@InProceedings{zhousparse,
  Title                    = {Sparse feature selection by information theory},
  Author                   = {Zhou, Guangyao and Geman, Stuart and Buhmann, Joachim M},
  Booktitle                = {ISIT},
  Year                     = {2014},
  Organization             = {IEEE},
  Pages                    = {926--930}
}

@inproceedings{giles2001overfitting,
  title={Overfitting in Neural Nets: Backpropagation, Conjugate Gradient, and Early Stopping},
  author={Rich  Caruana and Steve Lawrence and Lee Giles},
  booktitle={NIPS},
  volume={13},
  pages={402},
  year={2001}
}

@InProceedings{bousquet2008tradeoffs,
  Title                    = {The tradeoffs of large scale learning},
  Author                   = {Bousquet, Olivier and Bottou, L{\'e}on},
  Booktitle                = {NIPS},
  Year                     = {2008},
  Pages                    = {161--168}
}

@inproceedings{he2018cola,
title={COLA: Communication-Efficient Decentralized Linear Learning},
author={*He, Lie and *Bian, An and Jaggi, Martin},
booktitle={Advances in Neural Information Processing Systems (NeurIPS)},
pages = {4537--4547},
year={2018}
}

@inproceedings{Celestine2018trust,
  title={A Distributed Second-Order Algorithm You Can Trust},
  author={D{\"u}nner, Celestine and Lucchi, Aurelien and Gargiani, Matilde and Bian, An and Hofmann, Thomas and Jaggi, Martin},
  booktitle={International Conference on Machine Learning (ICML)},
  pages={1357--1365},
  year={2018}
}

@inproceedings{bian2018meanfield,
  title={Optimal DR-Submodular Maximization and Applications to Provable Mean Field Inference},
  author={An Bian and Joachim M. Buhmann and Andreas Krause},
  booktitle={BNP@NeurIPS 2018},
  year={2018}
}

@article{karlin1980classesreverse,
  title={Classes of orderings of measures and related correlation inequalities II. Multivariate reverse rule distributions},
  author={Karlin, Samuel and Rinott, Yosef},
  journal={Journal of Multivariate Analysis},
  volume={10},
  number={4},
  pages={499--516},
  year={1980},
  publisher={Elsevier}
}

@inproceedings{bian2019optimalmeanfield,
	title={Optimal Continuous DR-Submodular Maximization and Applications to Provable Mean Field Inference},
	author={Bian, Yatao A. and Buhmann, Joachim M. and Krause, Andreas},
	booktitle={International Conference on Machine Learning (ICML)},
	pages={644--653},
	year={2019}
}

@inproceedings{bian2017guaranteed,
  title={Guaranteed Non-convex Optimization: Submodular Maximization over Continuous Domains},
  author={Bian, Andrew An and Mirzasoleiman, Baharan and Buhmann, Joachim M. and Krause, Andreas},
  booktitle={International Conference on Artificial Intelligence and Statistics (AISTATS)},
  pages={111--120},
  year={2017}
}

@inproceedings{gorbach2017model,
  title={Model Selection for {G}aussian Process Regression},
  author={*Gorbach, Nico S and *Bian, Andrew An and Fischer, Benjamin and Bauer, Stefan and Buhmann, Joachim M},
  booktitle={German Conference on Pattern Recognition},
  pages={306--318},
  year={2017}
}

\def\dir{frontbackmatter}
\chapter{Notation}\label{sec_notation}
\section*{General}
\begin{tabularx}{\textwidth}{l|X}
	\toprule
	{Symbol} & 	{Meaning} \\
	\midrule
    $\groundset=\{\ele_1, \ele_2,..., \ele_n\}$ &           the ground set of
    $n$ elements\\
    $\chara_i\in\R^n$ &  the characteristic vector for
    element $\ele_i$ (also the standard $i^\text{th}$ basis vector)   \\
    $\x\in \R^\groundset$ or  $\x\in \R^n$ &      an $n$-dimensional vector, whose $i^\text{th}$ entry is denoted as   $x_i$\\
    $\BA\in\R^{m\times n}$ &  an $m$ by $n$ matrix and  $A_{ij}$  is its ${ij}^\text{th}$ entry \\
    $f(\cdot)$     & a continuous function\\
    $F(\cdot)$    & a set function \\
    $\nabla f(\cdot)$  &  the gradient of a differentiable function $f(\cdot)$   \\
    $\nabla^2 f(\cdot)$  &  the Hessian of a twice differentiable function $f(\cdot)$\\
    $[n]$  &  $\{1,...,n\}$ for an
    integer $n \geq 1$\\
     $\x\leqco \y$ & 
    $x_i\leq y_i, \forall i$\\
    $\x\vee \y$  &  coordinate-wise maximum of $\x$ and $\y$   \\
    $\x \wedge \y $  & coordinate-wise minimum of $\x$ and $\y$   \\
    	$\ltwo{\x}$ &  $\ell_2$-norm\\
    $\lone{\x}$ &   $\ell_1$-norm\\
    $\sete{x}{i}{k}$ &  the operation of setting the
    $i^\text{th}$ element of $\x$ to $k$, while keeping all other elements
    unchanged, i.e., $\sete{x}{i}{k}=\x-x_i \bas_i + k\bas_i$\\
	\bottomrule
\end{tabularx}

\vfill

\section*{Algorithm Validation}
\begin{tabularx}{\textwidth}{l|X}
	\toprule
	{Symbol} &		{Meaning} \\
	\midrule
	    \alg   &   an algorithm \\
    \graphrv & the random variable of a graph\\
    $G$   & a realization  of \graphrv \\ 
     $I^{\alg}$  &  algorithmic information content of \alg\\
	$\I$ &  the classical mutual information\\
	\bottomrule
\end{tabularx}


\def\dir{frontbackmatter}
\chapter{Acronyms}

\begin{acronym}[OWL-QN] 
	 \acro{DR}{Diminishing Returns}
	 \acro{IR}{Increasing Returns}
	 \acro{PA}{Posterior Agreement}
    \acro{SDP}{Semidefinite Programming}
        
    \acro{L-BFGS}{Limited-memory Broyden-Fletcher-Goldfarb-Shanno}
    \acro{SGD}{Stochastic Gradient Descent}
    \acro{SOCP}{Second Order Cone Program}

    \acro{PCA}{Principal Component Analysis}
    
\end{acronym}

\pagebreak
\def\dir{frontbackmatter}
\pagestyle{empty}

\hfill

\vfill

\pdfbookmark[0]{Colophon}{colophon}
\section*{Colophon}
This document was typeset in \LaTeX~using the typographical look-and-feel
\texttt{classicthesis}. The bibliography is typeset using \texttt{biblatex}.

\end{document}